\documentclass[onecolumn]{IEEEtran}
\usepackage{algorithmic}
\usepackage{algorithm}
\usepackage{array}
\usepackage[caption=false,font=normalsize,labelfont=sf,textfont=sf]{subfig}
\usepackage{textcomp}
\usepackage{stfloats}
\usepackage{verbatim}
\usepackage{amsmath,amssymb,amsthm}
\usepackage{graphicx, rotating}
\RequirePackage{multirow}
\usepackage{float}
\usepackage{url}
\usepackage[numbers,compress]{natbib}
\RequirePackage[colorlinks,citecolor=blue,linkcolor=blue,urlcolor=black]{hyperref}
\usepackage{paralist}
\usepackage{mathrsfs}
\usepackage{arydshln}
\usepackage{upgreek}
\usepackage{booktabs}
\theoremstyle{plain}
\newtheorem{theorem}{Theorem}
\newtheorem{lemma}[theorem]{Lemma}
\newtheorem{corollary}[theorem]{Corollary}
\newtheorem{definition}{Definition}[section]

\theoremstyle{remark}
\newtheorem{remark}{\textsc{Remark}}

\newcommand{\customvarkappa}{\rotatebox{2}{\scalebox{.9}[1.2]{$\varkappa$}}}

\newcommand{\bsb}{\boldsymbol}

\newcommand{\bsbB}{{\boldsymbol{B}}}
\newcommand{\bsbA}{{\boldsymbol{A}}}

\newcommand{\bsbV}{{\boldsymbol{V}}}
\newcommand{\bsbU}{{\boldsymbol{U}}}
\newcommand{\bsbD}{{\boldsymbol{D}}}

\newcommand{\bsbK}{{\boldsymbol{K}}}

\newcommand{\bsbX}{{\boldsymbol{X}}}
\newcommand{\bsbZ}{{\boldsymbol{Z}}}
\newcommand{\bsbI}{{\boldsymbol{I}}}

\newcommand{\bsbP}{{\boldsymbol{P}}}

\newcommand{\bsbSig}{{\boldsymbol{\Sigma}}}

\newcommand{\bsbzeta}{{\boldsymbol{\zeta}}}

\newcommand{\bsby}{{\boldsymbol{y}}}
\newcommand{\bsbs}{{\boldsymbol{s}}}

\newcommand{\bsbb}{{\boldsymbol{\beta}}}
\newcommand{\bsba}{{\boldsymbol{\alpha}}}
\newcommand{\bsbg}{{\boldsymbol{\gamma}}}

\newcommand{\bsbt}{{\boldsymbol{t}}}
\newcommand{\bsbh}{{\boldsymbol{h}}}

\newcommand{\bsbeps}{{\boldsymbol{\epsilon}}}
\newcommand{\bsbe}{{{\boldsymbol{\eta}}}}

\newcommand{\bsbxi}{{\boldsymbol{\xi}}}

\newcommand{\bsbz}{{\boldsymbol{z}}}

\newcommand{\bsbXi}{{\boldsymbol{\Xi}}}

\newcommand{\Proj}{{\mathcal P}}

\newcommand{\EP}{\,\mathbb{P}}
\newcommand{\EE}{\,\mathbb{E}}

\DeclareMathOperator{\rank}{rank}
\DeclareMathOperator{\sgn}{sgn}

\DeclareMathOperator{\vect}{\mbox{vec}\,}

\DeclareMathOperator{\range}{{\mathcal R}}
\DeclareMathOperator{\prox}{prox}
\DeclareMathOperator{\avg}{avg}
\DeclareMathOperator{\Ker}{Ker}
\DeclareMathOperator{\diag}{diag}
\newcommand{\ubar}[1]{\underline{#1\mkern-1.5mu}}

\newcommand{\overunderline}[1]{\underline{\smash{\overline{#1\mkern-2mu}}}}

\makeatletter
\newcommand{\dashover}[2][\mathop]{#1{\mathpalette\df@over{{\dashfill}{#2}}}}
\newcommand{\fillover}[2][\mathop]{#1{\mathpalette\df@over{{\solidfill}{#2}}}}
\newcommand{\df@over}[2]{\df@@over#1#2}
\newcommand\df@@over[3]{%
  \vbox{
    \offinterlineskip
    \ialign{##\cr
      #2{#1}\cr
      \noalign{\kern1pt}
      $\m@th#1#3$\cr
    }
  }%
}
\newcommand{\dashfill}[1]{%
  \kern-.5pt
  \xleaders\hbox{\kern.5pt\vrule height.4pt width \dash@width{#1}\kern.5pt}\hfill
  \kern-.5pt
}
\newcommand{\dash@width}[1]{%
  \ifx#1\displaystyle
    2pt
  \else
    \ifx#1\textstyle
      1.5pt
    \else
      \ifx#1\scriptstyle
        1.25pt
      \else
        \ifx#1\scriptscriptstyle
          1pt
        \fi
      \fi
    \fi
  \fi
}

\newcommand{\solidfill}[1]{\leaders\hrule\hfill}
\makeatother
\newcommand{\breg}{{\mathbf{\Delta}}}
\newcommand{\Breg}{{\mathbf{D}}}
\newcommand{\bregs}{{\bar{\mathbf{\Delta}}}}

\newcommand{\lsem}{\mathopen{\vcenter{\hbox{$\textnormal{\textbrokenbar\kern-0.1em}|$}}}}
\newcommand{\rsem}{\mathclose{\vcenter{\hbox{$|\kern-0.1em\textnormal{\textbrokenbar}$}}}}

\usepackage[normalem]{ulem}

\begin{document}

\title{Range Penalization: Theoretical Insights with Applications in Federated Learning}

\author{Yiyuan She, Zhaojun Hu, and Yifan Sun
}



\maketitle

\begin{abstract}

This paper introduces range regularization for federated learning with linear systematic components to enhance statistical accuracy and induce cross-client regularity conducive to quantization, coding, and resource efficiency. Our approach identifies features with shared weights across different clients and adaptively clusters the weights of personalized features at extreme values, a process we refer to as polar clustering. Theoretical analysis of the associated estimators poses significant challenges due to the seminorm nature and non-decomposability of the regularizer. We develop new proof techniques for the nonasymptotic analysis of statistical accuracy and faithful pattern recovery. Moreover, a fast optimization algorithm that leverages varying degrees of local strong convexity is proposed to reduce iteration complexity. Experiments support the efficacy and efficiency of the proposed approach.

\end{abstract}

\begin{IEEEkeywords}
Nonasymptotic analysis, nondecomposable regularizer, semi-norm, extreme-value (polar) clustering, momentum-based acceleration
\end{IEEEkeywords}

\section{Introduction}\label{sec:intro}
\IEEEPARstart{T}{o} introduce the federated learning setting, consider $m$ devices, or \textit{clients},
each owning private
data $(\bsbX_k, \bsby_k)$. Due to escalating privacy concerns in today's digital landscape, these data sets cannot be shared between clients.   In this paper, we focus on federated learning with a linear systematic component. An objective for estimating the unknown coefficients (also called weights) is given by$$
\min_{\bsbB} \sum_{k=1}^m l_0(\bsbX_k \bsb{b}_k; \bsby_k) \text{ with }    {\bsbB}=[\bsb{b}_1, \ldots, \bsb{b}_m]=[\bsbb_1, \ldots, \bsbb_p]^T.
$$
However, this separable objective is less interesting because each column of $\bsbB$ can be estimated independently, without utilizing  information from other clients. To enhance statistical accuracy, federated learning researchers often assume the existence of \emph{shared} parameters \citep{arivazhagan2019federated}
$$\bsbb_j\propto \bsb1,
j\in \mathcal J^c$$ where   $\mathcal J\subset\{1, \ldots, p\}$ denotes the index set of nonuniform rows. In estimating shared parameters, federated learning  integrates insights  from  multiple clients, without the need for centralized storage or data sharing:    each client performs local updates based on its own data; these updates or statistics, rather than the raw data, are then transmitted to and aggregated by a central server.
Because identifying which features have shared coefficients across different clients can be challenging,  one of our goals is to automate this process.
Furthermore, there are \textit{personal} parameters due to model heterogeneity \citep{wu2020personalized}. The inclusion of both shared and personal parameters is an example of ``partial model personalization'' \citep{pillutla2022federated}: neither a global model, as in traditional federated learning \citep{mcmahan2017communication}, nor complete personalization is assumed.

To handle personalized parameters with potentially distinct components, some structural assumption is necessary. A natural possibility is to assume that the coefficients can be clustered across clients; see, for example, \cite{ghosh2020efficient} and \cite{liu2025robust}.
A common way to encourage such clustering is to {sparsify} pairwise differences: $\sum_{1\le j \le p}\allowbreak \sum_{1\le k<k'\le m}P(|\bsbb_{j}[k] -\bsbb_{j}[k']|;\lambda)$, where
$\lambda$ is a regularization parameter; this technique has been   actively explored, including     \cite{she2010sparse}, \cite{chi2015splitting}, and \cite{tang2016fused} among others.
At first glance, such a formulation appears to promote parsimony and parameter sharing in federated learning, and hence might seem capable of improving statistical accuracy.
However, this intuition is incomplete because it ignores the intrinsic cost of identifying the latent clustering structure, which is not simply a constant multiple of the number of distinct components
in  each   $\bsbb_j\in\mathbb R^m$. More specifically,   let $J^*$  denote the number of heterogeneous rows  in $\bsbB^*$ and   suppose each such row contains only a small, indeed constant-order, number of distinct components---a commonly used parsimonious assumption. Then our first theorem establishes an intrinsic information-theoretic limit for the prediction error rate,
 \begin{align}
J^* m + p + J^*\log(ep/J^*) \label{introrefrate}
\end{align}
up to trivial factors, regardless of the estimator.
The derivation and discussion are deferred to Appendix \ref{subsec:minimax} due to space constraints.
The leading term $J^* m $   shows that the statistical cost of identifying the clusters can completely offset the apparent gain from within-row clustering, rather than reducing to  merely    $J^*$ times a constant.  Existing pairwise-difference regularization methods fail to attain the lower bound above:
even when $P$  is chosen as the ideal $\ell_0$-penalty, the resulting error bound remains substantially \emph{larger} than the minimax rate (see, e.g.,  Appendix A.5 of \cite{SheetalCRL22}), and   hence is far from   optimal. This gap motivates a different regularization strategy.


Another critical issue in federated learning is computational efficiency \citep{almanifi2023communication}, which further explains why this paper does not aim to fully cluster     each $\bsbb_j$. Pairwise-difference regularization introduces $\mathcal O(m^2)$
  terms for every row of $\bsbB$,   and its optimization typically relies on operator-splitting methods such as ADMM or AMA. This makes the optimization scale large and the overall convergence slow in practice. In addition, such formulations are often burdensome to tune: empirically, a fine grid over the regularization parameter is  needed, and weighted pairwise schemes further rely on ad hoc  weight constructions \citep{chi2015splitting}. These considerations further motivate our focus on a simpler convex formulation together with an efficient optimization strategy.

Personalized federated learning has also been studied through several other mechanisms, including multi-task or shared-representation learning \citep{maurer2016benefit}, mixtures of global and local models \citep{deng2020adaptive}, hierarchical Bayesian formulations \citep{kim2025fedhb}, and meta-learning-based adaptation \citep{fallah2020personalized}.
These methods handle client heterogeneity through latent common representations, global-local interpolation, hierarchical probabilistic coupling, or rapid adaptation from a shared initialization, respectively, rather than through direct structural regularization of the client-specific coefficient matrix.
\\

Motivated by these considerations, this paper advocates a ``range-penalization'' strategy for federated learning. The proposed regularizer is designed not only to identify shared parameters, but also to adaptively form clusters at the extreme values of personalized parameters, a process we call \emph{polar clustering}. This combination is practically appealing for several distinct reasons.

\begin{enumerate}
\item \uline{Quantization and coding.}
A narrower dynamic range is more amenable to low-precision representation. For example, replacing 64-bit floating-point numbers by 4-bit or even 1-bit bilevel representations can reduce transmission cost by an \textit{order} of magnitude.
In addition to incurring smaller quantization distortion, tighter ranges can also yield a more concentrated symbol distribution,  favorable for entropy coding and related  schemes \citep{han2016deep}, and are better aligned with communication-efficient update encoding in federated learning \citep{sattler2020robust}.

\item \uline{Stability.}
Range reduction suppresses large coefficient fluctuations across clients. During training, this helps prevent extreme client-specific values from dominating the update dynamics,  improving numerical stability and reducing the risk of unstable behavior or divergence. 

\item \uline{Statistical regularization.}
Range reduction and polar clustering  provide effective regularization in estimation. By curbing excessively large client-specific coefficients and imposing structured shrinkage, they help control overfitting while avoiding unnecessary shrinkage of intermediate values and preserving informative heterogeneity.

\item \uline{Resource efficiency}.
Models with weights confined to a reduced range are easier to store and deploy on mobile and edge devices with limited memory and power budgets \citep{shah2021model}. This not only reduces memory footprint but can also lower power consumption in resource-constrained deployment \citep{shi2022toward}.

\item \uline{Privacy.}
Range reduction is also beneficial from a privacy-preserving perspective. When quantization is used in privacy-preserving federated learning, a smaller weight range is less sensitive to quantization error and reveals less fine-grained numerical information \citep{amiri2021compressive}.
Moreover, polar clustering makes it harder to infer highly specific characteristics of individual clients from the learned coefficients, thereby offering additional anonymization.

 \end{enumerate}
These considerations indicate a distinct need for a regularizer that can (i) adaptively reduce the range of weights, (ii) maintain intermediate values without the additional shrinkage  seen in, for example,        $\ell_1$-type regularization, (iii) encourage uniformity to enhance parameter sharing, and (iv) enable automatic clustering of extreme values. The group range penalization proposed in this paper  provides a versatile solution that meets all four objectives.

Nevertheless, for block range penalization, existing literature still lacks a comprehensive theoretical analysis  of the optimal regularization level, the sharp error rates of the resulting estimator, and the conditions and probabilities under which authentic structural patterns can be recovered.
The \textbf{non-decomposability} of the proposed regularizer, which is especially challenging in high-dimensional analysis, calls for innovative techniques. Traditional approaches based on dual (semi)norms lead to suboptimal parameter choices and overly conservative error rates.
This paper presents a novel proof device that integrates optimization and statistical analysis to bound the stochastic term in nondecomposable settings. By leveraging the weak differentiability of the range seminorm, we derive less stringent regularity conditions that capitalize on both groupwise and within-group structures. We also explore pattern recovery in group range penalization, thereby extending the concept of ``irrepresentable conditions'' from lasso analysis. Furthermore, we propose a new momentum-based acceleration scheme that reduces iteration complexity by adapting to varying degrees of local restricted strong convexity, thereby substantially cutting communication costs.

The remainder of the paper is organized as follows: Section \ref{sec:rangefacts} examines fundamental properties of the range function, including its subgradients, directional derivative, proximity operator, dual seminorm and conjugate. Section \ref{sec:flsetting} introduces the group range penalized federated learning setting along with essential notations and conditions. Section \ref{sec:nonasymstat} details our principal findings and methods for finite-sample theoretical analysis, focusing on the estimator's error and pattern recovery.
  Section \ref{sec:comp} studies how to reduce iteration complexity for problems that may have   nonconvex losses in high-dimensions,  through the use of   varying sequences of convexity measures and relaxation parameters.   A summary is provided  in Section \ref{sec:summ}.
Due to limited space, the technical details, including a minimax theorem and dual seminorm analysis,    are provided in Appendix \ref{sec:tech}, while simulations and real data analysis examples are presented in Appendix \ref{sec:exps}.\\

\paragraph*{Notations} Throughout the paper, plain symbols denote scalars, whereas bold symbols denote vectors or matrices. Given a vector $\bsbb\in \mathbb R^m$, define the range function
\begin{align}
\range(\bsbb) =\| \bsbb\|_{\range}= \max \beta_k - \min \beta_k.
\end{align}
Although we often  write the range as $\| \cdot\|_{\range}$, it is \emph{not} a norm but a seminorm (see Section \ref{sec:rangefacts}). The indicator function $1_A(x)$  takes $1$ if $x\in A$ and 0 otherwise, while $\iota_A(x)=-\log 1_A(x)$ takes $0$ if $x\in A$ and $+\infty$ otherwise. Given a set $A$ and real number $c$, $A+c :=\{a+c: a\in A\}$. Given a real number $x$, $\sgn(x)=1$ if $x>0$, $-1$ if $x<0$, and $0$ otherwise.
 Given $\bsba, \bsbb\in \mathbb R^m$, $\bsba\preceq \bsbb$ means $\alpha_k\le \beta_k$ for all $ 1\le k \le m$. Define $[m]:=\{1, \ldots, m\}$ for any $m\in \mathbb N$. The cardinality of an index set $\mathcal I$ is denoted by $| \mathcal I|$, and its complement   by $\mathcal I^c$.

Given $\bsbb\in \mathbb R^m$ with  $\max \beta_j > \min \beta_j$, let the index sets for the \textit{maximum}, \textit{minimum}, and \textit{intermediate} values be defined as follows:\begin{align}
\overline {\mathcal M}(\bsbb) := \{j : \beta_j = \max \beta_j\} , \  \underline{\mathcal M}(\bsbb) := \{j : \beta_j = \min \beta_j\}, \\
\mathcal M(\bsbb) := \{j: \min\beta_j < \beta_j < \max \beta_j\}, \   \overunderline{\mathcal M} (\bsbb)= \overline{\mathcal M}(\bsbb) \cup \underline{\mathcal M}(\bsbb).
\end{align}
where $\mathcal M(\bsbb) $ can be empty.  In addition, let
\begin{align}
\overline{ M}(\bsbb) = |\overline {\mathcal M}(\bsbb)|, \underline{ M}(\bsbb) = |\underline{\mathcal M}(\bsbb)|,
M(\bsbb) = | {\mathcal M}(\bsbb)|, \overunderline{M}(\bsbb) = |\overunderline{\mathcal M}(\bsbb)|.
\end{align}
When    $\max \beta_j = \min \beta_j$, we still use  the notations $\overline {\mathcal M}(\bsbb), \underline{\mathcal M}(\bsbb), \overunderline{\mathcal M}(\bsbb)  $, which are however all identical, and so    $ \overunderline{M}(\bsbb)  =  \overline{M}(\bsbb)  =  \underline{M}(\bsbb) =m$ and $\mathcal M(\bsbb)=\emptyset$.

Given a matrix $\bsbA$, we use  $\bsbA[\mathcal I, \mathcal J]$ to  denote   a submatrix of $\bsbA$ with  rows and columns indexed by $\mathcal I$ and $\mathcal J$, respectively. For a vector $\bsba$, we sometimes use $\bsba_{\mathcal I}$ to denote $\bsba[\mathcal I]$. The standard column-stacking vectorization operator is denoted by   $\vect(\cdot)$.
Given any matrix $\bsbA$, we use     $\Proj_{\bsbA}$  to denote the orthogonal projection matrix onto the column space of    $\bsbA$, i.e., $\Proj_{\bsbA}=\bsbA(\bsbA^{T}\bsbA)^{+}\bsbA^{T}$ with $^+$ the Moore-Penrose inverse, and by $\Proj_{\bsbA}^{\perp}$ the projection onto  its orthogonal complement. We treat orthogonal projections onto a subspace as \textit{equivalent} to the subspace itself when there is no ambiguity. Given  $\bsbA = [\bsba_1, \ldots, \bsba_p]^T\in \mathbb R^{n\times m}$,  $\|\bsbA\|_F:=(\sum \|\bsba_j\|_2^2  )^{1/2}$ (the Frobenius norm), $\|\bsbA\|_2:=\sigma_{\max}(\bsbA)$ (the operator norm), $\|\bsbA\|_\infty := \max_{1\le j \le p} \|\bsba_j\|_1 $. In this paper, $\|\bsbA\|_\infty$     is also denoted by $\| \bsbA\|_{1,\infty}$  and  we define   $\|\bsbA\|_{2,\infty} := \max_{1\le j \le p} \|\bsba_j\|_2 $ and $\|\bsbA\|_{ \range, 1} := \sum_{1\le j \le p} \range( \bsba_j) $ in a similar manner.
Throughout the paper, we use $C,c$ to denote positive constants, which are not necessarily the same at each occurrence. We write    $a\lesssim b$ if $a \le C b$ for some constant $C>0$ and   $a\asymp b$ if both $a \lesssim b$ and $b\lesssim a$.
In addition, a hat consistently denotes an estimate of the corresponding quantity; for example, $\hat \bsbb$ denotes an estimate of  $\bsbb$.
\section{Range: A Nondecomposable Seminorm}
\label{sec:rangefacts}
Despite the  benefits of range regularization in quantization, coding, resource efficiency, and privacy protection, as discussed in Section \ref{sec:intro}, range-based regularizers have received much less theoretical attention than the well-studied $\ell_1$-norm. Although range penalization appeared earlier as a statistical regularizer in \cite{Shethesis}, its analytic properties and nonasymptotic behavior have not been systematically studied.
Therefore, this section presents some basic facts about  \(\range(\cdot)\) that are useful for theoretical analysis and computation.

The primary challenges we face with \(\range(\cdot)\) stem from the fact that   it is merely a seminorm \textit{and} nondecomposable. The former implies that it has a nontrivial kernel space.
The latter issue is particularly severe because traditional \(\ell_1\)-type analysis techniques for lasso, group lasso, and nuclear norm require decomposability \citep{Wainwright2019}.

We first introduce the notion of a seminorm; see, for example,  \cite{Pasq2024}.
\begin{definition}
Given a vector space $V$,  $f: V\rightarrow \mathbb R$ is called a \emph{seminorm} if (i) $f( x + y )\le f( x) + f( y), \forall x, y\in V$ and (ii) $f(  c x)= |c| f( x), \forall c\in \mathbb R, x\in V$.
\end{definition}
According to the definition,  by setting $y = -x$, we have $f( x)\ge 0$ for any $ x \in V$.
However, $f(x)=0$ does not necessarily mean $x=0$. The kernel  of a seminorm $f$, denoted by $$\Ker f:=\{x\in V: f(x) = 0\},$$ is  easily shown to be   a linear subspace.

\begin{theorem}\label{thm:semi}
 $\| \cdot \|_{\range}$ is a seminorm on $\mathbb R^m$ and ${\Ker \| \cdot \|_{\range}}$ is the subspace spanned by ${\bsb1}_m$.
\end{theorem}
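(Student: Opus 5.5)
The plan is to verify the two seminorm axioms directly from $\range(\bsbb)=\max_k\beta_k-\min_k\beta_k$, and then to identify the kernel by a two-sided inclusion.

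For subadditivity, fix $\bsba,\bsbb\in\mathbb R^m$. For every $k$ we have $\alpha_k+\beta_k\le \max\alpha_j+\max\beta_j$, and taking the maximum over $k$ gives $\max(\alpha_k+\beta_k)\le \max\alpha_j+\max\beta_j$. The same argument gives $\min(\alpha_k+\beta_k)\ge \min\alpha_j+\min\beta_j$. Subtracting the second inequality from the first yields $\range(\bsba+\bsbb)\le\range(\bsba)+\range(\bsbb)$.

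For absolute homogeneity, split on the sign of $c$. If $c\ge0$, then $\max c\beta_k=c\max\beta_k$ and $\min c\beta_k=c\min\beta_k$, so $\range(c\bsbb)=c\,\range(\bsbb)$. If $c<0$, multiplication by $c$ swaps the roles of max and min: $\max c\beta_k=c\min\beta_k$ and $\min c\beta_k=c\max\beta_k$. Hence $\range(c\bsbb)=c(\min\beta_k-\max\beta_k)=|c|\range(\bsbb)$. An alternative route is to write $\range(\bsbb)=\max_{k,k'}(\beta_k-\beta_{k'})=\max_{k,k'}|\beta_k-\beta_{k'}|$, a pointwise maximum of seminorms $|\bsbb^T(\bsb e_k-\bsb e_{k'})|$. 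That representation gives both axioms at once, and I may present it this way for brevity.

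For the kernel, $\range(\bsbb)=0$ holds iff $\max\beta_k=\min\beta_k$. That is equivalent to all $\beta_k$ being equal, i.e., $\bsbb=c\bsb1_m$ for some $c\in\mathbb R$, i.e., $\bsbb\in\mathrm{span}\{\bsb1_m\}$. Conversely, $\range(c\bsb1_m)=c-c=0$.

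No step is a real obstacle; the only care needed is the sign case in homogeneity, where max and min interchange.
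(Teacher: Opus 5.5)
Your proof is correct and follows essentially the same route as the paper. The paper verifies subadditivity from $\max(\alpha_k+\beta_k)\le\max\alpha_k+\max\beta_k$ together with the representation $\range(\bsbb)=\max_k\beta_k+\max_k(-\beta_k)$, which makes the $c<0$ case of homogeneity immediate (your max/min swap done by hand), and it then checks the kernel directly just as you do.
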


For $\range(\cdot)$ defined on $\mathbb{R}^m$, $\Proj_K = \Proj_{\bsb1_m}$  according to the theorem.  From now on, we use $\Proj_K =   \Proj_{\Ker \| \cdot \|_{\range} }$  to denote the orthogonal projection onto the kernel of $\range(\cdot) $, \emph{without} specifying the dimension. That is, each occurrence of $\Proj_K$ may represent different projections due to varying dimensions.

Unlike the $\ell_1$-norm or the nuclear norm,  $\| \cdot \|_{\range}$  is ``nondecomposable'' in the sense that   the following does not hold in general:
$$\|\bsbb \|_{\range} = \|\Proj^*\bsbb  \|_{\range} + \| \Proj^{*,\perp}\bsbb\|_{\range}, \forall \bsbb
$$
where  $\Proj^*$ is   an orthogonal  projection  associated with the true model and  $\Proj^{*,\perp}$ is  its orthogonal complement (cf. \eqref{trumodelspace}).

Decomposability plays a crucial role in the nonasymptotic analysis of regularized  M-estimators (as discussed in  Chapters 9 and   10 of \cite{Wainwright2019}). In the absence of the property, techniques like index splitting should be replaced with certain \textit{nonlinear} operations. To overcome this challenge, we explore the differentiability of   range.

Though   $\range(\bsbb)$ is not additive in each component of
$\bsbb$ like the $\ell_1$  norm,  it is convex.
Recall that for a convex function $f$ defined on $\mathbb R^m$,   $\bsbs$ is  subgradient of $f$  at $\bsbb$ if
\begin{align}
f(\bsbg) -f(\bsbb) \ge \langle \bsbs, \bsbg - \bsbb\rangle, \forall \bsbg \label{defsubgrad}
\end{align}
and the set of all subgradients at $\bsbb$ is denoted by $\partial f(\bsbb)$.
Define the \textit{one-sided} directional derivative of   $f$ (not necessarily convex) at $\bsbb$ with increment $\bsba$:
$$
\delta f(\bsbb; \bsba) = \lim_{t\rightarrow 0+} \frac{f(\bsbb+t\bsba) - f(\bsbb)}{t}.
$$
For a real-valued convex function defined throughout  the   Euclidean space,  $\partial f(\bsbb)\ne \emptyset$, and
 $\delta f(\bsbb; \bsba)$ exists for all $\bsbb$ and $  \bsba$ \citep{Rockafellar1970}.
\begin{theorem}\label{thm:subgrad}
Given $\bsbb\in \mathbb R^m$, the subdifferential $\partial \range(\bsbb)$ can be determined  as follows:  If  $\max \beta_k > \min \beta_k$,
\begin{align}
\begin{split}\partial \range(\bsbb)=\{\bsbs\in \mathbb R^m:\ &\bsbs_{\overline{\mathcal M}(\bsbb)}\succeq \bsb0, \langle \bsb1,  \bsbs_{\overline{\mathcal M}(\bsbb)}\rangle =1, \\ & \bsbs_{\mathcal M(\bsbb)} = \bsb0, \\&  \bsbs_{\underline{\mathcal M}(\bsbb)}\preceq \bsb0, \langle \bsb1,  \bsbs_{\underline{\mathcal M}(\bsbb)}\rangle =-1\}.
\end{split}\end{align}
  If  $\max \beta_k = \min \beta_k$, \begin{align}
\partial \range(\bsbb) & =\{\bsbs\in \mathbb R^m:   \bsbs =  \bsbs_1 - \bsbs_2, \bsbs_1\succeq \bsb0, \bsbs_2 \succeq \bsb0, \langle \bsb1, \bsbs_1\rangle=\langle \bsb1, \bsbs_2\rangle  = 1\}\notag\\
& = \{\bsbs\in \mathbb R^m:   \langle \bsb1, \bsbs\rangle = 0, \| \bsbs\|_1 \le 2 \}.\label{rangsubdiff}
\end{align}

Moreover, the \emph{one-sided} directional derivative of the range seminorm is given by
\begin{align*}
\delta \range   (\bsbg; \bsbb - \bsbg) &  = \begin{cases}
\max_{k\in \overline {\mathcal M}(\bsbg)} (\beta_k - \gamma_k) - \min_{k\in \underline {\mathcal M}(\bsbg)}(\beta_k - \gamma_k) , & \mbox{ if } \Proj_K^\perp\bsbg\ne \bsb0\\
\|\bsbb - \bsbg\|_{\range}, & \mbox{ if } \Proj_K^\perp \bsbg=\bsb0.\end{cases}
\end{align*}
\end{theorem}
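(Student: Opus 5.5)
The plan is to write the range as a difference of two convex pieces and reduce everything to standard max-function calculus: $\range(\bsbb)=\max_k\beta_k+\max_k(-\beta_k)$. Both $g(\bsbb)=\max_k\beta_k$ and $h(\bsbb)=\max_k(-\beta_k)$ are finite convex functions on $\mathbb R^m$. By the subdifferential rule for pointwise maxima of linear functions (Danskin, or \cite{Rockafellar1970}),
\[
\partial g(\bsbb)=\mathrm{conv}\{\bsb e_k:k\in\overline{\mathcal M}(\bsbb)\}
\quad\text{and}\quad
\partial h(\bsbb)=\mathrm{conv}\{-\bsb e_k:k\in\underline{\mathcal M}(\bsbb)\}.
\]
Since both functions are finite everywhere, the sum rule holds with equality, so $\partial\range(\bsbb)=\partial g(\bsbb)+\partial h(\bsbb)$.

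In the case $\max\beta_k>\min\beta_k$, the index sets $\overline{\mathcal M}$ and $\underline{\mathcal M}$ are disjoint. A sum $\bsbs_1-\bsbs_2$, with $\bsbs_1$ a probability vector supported on $\overline{\mathcal M}$ and $\bsbs_2$ a probability vector supported on $\underline{\mathcal M}$, is then exactly a vector that satisfies the following:
\begin{itemize}
\item it is nonnegative with unit sum on $\overline{\mathcal M}$;
\item it is nonpositive with sum $-1$ on $\underline{\mathcal M}$;
\item it is zero on $\mathcal M$.
\end{itemize}
Conversely, any such vector splits uniquely in this way. This gives the first display.

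In the case $\max=\min$, both index sets equal $[m]$, so $\partial\range(\bsbb)=\Delta-\Delta$, where $\Delta$ is the probability simplex. This is the first form of \eqref{rangsubdiff}. The main (mild) obstacle is to show $\Delta-\Delta=\{\bsbs:\langle\bsb1,\bsbs\rangle=0,\ \|\bsbs\|_1\le2\}$.
\begin{itemize}
\item The inclusion $\subseteq$ is immediate: the entries of a difference of two probability vectors sum to $0$, and $\|\bsbs_1-\bsbs_2\|_1\le 2$ by the triangle inequality.
\item For $\supseteq$, let $\bsbs^+$ and $\bsbs^-$ be the positive and negative parts of $\bsbs$. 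The constraint $\langle\bsb1,\bsbs\rangle=0$ gives $\langle\bsb1,\bsbs^+\rangle=\langle\bsb1,\bsbs^-\rangle=:a$, and $\|\bsbs\|_1\le2$ gives $a\le1$. Set $\bsbs_1=\bsbs^++(1-a)\bsb e_1$ and $\bsbs_2=\bsbs^-+(1-a)\bsb e_1$. Both are probability vectors and their difference is $\bsbs$.
\end{itemize}

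For the directional derivative, use that the directional derivative of a finite convex function is the support function of its subdifferential: $\delta f(\bsbg;\bsba)=\max_{\bsbs\in\partial f(\bsbg)}\langle\bsbs,\bsba\rangle$. Alternatively, it follows directly from $\delta g(\bsbg;\bsba)=\max_{k\in\overline{\mathcal M}(\bsbg)}\alpha_k$, which holds because for small $t$ only the maximizing indices stay active.

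Take $\bsba=\bsbb-\bsbg$. When $\Proj_K^\perp\bsbg\ne\bsb0$, i.e.\ $\bsbg$ is nonconstant, we get
\[
\delta\range(\bsbg;\bsba)=\max_{k\in\overline{\mathcal M}(\bsbg)}\alpha_k+\max_{k\in\underline{\mathcal M}(\bsbg)}(-\alpha_k)=\max_{k\in\overline{\mathcal M}(\bsbg)}\alpha_k-\min_{k\in\underline{\mathcal M}(\bsbg)}\alpha_k,
\]
which is the stated formula. When $\bsbg$ is constant, both index sets are $[m]$, and the expression becomes $\max_k\alpha_k-\min_k\alpha_k=\|\bsbb-\bsbg\|_{\range}$. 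Equivalently, $\range(\bsbg+t\bsba)=t\range(\bsba)$ directly from Theorem \ref{thm:semi}, since $\bsbg$ lies in the kernel.
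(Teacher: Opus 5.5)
Your proposal is correct and follows essentially the same route as the paper. The paper also starts from $\range(\bsbb)=\max_k\beta_k+\max_k(-\beta_k)$ and gets the directional derivative from $\delta\range(\bsbg;\bsbb-\bsbg)=\sup_{\bsbs\in\partial\range(\bsbg)}\langle\bsbs,\bsbb-\bsbg\rangle$; the only difference is in the constant case, where your construction $\bsbs_1=\bsbs^{+}+(1-a)\bsb{e}_1$, $\bsbs_2=\bsbs^{-}+(1-a)\bsb{e}_1$ is cleaner and fully explicit, whereas the paper only sketches a two-coordinate construction with $\epsilon=(1-c)/2$.
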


The subgradients of the range seminorm   exhibit a much more intricate structure compared to those of the $\ell_1$-norm.
In \textit{both} cases,     $\bsbs\in \partial \range(\bsbb)$  necessarily satisfies $\langle \bsb1 , \bsbs\rangle = 0$ and $ \| \bsbs\|_1 \le 2$.
Moreover, because in the scenario where   $\Proj_K^\perp\bsbg =\bsb0$,   $ \overunderline{\mathcal M}(\bsbb)  =  \overline{\mathcal  M}(\bsbb)  =  \underline{\mathcal  M}(\bsbb) $  according to our notation,    we can   express the one-sided directional derivative as
\begin{align}
\delta \range (\bsbg; \bsbb - \bsbg)
 & = \max_{ k\in \overline {\mathcal M}(\bsbg)}     (\beta_k-\gamma_k)-  \min_{  k\in\underline {\mathcal M}(\bsbg)}(\beta_k -\gamma_k), \ \forall\bsbb,\bsbg.
\end{align}
Particularly, when $\bsbb\in \Proj_K$ or $\bsbg\in \Proj_K$, $\delta \range (\bsbg; \bsbb - \bsbg) = - \| \bsbg\|_{\range}$ or $  \| \bsbb\|_{\range}$, respectively, and so the generalized Bregman of the range    (defined later in  Section \ref{sec:nonasymstat}) satisfies $\breg_{\range} ( \bsbb , \bsbg)=0$.
In addition, it is easy to see that   one-sided directional derivatives satisfy a \emph{nonnegative scaling  property}: $\delta \range(\bsbb; c \bsba) = c \delta \range(\bsbb;  \bsba) $ for any  $c\ge 0$. This property will be used to define and justify a \textbf{cone} used in the regularity conditions in Section \ref{sec:nonasymstat}.

\begin{figure}[!ht]
\centering
\includegraphics[width=1\linewidth]{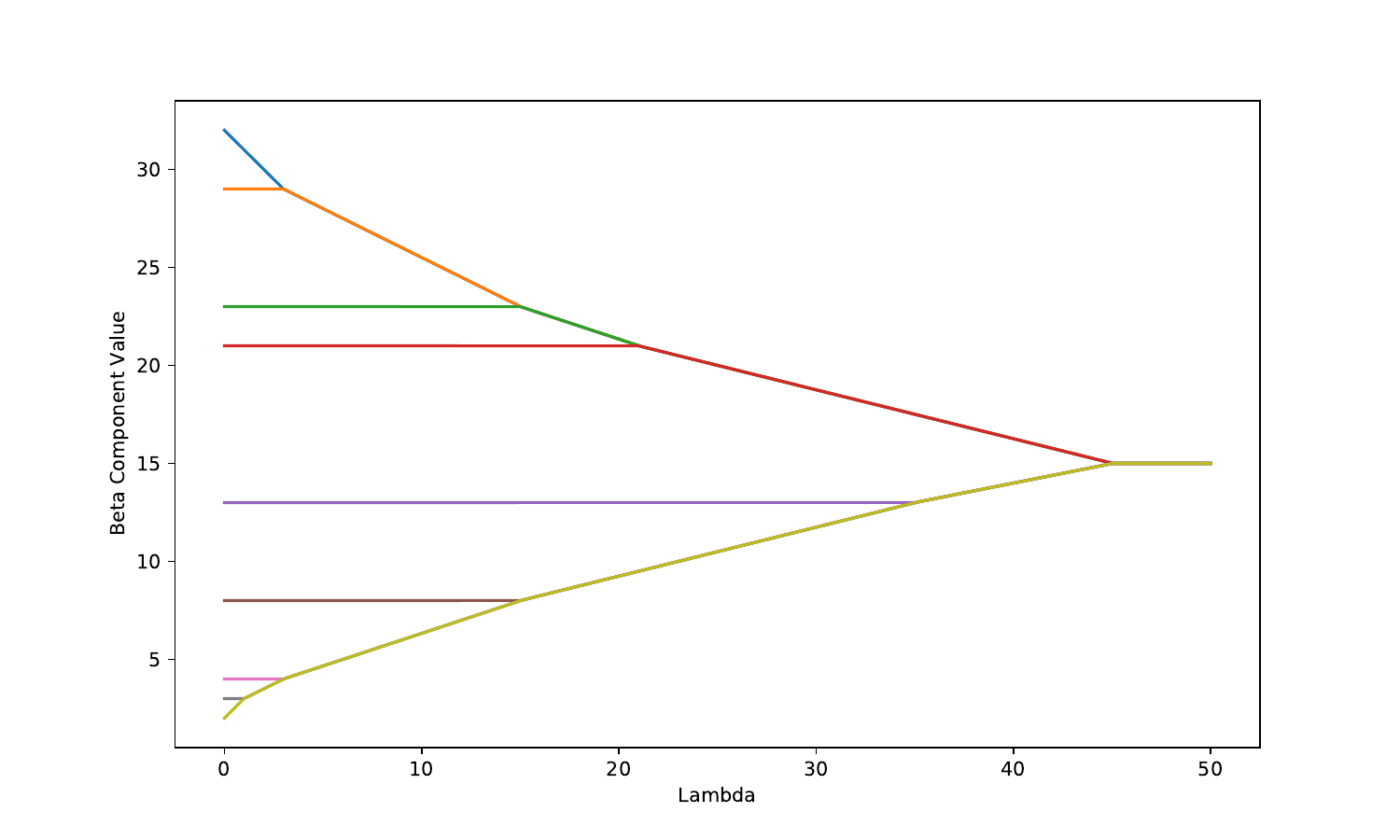}
\vspace{-.3in}\caption{\footnotesize As $\lambda$ varies in the  proximity problem, enforcing the range penalty
can produce   estimates with \emph{distinct} components, estimates that \emph{cluster} at extreme values, and   a \emph{uniform} estimate. The effects of shrinkage on extreme values and polar clustering  are  evident. \label{fig:prox}}
\end{figure}

A notable fact is that we can provide an explicit  computable form  of the proximity operator associated with  $\range$, to be used in Section \ref{sec:comp}:
\begin{align}
\prox_{\mathcal R}(\bsby) := \arg\min\frac{1}{2} \| \bsby - \bsbb\|_2^2 + \lambda \range(\bsbb). \label{penRprox}
\end{align}
Figure \ref{fig:prox} presents a plot of the solution path when varying the values of $\lambda$.  \begin{theorem}\label{thm:prox}
Given $\bsby =[y_k]\in \mathbb R^m$ ($m\ge 2$), $\lambda \ge 0$, let  the optimal solution of   \eqref{penRprox} be  denoted by $\bsbb^o$.
Then,    $\langle \bsb1, \bsby\rangle = \langle \bsb1, \bsbb^o\rangle$. Moreover, with ordered components $y_{(1)}\ge \cdots \ge y_{(m)}$ of $\bsby$, define
\begin{align*}
&\bar c_1 = 0,  \bar c_k = \sum_{1\le j < k} y_{(j)} - y_{(k)}  (2\le k \le m),  \bar c_{m+1}=+\infty,
\\
&\ubar{c}_1= 0, \ubar{c}_k = \sum_{m-k+1<  j \le m} y_{(m-k+1)} - y_{(j)} (2\le k \le m), \ubar c_{m+1}=+\infty,
\end{align*}
and
\begin{align}
\overline \lambda := \frac{1}{2}\sum_{k=1}^m | y_k - \bar y| \ (=\| \bsby\|_{\range ^*} = \frac{1}{2}\|\Proj_K^\perp \bsby\|_1).\label{lambdabardef}
\end{align}
 Suppose $\lambda \in  [\bar{c}_{   k_1}, \bar c_{k_1+1})$ and  $\lambda \in  [\ubar{c}_{   k_2}, \ubar c_{k_2+1})$ for some $k_1=k_1(\lambda), k_2=k_2(\lambda)\in [m]$ and let $ \mathcal I_1$ be the index set associated with $y_{(1)}, \ldots, y_{(k_1)}$,   $ \mathcal I_2$ be the index set associated with $y_{(m)}, \ldots, y_{(m-k_2+1)}$, and $\mathcal I_0 = [m]\setminus (\mathcal I_1 \cup \mathcal I_2)$.  Then the components of $\bsbb^o$ are determined as follows for different cases:

(i) $\lambda< \overline \lambda$. In this case, $k_1 + k_2 \le   m$ (but     $\mathcal I_0$ may be empty). The components of $\bsbb^o$ may  cluster at \emph{extreme values} while the remaining components retain their values as  $y_i$ ``\emph{intact}''  without  additional shrinkage:
\begin{align}
\begin{cases}
\beta_k^o = \frac{1}{|\mathcal I_1|} \sum_{k\in \mathcal I_1} y_k -\frac{1}{|\mathcal I_1|} \lambda\ \, \in \big(y_{(k_1+1)}, y_{(k_1)}\big], \  & k\in \mathcal I_1  \\
   \beta_k^o = y_k, \ & k\in \mathcal I_0 \\
 \beta_k^o =  \frac{1}{|\mathcal I_2|} \sum_{k\in \mathcal I_2} y_k +  \frac{1}{|\mathcal I_2|}\lambda\ \, \in \big[y_{(m-k_2+1)}, y_{(m-k_2)}\big), \ & k\in  \mathcal I_2.
\end{cases}
\label{proxbetasol-gen}
\end{align}
In particular, if $\lambda$ satisfies $\underline \lambda\le \lambda <\overline \lambda $ with \begin{align}
\underline \lambda := \min_k\,  (\bar c_k \vee \ubar{c}_{m-k} ) \label{lambdabar1}
\end{align}
then $k_1+ k_2 = m$ ($\mathcal I_0= \emptyset$), and so the components of $\bsbb^o$ exhibit \emph{two} distinct clusters.

(ii)  $\lambda \ge\overline \lambda$. In this case,   $k_1+ k_2 \ge   m$. The components of $\bsbb^o$ are \emph{uniform}:
\begin{align}
\bsbb^o =\Proj_{K} \bsby =\frac{\sum y_k}{m}\, \bsb1.
\end{align}

\end{theorem}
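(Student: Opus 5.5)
The argument rests on the optimality condition $\bsby-\bsbb^o\in\lambda\,\partial\range(\bsbb^o)$ together with the explicit subdifferential in Theorem \ref{thm:subgrad}. The objective in \eqref{penRprox} is strongly convex, so $\bsbb^o$ is unique. It therefore suffices, in each case, to exhibit a candidate $\bsbb^o$ and verify that $\bsbs:=(\bsby-\bsbb^o)/\lambda$ lies in $\partial\range(\bsbb^o)$; the case $\lambda=0$ is trivial. The mean identity comes first. By Theorem \ref{thm:subgrad}, every subgradient satisfies $\langle\bsb1,\bsbs\rangle=0$, hence $\langle\bsb1,\bsby-\bsbb^o\rangle=0$. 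Equivalently, $\range(\bsbb+c\bsb1)=\range(\bsbb)$, so minimizing over the shift $c$ forces the means of $\bsby$ and $\bsbb^o$ to agree.

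\textbf{Case (ii).} Take $\bsbb^o=\Proj_K\bsby=\bar y\bsb1$. Then $\bsbs=\Proj_K^\perp\bsby/\lambda$ has zero sum. Membership in \eqref{rangsubdiff} requires $\|\bsbs\|_1\le2$, i.e.\ $\|\Proj_K^\perp\bsby\|_1\le2\lambda$, which is exactly $\lambda\ge\overline\lambda$ by \eqref{lambdabardef}. Conversely, if $\bsbb^o$ is uniform, the same computation forces $\lambda\ge\overline\lambda$. So $\lambda<\overline\lambda$ implies $\Proj_K^\perp\bsbb^o\neq\bsb0$, and the first branch of Theorem \ref{thm:subgrad} applies in case (i).

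\textbf{Case (i).} First I characterize $k_1$. The map $k\mapsto \bar c_k=\sum_{j<k}(y_{(j)}-y_{(k)})$ is nondecreasing; this is the amount of ``mass'' that must be removed from the top $k-1$ entries to lower them all to level $y_{(k)}$. Thus $\lambda\in[\bar c_{k_1},\bar c_{k_1+1})$ is equivalent to the existence of a level $t_1\in(y_{(k_1+1)},y_{(k_1)}]$ with $\sum_{j\le k_1}(y_{(j)}-t_1)=\lambda$, namely $t_1=(\sum_{\mathcal I_1}y_k-\lambda)/|\mathcal I_1|$. By symmetry, the bottom level $t_2\in[y_{(m-k_2+1)},y_{(m-k_2)})$ satisfies $\sum_{\mathcal I_2}(t_2-y_k)=\lambda$.

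Next I set $\bsbb^o$ as in \eqref{proxbetasol-gen}, that is, $\min(y_k,t_1)$ capped from below at $t_2$, and check the subgradient conditions:
\begin{itemize}
\item on $\mathcal I_1$, $s_k=(y_k-t_1)/\lambda\ge0$ and these sum to $1$;
\item on $\mathcal I_2$, the $s_k$ are $\le0$ and sum to $-1$;
\item on $\mathcal I_0$, $s_k=0$.
\end{itemize}

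The main obstacle is to show that these index sets match $\overline{\mathcal M}(\bsbb^o)$, $\mathcal M(\bsbb^o)$ and $\underline{\mathcal M}(\bsbb^o)$. In particular I need $t_1>t_2$ and that $\mathcal I_1$ and $\mathcal I_2$ do not overlap, i.e.\ $k_1+k_2\le m$.

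I would argue by the truncation sums $g(t)=\sum_k(y_k-t)_+$ and $h(t)=\sum_k(t-y_k)_+$, which are respectively decreasing and increasing in $t$, with $t_1=g^{-1}(\lambda)$ and $t_2=h^{-1}(\lambda)$. Using $g(t)-h(t)=\sum_k(y_k-t)=m(\bar y-t)$, one checks that $g(\bar y)=h(\bar y)=\overline\lambda$. Hence $\lambda<\overline\lambda$ gives $t_1>\bar y>t_2$. The sets $\mathcal I_1=\{y_k\ge t_1\}$ and $\mathcal I_2=\{y_k\le t_2\}$ are then disjoint, all indices with $t_2<y_k<t_1$ form $\mathcal I_0$, and every component of $\bsbb^o$ on $\mathcal I_0$ lies strictly between $t_2$ and $t_1$. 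This identifies the three index sets as required, and in the same way $\lambda\ge\overline\lambda$ gives $k_1+k_2\ge m$.

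\textbf{The two-cluster claim.} Finally, $\mathcal I_0=\emptyset$ means that no $y_{(k)}$ lies strictly between $t_2$ and $t_1$. For each split point $k$ this amounts to $t_1\le y_{(k)}$ and $t_2\ge y_{(k+1)}$, i.e.\ $\lambda\ge\bar c_k$ and $\lambda\ge\ubar c_{m-k}$ (with the index conventions of the statement). Minimizing over $k$ gives the threshold $\underline\lambda$ in \eqref{lambdabar1}.
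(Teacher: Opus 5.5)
Your proof is correct, but it runs in the opposite direction from the paper's.

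\textbf{The paper's route (necessary conditions).} The paper assumes the unknown minimizer is non-uniform and writes out the KKT system from Theorem \ref{thm:subgrad} on $\overline{\mathcal M}(\bsbb^o)$, $\mathcal M(\bsbb^o)$ and $\underline{\mathcal M}(\bsbb^o)$. From it, it derives $\overline m\,\overline\beta^o=\sum_{\overline{\mathcal M}}y_j-\lambda$ and $\lambda=\sum_{y_j\ge\overline\beta^o}(y_j-\overline\beta^o)$, identifies $\overline{\mathcal M}=\mathcal I_1$, and locates $\overline\beta^o$ in $(y_{(k_1+1)},y_{(k_1)}]$. The threshold $\overline\lambda$ comes from an algebraic chain showing that non-uniformity forces $\lambda<\frac{k_1+k_2}{2m}\|\Proj_K^\perp\bsby\|_1\le\overline\lambda$; the converse is referred to the dual seminorm. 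The threshold $\underline\lambda$ is identified by taking the smallest $\lambda$ that yields two clusters. The paper then shows $k_1$ is the unique minimizer of $\bar c_k\vee\ubar{c}_{m-k}$, which relies on its remark that the number of clusters is nonincreasing in $\lambda$.

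\textbf{Your route (sufficiency plus uniqueness).} You guess the clipped vector $\max(\min(y_k,t_1),t_2)$, verify that $(\bsby-\bsbb^o)/\lambda$ lies in the subdifferential, and let strong convexity supply uniqueness. The bookkeeping runs through $g(t)=\sum_k(y_k-t)_+$ and $h(t)=\sum_k(t-y_k)_+$, using $\bar c_k=g(y_{(k)})$, $\ubar{c}_k=h(y_{(m-k+1)})$ and $g(\bar y)=h(\bar y)=\overline\lambda$.

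\textbf{What each buys.} With your approach, you never have to argue that the KKT-derived index sets coincide with $\mathcal I_1,\mathcal I_0,\mathcal I_2$, because uniqueness does that work. The dichotomy at $\overline\lambda$ becomes a one-line monotonicity statement. Your two-cluster step also gives an equivalence: $\mathcal I_0=\emptyset$ if and only if $\lambda\ge\underline\lambda$, whenever $\lambda<\overline\lambda$. It does so without the cluster-monotonicity claim. The paper's route explains where the formula comes from rather than verifying a guess, and it ties $\overline\lambda$ to the dual seminorm.

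\textbf{One point to make explicit.} Some steps use that $g$ is strictly decreasing on $(-\infty,y_{(1)}]$ and $h$ is strictly increasing on $[y_{(m)},\infty)$. These are $k_1+k_2\ge m$ in case (ii), and the implications $\lambda\ge\bar c_k\Rightarrow t_1\le y_{(k)}$ and $\lambda\ge\ubar{c}_{m-k}\Rightarrow t_2\ge y_{(k+1)}$. Mere monotonicity is not enough there, so state the strictness.
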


The range penalization    shrinks the extreme values of $\bsbb^o$ toward 0, thereby reducing the range,  as observed by
\begin{align*}\max \beta_k^o - \min \beta_k^o &=\frac{1}{|\mathcal I_1|} \sum_{k\in \mathcal I_1} y_k -\frac{1}{|\mathcal I_2|} \sum_{k\in \mathcal I_2} y_k -   \lambda(\frac{1}{|\mathcal I_1|} + \frac{1}{|\mathcal I_2|} ) =\avg (\bsby_{\overline{\mathcal M}(\bsbb^o)}) - \avg( \bsby_{\underline{\mathcal M}(\bsbb^o)}) - \lambda \tau^2 (\bsbb^o).
\end{align*}
Here, $\tau$ is  a useful measure that will recur frequently in theoretical analysis (such as inflation control):
\begin{align}
\tau(\cdot) := \left( \frac{1}{\overline M (\cdot)} + \frac{1}{\underline M (\cdot)}\right)^{1/2} \label{taudefvec}
\end{align}
which is bounded by
\begin{align}
\frac{\sqrt 2}{\sqrt m}\le \tau(\bsbb)\le  {\sqrt 2} , \ \forall \bsbb \in \mathbb R^m.
\end{align}
A basic result, established by minimizing the variance of
$\beta_k$ ($1\le k \le m$), is outlined below. (The detailed proof will be provided later when we discuss a more general  Theorem \ref{thm:tau}.)

\begin{lemma}\label{lem:tautoEuclide}
Given any    $\bsbb\in \mathbb R^m$,    $\range(\bsbb)   \le \tau(\bsbb) \| \Proj_K^\perp  \bsbb \|_2$.
\end{lemma}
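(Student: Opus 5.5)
We plan to prove the inequality by minimizing the squared Euclidean norm of the centered vector subject to fixed extreme values and fixed multiplicities $\overline M, \underline M$. This is the variance-minimization idea mentioned before the lemma.

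If $\max\beta_k=\min\beta_k$, both sides vanish: $\range(\bsbb)=0$, and $\Proj_K^\perp\bsbb=\bsb0$ by Theorem~\ref{thm:semi}. So assume $\max\beta_k>\min\beta_k$. Write $a=\max\beta_k$, $b=\min\beta_k$, $\overline M=\overline M(\bsbb)$, $\underline M=\underline M(\bsbb)$, and let $\bar\beta$ be the mean. Then
\[
\|\Proj_K^\perp\bsbb\|_2^2=\sum_k(\beta_k-\bar\beta)^2=\min_{c\in\mathbb R}\sum_k(\beta_k-c)^2.
\]
We drop the intermediate indices $\mathcal M(\bsbb)$, which only increases nothing and keeps a lower bound:
\[
\|\Proj_K^\perp\bsbb\|_2^2\ \ge\ \min_{c}\ \bigl[\overline M(a-c)^2+\underline M(b-c)^2\bigr].
\]
The minimizer of the one-dimensional quadratic on the right is the weighted mean $c=(\overline M a+\underline M b)/(\overline M+\underline M)$. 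Plugging it in gives the standard two-point variance identity
\[
\min_c\bigl[\overline M(a-c)^2+\underline M(b-c)^2\bigr]=\frac{\overline M\,\underline M}{\overline M+\underline M}(a-b)^2=\frac{(a-b)^2}{1/\overline M+1/\underline M}=\frac{\range(\bsbb)^2}{\tau(\bsbb)^2}.
\]
Taking square roots yields $\range(\bsbb)\le\tau(\bsbb)\|\Proj_K^\perp\bsbb\|_2$.

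There is no real obstacle here. The only points needing care are to check the two-point minimization algebra and to note that the sets $\overline{\mathcal M}$ and $\underline{\mathcal M}$ are disjoint in the nondegenerate case, so the two sums do not double count. The argument also shows when equality holds: exactly when $\mathcal M(\bsbb)$ is empty, i.e., when the components form two polar clusters. This matches the polar-clustering picture of Theorem~\ref{thm:prox}.
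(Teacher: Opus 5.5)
Your proof is correct and is essentially the paper's argument: the paper obtains the lemma as the case $\bsbzeta=\bsb0$ of Theorem~\ref{thm:tau}, whose proof minimizes the variance with the extreme values held fixed, and your one-dimensional minimization over the centering constant $c$ after discarding the intermediate terms is an explicit version of that same computation. One correction to your closing aside: equality does not require $\mathcal M(\bsbb)=\emptyset$. It holds exactly when every intermediate component equals $\bar\beta$, which then automatically coincides with the weighted mean of the two extremes. For example, $\bsbb=(1,0,-1)^T$ gives $\range(\bsbb)=2=\sqrt2\cdot\sqrt2=\tau(\bsbb)\|\Proj_K^\perp\bsbb\|_2$ even though $\mathcal M(\bsbb)=\{2\}$.
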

In conclusion, since $\range$ is a \emph{seminorm}, applying range penalization not only aids in detecting signals within the {kernel space} (reflecting shared parameters among various clients in federated learning) but also captures an additional  parsimony within its {complement space}. Specially, in cases where there are significant clusters at extreme values, $\range(\bsbb)$ can be significantly smaller (compared to the length of $\bsbb$), e.g.,
$$\overline M(\bsbb) \wedge \underline M(\bsbb) \gtrsim m\implies\range(\bsbb) \lesssim \frac{ \| \Proj_K^\perp  \bsbb \|_2}{\sqrt m}.$$    The  relation  contrasts with pairwise-difference regularization, which may discourage size-balanced clustering (cf. \cite{SheetalCRL22} for further discussion); one helpful intuition in this regard  comes from the $\ell_1$-representation
$\range(\bsbb) =
(1/m)\sum_{k=1}^m (|\max_\ell \beta_\ell-\beta_k|  +|\beta_k-\min_\ell \beta_\ell|)
  $.
 The resulting   error rate reduction  will be elaborated in Section \ref{sec:nonasymstat}. Therefore, this single seminorm regularizer effectively achieves dual objectives.

As mentioned in Section \ref{sec:intro}, range shrinkage and polar clustering   serve as a form of regularization in estimating the unknowns.
On the other hand,
 the range penalization does \textbf{not} impact the ``intermediate" components of $\bsby$, as evidenced by \eqref{proxbetasol-gen}.  This absence of undesired shrinkage distinguishes the range proximity operator from many commonly used penalties.  (For example,        the $\ell_1$-norm penalty $\lambda \| \bsbb\|_1$ employs    soft-thresholding $\Theta_{\text{soft}}(\bsby; \lambda)=   [(|y_k|-\lambda)_+   \sgn( y_k)]$, which shrinks \textit{all} components. The same is true for the ridge penalty   $\lambda \| \bsbb\|_2^2/2$  with proximity  $\bsby/(1+\lambda)$ and for the Stein shrinkage $\lambda \| \bsbb\|_2$ with   proximity  $\Theta_{\text{soft}}(\|\bsby\|_{2};\lambda)\bsby/\|\bsby\|_2$.)

The result for \(\lambda \in [\underline{\lambda},   \overline{\lambda})\) demonstrates the possibility of obtaining bileveled coefficients. Such coefficients admit an extremely compact representation, requiring only two floating-point values and one bit for  each \(\beta_k\) (for \(1 \le k \le m\)). In many machine learning tasks, bileveled coefficients also reduce model complexity, leading to faster computations, lower memory usage, and reduced power consumption. This is especially valuable in resource-constrained environments like edge computing and mobile applications.

 We introduce some variants of the range
seminorm to facilitate theoretical analysis.
\begin{theorem}\label{thm:dualnorm}
Consider $\|\cdot \|_{\range}$ on $\mathbb R^m$. For any $ \bsba\in \mathbb R^m$,
\begin{align}
\sup_{\| \bsbb\|_{\range} \le1} \langle \bsbb, \bsba\rangle=\frac{1}{2}\| \bsba\|_1 + \iota_{\langle 1, \bsba\rangle = 0}=\begin{cases} \frac{1}{2} \| \bsba\|_1, & \mbox{ if } \langle 1, \bsba\rangle = 0\\ +\infty, & \mbox{ if }\langle 1, \bsba\rangle \ne 0, \end{cases}
\end{align}
from which it  follows that  the range's \emph{``\textbf{dual seminorm}''} is given by
\begin{align}
\| \bsba \|_{\range^*}: =\sup_{\| \bsbb\|_{\range} \le1, \bsbb\in \Proj_K^\perp} \langle \bsbb, \bsba\rangle=\frac{1}{2} \| \Proj_K^\perp \bsba\|_1
\label{dualseminormvec}
\end{align}
\eqref{dualseminormvec} is still a seminorm on $\mathbb R^m$ with the same kernel.

Moreover, the Fenchel conjugate of $\lambda \| \cdot\|_{\range}$:     $
\range^* _{\lambda}(   \bsba) := \sup_{  \bsbb } \langle \bsbb, \bsba\rangle
 -\lambda \| \bsbb\|_{\range}$ with $\lambda\ge 0$ is given by
\begin{align}
\range^* _{\lambda}(   \bsba) & = \iota_{\| \bsba\|_{\range^*} \le  \lambda,  \bsba\in \Proj_{K}^\perp}=  \iota_{\| \bsba\|_1 \le 2\lambda, \, \langle 1, \bsba\rangle = 0}=\begin{cases} 0, & \mbox{ if } \langle 1, \bsba\rangle = 0, \| \bsba\|_1 \le 2\lambda\\ +\infty, & \mbox{ otherwise} . \end{cases}
\end{align}
\end{theorem}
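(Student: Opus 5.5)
The plan is to compute the support function $\sigma(\bsba):=\sup_{\|\bsbb\|_{\range}\le 1}\langle \bsbb,\bsba\rangle$ first, and then read off the dual seminorm and the conjugate from it.

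\emph{Case $\langle \bsb1,\bsba\rangle\ne 0$.} By Theorem \ref{thm:semi}, $c\bsb1$ lies in the unit ball for every $c\in\mathbb R$. Hence $\langle c\bsb1,\bsba\rangle=c\langle\bsb1,\bsba\rangle$ is unbounded and $\sigma(\bsba)=+\infty$.

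\emph{Case $\langle \bsb1,\bsba\rangle=0$.} Write $M=\max\beta_k$ and $\mu=\min\beta_k$.
\begin{itemize}
\item \textbf{Upper bound.} Since $\sum_k\alpha_k=0$, we have $\langle\bsbb,\bsba\rangle=\langle \bsbb-\tfrac{M+\mu}{2}\bsb1,\bsba\rangle$. Every entry of $\bsbb-\tfrac{M+\mu}{2}\bsb1$ has absolute value at most $\range(\bsbb)/2$. H\"older's inequality then gives $\langle\bsbb,\bsba\rangle\le \tfrac12\range(\bsbb)\|\bsba\|_1\le\tfrac12\|\bsba\|_1$.
\item \textbf{Attainment.} Take $\beta_k=\tfrac12\sgn(\alpha_k)$, which has range at most $1$ and gives $\langle\bsbb,\bsba\rangle=\tfrac12\|\bsba\|_1$.
\end{itemize}
This proves the first display. (Alternatively, one could use that $\partial\range(\bsb0)=\{\bsbs:\langle\bsb1,\bsbs\rangle=0,\|\bsbs\|_1\le2\}$ by Theorem \ref{thm:subgrad}, so the unit dual ball is this set. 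The direct argument is simpler.)

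\emph{Dual seminorm.} For \eqref{dualseminormvec}, restrict to $\bsbb\in\Proj_K^\perp$. Then $\langle\bsbb,\bsba\rangle=\langle\bsbb,\Proj_K^\perp\bsba\rangle$, and $\Proj_K^\perp\bsba$ sums to zero. Applying the previous bound to $\Proj_K^\perp\bsba$ gives $\le\tfrac12\|\Proj_K^\perp\bsba\|_1$. For attainment, the maximizer $\tfrac12\sgn(\Proj_K^\perp\bsba)$ may be replaced by its centered version $\Proj_K^\perp$ of it. This changes neither the range nor the inner product, because $\Proj_K^\perp\bsba\perp\bsb1$.

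The map $\bsba\mapsto\tfrac12\|\Proj_K^\perp\bsba\|_1$ is a seminorm because it is a norm composed with a linear map. Its kernel is $\{\Proj_K^\perp\bsba=\bsb0\}=\mathrm{span}(\bsb1)$.

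\emph{Conjugate.} Use positive homogeneity of $\range$: $\range^*_\lambda(\bsba)=\sup_{t\ge0}\,t\bigl(\sigma(\bsba)-\lambda\bigr)$, together with the zero-range directions.
\begin{itemize}
\item If $\langle\bsb1,\bsba\rangle\ne0$, take $\bsbb=c\bsb1$ to get $+\infty$.
\item If $\langle\bsb1,\bsba\rangle=0$, then $\langle\bsbb,\bsba\rangle\le\tfrac12\|\bsba\|_1\range(\bsbb)$ for all $\bsbb$. So the supremum is $0$ when $\|\bsba\|_1\le2\lambda$, attained at $\bsbb=\bsb0$.
\item If $\langle\bsb1,\bsba\rangle=0$ and $\|\bsba\|_1>2\lambda$, scale the maximizer $\tfrac t2\sgn(\bsba)$ with $t\to\infty$ to get $+\infty$.
\end{itemize}
The equivalent forms of the indicator follow because $\bsba\in\Proj_K^\perp$ means $\langle\bsb1,\bsba\rangle=0$, and then $\|\bsba\|_{\range^*}=\tfrac12\|\bsba\|_1$.

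The only step needing any care is the centering trick for the upper bound. There is no real obstacle.
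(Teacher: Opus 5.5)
Your proof is correct and follows essentially the same route as the paper. Both arguments recenter $\bsbb$ at its midrange so that $\range(\bsbb)=2\|\bsbb-\tfrac{\max\beta_k+\min\beta_k}{2}\bsb1\|_\infty$, apply H\"older's inequality against a zero-sum $\bsba$, and reuse that bound for the conjugate. Your explicit maximizer $\tfrac12\sgn(\bsba)$, centered where needed, makes attainment and the $\Proj_K^\perp$-restricted dual seminorm for general $\bsba$ a bit more transparent than the paper's brief remark about mixed signs.
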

Interestingly,  $(1/2) \| \Proj_K^\perp \cdot\|_1$ and   $2 \| \Proj_K^\perp \cdot \|_\infty$ are \textbf{not} dual seminorms. Instead,   $\range(\cdot)$ and   $(1/2) \| \Proj_K^\perp \cdot\|_1$
are dual to each other on the kernel complement space $ \Proj_K^\perp $.

 To conclude this section, we can extend the range concepts to block vectors and matrices.
Given a block vector $\bsbb=[\bsbb_1^T, \ldots, \bsbb_p^T]^T$ where the subvectors $\bsbb_j$ may have different sizes, define its block(group)-range seminorm by
\begin{align}
\range^{(b)}(\bsbb): = &  \sum_{j=1}^p \|\bsbb_j\|_{ \range},
\end{align}
where the superscript $(b)$ indicates the block version of the range seminorm. The kernel of $\range^{(b)}(\cdot)$ is the Cartesian product of the kernels of $\range(\cdot)$  on each individual block.
Henceforth, we will use $\Proj_K^{(b)}$ to denote the orthogonal projection onto the space,  which is a block diagonal matrix, $\diag\{\Proj_K, \ldots, \Proj_K \}$, consisting of a series of $\Proj_K$'s. Each $\Proj_K$ within this matrix is a kernel projection that may vary in size.   Hence     $\Proj_K^{(b)}\bsbb$ gives the block averaged $\bsbb$: $[(\Proj_K \bsbb_1)^T, \ldots, (\Proj_K \bsbb_p)^T]^T$.
We   denote its orthogonal complement as $\Proj_K^{(b),\perp} = \bsbI - \Proj_K^{(b)}$.
To analyze the group-range regularized estimator, define a block version of  $\tau$ for the block vector $\bsbb$:
\begin{align}
\tau^{(b)}(\bsbb) := \big(\sum_{j=1}^p\tau^2(\bsbb_j)\big)^{1/2}. \label{taublock}
\end{align}
\begin{remark}[\textbf{Penalty vs. Constraint}]
This paper focuses on a range penalty, such as   $\lambda \sum_{j=1}^p \|\bsbb_j\|_{\range}$, rather than a constraint formulation of the form $\sum \| \bsbb_j \|_{\range} \le c$. Our theoretical analysis will show that the penalty parameter   $\lambda$ attains a universal rate. From a computational perspective, although the Moreau decomposition and the dual seminorm can be used to derive the proximity operator associated with the range constraint, our experience suggests that the resulting implementation is less efficient in practice. This advantage is particularly relevant in our federated learning setting, where \emph{group} range regularization is required.

 \end{remark}
\begin{remark}[\textbf{Polar Shrinkage and Practical Relevance}]
Block range regularization induces a form of shrinkage that differs fundamentally from more conventional center-oriented penalties.
For example, the centered group lasso penalty in \eqref{cglpen} of Appendix \ref{sec:exps}, proposed for shared-parameter identification, assumes that each row is homogeneous or nearly so, and therefore pulls all its entries toward a common center through block soft-thresholding. By contrast, the block range penalty proposed here acts through the \emph{boundary values}, with the extremes serving as the effective shrinkage targets.

Polar clustering proposed here is not merely a mathematical artifact, but a meaningful structural regime that can arise in practice.
 Indeed, settings in which two dominant extremes each have substantial support appear  in a variety of applications, including precision medicine \citep{moody2019computational,wang2009bimodality}, agriculture \citep{tsai2019gmr}, and finance \citep{hwang2021lgd}.
Our air-quality analysis in Appendix \ref{sec:exps} also reveals polar extrema across clients. Such patterns are realistic in environmental and atmospheric applications, where the same covariate may induce contrasting low/high regimes under broad geographic  conditions \citep{yang2017pm25wind,sun2022verticalwindshear}.

\end{remark}
\section{Range-penalized Federated Learning}
\label{sec:flsetting}
To introduce the federated learning setting, assume the   data  are distributed across  $m$ clients,   denoted by $(\bsbX_1, \bsby_1)\in \mathbb R^{n_1\times p}\times \mathbb R^{n_1},\ldots, (\bsbX_m, \bsby_m)\in \mathbb R^{n_m\times p}\times \mathbb R^{n_m} $ with $N= \sum n_k$ as the total sample size. We treat   all   $\bsbX_1,\ldots, \bsbX_m $ as deterministic.

Define the systematic components as  linear functions of predictors with properly scaled coefficients:
\begin{align}
\bsbe_k := \bsbX_k \bsb{b}_k/\rho_k\label{etadef}
\end{align}
where $\rho_k$ typically takes $\| \bsbX_k\|_2$ or $\sqrt{n_k}$. We incorporate the scaling factors $\rho_k$ to account for heterogeneous sample sizes, so that the rescaled coefficients $\bsb{b}_1, \ldots, \bsb{b}_m$  become comparable across clients.     Let  $\bsbB=[\bsb{b}_1, \ldots, \bsb{b}_m]=[\bsbb_1, \ldots, \bsbb_p]^T\in \mathbb R^{p\times m}$ represent the overall (scaled) coefficient matrix and introduce notations
\begin{align}
\bsb{b} = \vect(\bsbB),\quad \bsbb = \vect(\bsbB^T). \label{betabB}
\end{align}
According to \eqref{etadef}, when  $\rho_k = \mathcal O(\sqrt {n_k} ) = \mathcal O( \sqrt n)$, the root-$n$ consistency of the coefficient estimates  implies that the order of $\hat {\bsb{b}}$ or $\hat \bsbb$ is independent of  $n$, which   simplifies notations and  derivations.  Define
 a ``\textbf{group range}''   function on matrix $\bsbB$
 \begin{align}
\|\bsbB\|_{\range, 1}  := \sum_{j=1}^p \range (\bsbb_j),
\end{align}
which is a seminorm as well. Treating $\bsbb=\vect(\bsbB^T)$ as a block vector, we may also write it as $\range^{(b)}(\bsbb)$. 

Define some useful  matrices related to the designs:
\begin{align}
&\bsbX_k^\circ := \bsbX_k/\rho_k,\
\bar \bsbX := \diag\{\bsbX_k^\circ \} = \begin{bmatrix}\bsbX_1^\circ & & \\ & \ddots & \\ & & \bsbX_m^\circ\end{bmatrix},\label{designs1}\\
& \tilde \bsbX  = [\tilde \bsbX_1, \ldots, \tilde \bsbX_p]: =\bar \bsbX \bsbK_{m,p} \in \mathbb R^{N\times pm},\label{designs2}
\end{align}
where $\bsbK_{m,p}\in \mathbb R^{pm \times pm}$ is the  \textit{commutation matrix}. Notice that each $\tilde \bsbX_j\in \mathbb R^{N\times m}$ has a block diagonal structure. It follows that
  \begin{align}
\bsbe&=[\bsbe_1^T, \ldots, \bsbe_m^T]^T=\bar \bsbX \bsb{b}=  \bar \bsbX \vect(\bsbB)   =\bar \bsbX \bsbK_{m,p} \bsbK_{p,m}\vect(\bsbB)=  \tilde \bsbX\vect(\bsbB^{T})=\tilde \bsbX    \bsbb. \label{sysexpress}
\end{align}
Consider a group range penalized federated learning problem
\begin{align}
\min \sum_{k=1}^m l  (\bsb{b}_k; \bsby_k, \bsbX_k) +\lambda \sum_{j=1}^p \range (\bsbb_j)=\sum_{k=1}^m l_0 (\bsbe_k; \bsby_k) +\lambda \sum_{j=1}^p \range (\bsbb_j).
\label{rangeFL}
\end{align}
Given \eqref{sysexpress} with $\bsbe_k$   the $k$th block row of  $\tilde \bsbX \bsbb$,
we  also express the loss as
\begin{align}
\sum_{k=1}^m l_0 (\bsbe_k)= \tilde l_0(\bsbe) = L(\bsbb).
\end{align}
As previously discussed (see Lemma  \ref{lem:tautoEuclide} and Section \ref{sec:intro}),  (group) range reduction captures two forms of parsimony and also offers practical benefits for quantization, resource efficiency, and privacy protection in federated learning.


Next, we introduce the concept of \emph{effective noises} into the model.
 Let $\bsbb^*$ (and the associated   $\bsbe^*, \bsb{b}^*, \bsbB^* $) denote the statistical truth and define the effective noises:
\begin{align}
\bsbeps_k: = & -\nabla l_0(\bsbe_k^*),\quad \bsbeps: =   [\bsbeps_1^T, \ldots, \bsbeps_m^T]^T,
\end{align}
Because $\bsb{\epsilon}_k$ and $\bsb{\epsilon}$ are determined solely by the systematic component, the $\rho_k$-rescaling leaves the effective noise distribution unchanged.
We always assume that the statistical truth corresponds to a \textit{regular} (differentiable) point of the loss function. By contrast, it often appears as an \textit{irregular} point with respect to the  penalty, which enables the structural parsimony of the model to be exploited in high-dimensional estimation. We typically assume the  vectorized  effective noise $\bsbeps$ has finite Orlicz $\psi_2$-norm  $\sigma$, in the sense that   $\|\langle \bsba, \bsbeps \rangle\|_{\psi_2} \lesssim   \|\bsba\|_2 \sigma$ for all $\bsba$, where  $\| z\|_{\psi_2} := \inf \{t>0: \EE [\exp(z^2/t^2)]\le 2\}$. (In this paper, a sub-Gaussian random vector is further assumed to have mean zero.) Note that finiteness of the $\psi_2$-norm  does \emph{not} require the components of $\bsbeps$ to be independent, and it is automatically satisfied when the loss function is Lipschitz.  It is possible to obtain results for distributions with much heavier tails (such as subWeibull);  see the discussion following Theorem \ref{thm:seesawerr}.

To facilitate our analysis, some theorems will assume  that   $\tilde l_0$ has a  Lipschitz continuous gradient at the true $\bsbe^*$:
\begin{align}
\|\nabla \tilde l_0(\bsbe) - \nabla \tilde l_0(\bsbe^*)\|_2 \le  \varrho\|\bsbe  -  \bsbe^*\|_2, \label{gradLip}
\end{align}
for some $\varrho>0$.
This type of    conditions is frequently assumed in the optimization and theoretical analysis of a general loss function.
As an important application,  consider a regression federated learning problem with   $l_0$ being quadratic: $l_0(\bsbe_k; \bsby_k) = \|\bsbe_k - \bsby_k\|_2^2/2$. Then,  $\bsbeps_k =-\nabla l_0(\bsbe_k^*)= \bsby_k - \bsbe_k^*$, and $\varrho=1$.
\section{Main Results of Nonasymptotic Statistical Analysis}
\label{sec:nonasymstat}
The practical benefits of range penalization demand rigorous statistical analysis, which is currently lacking  in the existing literature. The primary goal of the section is to  derive a sharp  error rate of the estimator  defined by \eqref{rangeFL}. Rather than showing a bound dependent on the magnitude of  $\bsbb^*$, which is relatively simple using the subadditivity of the seminorm, researchers often seek error rates expressed in terms of the structural properties  of $\bsbb^*$. However, traditional approaches result in suboptimal error rates and   restrictive regularity conditions. To address the challenges, we will develop a new proof device, as detailed in Section \ref{subsec:errrate}.  Another topic of interest is the recovery of  groupwise and within-group patterns in the true model. This  includes the detection of  polar clusters and uniform rows in $\bsbB^*$ for identifying shared parameters in federated learning; see Section \ref{subsec:pattrec}.

We begin by  defining some essential notations and symbols to facilitate our discussion.

\paragraph*{Generalized Bregman function}
To facilitate our analysis, we introduce   the generalized Bregman function for a directionally differentiable $f$   which is always assumed to be defined on the entire Euclidean space in this paper,
\begin{align}
\breg_f (\bsbb, \bsbg):= f(\bsbb) - f(\bsbg) - \delta f(\bsbg; \bsbb - \bsbg).
\end{align}
If $f$ is a convex function (not necessarily differentiable), then $\breg_f\ge 0$.   When $f$ is both differentiable and strictly convex, $\breg_f$ becomes the standard Bregman divergence $\Breg_f$. We   use $\breg_f \ge \breg_g$ to denote $\breg_f(\bsbb,\bsbg) \ge \breg_g(\bsbb,\bsbg)$ for all $ \bsbb,\bsbg  $. For more properties of the generalized Bregman function, see \cite{Shebregman2021}.

The Bregman associated with half of the squared {Euclidean} norm  is denoted by $\Breg_2$:     $\Breg_2(\bsbb, \bsbg)= \Breg_{\|\cdot \|_2^2/2}(\bsbb, \bsbg)=\|  \bsbb - \bsbg \|_2^2/2$.
 In general, $\breg_f$ is not symmetric and we define its symmetrization as $\bregs_f(\bsbb, \bsbg) = (\breg_f(\bsbb, \bsbg) + \breg_f(\bsbg, \bsbb))/2$. For the range seminorm, which is not differentiable everywhere, we denote    $\breg_{\|\cdot \|_{\range}}(\bsbb, \bsbg)  = \| \bsbb\|_{\range} - \| \bsbg\|_{\range} - \delta\| \cdot \|_{\range} (\bsbg; \bsbb - \bsbg) $      by $\breg_{\range}(\bsbb, \bsbg) $ for short. Based on Theorem \ref{thm:subgrad}, this is equal to 0 when   $ \bsbb$ or $\bsbg \in \Proj_K$.

When $\breg_f(\bsbb, \bsbg) \ge \mu \Breg_2(\bsbb, \bsbg) $ with $\mu>0$, $f$ is called  $\mu$-strongly convex. Theoretical analysis will reveal that when pursuing structural parsimony via block range, our problem may  exhibit restricted strong convexity under suitable conditions,  even when $p>n$.   This property is also useful for accelerating convergence in optimization in Section \ref{sec:comp}.
We will use the generalized Bregman functions for measuring errors and defining regularity conditions.

\paragraph*{Extreme value patterns}

Given   $\bsbzeta=[\zeta_k]\in \mathbb R^m$, let $q(\bsbzeta)=M(\bsbzeta) + 1 + 1_{\max \zeta_k > \min\zeta_k}$.   Based on  the polar clustering   pattern of $\bsbzeta$, we can define a binary membership matrix $\mathcal Q(\bsbzeta)\in \mathbb R^{m\times q(\bsbzeta)}$  to characterize its  clustering structure. The associated linear subspace  induces an orthogonal projection matrix $\Proj_{\mathcal Q(\bsbzeta)} =\mathcal Q(\bsbzeta) ((\mathcal Q(\bsbzeta))^T\mathcal Q(\bsbzeta))^{-1}(\mathcal Q(\bsbzeta))^T $  such that
$
\Proj_{\mathcal Q(\bsbzeta)}\bsbzeta  = \bsbzeta$ and   $\rank(\Proj_{\mathcal Q(\bsbzeta)}) = q(\bsbzeta)$.
For example, when $\bsbzeta= c \bsb1$, $\mathcal Q(\bsbzeta)= \bsb1$ and $\Proj_{\mathcal Q(\bsbzeta)}=\Proj_{\bsb1_m}$;   when $\zeta_1=\cdots =\zeta_{m_1} = \max\zeta_k$ and $ \zeta_{m_1+1}=\cdots= \zeta_{m_1+m_2} = \min \zeta_k $, $\mathcal Q(\bsbzeta) = \diag\{\bsb1, \bsb1, \bsbI\}$, leading to   $\Proj_{\mathcal Q(\bsbzeta)} = \diag\{\Proj_{\bsb1_{m_1}},\Proj_{\bsb1_{m_2}},\allowbreak \bsbI_{(m-m_1-m_2)\times (m-m_1-m_2)} \}$.  Here, the ordering of the columns in $\mathcal Q(\bsbzeta)$ is inconsequential, as  $\Proj_{\mathcal Q(\bsbzeta)}$ is  {permutation invariant}:
 $\Proj_{\mathcal Q(\bsbzeta) \bsbP}=\Proj_{\mathcal Q(\bsbzeta) }$  for any permutation matrix $\bsbP\in \mathbb R^{q(\bsbzeta)\times q(\bsbzeta)}$.
In addition, define the  \emph{standardized} $\mathcal Q( \bsbzeta)$ as
\begin{align}
  \bsbU(\bsbzeta)=\mathcal Q( \bsbzeta) ((\mathcal Q (  \bsbzeta))^T \mathcal Q(  \bsbzeta)  )^{-1/2} \ (= \mathcal Q( \bsbzeta) (\diag\{\bsb1^T \mathcal Q(  \bsbzeta) \})^{-1/2}). \end{align}
Clearly,   $\bsbU(\bsbzeta)$ is orthogonal and  $\Proj_{\mathcal Q(\bsbzeta)} =\Proj_{\bsbU(\bsbzeta)}=\bsbU(\bsbzeta)(\bsbU(\bsbzeta))^T $.

For a block vector $\bsbb=[\bsbb_1^T, \ldots, \bsbb_p^T]^T$,  we can    similarly define  $\mathcal Q^{(b)}(\bsbb) =\diag\{\mathcal Q(\bsbb_j)\}$,  $\bsbU(\bsbb) = \diag\{\bsbU (\bsbb_j)\}$, and  $\Proj_{\mathcal Q^{(b)}(\bsbb)} = \diag\{\Proj_{\mathcal Q(\bsbb_j) }  \}=   \bsbU(\bsbb) \allowbreak(\bsbU(\bsbb) )^T$.
As an illustration, when the components of $ \bsbb_j$ are ordered consecutively as  $ {\overline{\mathcal M}}_j,   {\mathcal M}_j,  {\underline{\mathcal M}}_j $ with cardinalities     $  {\overline{ M}}_j,  { M}_j,  {\underline{ M}}_j $ for $1\le j \le p$,     $
\bsbU(\bsbb_j) = \diag\{ \frac{1}{\sqrt{ {\overline M_j}}} \bsb1,    \bsbI_{  M_j}, \frac{1}{\sqrt{ {\underline M}_j}}\bsb1 \}.
$ Note that by definition,  for each $j\in   ({\mathcal J}(\bsbb))^c$ or $\bsbb_j\in \Proj_K$, $ {\mathcal M}_j=\emptyset$ and
$\bsbU(\bsbb_j) = \bsb1 /\sqrt m $.

For a matrix $\bsbB\in \mathbb R^{p\times m}$, we define several associated measures using the block vector   $\bsbb=\vect(\bsbB)$.
Define $\mathcal J(\cdot)\subset[p]$ as the index set of nonuniform rows by
\begin{align}
\mathcal J(\bsbB) =\mathcal J(\bsbb)   = \{j\in [p] : \overunderline M(\bsbb_j) >\overline M(\bsbb_j)   \}.
\end{align}    Abbreviate $\mathcal J(\bsbB^*)$ to $\mathcal J^*$,   denote the complement of $\mathcal J^*$  by $\mathcal J^{*c}$, and let $J^* = |\mathcal J^*|$.
Similarly, $\hat {\mathcal J} = \mathcal J(\hat \bsbB), \hat J = |\hat {\mathcal J} |$.  For the $j$th row of $\bsbB^*$, or $\bsbb_j^*$, define $\mathcal M_j^*:= \mathcal M( \bsbb_j^*)\subset [m]$,    $M_j^*= | \mathcal M_j^*|$,    ${\overunderline {\mathcal M}}_j ^* :={\overunderline {\mathcal M}}(\bsbb_j ^*)$ and    ${\overunderline  M}_j ^*=|{\overunderline {\mathcal M}}_j ^* |$. Let
\begin{align}
M^* = \sum_{j\in \mathcal J^*} M_j^*.
\end{align}
Finally, we introduce a short notation
\begin{align}
\mathcal P^*= \Proj_{\tilde \bsbX {\mathcal  Q^{(b)}(  \bsbb^*)}},
\end{align}
and $\Proj^{*,\perp}= \bsbI - \Proj^*$. Then $\mathcal P^*$, the linear subspace of the true model, can be equivalently expressed as
\begin{align}
\mathcal P^*=\text{span}&\big( \{\tilde \bsbX_j[:, k]: j\in \mathcal J^*, k\in \mathcal M_j^*\}  \cup \{\tilde \bsbX_j [:, {\overline {\mathcal M}}_j ^* ]\bsb1: j\in [p] \}\cup \{\tilde \bsbX_j[:, {\underline {\mathcal M}}_j ^* ] \bsb1: j\in [p]\} \big). \label{trumodelspace}\end{align}
Direct calculation shows
 $$\rank (\mathcal P^*)\le\sum_j M_j^* + p - J^{*}+2J^* \le    M^*+2 p \lesssim M^*  + p \ (\lesssim J^* m + p).$$

\subsection{Statistical Error}
\label{subsec:errrate}

The nonasymptotic analysis of \eqref{rangeFL} poses significant challenges. Particularly,
$\tau(\cdot)$,  unlike the support size in variable selection problems, is \textit{not} subadditive.  Traditional methods   relying on H\"older's inequality or dual norms for addressing the stochastic term necessitate a larger  choice  of regularization parameters and  impose more stringent regularity conditions for the (group) range regularizer.  For example, the resulting error rate, $$p + m^2 J^*\log m+m^2 J^*   \log p,$$       is much worse than the rate of \eqref{seesawrateforcomp} under the same subGaussian assumption. For more discussions of these limitations, readers are encouraged to consult Theorem \ref{th:dualmethoderrrate} and Corollary \ref{cor:dualmethod} (which are presented in Appendix \ref{subsec:dualnormanal} due to limited space).
 To overcome the challenges, we introduce a versatile framework for deriving sharp error rates and optimal regularization parameters. The key trick is to integrate statistical and optimization analysis  to bound stochastic terms in a `seesaw' manner.

Let
\begin{align}
\rho_0 :=  (\varrho \vee 1) \| \tilde \bsbX\|_2= (\varrho \vee 1)\max_{1\le k \le m} \|\bsbX_k^\circ\|_2    , \label{rho0choice}
\end{align}
where  $\varrho$ is defined in \eqref{gradLip} and $\bsbX_k^\circ$ is the
design scaled by     $\rho_k$   to    balance heterogeneous data sizes in federated learning. Simple calculation shows that for regression, $\varrho = 1$, and for logistic regression, $\varrho = 1/4$.
If we set $\rho_k = \ \| \bsbX_k\|_2$ in  implementation,
$$
\rho_0  =\varrho\vee 1.
$$

The measures of  $\tau$ and its block version $\tau^{(b)}$ (cf.   \eqref{taudefvec}, \eqref{taublock})   are crucial for controlling model complexity. Indeed, in the non-asymptotic analysis of the group-range regularizer, we find the sum of $\tau$ squared across $\mathcal J(\bsbb)$,  $$   \sum_{j\in \mathcal J(\bsbb)} (\tau^2(\bsbb_j))=\sum_{j\in \mathcal J(\bsbb)} (\frac{1}{\overline M(\bsbb_j)}+\frac{1}{\underline M(\bsbb_j)}),$$ serves as an analogue to the `support size' in variable selection.
For short, let $\tau_j^* = \tau(\bsbb_j^*)  , 1\le j \le p$.

To build intuition, we first present a result for losses that are strongly convex in  $\bsbe$ (rather than in $\bsbb$), a setting that covers important applications such as regression. A more general result, without requiring global    convexity of $\tilde l_0(\bsbe)$, is given  in Theorem \ref{thm:seesawerr}.
\begin{theorem}
\label{thm:seesawerr-scvx}
 Assume the loss $\tilde l_0$ is differentiable and satisfies the Lipschitz-gradient condition   \eqref{gradLip} concerning    the statistical truth $\bsbe^*$,  and   $\tilde l_0$ is   $\nu$-strongly convex in  $\bsbe$ (as is the case in regression,  where $\nu=1$). Suppose the   effective noise  $\bsbeps$ is centered and has a $\psi_2$-norm  bounded by $\sigma$, with its components   not necessarily being independent. Let $\lambda = C\sigma(\rho_0  /(1\wedge\nu))(m + \sqrt{m \log p})    $, where   $C$ is a sufficiently large constant.
Under the following   regularity condition,
\begin{align}
  &\frac{\kappa^2}{\sum_{j\in \mathcal J^*} (\tau_j^*)^2}  \big( \sum_{j\in \mathcal J^*}       \max_{
\underline {\mathcal M} (\bsbb_j^* + \bsba_j)} \bsba_j[k]- \min_{  \overline {\mathcal M} (\bsbb_j^* + \bsba_j)}     \bsba_j[k]  \big)^2     \le
   \bregs_{\tilde l_0} (\bsbe^* + \tilde \bsbX \bsba,    \bsbe^*)\label{seesawcompcondbound0}
\end{align}
for   $\bsba$ restricted to
\begin{align}
\mathcal C_0:=\big\{\bsba:\sum_{j\in \mathcal J^*}   \max_{
\underline {\mathcal M} (\bsbb_j^* + \bsba_j)} \bsba_j[k]- \min_{  \overline {\mathcal M} (\bsbb_j^* + \bsba_j)}     \bsba_j[k]\ge   \sum_{j\in \mathcal J^{*c}}  \range (\bsba_j  )\big\} , \label{seesawrestrictregion-general0}
\end{align}
the estimator obtained  by minimizing \eqref{rangeFL} satisfies the following bounds in prediction   and `support size':  \begin{align}
&  \EE\big[\bregs_{\tilde l_0}   (\tilde \bsbX \hat \bsbb, \tilde \bsbX   \bsbb^*)\big]    \lesssim        \frac{\rho_0^2\sigma^2}{\kappa^2(\nu^2\wedge 1)}  \sum_{j\in \mathcal J^*} (\tau_j^*)^2 ( m^{2}+m\log p)+(\nu\vee \frac{1}{\nu})\sigma^2 (M^* + p) , \label{seesawthbound:pred}\\
 &        \EE\big[ \sum_{j \in \hat {\mathcal J} } (\tau(\hat \bsbb_j))^2  \big]\lesssim    \frac{\rho_0^2 }{\kappa^2} (\nu\vee \frac{1}{\nu})   \sum_{j\in \mathcal J^*} (\tau_j^*)^2  +(1\vee \frac{1}{\nu^2}) \, \frac{M^{*}+p}{m^2 + m\log p
} .
\end{align}
 \end{theorem}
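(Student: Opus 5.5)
The plan is to start from the basic inequality obtained by comparing the objective of \eqref{rangeFL} at the minimizer $\hat\bsbb$ and at the truth $\bsbb^*$. Because we only have optimality of $\hat\bsbb$, it is better to use the first-order optimality condition. Write $\bsba=\hat\bsbb-\bsbb^*$. Optimality combined with the generalized Bregman calculus gives
\begin{align*}
\bregs_{\tilde l_0}(\bsbe^*+\tilde\bsbX\bsba,\bsbe^*)+\lambda\sum_j\big[\range(\bsbb_j^*+\bsba_j)-\range(\bsbb_j^*)\big]\le\langle\bsbeps,\tilde\bsbX\bsba\rangle .
\end{align*}
Equivalently, the convexity of $\range$ can be used with the one-sided directional derivatives of Theorem~\ref{thm:subgrad}, evaluated at $\hat\bsbb_j$ in direction $-\bsba_j$. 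For $j\in\mathcal J^{*c}$ the difference $\range(\bsbb_j^*+\bsba_j)-\range(\bsbb_j^*)$ equals $\range(\bsba_j)$. For $j\in\mathcal J^*$, evaluating the directional derivative at $\hat\bsbb_j$ gives a lower bound $-\big(\max_{\underline{\mathcal M}(\hat\bsbb_j)}\bsba_j[k]-\min_{\overline{\mathcal M}(\hat\bsbb_j)}\bsba_j[k]\big)$. This is exactly the quantity appearing in \eqref{seesawcompcondbound0}. Call its $j$-th summand $T_j(\bsba)$.

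The crux is bounding the stochastic term $\langle\bsbeps,\tilde\bsbX\bsba\rangle$ in a ``seesaw'' way rather than by the dual seminorm of Theorem~\ref{thm:dualnorm}. I would decompose $\tilde\bsbX\bsba$ by projecting onto the subspace spanned by the true model and by the estimated pattern, $\Proj_{\tilde\bsbX\mathcal Q^{(b)}(\hat\bsbb)}$ together with $\Proj^*$. The optimality condition says that $\tilde\bsbX^T(\bsbeps+\nabla\tilde l_0(\bsbe^*)-\nabla\tilde l_0(\hat\bsbe))$ lies in $\lambda\,\partial\range^{(b)}(\hat\bsbb)$. By Theorem~\ref{thm:subgrad}, this forces the fitted residual restricted to intermediate coordinates to vanish. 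On each extreme cluster it is constrained to a sum of $\pm\lambda$, which is an averaged value of size $\lambda/\overline M(\hat\bsbb_j)$. This is where the optimization is integrated into the statistics.

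Concretely, the noise is split into two parts.
\begin{enumerate}
\item The part projected on the estimated pattern space. Its rank is at most $\hat M+2p$, and it is controlled by a sup over all patterns. The number of patterns is at most $p^{\hat J}$ choices of rows times $3^m$ within-row labelings. The $\psi_2$ union bound gives $\sigma^2(\hat M+p+\hat J m+\hat J\log p)$.
\item The remaining part. Using the subgradient structure and Lemma~\ref{lem:tautoEuclide} (range $\le\tau\|\Proj_K^\perp\cdot\|_2$), it is bounded by $\lambda\sum_j T_j$ plus a term of order $\sigma\rho_0\tau^{(b)}(\hat\bsbb)(m+\sqrt{m\log p})\,\|\tilde\bsbX\bsba\|_2$.
\end{enumerate}

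The key observation, which I expect to be the main obstacle to get right, is that the complexity $\hat J m$ must be converted into $\lambda^2\sum_{\hat{\mathcal J}}\tau^2(\hat\bsbb_j)$. This uses $\tau^2\ge 2/m$ and requires $\lambda\gtrsim\sigma\rho_0(m+\sqrt{m\log p})$. With the large constant $C$, this term is absorbed by the penalty-derived quantity. In this way, the complexity of $\hat\bsbb$ on the right ``seesaws'' against a matching term on the left.

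With the stochastic term controlled, the remainder is mostly routine. Applying Young's inequality and using $\nu$-strong convexity together with \eqref{gradLip} to relate the Bregman terms to $\|\tilde\bsbX\bsba\|_2^2$ yields, up to constants,
\begin{align*}
\bregs_{\tilde l_0}+\lambda\Big[\sum_{\mathcal J^{*c}}\range(\bsba_j)-\sum_{\mathcal J^*}T_j\Big]\lesssim \lambda\sum_{\mathcal J^*}T_j+\sigma^2(M^*+p)\,(\nu\vee 1/\nu).
\end{align*}
Two cases follow. If $\bsba\notin\mathcal C_0$, the penalty term dominates and the bound is immediate. If $\bsba\in\mathcal C_0$, condition \eqref{seesawcompcondbound0} gives $\lambda\sum T_j\le\lambda\kappa^{-1}(\sum(\tau_j^*)^2)^{1/2}\bregs^{1/2}$. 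Applying Young's inequality again gives $\lambda^2\kappa^{-2}\sum(\tau_j^*)^2$. Substituting the value of $\lambda$ gives \eqref{seesawthbound:pred}.

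The support-size bound then follows by reading off the retained $\lambda^2\sum_{\hat{\mathcal J}}\tau^2(\hat\bsbb_j)$ term and dividing by $\lambda^2\asymp\sigma^2\rho_0^2(m^2+m\log p)$. Finally, the high-probability tail bounds are integrated to expectations, with $C$ large enough to handle all the absorptions.
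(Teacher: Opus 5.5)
Your outer structure matches the paper. That includes the first-order basic inequality, the split of the noise into its $\Proj^*$-part and its part on the estimated pattern space $\hat\Proj=\Proj_{\tilde\bsbX\mathcal Q^{(b)}(\hat\bsbb)}$, the union bound over patterns, the conversion $\sigma^2\hat J(m+\log p)\lesssim(\lambda/C)^2\sum_{j\in\hat{\mathcal J}}\tau^2(\hat\bsbb_j)$, and the final cone/Young argument. However, the step that actually closes the seesaw is missing, and what you put in its place does not work.

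You say the term $\lambda^2\sum_{j\in\hat{\mathcal J}}\tau^2(\hat\bsbb_j)/C^2$ is absorbed by a ``matching term on the left'' coming from the penalty. No such term exists. In the basic inequality the penalty enters only linearly, through $\lambda T_j$ and $\lambda\range(\bsba_j)$. These cannot control $\lambda^2\tau^2(\hat\bsbb_j)$: a row $j\in\mathcal J^{*c}$ of $\hat\bsbb$ can have arbitrarily small range while $\tau(\hat\bsbb_j)=\sqrt 2$.

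What is needed is a second, separate inequality derived from the KKT conditions. Project $\lambda\hat\bsbs=\tilde\bsbX^T\{\bsbeps-(\nabla\tilde l_0(\hat\bsbe)-\nabla\tilde l_0(\bsbe^*))\}$ onto the column space of $\hat\bsbU=\bsbU(\hat\bsbb)$. By Theorem~\ref{thm:subgrad}, $\|\Proj_{\hat\bsbU}\hat\bsbs\|_2^2=\sum_{j\in\hat{\mathcal J}}\tau^2(\hat\bsbb_j)$ exactly. Taking norms, using $\tilde\bsbX\Proj_{\hat\bsbU}=\hat\Proj\tilde\bsbX\Proj_{\hat\bsbU}$ and \eqref{gradLip}, gives
\[
\lambda^2\sum_{j\in\hat{\mathcal J}}\tau^2(\hat\bsbb_j)\le 4\rho_0^2\big(\|\tilde\Proj\bsbeps\|_2^2+\|\Proj^*\bsbeps\|_2^2+\Breg_2(\hat\bsbe,\bsbe^*)\big),
\]
where $\tilde\Proj$ projects onto the range of $\Proj^{*,\perp}\hat\Proj$. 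Now set $\lambda=\alpha\rho_0\lambda_0$ with $\alpha>2$ and play this against the statistical bound $\|\tilde\Proj\bsbeps\|_2^2\le\lambda_0^2\sum_{j\in\hat{\mathcal J}}\tau^2(\hat\bsbb_j)+c_0p\sigma^2+R_0$. This eliminates $\sum_{j\in\hat{\mathcal J}}\tau^2(\hat\bsbb_j)$ and bounds $\|\tilde\Proj\bsbeps\|_2^2$, up to constants, by $\|\Proj^*\bsbeps\|_2^2+\Breg_2(\hat\bsbe,\bsbe^*)+p\sigma^2+R_0$. The stochastic term then involves neither the regularizer nor the pattern of $\hat\bsbb$.

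The same pair of inequalities also gives the support-size bound; it is not a term ``retained'' in the basic inequality. This is also where \eqref{gradLip} is needed. Comparing $\bregs_{\tilde l_0}$ with $\Breg_2$ uses only strong convexity.

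There are two further problems.

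First, the direction in your opening step is reversed. Convexity of $\range$ at $\hat\bsbb_j$ gives $\range(\hat\bsbb_j)-\range(\bsbb_j^*)\le -T_j$, which is an upper bound. So it cannot be substituted into the objective-comparison inequality. You need the subgradient form, which follows from $-\nabla L(\hat\bsbb)\in\lambda\partial\range^{(b)}(\hat\bsbb)$:
\[
2\bregs_{\tilde l_0}(\hat\bsbe,\bsbe^*)\le\langle\bsbeps,\tilde\bsbX\bsba\rangle+\lambda\sum_{j\in\mathcal J^*}T_j-\lambda\sum_{j\in\mathcal J^{*c}}\range(\bsba_j).
\]

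Second, your ``remaining part'' of the noise does not exist. Since $\tilde\bsbX\bsbb^*$ lies in $\Proj^*$ and $\tilde\bsbX\hat\bsbb$ lies in $\hat\Proj$, the stochastic term equals $\langle\Proj^*\bsbeps,\Proj^*\tilde\bsbX\bsba\rangle+\langle\tilde\Proj\bsbeps,\Proj^{*,\perp}\tilde\bsbX\bsba\rangle$ exactly, and Cauchy--Schwarz plus Young handle both pieces. Bounding an extra piece by $\lambda\sum_j T_j$ puts $T_j$ on the right twice. The effective cone then becomes $2\sum_{\mathcal J^*}T_j\ge\sum_{\mathcal J^{*c}}\range(\bsba_j)$ rather than $\mathcal C_0$. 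As a result, your case $\bsba\notin\mathcal C_0$ no longer closes.
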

\begin{remark}[\textbf{Error Rate}] \label{rem:errrate} Our proof yields a weaker regularity requirement, as stated in \eqref{seesawregcond-scvx}.  But to connect more directly with the existing high-dimensional statistics literature and to better gauge the size of $\kappa$,  it is helpful to discuss the error rate under more demanding regularity conditions than those required by the theorem. We begin with the following result, which in particular implies Lemma \ref{lem:tautoEuclide}.

 \begin{theorem}\label{thm:tau} Given an arbitrary   $\bsbzeta^0\in \mathbb R^m$,  for all $ \bsbzeta\in \mathbb R^m$,
\begin{align}
&\range(\bsbzeta^0) - \range(  \bsbzeta) \le -  \delta\range (\bsbzeta^0;   \bsbzeta - \bsbzeta^0)  \le \tau(\bsbzeta^0)\| \Proj_K^\perp (  \bsbzeta - \bsbzeta^0)\|_2 .
\end{align}
\end{theorem}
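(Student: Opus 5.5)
The first inequality is convexity. The second comes from the explicit directional derivative in Theorem \ref{thm:subgrad}, followed by a variance-minimization (Cauchy--Schwarz) step.

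\emph{Step 1 (first inequality).} Since $\range$ is convex and finite everywhere, the chord through $\bsbzeta^0$ and $\bsbzeta$ gives
$$\range(\bsbzeta^0+t(\bsbzeta-\bsbzeta^0))\le (1-t)\range(\bsbzeta^0)+t\range(\bsbzeta), \quad t\in(0,1].$$
Dividing by $t$ and letting $t\to0+$ yields $\delta\range(\bsbzeta^0;\bsbzeta-\bsbzeta^0)\le \range(\bsbzeta)-\range(\bsbzeta^0)$. This is exactly $\breg_{\range}(\bsbzeta,\bsbzeta^0)\ge 0$, and rearranging gives the first inequality.

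\emph{Step 2 (reduce to an extreme-index expression).} Write $\bsba=\bsbzeta-\bsbzeta^0$. The one-sided derivative formula after Theorem \ref{thm:subgrad} holds in both cases $\Proj_K^\perp\bsbzeta^0\ne\bsb0$ and $=\bsb0$. It gives
$$-\delta\range(\bsbzeta^0;\bsba)=\min_{k\in\underline{\mathcal M}(\bsbzeta^0)}\alpha_k-\max_{k\in\overline{\mathcal M}(\bsbzeta^0)}\alpha_k.$$
The derivative is invariant under $\bsba\mapsto\bsba+c\bsb1$, because the two terms shift equally. We may therefore replace $\bsba$ by $\Proj_K^\perp\bsba$, and it suffices to show the following. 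For any $\bsba$ and any $c\in\mathbb R$,
$$\min_{\underline{\mathcal M}}\alpha_k-\max_{\overline{\mathcal M}}\alpha_k\le\tau(\bsbzeta^0)\,\|\bsba-c\bsb1\|_2,$$
and then take $c=\bar\alpha$.

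\emph{Step 3 (Cauchy--Schwarz with cluster sizes).} Set $\overline M=\overline M(\bsbzeta^0)$ and $\underline M=\underline M(\bsbzeta^0)$, and write $\avg$ for the average over an index set. Bounding the minimum by the average and the maximum from below by the average gives
$$\min_{\underline{\mathcal M}}\alpha_k-\max_{\overline{\mathcal M}}\alpha_k\le \avg_{\underline{\mathcal M}}(\alpha_k-c)-\avg_{\overline{\mathcal M}}(\alpha_k-c).$$
By Cauchy--Schwarz, $|\avg_{\mathcal I}(\alpha_k-c)|\le \|(\bsba-c\bsb1)_{\mathcal I}\|_2/\sqrt{|\mathcal I|}$.

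\emph{Main obstacle: both clusters the same set.} When $\max\zeta^0_k>\min\zeta^0_k$, the sets $\overline{\mathcal M}$ and $\underline{\mathcal M}$ are disjoint. A second Cauchy--Schwarz on the two-term sum then gives
$$\frac{a}{\sqrt{\underline M}}+\frac{b}{\sqrt{\overline M}}\le\Big(\frac1{\underline M}+\frac1{\overline M}\Big)^{1/2}(a^2+b^2)^{1/2}\le\tau(\bsbzeta^0)\,\|\bsba-c\bsb1\|_2.$$
When $\bsbzeta^0\in\Proj_K$, the two sets coincide with $[m]$ and the disjointness argument fails. In that case the left side equals $\min_k\alpha_k-\max_k\alpha_k=-\range(\bsba)\le0$, so the claim is trivial since $\tau>0$.

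\emph{Lemma \ref{lem:tautoEuclide} as a corollary.} Setting $\bsbzeta=\bsb0$ gives $\range(\bsbzeta^0)\le\tau(\bsbzeta^0)\|\Proj_K^\perp\bsbzeta^0\|_2$, using $\range(\bsb0)=0$ and $\Proj_K^\perp(-\bsbzeta^0)$ having the same norm as $\Proj_K^\perp\bsbzeta^0$.
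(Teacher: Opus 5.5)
Your proof is correct, and you reach the constant $\tau(\bsbzeta^0)$ by a somewhat different route than the paper. The opening is the same as the paper's: convexity gives the first inequality, then the explicit one-sided derivative from Theorem~\ref{thm:subgrad} is used, and the constant-$\bsbzeta^0$ case is dismissed as trivial.

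The key step differs. The paper argues extremally. It normalizes so that the extreme-cluster deviations $\zeta^0_k-\zeta_k$ on $\overline{\mathcal M}(\bsbzeta^0)$ and $\zeta_k-\zeta^0_k$ on $\underline{\mathcal M}(\bsbzeta^0)$ are nonnegative. It then asserts that, for fixed minimal deviations, the variance of $\bsbzeta-\bsbzeta^0$ is smallest when these deviations are constant on each extreme cluster and the remaining coordinates sit at the mean. Finally it computes the resulting ratio explicitly to be $\tau(\bsbzeta^0)$.

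You instead prove the inequality directly, through the chain: minimum $\le$ average, maximum $\ge$ average, then two applications of Cauchy--Schwarz. The equality cases of your chain are exactly the paper's extremal configuration. That is, $\bsba-c\bsb1$ is constant on each extreme cluster and zero on the intermediate indices, and it is balanced so that the two cluster sums cancel (the paper's $\alpha_1\overline M^0=\gamma_1\underline M^0$). So the two arguments are dual views of the same optimization and give the same sharp constant.

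Your version is more self-contained. It does not need to identify the variance minimizer, which the paper asserts rather than derives, nor the without-loss-of-generality sign and shift normalization. Your shift invariance with an arbitrary $c$ also handles $\Proj_K^\perp$ cleanly. The paper's version has the advantage of exhibiting the configuration that attains the bound, which makes the sharpness of $\tau$ explicit.
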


Using   Theorem  \ref{thm:tau}, it can be shown that \eqref{seesawcompcondbound0} is implied by the following `compatibility'-type bound (under \eqref{seesawrestrictregion-general0}):

\begin{align}
  &\kappa^2 \frac{\big( \sum_{j\in \mathcal J^*}  \tau_j^* \| \Proj_K ^\perp\bsba_j\|_2 \big)^2}{ \sum_{j\in \mathcal J^*} (\tau_j^*)^2}   \le        \bregs_{\tilde l_0} (\bsbe^* + \tilde \bsbX \bsba,    \bsbe^*),\label{seesawcompcondbound1}
\end{align}
or alternatively, a `restricted-eigenvalue' (RE) type bound after applying the Cauchy-Schwarz inequality:
\begin{align}
  &\kappa^2       \| \Proj_K^{(b),\perp} \bsba_{ \mathcal J^*}\|_2^2 \le        \bregs_{\tilde l_0} (\bsbe^* + \tilde \bsbX \bsba,    \bsbe^*),\label{seesawcompcondbound2}
\end{align}
where  in both cases, $\bsba$ is restricted to \eqref{seesawrestrictregion-general0}. The two types of regularity conditions are widely used in high-dimensional statistics \citep{van2009conditions}. The key intuition is that, although $L(\bsbb)$  may fail to be globally strongly convex in $\bsbb$, the regularizer confines the error vector to a small geometric region on which the loss retains sufficient positive curvature. For example, for regression with $l_0(\bsbe_k) = \| \bsbe_k - \bsby_k\|_2^2$, the condition becomes
$$
\kappa^2       \| \Proj_K^{(b),\perp} \bsba_{ \mathcal J^*}\|_2^2 \le
    \| \tilde \bsbX \bsba\|_2^2, \quad \forall\bsba\in \mathcal C_0.$$

Therefore, $\kappa$ (or $K$ in  Theorem  \ref{thm:seesawerr}) can be treated as a constant in regular problems. Ignoring trivial factors, the prediction error bound in \eqref{seesawthbound:pred}  is  of the order
\begin{align}
 \sum_{j\in \mathcal J^*}  (\frac{1}{\overline M_j^*}+\frac{1}{\underline M_j^*})(m^2+m\log p)+ \sum_{j\in \mathcal J^*} M_j^* + p.\label{seeaswerrrate}
\end{align}
When substantial clusters exist at extreme values, $$\overline M_j^*\wedge \underline M_j^*\ge c  m, \forall j $$  \eqref{seeaswerrrate} is bounded by \begin{align}
 J^* \log p +J^*m + p \label{seesawrateforcomp}
\end{align} up to a multiplicative constant (as compared to    the minimax-optimal rate mentioned in Section \ref{sec:intro}). Considering the \emph{average} prediction error per client,  in a balanced setting with    $N = nm$,
   the rate becomes of order   $(J^*m + p + J^*\log p ) /N$, up to constants and trivial factors.  The average error remains well controlled provided   $J^ *$ is small relative to the per-client sample size $N/m$ and $p\ll nm$.   In the regime of mild heterogeneity ($J^* \approx 0$), the prediction error reduces to order $\mathcal{O}(p)$, matching the rate of shared-parameter recovery methods such as the Centered Group Lasso (Appendix B.1), while avoiding the need to optimize an explicit unknown center parameter.

Although   this restricted eigenvalue condition might be more intuitive, the derivation from \eqref{seesawcompcondbound0} to \eqref{seesawcompcondbound2} also shows that the regularity condition  in our theorem is significantly less demanding.\\
\end{remark}

\begin{remark}[\textbf{Cone Restriction}]
 We show how the range-based restriction region is weaker than those  familiar constraint cones  arising  in $\ell_1$-type analysis.
 Based on \eqref{dddeltarel} in the proof,  $\mathcal C_0$ can be equivalently expressed as  \begin{align*}
\sum_{j\in \mathcal J^*}  \big( \min_{k\in \underline {\mathcal M}_j^*} \bsba_j[k] -\max_{k\in \overline {\mathcal M}_j^*}      \bsba_j[  k]    \big)\ge      \sum_{j\in \mathcal J^*}  2\bregs_{\range}(\bsbb_j^* + \bsba_j, \bsbb_j^*)+  \sum_{j\in \mathcal J^{*c}}  \range (\bsba_j  ),
\end{align*}
which is not, in its present form, a cone. Note that the index sets involved on the left-hand side, $ \underline {\mathcal M}_j^*$ for $\min$ and $ \overline {\mathcal M}_j^*$   for $\max$,
are quite different from those in \eqref{seesawrestrictregion-general0}.

The construction of   $\mathcal C_0$   leverages both the \textit{between-group} structure and   the \textit{within-group} structure of extreme values. Notably, the resulting restriction region for group-range is     much {narrower}   compared to one based solely on between-group structure:
 \begin{align*}
  &  \quad
\sum_{j\in \mathcal J^*}  \big( \min \bsba_j[\underline {\mathcal M}_j^*] -\max      \bsba_j[  \overline {\mathcal M}_j^*]    \big)    -  \sum_{j\in \mathcal J^{*c}}  \range (\bsba_j  )-2    \sum_{j=1}^p \bregs_{\range}(\bsbb_j^* + \bsba_j, \bsbb_j^*) \\
& \le \sum_{j\in \mathcal J^*}  \big( \min \bsba_j[\underline {\mathcal M}_j^*] -\max      \bsba_j[  \overline {\mathcal M}_j^*]    \big)    -  \sum_{j\in \mathcal J^{*c}}  \range (\bsba_j  )  \\
& \le   \sum_{j\in \mathcal J^*}  \{\max_{}     \bsba_j[  \overline {\mathcal M}_j^*] -  \min_{ } \bsba_j[ \underline {\mathcal M}_j^*]\}-  \sum_{j\in \mathcal J^{*c}}  \range (\bsba_j  )   \\
&  \le   \sum_{j\in \mathcal J^*}  \{\max_{  }     \bsba_j  -  \min_{ } \bsba_j\}-  \sum_{j\in \mathcal J^{*c}}  \range (\bsba_j  ) =
  \sum_{j\in \mathcal J^* }  \|   \bsba_j   \|_{\range}   - \sum_{j\in \mathcal J^{*c} }  \|     \bsba_j  \|_{\range}.
 \end{align*}
The first inequality is due to the convexity of $\range(\cdot)$, and the second and third inequalities are straightforward (and conservative).

In high-dimensional analysis, \emph{cone} restrictions are often employed to formulate restricted eigenvalue or restricted strong convexity conditions that hold under suitable regularity of the design \citep{raskutti2010restricted,negahban2012unified}. By omitting the (symmetrized) generalized Bregman term, we can define  \begin{align*}
\mathcal C_{1}= \big\{ \bsba: \sum_{j\in \mathcal J^*}  \big( \min \bsba_j[\underline {\mathcal M}_j^*] -\max      \bsba_j[  \overline {\mathcal M}_j^*]    \big)    \ge   \sum_{j\in \mathcal J^{*c}}  \range (\bsba_j  ) \big\},
\end{align*}
which clearly forms a {cone} ($\bsba\in \mathcal C_{1}\Rightarrow k \bsba\in \mathcal C_{1},\forall k\ge 0$).  Based on the previous derivation,
 $$\mathcal C_{0}\subset \mathcal C_{1} \subset  \{ \bsba:         \sum_{j\in \mathcal J^* }  \|   \bsba_j   \|_{\range}   \ge       \sum_{j\in \mathcal J^{*c} }  \|     \bsba_j  \|_{\range}\} . $$ The tightening of the restriction region corresponds to a larger value of $\kappa$, thereby leading to a smaller error. \\
\end{remark}

In the remainder of this section, we present a more general result together with the `seesaw' proof strategy. In the spirit of \citep{Loh2015}, this theorem does not require global convexity of the loss, and the single Bregman-form condition   \eqref{thmregcond-general0} is implied by all the aforementioned regularity conditions on restricted regions.
In what follows, given $\mathcal J \subset [p]$, we  denote by $\bsbb_{\mathcal J}$ the block subvector formed by  $\bsbb_j, j\in \mathcal J$.
\begin{theorem}\label{thm:seesawerr} Assume the loss $\tilde l_0$ is differentiable and satisfies the Lipschitz-gradient condition concerning    the statistical truth $\bsbe^*$, as given in \eqref{gradLip}, but is not necessarily convex.

Given $\lambda_0$ and $df^*$ and a sufficiently large constant $c_0$, define an event  through two random quantities $\Proj^{*,\perp}\bsbeps$ and $\Proj^{*}\bsbeps$:\begin{align}
\mathcal E:=\{ &\sup_\bsbb \, \|\Proj_{\Proj^{*,\perp}\tilde \bsbX {\mathcal  Q^{(b)}(  \bsbb)}}       \bsbeps\|_2^2  - \lambda_0^2  (\tau^{(b)}(\bsbb_{\mathcal J(\bsbb)}))^2-c_0 p\le 0, \ \|  \Proj^{*} \bsbeps \|_2^2 \le \sigma^2 df^*  \}. \label{seesawstochevent}
\end{align}
Assume  there exist  $\nu, K>0$  such that for any $\bsba=[\bsba_1^T, \ldots, \bsba_p^T]^T$
\begin{align}
&  \lambda \sum_{j\in \mathcal J^*}  (\,     \max_{k\in  \underline {\mathcal M} (\bsbb_j^* + \bsba_j)} \bsba_j[k]- \min_{ k\in \overline {\mathcal M} (\bsbb_j^* + \bsba_j)}     \bsba_j[k]\,) -\lambda \sum_{j\in \mathcal J^{*c}} \range (\bsba_j  )   \notag\\
\le\, &     2\bregs_{\tilde l_0}   (\tilde \bsbX   \bsbb^* +\tilde \bsbX \bsba, \tilde \bsbX   \bsbb^*)    - \nu \Breg_2(   \tilde \bsbX  \bsba  ,\bsb0)+C(K \sum_{j\in \mathcal J^*} (\tau_j^*)^2 \lambda^2 +\sigma^2df^{*}),   \label{thmregcond-general0} \end{align}
where   $\lambda = c(  \rho_0/(\nu \wedge 1)) \lambda_0$ and  $c, C$  are sufficiently large constants.

Then  on event $\mathcal E$,
\begin{align}
&     \|    \tilde \bsbX (\hat \bsbb -     \bsbb^*)\|_2^2   \lesssim         \frac{K\rho_0^2 }{\nu(\nu^2\wedge 1)} \sum_{j\in \mathcal J^*} (\tau_j^*)^2  \lambda_0^2 + \frac{\sigma^2}{ \nu^2\wedge 1}df^{*}, \label{prederrrate} \\
&         \sum_{j \in \hat {\mathcal J}} (\tau(\hat \bsbb_j))^2  \lesssim      K\rho_0^2(\nu\vee \frac{1}{\nu})   \sum_{j\in \mathcal J^*} (\tau_j^*)^2  + \frac{\nu^2\sigma^2}{\nu^2 \wedge 1}  \frac{df^{*}}{\lambda_0^2 } . \label{tauerrrate}
\end{align}
Assuming   $\EE[\bsbeps]=\bsb0$ and $\| \bsbeps\|_{\psi_2}\le \sigma$, we can  take $\lambda_0=c\sigma ( m + \sqrt{m\log p })$ and $df^* = c(M^*+p)$ such that $\mathcal E$ occurs with \emph{overwhelming} probability  $1-C \exp(-cp)$, where $ c,C$ are constants.
\end{theorem}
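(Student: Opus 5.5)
We start from the basic inequality for the minimizer $\hat\bsbb$ of \eqref{rangeFL}. Set $\bsba=\hat\bsbb-\bsbb^*$. Optimality gives $L(\hat\bsbb)+\lambda\range^{(b)}(\hat\bsbb)\le L(\bsbb^*)+\lambda\range^{(b)}(\bsbb^*)$. Writing $L(\hat\bsbb)-L(\bsbb^*)=\breg_{\tilde l_0}(\tilde\bsbX\hat\bsbb,\tilde\bsbX\bsbb^*)-\langle\bsbeps,\tilde\bsbX\bsba\rangle$, we will in fact use the first-order (variational) form of optimality. This yields the symmetrized Bregman $2\bregs_{\tilde l_0}$ and the Bregman of the range together:
\[
2\bregs_{\tilde l_0}(\tilde\bsbX\hat\bsbb,\tilde\bsbX\bsbb^*)+\lambda\big[\range^{(b)}(\hat\bsbb)-\range^{(b)}(\bsbb^*)+\breg\text{-terms}\big]\le\langle\bsbeps,\tilde\bsbX\bsba\rangle .
\]
For each row we use Theorem \ref{thm:subgrad} and the directional-derivative formula. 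For $j\in\mathcal J^{*c}$ the penalty difference equals $\range(\bsba_j)$. For $j\in\mathcal J^*$ we apply $-\delta\range(\hat\bsbb_j;\bsbb_j^*-\hat\bsbb_j)$. This produces exactly $-(\max_{\underline{\mathcal M}(\hat\bsbb_j)}\bsba_j[k]-\min_{\overline{\mathcal M}(\hat\bsbb_j)}\bsba_j[k])$. The resulting inequality is of the form LHS of \eqref{thmregcond-general0} on one side and the stochastic term $\langle\bsbeps,\tilde\bsbX\bsba\rangle$ on the other.

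The key step, which we call the \emph{seesaw}, is to bound $\langle\bsbeps,\tilde\bsbX\bsba\rangle$ without dual norms. We exploit the structure of $\hat\bsbb$: it lies in its own polar-cluster subspace, i.e. $\hat\bsbb=\Proj_{\mathcal Q^{(b)}(\hat\bsbb)}\hat\bsbb$, and trivially $\bsbb^*=\Proj_{\mathcal Q^{(b)}(\bsbb^*)}\bsbb^*$. Hence $\tilde\bsbX\bsba$ lies in the span of $\tilde\bsbX\mathcal Q^{(b)}(\hat\bsbb)$ and $\tilde\bsbX\mathcal Q^{(b)}(\bsbb^*)$. We split $\bsbeps=\Proj^*\bsbeps+\Proj^{*,\perp}\bsbeps$, which gives
\[
\langle\bsbeps,\tilde\bsbX\bsba\rangle\le\|\Proj^*\bsbeps\|_2\|\tilde\bsbX\bsba\|_2+\|\Proj_{\Proj^{*,\perp}\tilde\bsbX\mathcal Q^{(b)}(\hat\bsbb)}\bsbeps\|_2\|\tilde\bsbX\bsba\|_2 .
\]
On $\mathcal E$ the second norm is at most $\lambda_0\tau^{(b)}(\hat\bsbb_{\hat{\mathcal J}})+\sqrt{c_0p}$, and the first is at most $\sigma\sqrt{df^*}$. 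We then apply Young's inequality. The $\|\tilde\bsbX\bsba\|_2^2$ pieces are absorbed by $\nu\Breg_2(\tilde\bsbX\bsba,\bsb0)$. The piece $\lambda_0^2(\tau^{(b)}(\hat\bsbb))^2/\nu$ must be controlled by the penalty side. This control comes from the optimization side: by Theorem \ref{thm:prox}-type reasoning and Theorem \ref{thm:tau}, the penalty term "pays" roughly $\lambda\tau^2(\hat\bsbb_j)$ per nonuniform estimated row. Concretely, we compare $\hat\bsbb$ with a perturbation that shrinks the extremes of each row $\hat\bsbb_j$, using the KKT conditions with subgradients $\bsbs_j$ satisfying $\|\bsbs_j\|_2^2\ge\tau^2(\hat\bsbb_j)$. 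The KKT equation $\tilde\bsbX^T\nabla\tilde l_0(\tilde\bsbX\hat\bsbb)+\lambda\bsbs=\bsb0$ then gives, after projecting onto $\bsbU(\hat\bsbb)$,
\[
\lambda^2\sum_{j\in\hat{\mathcal J}}\tau^2(\hat\bsbb_j)\lesssim\rho_0^2\|\tilde\bsbX\bsba\|_2^2+\|\Proj_{\tilde\bsbX\mathcal Q^{(b)}(\hat\bsbb)}\bsbeps\|_2^2 ,
\]
using \eqref{gradLip} and $\rho_0$. This is the seesaw: the stochastic term is bounded by $\tau^{(b)}(\hat\bsbb)$, which in turn is bounded by prediction error plus noise. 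Choosing $\lambda=c\rho_0\lambda_0/(\nu\wedge1)$ with $c$ large closes the loop and yields \eqref{tauerrrate}.

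Next we invoke \eqref{thmregcond-general0} to dominate the remaining penalty/cone terms by $2\bregs_{\tilde l_0}-\nu\Breg_2+C(K\sum(\tau_j^*)^2\lambda^2+\sigma^2df^*)$. Rearranging gives \eqref{prederrrate}, and substituting back gives \eqref{tauerrrate}.

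Finally, for the probability claim we bound the supremum in $\mathcal E$ by a union over patterns. For a fixed pattern with $\mathcal J(\bsbb)=\mathcal J$ and given max/min cluster sizes, the projection rank is at most $\sum_{j\in\mathcal J}M_j+2p$. Hanson–Wright/sub-Gaussian quadratic form bounds (dependence allowed via the $\psi_2$ condition) give a tail of $\sigma^2(\text{rank}+t)$. The number of patterns is at most $\binom{p}{J}\cdot(3^m)^J$. We need $\sigma^2(M_j+m+\log p)\lesssim\lambda_0^2\tau_j^2$. Since $\tau_j^2\ge 1/\overline M_j+1/\underline M_j\ge 2/m$ while $M_j\le m$, the choice $\lambda_0^2\asymp\sigma^2(m^2+m\log p)$ suffices, with $c_0p$ absorbing the kernel part. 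The bound $\|\Proj^*\bsbeps\|_2^2\le c\sigma^2(M^*+p)$ follows directly since $\rank(\Proj^*)\lesssim M^*+p$. The main obstacle is this union bound: the counting must be matched against $\tau^2$ rather than $M_j$. For rows whose intermediate set is large, the $\binom{m}{M_j}$-type entropy has to be charged to $m\cdot\tau_j^2 m\ge 2m$, and that matching is where we expect care to be needed.
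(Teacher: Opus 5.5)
\textbf{Deterministic part.} This follows the paper's proof: the first-order basic inequality; the split of $\langle\bsbeps,\tilde\bsbX\bsba\rangle$ into a $\Proj^*$ part and a part in the range of $\tilde\Proj=\Proj_{\Proj^{*,\perp}\tilde\bsbX\mathcal Q^{(b)}(\hat\bsbb)}$; the stationarity equation projected onto $\bsbU(\hat\bsbb)$; absorption via $\lambda=c\rho_0\lambda_0/(\nu\wedge 1)$; and then the regularity condition.

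The control of $\lambda_0^2(\tau^{(b)}(\hat\bsbb_{\hat{\mathcal J}}))^2$ is not a penalty-side effect. It comes from the projected stationarity equation, which the paper obtains through the reduced smooth problem of Lemma \ref{lem:redproblemopt}. The fact used is the identity $\|\Proj_{\bsbU(\hat\bsbb_j)}\bsbs_j\|_2^2=\tau^2(\hat\bsbb_j)$ for every subgradient, not merely $\|\bsbs_j\|_2^2\ge\tau^2(\hat\bsbb_j)$.

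One link is left implicit, and it is exactly what closes the loop on the stated event. The noise term from your projected KKT is $\rho_0^2\|\hat\Proj\bsbeps\|_2^2$ with $\hat\Proj=\Proj_{\tilde\bsbX\mathcal Q^{(b)}(\hat\bsbb)}$; the factor $\rho_0^2$, from $\|\tilde\bsbX\bsbU(\hat\bsbb)\|_2\le\rho_0$, is missing in your display. The event $\mathcal E$ says nothing directly about $\hat\Proj\bsbeps$. You need the following decomposition:
\begin{itemize}
\item every $v$ in the range of $\hat\Proj$ splits as $\Proj^*v+\Proj^{*,\perp}\hat\Proj v$;
\item the second piece lies in the range of $\tilde\Proj$, which is orthogonal to that of $\Proj^*$;
\item hence $\|\hat\Proj\bsbeps\|_2^2\le\|\Proj^*\bsbeps\|_2^2+\|\tilde\Proj\bsbeps\|_2^2$.
\end{itemize}
This is the paper's identity $\Proj^{*,\perp}\hat\Proj=\tilde\Proj\Proj^{*,\perp}\hat\Proj$. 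Once it is inserted, your two inequalities bound each other using only $\|\Proj^*\bsbeps\|_2^2$, $c_0p$ and $\|\tilde\bsbX\bsba\|_2^2$, and your bookkeeping does give \eqref{prederrrate} and \eqref{tauerrrate}.

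\textbf{Probability of $\mathcal E$.} Here you genuinely differ from the paper. You enumerate all patterns, at most $\binom{p}{J}3^{mJ}$ of them, each a fixed projection of rank at most $\sum_{j\in\mathcal J}M_j+2p$, and apply a quadratic-form tail bound to each.

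The paper avoids within-row enumeration. The column space of $\mathcal Q^{(b)}(\bsbb)$ is contained in that of $\mathcal Q_{\mathcal J(\bsbb)}$ (blocks $\bsbI_m$ on $\mathcal J(\bsbb)$ and $\bsb1_m$ elsewhere). Hence $\|\Proj_{\Proj^{*,\perp}\tilde\bsbX\mathcal Q^{(b)}(\bsbb)}\bsbeps\|_2\le\|\Proj_{\Proj^{*,\perp}\tilde\bsbX\mathcal Q_{\mathcal J(\bsbb)}}\bsbeps\|_2$, and a union over only the $\binom{p}{J}$ row sets, with ranks at most $p+mJ$, suffices.

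Both routes pay $O(m)$ per nonuniform row: you through the entropy $m\log 3$, the paper through the rank $m$. Both charge it to $\lambda_0^2\tau_j^2\gtrsim\sigma^2(m+\log p)$, which holds because $\tau_j^2\ge 2/m$. So the ``main obstacle'' you anticipate is not one. Your own inequality $\sigma^2(M_j+m+\log p)\lesssim\lambda_0^2\tau_j^2$ already absorbs the $3^m$ per-row count, and no finer matching of $\binom{m}{M_j}$-type entropy against $\tau_j^2$ is needed.

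The paper's nesting is shorter and shows directly that, at this $\lambda_0$, the within-row pattern plays no role in the union bound. Your enumeration is more hands-on but equally valid.
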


 \eqref{prederrrate} reveals a structured composition:   a ``degrees-of-freedom  term'' that reflects the complexity of the true model, and an ``inflation  term'' proportional to $\lambda^2$ that accounts for the costs  incurred in the search for structural patterns.
The theorem introduces  error bounds with  overwhelming   probability, which can also be reformulated into  {expectation}-form results as   in Theorem  \ref{thm:seesawerr-scvx}.  The fact that     $\EP[\mathcal E^c]$  is {\emph{exponentially}} small in
$p$ is in stark contrast to the polynomial dependence on the dimension typically seen in high-dimensional variable selection.

We remark that the subGaussian assumption in Theorems \ref{thm:seesawerr-scvx} and \ref{thm:seesawerr} can be relaxed to a subWeibull assumption: $\epsilon_i$ are independent and $\|\epsilon_i\|_{\psi_{\alpha}}\le \sigma$ for some $0<\alpha<2$,  as   a generalized Hanson-Wright inequality is applicable \citep{gotze2021concentration}. 
We will not explore this further in the current paper.
We also provide  a   comparison between the results of this section and those from the traditional dual approach   in Appendix \ref{subsec:dualnormanal} due to limited space.

\begin{proof}
First, from the optimality of $\hat \bsbb$,   a basic inequality can be derived using  the generalized Bregman calculus   \citep{Shebregman2021}:
\begin{align*}
\tilde l_0 (  \hat \bsbe) +     \breg_{\tilde l_0} (  \bsbe^*,  \hat \bsbe)  +  \lambda \sum_{j=1}^p \breg_{\range}(\bsbb_j^*, \hat \bsbb_j ) \le \tilde l_0 ( \bsbe^*) + \lambda \range^{(b)} (\bsbb^*) -  \lambda \range^{(b)} (\hat \bsbb )
\end{align*}
and based on the definition of the effective noise,
\begin{align}
2   \bregs_{\tilde l_0} ( \hat \bsbe,   \bsbe^*)  +  \lambda \sum_{j=1}^p \breg_{\range}( \bsbb_j^*,\hat \bsbb_j)\le\langle \bsbeps,   \tilde  \bsbX ( \hat    {\bsbb} - \bsbb^* )\rangle   + \lambda \range^{(b)} (\bsbb^*) -  \lambda \range^{(b)} (\hat \bsbb ).\label{seesawbasiceq1}
\end{align}

To achieve less stringent regularity conditions, we employ an orthogonal decomposition technique, inspired by  \cite{She2017selfact}. This is particularly helpful because   $\tau(\cdot)$ lacks subadditivity.   To begin with, define $$
\hat {\mathcal P}= \Proj_{\tilde \bsbX {\mathcal  Q^{(b)}(\hat \bsbb)}},\ \tilde {\mathcal P} = \Proj_{\Proj^{*,\perp}\hat \Proj  }= \Proj_{\tilde \Proj^{*,\perp}\bsbX {\mathcal  Q^{(b)}(\hat \bsbb)}},
$$
  and express the stochastic term
as\begin{align}
  \langle \bsbeps,  \tilde  \bsbX (\hat {\bsbb} - \bsbb^*)\rangle =\, &   \langle \bsbeps, \Proj^{*}   \tilde  \bsbX (\hat {\bsbb} - \bsbb^*)\rangle+   \langle \bsbeps, \Proj^{*,\perp}   \tilde  \bsbX (\hat {\bsbb} - \bsbb^*)\rangle
=   \langle\Proj^{*} \bsbeps, \Proj^{*}   \tilde  \bsbX (\hat {\bsbb} - \bsbb^*)\rangle+   \langle \bsbeps, \Proj^{*,\perp}  \hat \Proj \tilde  \bsbX  \hat {\bsbb}  \rangle
\notag\\
=\,&   \langle \Proj^{*} \bsbeps, \Proj^{*}   \tilde  \bsbX (\hat {\bsbb} - \bsbb^*)\rangle+   \langle  \tilde {\mathcal P}   \bsbeps, \Proj^{*,\perp}   \tilde  \bsbX  (\hat {\bsbb} - \bsbb^*) \rangle,\notag
\end{align}
  since $\tilde \bsbX \bsbb^*\in \Proj^*$
and $\tilde \bsbX \hat \bsbb \in \hat \Proj       $. It follows that for any $a>0$,
\begin{align}
\langle \bsbeps,  \tilde  \bsbX (\hat {\bsbb} - \bsbb^*)\rangle \le\,&  \frac{a}{2} \| \Proj^{*} \bsbeps\|_2^2 + \frac{1}{2a} \| \Proj^{*}   \tilde  \bsbX (\hat {\bsbb} - \bsbb^*)\|_2^2+  \frac{a}{2} \| \tilde {\mathcal P} \bsbeps\|_2^2   + \frac{1}{2a}\| \Proj^{*,\perp}   \tilde  \bsbX   (\hat {\bsbb} - \bsbb^*)\|_2^2\notag \\
 \le\,  & \frac{a}{2} \| \Proj^{*} \bsbeps\|_2^2   +  \frac{a}{2} \| \tilde {\mathcal P} \bsbeps\|_2^2  + \frac{1}{2a} \|     \tilde  \bsbX (\hat {\bsbb} - \bsbb^*)\|_2^2,\label{seesawstobound0}
\end{align}
where  the last inequality uses orthogonality. In contrast to naively applying H\"older's inequality (or the dual method), the first two terms on the right-hand side of \eqref{seesawstobound0} allow us to leverage some fine structures  to more effectively bound the stochastic term.\\

On  one hand, {statistical  analysis} enables us to   bound the noise terms using proper measures (such as $\tau(\cdot)$ or $\tau^{(b)}(\cdot)$), along with sufficiently large  regularization parameters.
We first bound the size of   $R_0$ defined as
\begin{align*}
 R_0 := &  \sup_\bsbb \big[\|\Proj_{\Proj^{*,\perp}\tilde \bsbX {\mathcal  Q^{(b)}(  \bsbb)}}       \bsbeps\|_2^2  - \lambda_0^2  (\tau^{(b)}(\bsbb_{\mathcal J(\bsbb)}))^2-c_0 p\sigma^2 \big]_+ \\
=&\sup_{0\le J \le p}\sup_{\bsbb:|\mathcal J(\bsbb)|= J} \, \big[\|\Proj_{\Proj^{*,\perp}\tilde \bsbX {\mathcal  Q^{(b)}(  \bsbb)}}       \bsbeps\|_2^2  - \lambda_0^2  (\tau^{(b)}(\bsbb_{\mathcal J(\bsbb)}))^2-c_0 p\sigma^2 \big]_+
\end{align*}
where $c_0$ is a sufficiently large constant.

Let $\bsbA_1(\bsbb) =  \Proj_{\Proj^{*,\perp}\tilde \bsbX {\mathcal  Q^{(b)}(  \bsbb)}} $. Given $\mathcal J\subset [p]$, define $\mathcal Q_{\mathcal J}$ as a block diagonal matrix with the $(j,j)$-th block being $\bsbI_m$ if $j\in \mathcal J $ and $\bsb1_m$ if $j\in \mathcal J^c$. Let
$$\bsbA_0(\mathcal J)   = \Proj_{\Proj^{*,\perp}\tilde \bsbX\mathcal Q_{\mathcal J} }$$ which satisfies $\rank(\bsbA_0  (\mathcal J))\le \rank(\mathcal Q_{\mathcal J})\le  m J + p - J  \le p + m J$. Below we   show a concentration result \eqref{HWboundA1}, fixing $\mathcal J$ and thus its cardinality $J = |\mathcal J|$ and assuming $\bsbA_0(\mathcal J)\ne \bsb0$  (otherwise \eqref{HWboundA1} holds trivially).

Noticing that    (i) $\EE[ \bsbeps]=\bsb0$, (ii)   $\|\bsbeps\|_{\psi_2}\leq \sigma$, (iii) $\|  \bsbA_0\|_2= 1$,   (iv) $\|  \bsbA_0\|_F^2= 1\cdot \rank(\bsbA_0) \le p + m J$,  we apply a  Hanson-Wright type inequality (specifically, only the upper tail bound is needed; see Section 6 in \cite{Vershynin2018}) to obtain for any $t\ge 0$,
\begin{align}
&    \mathbb{P}[ \|\bsbA_{0}(\mathcal J)  \bsbeps \|^2_2-L_0 (p + m J )\sigma^2\geq \sigma^2 t] \notag\\
\le \, & \mathbb{P}[ \|\bsbA _{0}(\mathcal J) \bsbeps \|^2_2-C_{0} \|  \bsbA_0(\mathcal J)\|_F^2 \sigma^2\geq (L_0-C_{0})(p + m J)\sigma^2+\sigma^2 t]
\notag\\  \leq \,&  \exp (-c\{  (L_0-C_{0}) (p + m J) +t\})  \le   \exp (-ct  ),\label{HWboundA1}
\end{align}
where   $C_{0}>0,L_0\ge C_{0}, c >0$ are  sufficiently large constants.

Moreover,  because   $ \Proj_{\Proj^{*,\perp} {\tilde \bsbX {\mathcal  Q^{(b)}}(\bsbb)} } \subset   \Proj_{\Proj^{*,\perp}\tilde \bsbX Q_{\mathcal J(\bsbb)} }   $, we can write   $ \Proj_{\Proj^{*,\perp}\tilde \bsbX Q_{\mathcal J(\bsbb)} } = \bsbU_0 (\mathcal J(\bsbb))(\bsbU_0(\mathcal J(\bsbb)))^T$, $\Proj_{\Proj^{*,\perp} {\tilde \bsbX {\mathcal  Q^{(b)}}(\bsbb)} }    = \bsbU_1 (\bsbb)(\bsbU_1(\bsbb))^T$, and $\bsbU_1 (\bsbb)= \bsbU_0 (\mathcal J(\bsbb)) \allowbreak\bsbV(\bsbb)$ with $\bsbU_0,\bsbU_1, \bsbV$  all orthogonal. It follows that\begin{align*}
\|\bsbA_1(  \bsbb)\bsbeps\|_2^2  &= Tr\{ \bsbeps^T  \bsbU_1 (\bsbb) (\bsbU_1 (\bsbb))^T \bsbeps\}   = Tr\{ \bsbeps^T  \bsbU_0 (\mathcal J(\bsbb)) \bsbV(\bsbb) (\bsbV(\bsbb))^T (\bsbU_0 (\mathcal J(\bsbb))) ^T \bsbeps\} \le \| \bsbA_0  (\mathcal J(\bsbb))\bsbeps\|_2^2.
\end{align*}
Let
\begin{align}
\lambda_0 = \sigma\sqrt{A_0}(m+\sqrt{m\log p}),
\end{align}
with  constant $A_0\ge L_0$. Then
\begin{align*}
\lambda_0^2 (\tau^{(b)} (\bsbb_{\mathcal J(\bsbb)}))^2 &= \sigma^2 A_0 (\sqrt m + \sqrt {\log p})^2\sum_{j\in \mathcal J(\bsbb)}  (\frac{m}{\overline M(\bsbb_j)}+\frac{m}{\underline M(\bsbb_j)})\ge 2A_0  \sigma^2 J(\bsbb)(m+\log p).
\end{align*}
Introducing $\bsbA_0(\mathcal J(\bsbb))$ allows us to simplify  the combinatorial counting for given $J:0\le J \le p$    as follows (which otherwise would involve $\overline{\mathcal M}_j, \underline{\mathcal M}_j$ within each block)
\begin{align*}
&\EP[ \sup_{\bsbb:|\mathcal J(\bsbb)|= J} \, \|\Proj_{\Proj^{*,\perp}\tilde \bsbX {\mathcal  Q^{(b)}(  \bsbb)}}       \bsbeps\|_2^2  - \lambda_0^2 (\tau^{(b)} (\bsbb_{\mathcal J(\bsbb)}))^2 - 2\sigma^2L_0 p \ge \sigma^2 t] \\
\le \, &  \EP[ \sup_{\mathcal J\subset[p]:|\mathcal J |= J} \, \|\bsbA_0(\mathcal J)   \bsbeps\|_2^2  - \sigma^2 L_0 (p + m J)\ge \sigma^2 A_0  J(m+\log p)+\sigma^2L_0 p + \sigma^2 t]\\
\le \, & {p\choose J}  \exp(-cA_0  J(m+\log p)-cL_{0  }p)\exp (-ct  )
\le  C \exp(-c Jm)\exp(-cL_0p) \exp(-ct),
\end{align*}
where we  set the constant $A_0$ to be sufficiently large and applied the Stirling bound. Applying the union bound again,

\begin{align*}
\EP[R_0 \ge \sigma^2 t] & \le
\sum_{J=0}^p C \exp(-cJm-cL_0p)\exp(-ct)\le C \exp(-cp) \exp(-ct)\le C \exp(-ct).
\end{align*}
The tail bound and the non-negativity of   $R_0$ imply $\EE[R_0]\le C \sigma^2$.

Similarly,
since $\rank(\mathcal P^*)\le M^* + 2p$, we have from \eqref{HWboundA1}\begin{align}\label{HWboundPstar}
    \mathbb{P}[ \|  \Proj^{*} \bsbeps \|^2_2-L_0 (M^*+2p )\sigma^2\geq \sigma^2 t]\leq \exp (-ct  ).
\end{align}
 Let
\begin{align}
 df^*=4L_0(M^* + p)
\end{align}
and define
\begin{align*}
 R_1 := &   \big[  \|  \Proj^{*} \bsbeps \|^2_2- \sigma^2df^*  \big]_+ .
\end{align*}
 Then
 \begin{align*}
 \EP[R_1\ge \sigma^2 t]& \le  \mathbb{P}[ \|  \Proj^{*} \bsbeps \|^2_2-df^{*}\sigma^2\geq  \sigma^2 t]\\
&\leq\mathbb{P}[ \|  \Proj^{*} \bsbeps \|^2_2-L_0 (M^*+2p )\sigma^2\geq 2L_0 (M^*+p )\sigma^2+ \sigma^2 t]\\
& \le \exp (-c(M^*+p))\exp(-ct)\le  \exp(-c p)\exp(-ct),
\end{align*}
and so $\EE[R_1]\le C \sigma^2$. Given the above choices of $\lambda_0$ and $df^*$ (and taking a large constant $c_0$),    $$
\EP(\mathcal E^c)\le C \exp(-cp).
$$
Thus,    $\mathcal E$ occurs with \textit{overwhelming} probability as stated in Theorem \ref{thm:seesawerr}.

So far, we have obtained
\begin{align}
\|    \tilde \Proj    \bsbeps\|_2^2    \le   \lambda_0^2  (  \tau^{(b)}(\hat \bsbb_{\hat {\mathcal J}}))^2+c_0 p\sigma^2 + R_0    \label{seesawineq1}
\end{align}
 where $\tau^{(b)}(\hat \bsbb_{\hat {\mathcal J}})= \big(\sum_{j\in \hat {\mathcal J} } \hat \tau_j^2 \big)^{1/2}  $ with  $\hat \tau_j  =\tau   (\hat \bsbb_j)$.  The remaining challenge  boils down to bounding   $\hat \tau_j  $ to determine the error rate (which, however, can be circumvented  with a clever trick).
\\

On the other hand, we can derive useful properties of the estimator through  {optimization  analysis}. Let $\hat \bsbU =\diag\{\hat \bsbU_j \}$ denote   $ \bsbU (\hat \bsbb)$ (or $\diag\{\bsbU  (\hat \bsbb_j)\}$) which is orthogonal.
 Then we can write  $\hat \bsbb = \hat \bsbU \hat \bsbg$. Construct
$$
\hat \bsbt = \hat \bsbU^T \hat \bsbs, \  \forall \hat \bsbs\in \partial \range^{(b)}(\hat \bsbb)
$$
where, according to Theorem \ref{thm:subgrad}, the choice of the subgradient does not affect $\hat \bsbt$. Indeed, assuming without loss of generality the components of $\hat \bsbb_j$ are ordered consecutively as  $ \hat{\overline{\mathcal M}}_j,   \hat{\mathcal M}_j,  \hat{\underline{\mathcal M}}_j $,  a straightforward calculation
 shows $$
\hat \bsbt_j =    \hat{\bsbU}_j^T \hat \bsbs_j =
\begin{cases}
\begin{bmatrix}\frac{1}{\sqrt{\hat{\overline M}_j}}  \\ \bsb0_{\hat M_j} \\ -\frac{1}{\sqrt{\hat{\underline M}_j}} \end{bmatrix}, & j \in \hat{ \mathcal J}  \\
\bsb0, & j \in \hat{ \mathcal J}^{c}.
\end{cases}
$$
\begin{lemma} \label{lem:redproblemopt}
Let $\hat \bsbb \in \arg\min_{\bsbb} L(\bsbb) + \lambda \range^{(b)}(\bsbb)$, where $L$ is differentiable. Then with $\hat \bsbU, \hat\bsbt$ constructed as above,   $\hat \bsbg=\hat\bsbU^T\hat\bsbb$ is a globally optimal solution to
\begin{align}
\min_{\bsbg}L  (  \hat \bsbU \bsbg)  + \lambda \langle \hat \bsbt,    \bsbg\rangle. \label{gammaopt}
\end{align}
\end{lemma}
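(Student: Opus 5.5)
The plan is to transfer the first-order optimality of $\hat\bsbb$ to the reduced problem \eqref{gammaopt}. I will use the subgradient structure in Theorem \ref{thm:subgrad} and the block-diagonal form of $\hat\bsbU$.

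\emph{Step 1 (first-order condition).} Since $L$ is differentiable and $\range^{(b)}$ is convex and finite, the global minimizer $\hat\bsbb$ satisfies $\bsb0\in\nabla L(\hat\bsbb)+\lambda\,\partial\range^{(b)}(\hat\bsbb)$. This follows from the elementary fact that $\delta(L+\lambda\range^{(b)})(\hat\bsbb;\bsba)=\langle\nabla L(\hat\bsbb),\bsba\rangle+\lambda\,\delta\range^{(b)}(\hat\bsbb;\bsba)\ge0$ for all $\bsba$, combined with the support-function representation of the directional derivative of a convex function. So there is some $\hat\bsbs\in\partial\range^{(b)}(\hat\bsbb)$ with $\nabla L(\hat\bsbb)=-\lambda\hat\bsbs$.

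\emph{Step 2 (well-definedness of $\hat\bsbt$ and stationarity in $\bsbg$).} From Theorem \ref{thm:subgrad}, every subgradient restricted to block $j$ has sum $1$ on $\hat{\overline{\mathcal M}}_j$, sum $-1$ on $\hat{\underline{\mathcal M}}_j$, and vanishes on $\hat{\mathcal M}_j$. In the uniform case $j\in\hat{\mathcal J}^c$, $\hat\bsbU_j=\bsb1/\sqrt m$ and $\langle\bsb1,\hat\bsbs_j\rangle=0$. Hence $\hat\bsbU^T\hat\bsbs$ equals the displayed $\hat\bsbt$ for every choice of $\hat\bsbs$. The gradient of the reduced objective at $\hat\bsbg=\hat\bsbU^T\hat\bsbb$ is
\[
\hat\bsbU^T\nabla L(\hat\bsbU\hat\bsbg)+\lambda\hat\bsbt=\hat\bsbU^T\big(\nabla L(\hat\bsbb)+\lambda\hat\bsbs\big)=\bsb0,
\]
using $\hat\bsbU\hat\bsbg=\Proj_{\mathcal Q^{(b)}(\hat\bsbb)}\hat\bsbb=\hat\bsbb$.

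\emph{Step 3 (from stationarity to global optimality).} The reduced objective is $L(\hat\bsbU\,\cdot)$ plus a linear term. When $L$ is convex, which is the setting where the lemma is applied (e.g.\ Theorem \ref{thm:seesawerr-scvx}), it is convex in $\bsbg$, so the stationary point $\hat\bsbg$ is a global minimizer.

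As a complementary argument, I will also record that $\range(\hat\bsbU_j\bsbg_j)=\langle\hat\bsbt_j,\bsbg_j\rangle$ for all $\bsbg$ in a neighborhood of $\hat\bsbg$. Within the neighborhood, the top cluster value stays the maximum, the bottom cluster value stays the minimum, and the intermediates remain strictly between them. Consequently the reduced objective coincides locally with $(L+\lambda\range^{(b)})(\hat\bsbU\bsbg)\ge(L+\lambda\range^{(b)})(\hat\bsbb)$, which gives local optimality with no convexity at all.

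The main obstacle is globality when $L$ is nonconvex. Globally only $\langle\hat\bsbt_j,\bsbg_j\rangle\le\range(\hat\bsbU_j\bsbg_j)$ holds, which is the wrong direction for comparing directly with the original objective. I would therefore state and use the result under convexity of $L$ (or of $L\circ\hat\bsbU$), and otherwise settle for the local/stationarity version, which is all the subsequent seesaw argument needs.
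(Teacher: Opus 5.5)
Your proof is correct for everything it actually asserts. You are also right not to claim global optimality for nonconvex $L$: the lemma as written, together with the paper's remark that it ``does not require $L$ to be convex,'' is false in that generality, so the obstacle you describe cannot be removed.

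Here is a minimal counterexample. Take $p=1$, $m=2$, $0<\lambda<1$ and $L(\bsbb)=\big((\beta_1-\beta_2)^2-1\big)^2+(\beta_1+\beta_2)^2$. With $d=\beta_1-\beta_2$ and $s=\beta_1+\beta_2$, the penalized objective is $(d^2-1)^2+\lambda|d|+s^2$. It is even in $d$, equals $1$ at $d=0$ and $\lambda<1$ at $d=1$, so its global minimizers are $s=0$, $d=\pm d_0$ for some $d_0>0$. For $\hat\bsbb=(d_0/2,-d_0/2)^T$ one has $\hat\bsbU=\bsbI_2$ and $\hat\bsbt=(1,-1)^T$, and the reduced objective in \eqref{gammaopt} is $(d^2-1)^2+\lambda d+s^2$. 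This equals $(d_0^2-1)^2+\lambda d_0$ at $\hat\bsbg$ but $(d_0^2-1)^2-\lambda d_0$ at $\bsbg=(-d_0/2,d_0/2)^T$; the other minimizer fails symmetrically.

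Client-separable losses, as in the federated setting, fail the same way. With $L(\bsbb)=(\beta_1^2-1)^2-\epsilon\beta_1+\beta_2^2$, $\lambda=0.2$, $\epsilon=0.1$, the minimizer has $\hat\beta_1\approx 0.99>\hat\beta_2=0.1$. The reduced $\gamma_1$-problem, however, is $(\gamma_1^2-1)^2+(\lambda-\epsilon)\gamma_1$, which prefers the left well. The mechanism is exactly the inequality you isolated: $\langle\hat\bsbt_j,\bsbg_j\rangle\le\range(\hat\bsbU_j\bsbg_j)$, with equality only near $\hat\bsbg$. So on the range of $\hat\bsbU$ the reduced objective is a minorant of the original one and can dip below its minimum once $L$ has another well.

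What the paper actually uses is only the stationarity equation $\hat\bsbU^T\nabla L(\hat\bsbb)+\lambda\hat\bsbt=\bsb0$. It uses it inside the proof of Theorem \ref{thm:seesawerr}, which explicitly allows nonconvex losses; Theorem \ref{thm:seesawerr-scvx} is derived from it. So your remark that the lemma is applied in the convex setting is inaccurate. Your Steps 1--2, however, give that equation for every differentiable $L$, so nothing downstream is lost.

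Your local-coincidence observation alone already suffices, and it is probably the intended content of the paper's one-line justification (``the objective in \eqref{gammaopt} is differentiable''; the proof itself is omitted). Near $\hat\bsbg$ the reduced objective equals $(L+\lambda\range^{(b)})(\hat\bsbU\bsbg)\ge(L+\lambda\range^{(b)})(\hat\bsbb)$. Hence $\hat\bsbg$ is a local minimizer of a differentiable function and therefore stationary, with no subdifferential calculus needed. Your Fermat-rule route through Theorem \ref{thm:subgrad} reaches the same equation from the original problem instead.

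The correct version of the lemma is thus the one you state: $\hat\bsbg$ is a stationary point and local minimizer of \eqref{gammaopt} for any differentiable $L$, and a global minimizer when $L\circ\hat\bsbU$ is convex.
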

  The proof details are    omitted. Notably,  Lemma  \ref{lem:redproblemopt} does \textit{not} require  $L$ to be convex;        the objective  in \eqref{gammaopt} is differentiable.  Therefore, $\hat \bsbb$ satisfies
\begin{align}
\hat \bsbU^T\nabla L(\hat \bsbb) + \lambda \hat\bsbt = \bsb0,
\end{align}
or
\begin{align}
\lambda \hat \bsbt &= -\hat \bsbU ^T\tilde \bsbX^T(\nabla \tilde l_0(\hat \bsbe) - \nabla \tilde l_0( \bsbe^*) )+\hat \bsbU ^T   \tilde \bsbX^T\bsbeps. \label{seesawkkt0}
\end{align}
  Multiplying both sides of \eqref{seesawkkt0} by $ \hat \bsbU $ yields
\begin{align}
 \lambda\Proj_{\hat \bsbU} \hat \bsbs = -\Proj_{\hat \bsbU} \tilde \bsbX^T(\nabla \tilde l_0(\hat \bsbe) - \nabla \tilde l_0( \bsbe^*) )+ \Proj_{\hat \bsbU}   \tilde \bsbX^T\bsbeps, \notag 
\end{align}
  or
\begin{align}
  \lambda \Proj_{\hat \bsbU_j}\bsbs_j = -     \Proj_{\hat \bsbU_j}\tilde \bsbX_j^T(\nabla \tilde l_0(\hat \bsbe) - \nabla \tilde l_0( \bsbe^*) )+ \Proj_{\hat \bsbU_j}  \tilde \bsbX_j^T\bsbeps, 1\le j \le p.
  \label{seesawstateq}
\end{align}
Applying Theorem \ref{thm:subgrad} again, we have   \begin{align}
 \Proj_{\hat \bsbU_j} \bsbs_j=
\begin{cases}
\begin{bmatrix}\frac{1}{\hat{\overline M_j}}\bsb1\\ \bsb0_{\hat M_j} \\ -\frac{1}{\hat{\underline M_j}}\bsb1\end{bmatrix}, &j \in \hat{\mathcal J } \\
\bsb0, &j \in \hat{\mathcal J}^{c}.
\end{cases}
\end{align}
Taking the Euclidean norm and then squaring both sides of \eqref{seesawstateq} produces
bounds for $\hat \tau_j^2$  which is \textit{exactly}   $\| \Proj_{\hat \bsbU_j} \bsbs_j\|_2^2$ for $j\in \hat{\mathcal J}$:
\begin{align*}
\lambda^2 \hat \tau_j^2 & \le 2\|       \Proj_{\hat \bsbU_j}  \tilde \bsbX_j^T\bsbeps\|_2^2 + 2\| \Proj_{\hat \bsbU_j}\tilde \bsbX_j^T(\nabla \tilde l_0(\hat \bsbe) - \nabla \tilde l_0( \bsbe^*) )\|_2^2 \\
 &\le 4\| \Proj_{\hat \bsbU_j}  \tilde \bsbX_j^T \Proj^{*,\perp}\bsbeps\|_2^2 +4\|      \Proj_{\hat \bsbU_j}  \tilde \bsbX_j^T \Proj^*\bsbeps\|_2^2  + 2\| \Proj_{\hat \bsbU_j}\tilde \bsbX_j^T(\nabla \tilde l_0(\hat \bsbe) - \nabla \tilde l_0( \bsbe^*) )\|_2^2.
\end{align*}
Based on the Lipschitz-gradient
assumption and the choice of $\rho_0$,
\begin{align*}
\|    \tilde \bsbX ^T(\nabla \tilde l_0(  \bsbe) - \nabla \tilde l_0( \bsbe^*) )\|_2^2\le\|  \tilde \bsbX  \|_2^2\cdot \varrho^2 \|    \bsbe  -   \bsbe^*\|_2^2  \le \rho_0^2 \|    \bsbe  -   \bsbe^*\|_2^2,
\end{align*}
and  $\|\tilde \bsbX\|_2^2 \le \rho_0^2$.
Hence summing over $j$ gives\begin{align*}
\lambda^2 (  \tau^{(b)}({\hat \bsbb}_{\hat {\mathcal J}}))^2 & \le 4\| \Proj_{\hat \bsbU}  \tilde \bsbX^T\Proj^{*,\perp}\bsbeps\|_2^2 + 4\| \Proj_{\hat \bsbU}  \tilde \bsbX^T\Proj^{*}\bsbeps\|_2^2 + 2\| \Proj_{\hat \bsbU }\tilde \bsbX ^T(\nabla \tilde l_0(\hat \bsbe) - \nabla \tilde l_0( \bsbe^*) )\|_2^2\\
 & \le 4\| (\Proj^{*,\perp}\hat \Proj  \tilde \bsbX \Proj_{\hat \bsbU})^T  \bsbeps\|_2^2 + 4\rho_0^2 \|  \Proj^*\bsbeps\|_2^2+ 4 \rho_0^2 \Breg_2(  \hat \bsbe ,   \bsbe^*)\\
 & \le 4\| \Proj_{\hat \bsbU}  \tilde \bsbX^T \Proj^{*,\perp}(\tilde \Proj \bsbeps)\|_2^2 + 4\rho_0^2 \|  \Proj^*\bsbeps\|_2^2+ 4 \rho_0^2 \Breg_2(  \hat \bsbe ,   \bsbe^*)\\
&\le 4 \rho_0^2 \| \tilde \Proj  \bsbeps\|_2^2 + 4\rho_0^2 \|  \Proj^*\bsbeps\|_2^2+ 4 \rho_0^2 \Breg_2(  \hat \bsbe ,   \bsbe^*) ,
\end{align*}
where we used   $  \tilde \bsbX \Proj_{\hat \bsbU}=  \hat \Proj  \tilde \bsbX \Proj_{\hat \bsbU}$ and $\Proj^{*,\perp}\hat \Proj=\tilde \Proj\Proj^{*,\perp}\hat \Proj$.
Setting $\lambda = \alpha  \rho_0 \lambda_0$ (with $\alpha>0$ to be specified later) results in
\begin{align}
\lambda_0^2 (\tau^{(b)}({\hat \bsbb}_{\hat {\mathcal J}}))^2 \le \frac{4}{\alpha^2}\big(\| \tilde  \Proj  \bsbeps\|_2^2 + \|  \Proj^*\bsbeps\|_2^2+ \Breg_2(  \hat \bsbe ,   \bsbe^*)  \big). \label{seesawineq2}
\end{align}

Interestingly, in \eqref{seesawineq1} and \eqref{seesawineq2},  $\| \tilde \Proj  \bsbeps\|_2^2 $ and  $\lambda_0^2 (\hat \tau^{(b)})^2 $
bound each other in a `seesaw' fashion up to some manageable additive terms. By combining the statistical bound with the optimization-based bound, we obtain a  {key} result (assuming $\alpha>2$):
\begin{align}
\|\tilde \Proj  \bsbeps \|_2^2\le \frac{4}{\alpha^2 - 4}\big( \|  \Proj^*\bsbeps\|_2^2+   \Breg_2(  \hat \bsbe ,   \bsbe^*)\big)+ \frac{\alpha^2}{\alpha^2 - 4}(c_0 p\sigma^2 + R_0).\label{seesawstochSq}  \end{align}
 Meanwhile,
\begin{align}
\lambda_0^2 (\tau^{(b)}({\hat \bsbb}_{\hat {\mathcal J}}))^2 \le \frac{4}{\alpha^2 - 4}\big( \|  \Proj^*\bsbeps\|_2^2+ \Breg_2(  \hat \bsbe ,   \bsbe^*)+ c_0 p\sigma^2+R_0\big). \label{seesawtaubnd}
 \end{align}
Plugging \eqref{seesawstochSq} into \eqref{seesawstobound0}  bounds  the stochastic term by
\begin{align}
\langle \bsbeps,  \tilde  \bsbX (\hat {\bsbb} - \bsbb^*)\rangle
 \le  & \, ( \frac{a}{2} + \frac{2a}{\alpha^2 - 4} )\sigma^2 df^{*}     +(\frac{1}{a}+ \frac{2a}{\alpha^2 - 4} )\Breg_2(  \hat \bsbe ,   \bsbe^*)  +    \frac{a\alpha^2}{2(\alpha^2 - 4)}(c_0p\sigma^2+ R_0) + ( \frac{a}{2} + \frac{2a}{\alpha^2 - 4} )R_1
\end{align}
or the following bound on  $\mathcal E$:
\begin{align}
\langle \bsbeps,  \tilde  \bsbX (\hat {\bsbb} - \bsbb^*)\rangle
 \le  & \, ( \frac{a}{2} + \frac{2a}{\alpha^2 - 4} +    \frac{a\alpha^2}{\alpha^2 - 4})c\sigma^2  df^{*}     +(\frac{1}{a}+ \frac{2a}{\alpha^2 - 4} )\Breg_2(  \hat \bsbe ,   \bsbe^*).
\end{align}
Notably, this bound  does \emph{not} involve  the regularizer, distinguishing it from those obtained through H\"older's inequality or dual approaches. When
$\alpha$ is chosen to be large, $\Breg_2(  \hat \bsbe ,   \bsbe^*)$ can be overshadowed by the generalized Bregman   of the loss under suitable regularity conditions.

Now, \eqref{seesawbasiceq1} becomes
\begin{align}
&2   \bregs_{\tilde l_0} ( \hat \bsbe,   \bsbe^*)  -(\frac{1}{a}+ \frac{2a}{\alpha^2 -4} )\Breg_2(  \hat \bsbe ,   \bsbe^*)+  \lambda \sum_{j=1}^p \breg_{\range}( \bsbb_j^*,\hat \bsbb_j)\notag\\
 \le \, & c(\frac{a}{2} + \frac{2a}{\alpha^2 -4} +    \frac{a\alpha^2}{\alpha^2 - 4})\sigma^2     df^{*} + \lambda \range^{(b)} (\bsbb^*) -  \lambda \range^{(b)} (\hat \bsbb )+       \frac{a\alpha^2}{2(\alpha^2 - 4)}R_0 + ( \frac{a}{2} + \frac{2a}{\alpha^2 - 4} )R_1. \label{seesawfinal0}
\end{align}
Let
$$
F=K(\tau^{(b)}(\bsbb_{\mathcal J^*}^*))^2=K \sum_{j\in \mathcal J^*} (\tau_j^*)^2
$$
with $\tau_j^* = \tau(\bsbb_j^*)$. Observe the following relationship:
\begin{align}
\delta \range(\hat{ \bsbb}_j; \bsbb_j^*  - \hat{\bsbb}_j) &  = \range(  \bsbb_j^*) - \range (\hat{\bsbb}_j) - \breg_{\range}( {\bsbb}_j^*, \hat{\bsbb}_j)  = -2 \bregs_{\range}(\hat{\bsbb}_j, {\bsbb}_j^*  )- \delta \range( { \bsbb}_j^*; \hat{\bsbb}_j   -  {\bsbb}_j^*).
\label{dddeltarel}
\end{align}
Suppose the following condition holds:
\begin{align}
&       \lambda \sum_{j=1}^p \delta  \range (\hat \bsbb_j; \bsbb_j^* - \hat \bsbb_j )   \le   (2- \alpha_{1})\bregs_{\tilde l_0}   (   \hat \bsbe,     \bsbe^*)   - \alpha_{2} \Breg_2    (   \hat \bsbe,      \bsbe^*) +C(F \lambda^2 +df^{*}\sigma^2)\label{seesawregcond0}
\end{align}
for some $\alpha_1\ge 0, \alpha_2>0$  and   constant  $C>0$. Then adding  \eqref{seesawfinal0} and \eqref{seesawregcond0}   with   $  \frac{1}{a} + \frac{2a}{\alpha^2 -4} \le \frac{ \alpha_{2}}{2}$ or
$$a=\frac{4}{\alpha_2}, \alpha^2 =\frac{32}{\alpha_2^2}+4 $$     gives
\begin{align*}
 \alpha_{1}\bregs_{\tilde l_0}   (  \hat \bsbe,    \bsbe^*)+\frac{\alpha_{2}}{2} \Breg_2  (   \hat \bsbe,    \bsbe^*)   \le \, &    C F \lambda^2 +(\frac{2}{\alpha_{2}}+\frac{\alpha_{2}}{4}+C(\alpha_2 \vee \frac{1}{\alpha_2}))\sigma^2df^{*}  +   C(\alpha_2 \vee \frac{1}{\alpha_2})  R_0 +     C(\alpha_2 \vee \frac{1}{\alpha_2})  R_1
\end{align*}
and thus
\begin{align}
& \EE[\alpha_{1}\bregs_{\tilde l_0}   (\tilde \bsbX \hat \bsbb, \tilde \bsbX   \bsbb^*) \vee \alpha_{2} \|    \tilde \bsbX (\hat \bsbb -     \bsbb^*)\|_2^2 \vee   \frac{1}{\alpha_2}  \lambda_0^2 (\tau^{(b)}({\hat \bsbb}_{\hat {\mathcal J}}))^2]\notag\\
  \lesssim \ &       \frac{1}{\alpha_{2}^2\wedge 1}(K\rho_0^2 \sum_{j\in \mathcal J^*} (\tau_j^*)^2  \lambda_0^2 +\alpha_2\sigma^2df^{*}) +(\alpha_2 \vee \frac{1}{\alpha_2})\sigma^2  \notag \\
 \lesssim \ &       \frac{1}{\alpha_{2}^2\wedge 1}(K\rho_0^2 \sum_{j\in \mathcal J^*} (\tau_j^*)^2  \lambda_0^2 +\alpha_2\sigma^2df^{*})  \label{genres0}
\end{align}
and on event $\mathcal E$,
\begin{align}
\alpha_{1}\bregs_{\tilde l_0}   (\tilde \bsbX \hat \bsbb, \tilde \bsbX   \bsbb^*) \vee \alpha_{2} \|    \tilde \bsbX (\hat \bsbb -     \bsbb^*)\|_2^2 \vee   \frac{1}{\alpha_2}  \lambda_0^2 (\tau^{(b)}({\hat \bsbb}_{\hat {\mathcal J}}))^2
  \lesssim        \frac{1}{\alpha_{2}^2\wedge 1}(K\rho_0^2 \sum_{j\in \mathcal J^*} (\tau_j^*)^2  \lambda_0^2 +\alpha_2\sigma^2df^{*}). \label{genres}
\end{align}
The bound for $(\tau^{(b)}({\hat \bsbb}_{\hat {\mathcal J}}))^2$  is due to \eqref{seesawtaubnd}:   $\lambda_0^2 (\tau^{(b)}({\hat \bsbb}_{\hat {\mathcal J}}))^2 \le  ({\alpha_2}/{8})    ( \alpha_2 \sigma^2 df^* + \alpha_2 \Breg_2(  \hat \bsbe ,   \bsbe^*) +\alpha_2c_0 p\sigma^2 + \alpha_2R_0 + \alpha_2R_1)\le (   {\alpha_2}/{8})    (c \alpha_2 \sigma^2 df^* + \alpha_2 \Breg_2(  \hat \bsbe ,   \bsbe^*))  $. \eqref{genres} implies \eqref{prederrrate}  and \eqref{tauerrrate} with $\alpha_1=0$.

Let   $\bsba = \hat \bsbb - \bsbb^*$. Based on  Theorem \ref{thm:subgrad},
$\delta \range(\hat{ \bsbb}_j; \bsbb_j^*  - \hat{\bsbb}_j)=\max_{ k\in \overline {\mathcal M}(\hat \bsbb_j)}     (\bsbb_j^* [k] -\hat\bsbb_j  [k])-  \min_{ k\in \underline {\mathcal M}(\hat \bsbb_j)}(\bsbb_j^*[k] -\hat \bsbb_j [k])=\max_{ k\in \overline {\mathcal M}(\hat \bsbb_j)}     (- \bsba_j [k])  -  \min_{ k\in \underline {\mathcal M}(\hat \bsbb_j)}(- \bsba_j [k])$, which, for $j \in \mathcal J^{*c}$, equals $- \max_{ k\in [m]}    \hat \bsbb_j [k] +     \min_{ k\in [m]}  \hat \bsbb_j [k] =-\range(\hat \bsbb_j)$,  as   $\bsbb_j^*\in \Proj_K$. Thus \eqref{seesawregcond0} can equivalently written as
\begin{align}
&  \lambda \sum_{j\in \mathcal J^*}  (\,     \max_{  \underline {\mathcal M} (\bsbb_j^* + \bsba_j)} \bsba_j[k]- \min_{  \overline {\mathcal M} (\bsbb_j^* + \bsba_j)}     \bsba_j[k]\,) -\lambda \sum_{j\in \mathcal J^{*c}} \range (\bsba_j  )   \notag\\
\le\, &     (2 -\alpha_{1})\bregs_{\tilde l_0}   (\tilde \bsbX   \bsbb^* +\tilde \bsbX \bsba, \tilde \bsbX   \bsbb^*)    - \alpha_{2} \Breg_2(   \tilde \bsbX  \bsba  ,\bsb0)+C(F \lambda^2 +\sigma^2df^{*}).   \label{seesawregcond-general0} \end{align}
The regularity condition in Theorem     \ref{thm:seesawerr} corresponds to $\alpha_1=0, \alpha_2=\nu$.

When $\tilde l_0$ is $\nu$-strongly convex, $\bregs_{\tilde l_0}      \ge \nu\Breg_2$.
 A    sufficient condition for \eqref{seesawregcond-general0} can be obtained by (i) setting $\alpha _{2} = \nu(2-\alpha_1)/2$,
(ii) omitting  the term $\sigma^2df^{*}$, and then (iii) applying the AM-GM inequality:
\begin{align}
  \sum_{j\in \mathcal J^*}  (\,     \max_{  \underline {\mathcal M} (\bsbb_j^* + \bsba_j)} \bsba_j[k]- \min_{  \overline {\mathcal M} (\bsbb_j^* + \bsba_j)}     \bsba_j[k]\,)  -  \sum_{j\in \mathcal J^{*c}}    \range (\bsba_j )
\le    \frac{1}{\kappa}  \big(\sum_{j\in \mathcal J^*} (\tau_j^*)^2\big)^{1/2} \{
\bregs_{\tilde l_0}     (\tilde \bsbX   \bsbb^* +\tilde \bsbX \bsba, \tilde \bsbX   \bsbb^*)     \}^{1/2}     \label{seesawregcond-scvx}
\end{align}
where $1/\kappa=\sqrt{2 C(2-\alpha_1) K } $.

It is easy to see that the regularity condition outlined  in \eqref{seesawcompcondbound0} and \eqref{seesawrestrictregion-general0} indicates \eqref{seesawregcond-scvx}, and thus \eqref{seesawregcond-general0}   with  $\alpha_1 = 1$, $\alpha_2 = \nu/2$, $K = c/\kappa^2 $. The conclusion in Theorem \ref{thm:seesawerr-scvx}
follows.
 \end{proof}

\subsection{Pattern Recovery}
\label{subsec:pattrec}
This part studies when the estimator achieves exact pattern recovery.
This is analogous to the studies on sign consistency for lasso (see, e.g., \cite{Zhao2006} and \cite{Wain2009}), which were among the first significant contributions to high-dimensional statistics.

When $L(\bsbb)$ and thus   $f(\bsbb)$ are convex,   $\hat \bsbb$ is a globally optimal solution if and only if it satisfies the KKT condition:  $\bsb0\in \partial f(\hat \bsbb)$.
 For simplicity, in this subsection, we consider regression in federated learning  with $
l_0(\bsbe_k) =    \|\bsbe_k - \bsby_k\|_2^2/2
$, $1\le k \le m$, and the corresponding   model is $\bsby  = \tilde \bsbX\bsbb^*+\bsbeps$ and $\bsbeps =[\bsbeps_1^T, \ldots , \bsbeps_p^T]^T $.
Define   $p_e$ as the probability that there exists a globally optimal solution $\hat \bsbb$, characterized by   $\bsb0\in \partial f(\hat \bsbb)$, satisfies  $$\mathcal Q^{(b)}(\hat \bsbb) = \mathcal Q^{(b)}(  \bsbb^*). $$

The theorem below requires a ``mutual coherence'' condition for the design, as well as  a
  gap or ``beta-min'' condition on the signal-to-noise ratio, to ensure successful pattern recovery.
 Define \begin{align*}
G^*   =\min_{j\in \mathcal J^*} G_j^*, \text{ with }
 G_j^*  =  (\max\bsbb_j^*-\max  \bsbb_j^*[{\mathcal M}^*_j] ) \wedge  (\min  \bsbb_j^*[{\mathcal M}^*_j] - \min\bsbb_j^* ).
\end{align*}
Let $\bsbU^* = \bsbU(\bsbb^*)$ and
\begin{align*}
\begin{cases}\bsbt^*  =  \bsbU^{*  T} \bsbs(\bsbb^*)
\\
 \overunderline\bsbs^*  =   \bsbU^* \bsbt^* = \Proj_{\bsbU^* } \bsbs(\bsbb^*), \end{cases}
\quad  \forall \bsbs(\bsbb^*)\in \partial \range^{(b)}(\bsbb^*).
\end{align*}
As aforementioned,  $\bsbt^*, \overunderline\bsbs^*$ are well-defined regardless of the subgradient choice (cf.  Theorem \ref{thm:subgrad}).
 Let  $\tilde \bsbX^* = \tilde \bsbX \Proj_{ \bsbU^*}$, $\tilde \bsbX^{*,\perp} = \tilde \bsbX \Proj_{ \bsbU^*}^\perp$ with $\tilde \bsbX^* +\tilde \bsbX^{*,\perp}=\tilde \bsbX  $.
\begin{theorem} \label{th:pattrecov}
In the    federated regression setting with  $\lambda>0$, define \begin{gather*}
\bsbh =[\bsbh_1^T, \ldots, \bsbh_p^T]^T
=   \{( \tilde \bsbX^{*})^+   \tilde \bsbX^{*,\perp}    \}^T\,  \overunderline{\bsbs}^*. 
\end{gather*}
Assume a regular design  and a properly large signal in terms of $\bsbh$, $G^*$:
\begin{align}
&\max_{1\le j\le p}\  {\overunderline M_j^*} \| \bsbh_j\|_{\infty}\le \alpha<1 \label{mutcohcond1}\\
 &2 \|  ( \tilde \bsbX^{*T}    \tilde \bsbX^{*}    )^{+}\, {\overunderline \bsbs}^{*} \|_{\infty}\lambda/G^* \le  \alpha'<1.
\end{align}
Then $p_e\ge \EP[ \mathcal E_1 \cap \mathcal E_2]$, where
\begin{align}
\mathcal E_1 & =\{ \bsbeps:  m\| ( \tilde \bsbX^{*,\perp}  )^T\Proj^{*,\perp} \bsbeps \|_{\infty}   \le  (1-\alpha) \lambda\},\label{defE1}\\
\mathcal E_2 &=  \{\bsbeps: 2 \|  (\tilde \bsbX^{ *})^{+} \Proj^{* } \bsbeps  \|_{\infty}  \le  (1-\alpha') G^*\}.   \label{defE21}
\end{align}
Additionally, $\mathcal E_1$ can be written as $ \{ \bsbeps:  m\max_{1\le j \le p} \| (
{\tilde \bsbX}_j[:,  \overunderline {\mathcal M}_j^*] )^T\Proj^{*,\perp} \bsbeps \|_{\infty}
\le (1-\alpha) \lambda\}$, and $\mathcal E_2   \supset  \{\bsbeps: 2 \|  (\tilde \bsbX \bsbU^{ *})^{+}   \bsbeps  \|_{\infty}  \le  (1-\alpha') G^*\}$. \end{theorem}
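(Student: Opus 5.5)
The plan is a primal--dual witness (oracle) argument. First solve the problem restricted to the true pattern. Then show that on $\mathcal E_1\cap\mathcal E_2$ this restricted solution (i) keeps exactly the pattern of $\bsbb^*$ and (ii) satisfies the full KKT condition $\bsb0\in\partial f(\hat\bsbb)$. By convexity of the regression objective, (ii) makes it a global minimizer, and (i) then gives $p_e\ge\EP[\mathcal E_1\cap\mathcal E_2]$.

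\textbf{Step 1: the oracle solution.} In the spirit of Lemma \ref{lem:redproblemopt}, define
\[
\hat\bsbg\in\arg\min_{\bsbg}\ \tfrac12\|\bsby-\tilde\bsbX\bsbU^*\bsbg\|_2^2+\lambda\langle\bsbt^*,\bsbg\rangle,
\qquad \hat\bsbb=\bsbU^*\hat\bsbg .
\]
Take the minimum-norm solution, so that $\hat\bsbb$ lies in the range of $\bsbU^*$. Since $\bsbb^*=\Proj_{\bsbU^*}\bsbb^*$ and $\bsby=\tilde\bsbX^*\bsbb^*+\bsbeps$, the normal equations $\Proj_{\bsbU^*}\tilde\bsbX^T(\bsby-\tilde\bsbX\hat\bsbb)=\lambda\overunderline\bsbs^*$ give
\[
\hat\bsbb-\bsbb^*=(\tilde\bsbX^*)^+\bsbeps-\lambda(\tilde\bsbX^{*T}\tilde\bsbX^*)^+\overunderline\bsbs^*.
\]
Here $(\tilde\bsbX^*)^+\bsbeps=(\tilde\bsbX^*)^+\Proj^*\bsbeps$, because the column space of $\tilde\bsbX^*$ is $\Proj^*$. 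The residual is then
\[
\bsby-\tilde\bsbX\hat\bsbb=\Proj^{*,\perp}\bsbeps+\lambda\{(\tilde\bsbX^*)^+\}^T\overunderline\bsbs^* .
\]

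\textbf{Step 2: pattern preservation.} By $\mathcal E_2$ and the second assumption,
\[
\|\hat\bsbb-\bsbb^*\|_\infty<\tfrac{(1-\alpha')G^*}{2}+\tfrac{\alpha' G^*}{2}=\tfrac{G^*}{2}.
\]
Because $\hat\bsbb\in\range(\bsbU^*)$, entries in the same extreme cluster remain tied, and uniform rows remain uniform. For $j\in\mathcal J^*$, a perturbation smaller than $G_j^*/2$ cannot let an intermediate entry reach the maximum or minimum cluster, and it cannot merge the two extreme clusters. Hence $\mathcal Q^{(b)}(\hat\bsbb)=\mathcal Q^{(b)}(\bsbb^*)$. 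By Theorem \ref{thm:subgrad}, this also gives $\partial\range^{(b)}(\hat\bsbb)=\partial\range^{(b)}(\bsbb^*)$. The inclusion for $\mathcal E_2$ stated at the end of the theorem follows since $\Proj^*$ is the column projection of $\tilde\bsbX\bsbU^*$.

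\textbf{Step 3: dual feasibility.} I expect this to be the main obstacle. We must exhibit $\bsbs\in\partial\range^{(b)}(\bsbb^*)$ with $\tilde\bsbX^T(\bsby-\tilde\bsbX\hat\bsbb)=\lambda\bsbs$. The $\Proj_{\bsbU^*}$-component of this equation holds by Step 1. Set $\bsbs=\overunderline\bsbs^*+\bsbz$, where
\[
\bsbz=\lambda^{-1}(\tilde\bsbX^{*,\perp})^T(\bsby-\tilde\bsbX\hat\bsbb)=\lambda^{-1}(\tilde\bsbX^{*,\perp})^T\Proj^{*,\perp}\bsbeps+\bsbh .
\]
This $\bsbz$ lies in the range of $\Proj_{\bsbU^*}^\perp$. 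Since $\bsbU(\bsbb_j^*)$ acts as the identity on $\mathcal M_j^*$, each $\bsbz_j$ vanishes on $\mathcal M_j^*$ and sums to zero on each of $\overline{\mathcal M}_j^*$ and $\underline{\mathcal M}_j^*$. The same observation, that the columns of $\tilde\bsbX^{*,\perp}$ outside the extreme sets vanish, yields the restricted form of $\mathcal E_1$. Combining $\mathcal E_1$ with \eqref{mutcohcond1},
\[
\|\bsbz_j\|_\infty\le\frac{1-\alpha}{m}+\frac{\alpha}{\overunderline M_j^*}\le\frac{1}{\overunderline M_j^*}.
\]
Now check the subgradient description of Theorem \ref{thm:subgrad} row by row.
\begin{itemize}
\item For $j\in\mathcal J^*$: on $\overline{\mathcal M}_j^*$ we have $s_k=1/\overline M_j^*+z_k\ge0$, since $\overunderline M_j^*\ge\overline M_j^*$, and these entries sum to $1$. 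The minimum block is symmetric, and the entries on $\mathcal M_j^*$ are zero.
\item For $j\notin\mathcal J^*$: here $\overunderline M_j^*=m$ and $\overunderline\bsbs^*_j=\bsb0$, so $\langle\bsb1,\bsbs_j\rangle=0$ and $\|\bsbs_j\|_1\le m\|\bsbz_j\|_\infty\le 1\le2$.
\end{itemize}
The delicate part is tracking exactly how $\Proj_{\bsbU^*}^\perp$ acts within the extreme blocks, so that $\bsbz_j$ has the claimed zero-sum structure and the $\ell_\infty$ bound transfers into the required sign constraints.
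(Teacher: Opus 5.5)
Your primal--dual witness argument is correct and is essentially the paper's proof. The paper likewise fixes $\bsbU=\bsbU(\bsbb^*)$, solves the reduced KKT system (Lemma \ref{lem:regkktsimp}) to get $\lambda\bsbs=\lambda\overunderline{\bsbs}^*+\lambda\bsbh+\tilde\bsbX^T\Proj^{*,\perp}\bsbeps$, and then checks the ordering conditions via $G^*$ and the row-by-row sign/$\ell_1$ subgradient conditions via $\mathcal E_1$ and \eqref{mutcohcond1}. Two small repairs are needed: instead of taking a ``minimum-norm solution,'' assume that $\tilde\bsbX\bsbU^*$ has full column rank (as the paper implicitly does through Lemma \ref{lem:regkktsimp}), since otherwise your formula for $\hat\bsbb-\bsbb^*$ fails; and note that the hypotheses only yield $\|\hat\bsbb-\bsbb^*\|_\infty\le G^*/2$, not $<$, a boundary slip that the paper's proof shares.
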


 Based on the proof,  $ \bsbh_{j}[      {\mathcal M}_j^*] = \bsb0$, and \eqref{mutcohcond1} can be relaxed to
$
 \overline{M}_j^* \|\bsbh_{j}[     \overline{\mathcal M}_j^*]\|_\infty \allowbreak \vee \underline{M}_j^* \|\bsbh_{j}[     \underline{\mathcal M}_j^*]\|_\infty \le \alpha$ ($ 1\le j\le p$).
Let  $\bsbZ  = [\bsbZ_1, \ldots , \bsbZ_p]= \tilde \bsbX \bsbU^*$,  where $\bsbZ_j = \tilde \bsbX _j \bsbU_j^*$ has dimensions   $ N\times (2+M_j^*)$ for $j\in \mathcal J^*$ and $N\times 1$ for $j\in \mathcal J^{*c}$.
Then $\bsbh$ can also be written as
$$
\bsbh  = ( \tilde \bsbX^{*,\perp})^T \bsbZ (\bsbZ^T \bsbZ)^{-1} \bsbt^*. $$
 This is a  pivotal metric in both   nonasymptotic and asymptotic analyses of exact pattern recovery.  It enables the extension of the  \textit{irrepresentable conditions} \citep{Zhao2006} to asymptotic studies concerning range regularizers.  Clearly,    a small value of  $\| \bsbh\|_\infty$ is advantageous  for recovering extreme-value patterns, achievable through a design  exhibiting low coherence (between  $ \tilde \bsbX \bsbU^{*}$ and  $ \tilde \bsbX \bsbU^{*,\perp}$) and   maintaining the nondegeneracy  of    $ \tilde \bsbX \bsbU^{*} $.
 In the special   case of i.i.d. Gaussians, we can apply Anderson's shifted-ball inequality to calculate $p_e$.
\begin{corollary}\label{cor:pattrecovGaussian}
In the context of Theorem \ref{th:pattrecov}, assume $\bsbeps=[\epsilon_i]$ has i.i.d. $\mathcal N(0,\sigma^2)$ entries. Let   $\Phi, \varphi$ denote the   distribution and density functions, respectively, for the standard normal. Let $df^*= M^*+p+J^*$.
Define two numbers, $\omega_0, \mu_0$, to quantify the degrees of \emph{mean incoherence} and \emph{model nondegeneracy}, respectively:
\begin{align}
\| (\tilde \bsbX^{*,\perp} )^T\bsbZ \|_{2, \infty } /\sqrt{df^*}\le \omega_0 ,  \quad \lambda_{\min}  (\bsbZ^T\bsbZ) \ge \mu_0.
\end{align}
 Then
\begin{align}
p_e \ge \{1 - 2\Phi(-A  )\}^{pm-M^*} \{1 - 2\Phi(-A')\}^{ df^*} \label{corpattprobbound}
\end{align}
and thus
$
1-p_e \le 2 pm \varphi(A )/A  +2   df^*  2\varphi(A')/A',
$
where $A  =\frac{\lambda}{\sigma\| \tilde \bsbX\|_2 } (\frac{1}{m}-\frac{\omega_0\tau_{\mathcal J^*} \sqrt{df^*}}{\mu_0}  ) >0, A' =    \frac{\sqrt{\mu_0}}{\sigma} (   \frac{G^*}{2}- \lambda \frac{\tau_{\mathcal J^*}}{\mu_0})>0$, and   $\tau_{\mathcal J^*}$ is short for  $ \sqrt{\sum_{j \in \mathcal J^*}  \tau_j ^{*2}} $.

\end{corollary}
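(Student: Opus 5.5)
The plan is to start from Theorem \ref{th:pattrecov}, which gives $p_e\ge \EP[\mathcal E_1\cap\mathcal E_2]$ with $\alpha,\alpha'$ still to be fixed. The work is in three parts: choose $\alpha,\alpha'$ from $\omega_0,\mu_0$ so that the deterministic conditions hold; write each event as a symmetric convex set (a box) in Gaussian space; then lower bound the Gaussian measure of the intersection by a product of one-dimensional probabilities.

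\textbf{Choosing $\alpha$ and $\alpha'$.} Use $\bsbh=(\tilde \bsbX^{*,\perp})^T\bsbZ(\bsbZ^T\bsbZ)^{-1}\bsbt^*$ and note $\|\bsbt^*\|_2=\tau_{\mathcal J^*}$ (entries $1/\sqrt{\overline M_j^*}$, $-1/\sqrt{\underline M_j^*}$). Then
$$\|\bsbh\|_\infty\le \|(\tilde \bsbX^{*,\perp})^T\bsbZ\|_{2,\infty}\,\|(\bsbZ^T\bsbZ)^{-1}\|_2\,\tau_{\mathcal J^*}\le \omega_0\sqrt{df^*}\,\tau_{\mathcal J^*}/\mu_0 .$$
Since $\overunderline M_j^*\le m$, condition \eqref{mutcohcond1} holds with $\alpha=m\,\omega_0\sqrt{df^*}\tau_{\mathcal J^*}/\mu_0$. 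Hence $(1-\alpha)/m=1/m-\omega_0\tau_{\mathcal J^*}\sqrt{df^*}/\mu_0$, which matches the bracket in $A$.

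Similarly, $(\tilde \bsbX^{*T}\tilde \bsbX^*)^+\overunderline\bsbs^*=\bsbU^*(\bsbZ^T\bsbZ)^{-1}\bsbt^*$. The rows of $\bsbU^*$ have norm at most one, so its $\infty$-norm is at most $\tau_{\mathcal J^*}/\mu_0$. This gives $\alpha'=2\lambda\tau_{\mathcal J^*}/(\mu_0G^*)$, so that $(1-\alpha')G^*/2=G^*/2-\lambda\tau_{\mathcal J^*}/\mu_0$. Positivity of $A$ and $A'$ is exactly $\alpha,\alpha'<1$.

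\textbf{Event $\mathcal E_1$ as a box.} $\mathcal E_1$ constrains $\|(\tilde \bsbX^{*,\perp})^T\Proj^{*,\perp}\bsbeps\|_\infty\le (1-\alpha)\lambda/m$. The nonzero coordinates of $\tilde \bsbX^{*,\perp}$ number at most $pm-M^*$, because the $M^*$ intermediate coordinates lie in the range of $\bsbU^*$. Each coordinate is a Gaussian with standard deviation at most $\sigma\|\tilde\bsbX\|_2$, since the columns of $\tilde \bsbX^{*,\perp}$ have norm at most $\|\tilde\bsbX\|_2$ and $\Proj^{*,\perp}$ is a contraction. So $\mathcal E_1$ contains a symmetric slab intersection, and each slab has marginal probability at least $1-2\Phi(-A)$.

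\textbf{Event $\mathcal E_2$ as a box.} Use the superset $2\|(\tilde\bsbX\bsbU^*)^+\bsbeps\|_\infty\le(1-\alpha')G^*$. Here $(\bsbZ)^+\bsbeps=(\bsbZ^T\bsbZ)^{-1}\bsbZ^T\bsbeps$ has at most $df^*$ coordinates, each with standard deviation at most $\sigma/\sqrt{\mu_0}$. So each slab has probability at least $1-2\Phi(-A')$. The dimension count is $\sum_j (M_j^*+2)$ over $\mathcal J^*$ plus $p-J^*$, which is at most $df^*$.

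\textbf{Combining the events.} All the slabs are symmetric convex sets in the Gaussian vector $\bsbeps$. By the Gaussian correlation inequality (Royen), or Šidák's inequality for symmetric slabs — which is what the Anderson shifted-ball remark points to — the probability of the intersection is at least the product of the individual slab probabilities. This gives \eqref{corpattprobbound}.

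The tail bound then follows from $1-\prod(1-x_i)\le\sum x_i$ together with the Mills ratio $\Phi(-A)\le\varphi(A)/A$.

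\textbf{Main obstacle.} I expect the hardest part to be the bookkeeping that confirms the slab counts ($pm-M^*$ and $df^*$) and the per-coordinate variances. In particular, the variance bound $\sigma\|\tilde\bsbX\|_2$ needs care on the extreme-index coordinates after projection.
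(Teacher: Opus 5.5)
Your argument is correct, and its bookkeeping matches the paper's. The same Cauchy--Schwarz bounds give $\alpha=m\,\omega_0\sqrt{df^*}\,\tau_{\mathcal J^*}/\mu_0$ and $\alpha'=2\lambda\tau_{\mathcal J^*}/(\mu_0G^*)$. The same slab counts $pm-M^*$ and $df^*$ appear, and the same Mills-ratio step finishes the proof.

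The difference is in how the box probabilities become a product.
\begin{itemize}
\item The paper first uses that $\Proj^{*}\bsbeps$ and $\Proj^{*,\perp}\bsbeps$ are independent under isotropic Gaussian noise, so $\EP[\mathcal E_1\cap\mathcal E_2]$ factors into two probabilities.
\item Within each factor, it then applies Anderson's inequality through Lemma \ref{lem:gaussconcen}. It normalizes by the operator norms $\|(\tilde\bsbX^{*,\perp})^T\Proj^{*,\perp}\|_2\le\|\tilde\bsbX\|_2$ and $\|\bsbZ^+\|_2\le 1/\sqrt{\mu_0}$. After that, the correlated Gaussian vector is dominated on symmetric convex sets by an i.i.d.\ $\mathcal N(0,\sigma^2)$ vector, whose box probability is exactly a product.
\item You instead apply \v{S}id\'ak's inequality to all $pm-M^*+df^*$ slabs at once; the full Gaussian correlation inequality is not needed.
\end{itemize}

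What each route buys:
\begin{itemize}
\item Your route uses only coordinatewise variances and needs no independence between the two events. It would therefore allow the sharper scales $\max_i\|\Proj^{*,\perp}\tilde\bsbX[:,i]\|_2$ and $\max_k\{[(\bsbZ^T\bsbZ)^{-1}]_{kk}\}^{1/2}$ in place of $\|\tilde\bsbX\|_2$ and $1/\sqrt{\mu_0}$. It also extends to correlated Gaussian noise.
\item The paper's route needs only Anderson's inequality and a comparison lemma that it reuses for the $\ell_1$-ball in Corollary \ref{cor:dualmethod}. The cost is operator-norm scaling and reliance on isotropy.
\end{itemize}

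Two cosmetic remarks:
\begin{itemize}
\item $\{2\|(\tilde\bsbX\bsbU^*)^+\bsbeps\|_\infty\le(1-\alpha')G^*\}$ is a subset of $\mathcal E_2$, not a superset; subset is the direction you actually need.
\item Your union bound gives $2\,df^*\varphi(A')/A'$ for the second term, which is stronger than the stated $2\,df^*\cdot 2\varphi(A')/A'$.
\end{itemize}
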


  Theorem \ref{th:pattrecov} and  Corollary \ref{cor:pattrecovGaussian} establish  specific probability bounds. Interestingly, the required rate   of $\lambda$    in Section \ref{subsec:errrate}   for achieving satisfactory statistical error      is   \emph{lower} than  the rate  needed for exact pattern recovery here, indicating   high accuracy can  be maintained without necessarily achieving exact pattern recovery.
For example,  in \eqref{defE1}, under proper  incoherence assumptions, setting $\lambda$   sufficiently  large
  to suppress   $
 m\| ( \tilde \bsbX^{*,\perp}  )^T\Proj^{*,\perp} \bsbeps \|_{\infty}
 =m\cdot\max_{1\le j\le p} \max_{  k\in \overunderline{\mathcal M}_j^* } |(\tilde\bsbX_j[:,k])
^T\Proj^{*,\perp} \bsbeps |$
with high probability leads  to a    rate  of $\sigma m \sqrt{\log  (m  p)}$ (omitting trivial factors),  which  contrasts with the rate of $ \sigma (m + \sqrt{m \log p}) $ in Section \ref{subsec:errrate}. The reader can also compare these rates to  those derived from seminorm analysis in Appendix \ref{subsec:dualnormanal}.

Further insight into the role of $m$, which is intrinsically more delicate, can be gained from the pattern recovery probability bound \eqref{corpattprobbound} under appropriate regularity conditions. As an illustration, consider the  factor $\{1 - 2\Phi(-A')\}^{ df^*}$. Under a beta-min type condition     $   G^*  = 2c \lambda  {\tau_{\mathcal J^*}}/{\mu_0}$ with $c>1$,  the quantity  $A'$   is of the same order as $\lambda  {\tau_{\mathcal J^*}}/(\sigma\sqrt\mu_0).$ Here,    $\mu_0 \le \max_k \lambda_{\max}(\bar \bsbX_k^T \bar \bsbX_k )$ and may therefore be viewed  as an $m$-stable restricted-eigenvalue quantity (recalling $\bar \bsbX_k$ are   scaled designs). Moreover,      $\tau_{\mathcal J^*}= \sqrt{\sum_{j \in \mathcal J^*}  \tau_j ^{*2}} $, with
  $1/\sqrt m \lesssim   {\tau_{j}^*} \lesssim 1$. Therefore, under the choices of   $\lambda$ considered in the paper,    $A'$ increases with $m$.
 On the other hand, with $M^* = \sum_{j\in \mathcal J^*} M_j^*$, the exponent $
df^*=M^*+p+J^*$ also increases with $m$  when   $M_j^*\asymp m$. Hence, this probability factor reflects a trade-off: as the number of clients grows, its base probability may improve, while the exponent may increase at the same time. This suggests that the role of $m$ in pattern recovery is inherently delicate.
\section{Computation}
\label{sec:comp}
Recall the objective function $f(\bsbb) = L(\bsbb) + \lambda \range^{(b)}(\bsbb)$.
Assuming $L$ is $\mathcal L$-strongly smooth such that $\breg_L\le \mathcal L\Breg_2$, a simple proximal gradient algorithm can be used to solve \eqref{sysexpress}:
\begin{align}
\bsbb^{(t+1)} = \prox_{\alpha_t\lambda\range^{(b)}} (\bsbb^{(t)} - \alpha_t \nabla L(\bsbb^{(t)})) \label{proxgrad}
\end{align}
where $\alpha_t \le 1/\mathcal  L$. \eqref{proxgrad}  is straightforward to implement and is much more efficient than operator-splitting methods such as ADMM.

Observing that according to \eqref{betabB},
\begin{align}
\nabla L(\bsbb) = \vect(([\nabla l  (\bsb{b}_1; \bsby_1, \bsbX_1), \ldots, \nabla l  (\bsb{b}_m; \bsby_m, \bsbX_m)])^T), \label{gradFed}
\end{align}
we can deploy a federated learning algorithm   as follows for $t\ge 0$:
\begin{itemize}
\item Client $k$: Retrieve  $\bsb{b}_k ^{(t)}$. Compute  $\bsbxi_k ^{(t+1)} = \bsb{b}_k ^{(t)}-\alpha_t \nabla l  (\bsb{b}_k^{(t)}; \bsby_k, \bsbX_k) $. Transmit  it to the server.
\item Server: Aggregate the updated  $\bsbxi_k ^{(t+1)}$ from all clients into   $\bsbXi^{(t+1)}=[\bsbxi_1 ^{(t+1)}, \ldots, \bsbxi_m ^{(t+1)} ]$. Apply a proximity operation:   $$\bsbb^{(t+1)} = \prox_{\alpha_t\lambda\range^{(b)}}(\vect((\bsbXi^{(t+1)})^T)  ).$$ Update the global parameter matrix $\bsbB^{(t+1)} = [\bsbb_1^{(t+1)}, \ldots \bsbb_p^{(t+1)}]^T = [\bsb{b}_1^{(t+1)}, \allowbreak\ldots, \bsb{b}_m^{(t+1)}]$ (so that $\vect((\bsbB^{(t+1))^T})=\bsbb^{(t+1)} $). 
\end{itemize}
The dominant computation lies in the loss-gradient evaluations, which are fully distributed across clients and can be carried out in parallel. Unlike some methods such as FedProx \citep{LSZ20}, which use a client-side proximal term for algorithmic stabilization under a shared model, our penalty is designed for structural recovery. Thus the regularization step is handled centrally through the proximal operator, which is computationally lightweight (cf. Theorem \ref{thm:prox}) and requires no access to raw client data. 

  This proximal gradient algorithm typically exhibits  a linear convergence rate  assuming $L$ is $\mu$-strongly convex and $\mathcal L$-strongly smooth ($\mu\Breg_2\le \breg_L\le \mathcal L\Breg_2$). In the literature,   its   iteration complexity can be quantified as
\begin{align}
\mathcal O(\text{\customvarkappa}\log \frac{1}{\epsilon}), \label{itcomprate}
\end{align} where $\text{\customvarkappa}  =\mathcal L / \mu$ represents the condition number, and $\epsilon$ denotes the target accuracy level.  Moreover, this fast global convergence is often maintained in high-dimensional problems that demonstrate restricted strong convexity,  as explored in references such as \cite{Agarwal2012}, \cite{Loh2015}, \cite{Shebregman2021}.
 This is particularly relevant when employing a group range regularizer to capture the structural parsimony of  the problems under consideration, exemplified by      Theorem \ref{thm:seesawerr}.

Reducing communication costs is a critical challenge in federated learning, and various strategies have been developed. In the literature, methods such as model compression---through techniques like quantization \citep{reisizadeh2020fedpaq}, pruning \citep{jiang2022model}, and delta encoding  \citep{sattler2021cfd}---help to minimize the data transmitted between clients and the central server. In addition, Federated Averaging leverages multiple local updates before synchronization with the central server \citep{mishchenko2022proxskip}, and client selection through partial participation optimizes resource usage across the network \citep{nishio2019client}. All these methods are well-established and can be seamlessly integrated into range-regularized federated learning.

Accordingly, we  focus on reducing communication cost through lower  \emph{iteration complexity}, which is essential for making the new method practical and deployable. Our strategy is particularly motivated by the theoretical implications of the (unknown) restricted strong convexity parameter in the results of Section \ref{subsec:errrate},   suggesting room for algorithmic gains.
 To this end, we develop a momentum-based acceleration scheme that strategically leverages the \emph{varying} degrees of restricted strong convexity, as characterized by $\{\mu_t\}$. The resulting  all-in-one scheme   handles        both          $\mu_t >0$ and   $\mu_t  = 0$ in a unified manner    and adapts dynamically    as $\mu_t$     varies across  iterations. 
\\

Given two {arbitrary} sequences   $\rho_t$  and
  $\mu_t$  ($t\ge 0$), which are positive and nonnegative, respectively, define a sequence of relaxation parameters $\theta_t$ starting with $\theta_0\in (0,1]$:
\begin{align}
\frac{\theta_t^2}{1-\theta_t} = \frac{\theta_{t-1}({ \rho_{t-1}}\theta_{t-1} + {\mu_{t-1}})}{\rho_t}, \quad \forall t\ge 1.\label{acc2_search2}
\end{align}
Clearly,   $\theta_t\in (0, 1)$, for all $t\ge 1$.
In the typical convex optimization scenario   where   $\rho_t\equiv \mathcal L $, $\mu_t\equiv 0$, \eqref{acc2_search2} simplifies to
$\frac{\theta_t^2}{1-\theta_t} =  \theta_{t-1}^2$, or more explicitly,
\begin{align}
\theta_{t} = \frac{1}{2}(\sqrt{\theta_{t-1}^4 + 4\theta_{t-1}^2} - \theta_{t-1}^2), \quad \forall t\ge 1 \label{thetaupd:nest}
\end{align}
which aligns with the classical Nesterov's  construction \citep{Nesterov1988}. However,  \eqref{acc2_search2} not only expands on \eqref{thetaupd:nest} by incorporating nontrivial smoothness and convexity parameters, but  accommodates {nonconstant}   $\{\rho_t\}$ and $\{\mu_t\}$. This is valuable  when there exist some positive \emph{local} restricted strong convexity parameters $\mu_t$  (though  their values are often theoretically difficult to determine).

We define an iterative algorithm that operates on three sequences of iterates. The
update formulas, starting   from the initial condition  $\bsb{\alpha}^{(0)}=\bsb{\beta}^{(0)}$, utilize the current values of   $\bsb{\alpha}^{(t)},\bsb{\beta}^{(t)} $, along with parameters $\theta_t, \rho_t, \mu_t$,  for   $t \ge 0$:
\begin{align}
&\bsb{\gamma}^{(t)}   =  (1-\theta_t)\bsb{\beta}^{(t)} + \theta_t\bsb{\alpha}^{(t)}, \label{acc2-alg1}
\\
&\bsb{\alpha}^{(t+1)}   =  \mathop{{\arg}{\min}}_\bsbb f(\bsb{\beta}) {-} \breg_{L}(\bsb{\beta},\bsb{\gamma}^{(t)}) {+} {\mu_t \Breg_{2}(\bsb{\beta},\bsb{\gamma}^{(t)})}{+} \theta_t\rho_t\Breg_2(\bsb{\beta},\bsb{\alpha}^{(t)})\notag\\
&\qquad\ \ = \prox_{\frac{\lambda}{\mu_t+\theta_t\rho_t} \range^{(b)}}\,(\frac{\mu_t\bsbg^{(t)}+\theta_t\rho_t\bsba^{(t)}-\nabla L(\bsbg^{(t)})}{\mu_t+\theta_t\rho_t} ),
\label{acc2-alg2}\\
&\bsb{\beta}^{(t+1)}  = (1-\theta_t)\bsb{\beta}^{(t)} + \theta_t\bsb{\alpha}^{(t+1)}. \label{acc2-alg3}
\end{align}
One   novel feature   of the update  is the addition   of \( \mu_t \Breg_{2}(\bsb{\beta}, \bsb{\gamma}^{(t)}) \) in \eqref{acc2-alg2}, which complements the traditional Bregman term
\( \Breg_2(\bsb{\beta}, \bsb{\alpha}^{(t)}) \)      when    \( \mu_t \ne 0\).

Next, we present a convergence theorem. Given any function $\psi(\cdot)$, define  $\mathbf C_\psi(\bsb{\alpha},\bsb{\beta},\theta): = \theta \psi(\bsb{\alpha}) + (1-\theta)\psi(\bsb{\beta}) - \psi(\theta\bsb{\alpha}+(1-\theta)\bsb{\beta})$ and use $\mathbf C_2$ to denote   $\mathbf C_{\|\cdot\|_2^2/2}$.
 Let
\begin{align}
 \bar \psi_t(\cdot) = L(\cdot) - \mu_t\|\cdot\|_2^2/2
\end{align} and introduce
\begin{align}
& \mathcal E_t(\bsb\beta) = \breg_{\bar \psi_t}(\bsb{\beta},\bsb{\gamma}^{(t)}) + \lambda\breg_{\range^{(b)}}(\bsb\beta,\bsb\alpha^{(t+1)}) \label{Etdef}
\end{align}
and
\begin{align}
& R_t   =\theta_t^2\rho_t\Breg_2(\bsb{\alpha}^{(t+1)},\bsb{\alpha}^{(t)}) - \breg_{\bar\psi_t}(\bsb{\beta}^{(t+1)},\bsb{\gamma}^{(t)}) + (1-\theta_t)\breg_{\bar \psi_t}(\bsb{\beta}^{(t)},\bsb{\gamma}^{(t)})  + (\lambda\mathbf C_{ \range^{(b)} }+  \mu_t \mathbf C_2)(\bsb\alpha^{(t+1)},\bsb\beta^{(t)},\theta_t). \label{Rtdef}
\end{align}
  $R_t$ is defined using the iterates, while $\mathcal E_t(\bsb\beta)$ also  depends on $\bsbb$. If the loss $L$
exhibits     (global) convexity  or   strong convexity, satisfying  $\breg_L\ge \mu
\Breg_2$    for some $\mu\ge 0$, then setting $\mu_t=\mu$ ensures $\mathcal E_t(\bsb\beta)\ge 0$ for all $\bsbb$. (Of course, since our focus typically centers on a minimizer or the statistical truth, local convexity is often sufficient for this purpose.)
\begin{theorem} \label{th:acc}
Assume the loss function $L$ is differentiable (possibly nonconvex). For \emph{arbitrary} sequences  $\rho_t>0$ and
$\mu_t\ge 0$, the iterates defined in \eqref{acc2-alg1}, \eqref{acc2-alg2}, \eqref{acc2-alg3} satisfy the following bound   for    any $\bsbb$ and all $T\ge 0$:
\begin{align}
& f(\bsb{\beta}^{(T+1)})-f(\bsb{\beta} )   +\theta_T^2(\rho_T +\frac{\mu_T}{\theta_T}) \Breg_2(\bsb{\beta},\bsb{\alpha}^{(T+1)}) + {\mbox{\large$\Sigma$}}_{t=0}^T \big(\Pi_{s=t+1}^T (1-\theta_s)\big) (R_t + \theta_t \mathcal E_t(\bsb{\beta}))\notag\\
\le\,& \Big ({\mbox{\small$\prod$}}_{t=1}^T(1-\theta_t)\Big)\big[(1-\theta_0)(f(\bsb{\beta}^{(0)})-f(\bsb{\beta}))+\theta_0^2\rho_0 \Breg_2(\bsb{\beta},\bsb{\beta}^{(0)})\big],
\label{accerrbnd}
\end{align}
where by convention  $\prod_{s = l}^u a_s=1$ as $l>u$.
\end{theorem}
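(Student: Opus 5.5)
The plan is to prove a one-step inequality and then telescope it. The one-step inequality should read
\begin{align*}
&f(\bsbb^{(t+1)})-f(\bsbb)+\theta_t^2(\rho_t+\mu_t/\theta_t)\Breg_2(\bsbb,\bsba^{(t+1)})+R_t+\theta_t\mathcal E_t(\bsbb)\\
&\le (1-\theta_t)\big(f(\bsbb^{(t)})-f(\bsbb)\big)+\theta_t^2\rho_t\Breg_2(\bsbb,\bsba^{(t)})
\end{align*}
for every $t\ge 0$ and every $\bsbb$. The recursion \eqref{acc2_search2} is designed so that $\theta_t^2\rho_t=(1-\theta_t)\theta_{t-1}(\rho_{t-1}\theta_{t-1}+\mu_{t-1}) = (1-\theta_t)\theta_{t-1}^2(\rho_{t-1}+\mu_{t-1}/\theta_{t-1})$. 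Hence the distance term on the right at step $t$ equals $(1-\theta_t)$ times the distance term on the left at step $t-1$. Multiplying the step-$t$ inequality by $\prod_{s=t+1}^T(1-\theta_s)$ and summing from $t=0$ to $T$ then telescopes to \eqref{accerrbnd}, with the initial term using $\bsba^{(0)}=\bsbb^{(0)}$.

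To prove the one-step inequality, I would start from the optimality of $\bsba^{(t+1)}$ in \eqref{acc2-alg2}, written in generalized Bregman form. The objective there equals $L(\bsbg^{(t)})+\langle\nabla L(\bsbg^{(t)}),\bsbb-\bsbg^{(t)}\rangle+\lambda\range^{(b)}(\bsbb)+\mu_t\Breg_2(\bsbb,\bsbg^{(t)})+\theta_t\rho_t\Breg_2(\bsbb,\bsba^{(t)})$, which is a convex function $\phi_t$. For a convex minimizer, $\phi_t(\bsbb)-\phi_t(\bsba^{(t+1)})\ge\breg_{\phi_t}(\bsbb,\bsba^{(t+1)})$, because the directional derivative at the minimizer is nonnegative. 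This Bregman splits exactly into $\lambda\breg_{\range^{(b)}}(\bsbb,\bsba^{(t+1)})+(\mu_t+\theta_t\rho_t)\Breg_2(\bsbb,\bsba^{(t+1)})$. Using $\phi_t=f-\breg_L(\cdot,\bsbg^{(t)})+\mu_t\Breg_2(\cdot,\bsbg^{(t)})+\theta_t\rho_t\Breg_2(\cdot,\bsba^{(t)})$, and noting $\breg_L-\mu_t\Breg_2=\breg_{\bar\psi_t}$, I obtain a three-point inequality:
\begin{align*}
&f(\bsba^{(t+1)})-\breg_{\bar\psi_t}(\bsba^{(t+1)},\bsbg^{(t)})+\theta_t\rho_t\Breg_2(\bsba^{(t+1)},\bsba^{(t)})+(\theta_t\rho_t+\mu_t)\Breg_2(\bsbb,\bsba^{(t+1)})+\lambda\breg_{\range^{(b)}}(\bsbb,\bsba^{(t+1)})\\
&\le f(\bsbb)-\breg_{\bar\psi_t}(\bsbb,\bsbg^{(t)})+\theta_t\rho_t\Breg_2(\bsbb,\bsba^{(t)}).
\end{align*}
Nonconvexity of $L$ is harmless here, since only the linearized objective is minimized.

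Next, I apply this inequality twice: once at the test point $\bsbb$, weighted by $\theta_t$, and once at $\bsbb^{(t)}$, weighted by $1-\theta_t$. More efficiently, I would write $f(\bsbb^{(t+1)})$ through the linearization at $\bsbg^{(t)}$ and use $\bsbb^{(t+1)}-\bsbg^{(t)}=\theta_t(\bsba^{(t+1)}-\bsba^{(t)})$, which follows from \eqref{acc2-alg1} and \eqref{acc2-alg3}.

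The convexity gaps of $\lambda\range^{(b)}$ and $\mu_t\|\cdot\|^2/2$ along $\bsbb^{(t+1)}=\theta_t\bsba^{(t+1)}+(1-\theta_t)\bsbb^{(t)}$ produce exactly the $\mathbf C$ terms in $R_t$. The smooth part produces $-\breg_{\bar\psi_t}(\bsbb^{(t+1)},\bsbg^{(t)})+(1-\theta_t)\breg_{\bar\psi_t}(\bsbb^{(t)},\bsbg^{(t)})$, which are the $\bar\psi_t$ terms in $R_t$. The quadratic term $\theta_t\rho_t\Breg_2(\bsba^{(t+1)},\bsba^{(t)})$, multiplied by $\theta_t$, gives $\theta_t^2\rho_t\Breg_2(\bsba^{(t+1)},\bsba^{(t)})$. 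The term $\theta_t\breg_{\bar\psi_t}(\bsbb,\bsbg^{(t)})$, together with $\theta_t\lambda\breg_{\range^{(b)}}(\bsbb,\bsba^{(t+1)})$, assembles into $\theta_t\mathcal E_t(\bsbb)$.

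The main obstacle is the bookkeeping in this step. The nonsmooth range term must be handled through convex combination identities, namely $\range^{(b)}(\bsbb^{(t+1)})=\theta_t\range^{(b)}(\bsba^{(t+1)})+(1-\theta_t)\range^{(b)}(\bsbb^{(t)})-\mathbf C_{\range^{(b)}}$, rather than through gradients. In the same way, the $\mu_t\Breg_2(\cdot,\bsbg^{(t)})$ pieces must be converted via $\mathbf C_2$ and the identity $\theta_t\Breg_2(\bsba^{(t+1)},\bsbg^{(t)})+(1-\theta_t)\Breg_2(\bsbb^{(t)},\bsbg^{(t)})=\Breg_2(\bsbb^{(t+1)},\bsbg^{(t)})+\mathbf C_2$. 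I would carefully verify that every leftover term lands in $R_t$ with the stated signs and that no inequality beyond the three-point one is used. This exactness is what makes the bound hold for arbitrary $\rho_t,\mu_t$.
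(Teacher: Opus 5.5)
Your proposal is correct and follows essentially the same route as the paper. The paper likewise obtains the three-point inequality from the optimality of $\bsba^{(t+1)}$ (via a generalized Bregman three-point lemma applied to $h_t=f-\breg_{\bar\psi_t}(\cdot,\bsbg^{(t)})$ plus the proximal term), multiplies it by $\theta_t$, and adds $\mathbf C_{h_t}(\bsba^{(t+1)},\bsbb^{(t)},\theta_t)=(\lambda\mathbf C_{\range^{(b)}}+\mu_t\mathbf C_2)(\bsba^{(t+1)},\bsbb^{(t)},\theta_t)$ to get exactly your one-step inequality, which it then telescopes using \eqref{acc2_search2} and $1-\theta_t>0$.

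Two small points on your write-up:
\begin{itemize}
\item Only the $\theta_t$-weighted application at the test point $\bsbb$ is needed. Applying the inequality at $\bsbb^{(t)}$ would bound $f(\bsba^{(t+1)})$ rather than $f(\bsbb^{(t+1)})$, and would not reproduce the stated $R_t$.
\item The relation $\bsbb^{(t+1)}-\bsbg^{(t)}=\theta_t(\bsba^{(t+1)}-\bsba^{(t)})$ plays no role in proving the theorem. It is used only afterwards, to show $R_t\ge 0$.
\end{itemize}
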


A key advantage of Theorem \ref{th:acc} is its flexibility: it places no essential restriction on   $\{\rho_t\}$ or $\{ \mu_t \}$,  and in particular allows $\mu_t$ to vanish at some iterations. Thus, the result adapts to varying local curvature, covering both locally non-strongly convex and locally strongly convex regimes within a single framework.  This is especially appealing in practice, since the local curvature  may vary over iterations and is typically unknown in advance. The result is also established at the level of a general convex regularizer, so its scope is not tied to a specific range penalty form. Moreover, it requires no regularity conditions.

When $L$ satisfies $\mu\Breg_2\le \breg_L\le \mathcal L\Breg_2$ with $0<\mu\le \mathcal L$,  taking   $\mu_{t} \le \mu$, and $\rho_t \ge \mathcal L -\mu_t$ ensures $$\mathcal E_t(\bsb\beta)=  \breg_{\bar \psi_t}(\bsb{\beta},\bsb{\gamma}^{(t)}) + \lambda\breg_{\range^{(b)}}(\bsb\beta,\bsb\alpha^{(t+1)}) \ge  (\breg_{L}-\mu_t\Breg_2)(\bsb{\beta},\bsb{\gamma}^{(t)}) \ge 0$$ and
\begin{align*}
R_t &\ge\theta_t^2\rho_t\Breg_2(\bsb{\alpha}^{(t+1)},\bsb{\alpha}^{(t)}) - \breg_{\bar\psi_t}(\bsb{\beta}^{(t+1)},\bsb{\gamma}^{(t)}) + (1-\theta_t)\breg_{\bar \psi_t}(\bsb{\beta}^{(t)},\bsb{\gamma}^{(t)}) \\
&\ge\theta_t^2\rho_t\Breg_2(\bsb{\alpha}^{(t+1)},\bsb{\alpha}^{(t)}) -(\mathcal L-\mu_t) \Breg_{2}(\bsb{\beta}^{(t+1)},\bsb{\gamma}^{(t)})   \\
 & \ge   \theta_t^2 (\rho_t+\mu_t-\mathcal L)  \Breg_2(\bsb{\alpha}^{(t+1)},\bsb{\alpha}^{(t)})\ge 0,
\end{align*} where the third inequality uses \eqref{acc2-alg1} and  \eqref{acc2-alg3}.

On the other hand, under the conditions
$$
\frac{\mu_{t-1}}{\rho_t}\ge \frac{1}{r}, \quad  \frac{\rho_{t-1}}{\rho_t} \ge 1 - \frac{c}{\sqrt r}$$ for some $r\ge 1$ and  constant $c\in [0, 1]$,   induction reveals that the relaxation parameters determined by   \eqref{acc2_search2} satisfy (details omitted)
\begin{align*}
\frac{1}{\theta_t } &\le     \frac{ 1}{2  }(\sqrt {(c \sqrt r+1)^2+4(1-\frac{c}{ \sqrt r})r} + c \sqrt r+1) = \frac{1}{2}(\sqrt{(c \sqrt r-1)^2 + 4r} +c \sqrt r + 1),
\end{align*}
 provided that $\theta_{ 0}$ is chosen to  conform to the same bound (such as $\theta_0=1$). Hence the product of contraction factors ${\mbox{\small$\prod$}}_{t=1}^T(1-\theta_t)$ on the right-hand side of \eqref{accerrbnd} results in     an iteration complexity of $\mathcal O(\sqrt r \log \frac{ 1}{\epsilon})$.

As an example, employing   constant  $\mu_t\equiv \mu, \rho_t \equiv \mathcal L - \mu$ leads to   $r =   (\mathcal L - \mu)/\mu   =  \text{\customvarkappa}-1$ and   $\theta_t \equiv   {2}/({\sqrt{4\text{\customvarkappa}-3} + 1})  \, (\forall t\ge 0)$  in accordance with \eqref{acc2_search2}, as well as  $\mathcal E_t(\bsbb)\ge 0$ and $R_t\ge 0$.  Setting  $\bsbb$  as a minimizer in \eqref{accerrbnd},  both  $f(\bsb{\beta}^{(T+1)})-f(\bsb{\beta} ) $ and $ \Breg_2(\bsb{\beta} ,\bsb{\alpha}^{(T+1)})$  converge to zero geometrically fast with a convergent  factor of  $ (\frac{\sqrt{4 \text{\customvarkappa} -3} - 1}{\sqrt{4 \text{\customvarkappa} -3} + 1})^T$, or an iteration complexity of
\begin{align}
\mathcal O(\sqrt \text{\customvarkappa} \log \frac{ 1}{\epsilon}). \label{accitcomprate}
\end{align}
which aligns with the literature (cf. \cite{Tseng2010} for example).
\eqref{accitcomprate} achieves  a significantly faster convergence compared with  \eqref{itcomprate} for standard  proximal gradient descent.

The true effectiveness of the algorithm and theorem becomes apparent when handling       iteration-varying  $\mu_t$, a common scenario in our applications. For simplicity, suppose that $\mathcal{L}$ is known, as is the case in our applications. This assumption allows us to set $\rho_t$ as a constant, such as $\mathcal{L} - \mu$ or $\mathcal{L}$ (which ultimately leads to reduced communication). In this setup,   the dynamic updates of   $\theta_t$ and $\bsbg^{(t)}$ are straightforward,   whereas   $\bsba^{(t+1)}$ and $\bsbb^{(t+1)}$ rely  on $\mu_t$, the value of which can be  determined through  a line search   according to Theorem \ref{th:acc}.       For example,   one can adjust     $\mu_t$ based on $\mu_{t-1}$ to maximize   $R_t+ \theta_t \mathcal E_t(\bsbb)$, with $\bsbb$     the iterate yielding the lowest   $f$-value,     as indicated in \eqref{accerrbnd}. A more effective search criterion is described in Appendix \ref{subsec:simus}. 

Even when $f$ is known to be convex or strongly convex with a given   $\mu\ge 0$  (which can serve as a reference for $\mu_t$),  searching for an optimal   $\mu_t$  often proves valuable    in reducing iteration complexity.   Indeed, unless $\lambda$ is   very small,   the structural parsimony of our problem enables the use of a local, iteration-specific     $\mu_t$  that exceeds   $\mu$ (although  deriving its value accurately can be challenging).  Of course, it is advisable to limit the number of such searches per iteration; additional details are provided in Appendix \ref{subsec:simus}. 

Finally, we discuss   the implementation of the accelerated algorithm within a federated learning framework.
From \eqref{acc2_search2} to \eqref{acc2-alg3}, the only operation  involving sensitive client data is   \eqref{acc2-alg2}, where      $\nabla L(\bsbg^{(t)})$ can be aggregated as in \eqref{gradFed} to preserve privacy.
 Moreover, due to      $R_t\ge R_t'$ where
\begin{align*}
R_t'  :=\, &\theta_t^2(\rho_t+\mu_t- \mathcal L)\Breg_2(\bsb{\alpha}^{(t+1)},\bsb{\alpha}^{(t)})  + (1-\theta_t)\breg_{\bar \psi_t}(\bsb{\beta}^{(t)},\bsb{\gamma}^{(t)})
 + (\lambda\mathbf C_{ \range^{(b)} }+  \mu_t \mathbf C_2)(\bsb\alpha^{(t+1)},\bsb\beta^{(t)},\theta_t),
\end{align*}
 the server can conduct  a $\mu_t$-line search  requiring only three   function evaluations: $L(\bsbg^{(t)})$, $L(\bsbb^{(t)})$, and  $L(\bsbb)$. (Notably,    neither $R_t'$ nor $\mathcal E_t$ involves the trial vectors $\bsba^{(t+1)}, \bsbb^{(t+1)}$  through their loss function values or gradients.)     $L(\bsbb)$  can be determined by the server using historical values from  $ L(\bsbg^{(s)}), 1\le s\le t$. Consequently,  after the server updates $\theta_t$ and $\bsbg^{(t)}$,  each client is required to transmit  a gradient vector and a   function value  pertaining to its own data, which are then   aggregated to form $\nabla L(\bsbg^{(t)})$,  $L(\bsbg^{(t)})$. 

In summary, compared with standard proximal gradient descent, the accelerated algorithm adds only minimal exchange overhead: each client needs to send just one additional scalar beyond the $p$-dimensional gradient vector.  Once these quantities are collected, neither the generation of trial iterates  $\bsba^{(t+1)}, \bsbb^{(t+1)}$ nor the evaluation of the search criterion  requires further communications between the server and clients. The accelerated algorithm offers a significant advantage in terms of iteration count, thanks to its substantially lower rate parameter compared to standard proximal gradient descent. This  is especially beneficial in high-dimensional problems where restricted strong convexity is present. For implementation details and experimental results,
refer to Appendix \ref{subsec:simus}. 

\section{Summary}
\label{sec:summ}
As modern practice reshapes statistical methods around deployment constraints, the resulting methodological shift deserves its own statistical analysis. Range regularization adaptively narrows the range of weights or coefficients \emph{without} shrinking intermediate values, thereby offering benefits for communication, quantization, energy efficiency, regularization, and privacy in machine learning tasks. In federated learning, group range penalization not only facilitates the identification of shared parameters and the automatic polar clustering of personal parameters, but also admits a fast computable proximal mapping, supports efficient quantization and coding, and improves interpretability and statistical accuracy.

However, the nonasymptotic analysis of the range seminorm presents significant challenges due to its nondecomposable nature. For example, we showed that for a group-range regularizer,      $   \sum_{j\in \mathcal J(\bsbb)} (\tau^2(\bsbb_j))=\sum_{j\in \mathcal J(\bsbb)} (\frac{1}{\overline M(\bsbb_j)}+\frac{1}{\underline M(\bsbb_j)})$ serves as an analogue to the `support size'. Although $\range$ is subadditive, $\tau(\cdot)$ is not, in contrast to measures such as cardinality or rank used in variable selection and low-rank matrix estimation.
 Indeed, traditional approaches based on H\"older  or dual (semi)norms, when combined with union bounds or chaining, overlook the intricate structures of both the estimator and the statistical truth, and thus lead to the suboptimal parameter choice of $\sigma m \sqrt{\log m + \log p}$. Furthermore, establishing a less stringent cone restriction goes beyond simple index splitting or space decomposition, as used for the group lasso or nuclear norm, and instead requires nonlinear arguments that exploit the weak differentiability of the range seminorm. The same proof device also covers the simpler $\ell_{\infty}$-norm penalization, where the corresponding $\tau$ measure is  $\tau(\bsbb) =({M_{\infty}(\bsbb)})^{-1/2}$ with $M_{\infty}(\bsbb) =|\{j: |\beta_j| = \|\bsbb\|_\infty\}| $.

This paper introduces a novel  seesaw  proof device that differs from traditional dual approaches. First, an orthogonal decomposition of the stochastic term into a star component, associated with the true model's degrees of freedom, and a hat component, associated with the estimator in the orthogonal complement subspace, effectively addresses the nonsubadditivity of $\tau(\cdot)$  and allows for weaker regularity conditions. Next, by integrating statistical and optimization analysis, we bound both the hat component and the model support size in a seesaw manner. Rather than relying solely on the basic inequality common in analyses of regularized M-estimators, this integration leads to sharp error rates and principled regularization choices, while providing new insight into the high-dimensional analysis of nondecomposable regularizers.

The structural parsimony induced by the group range regularizer suggests that exploiting restricted strong convexity can substantially reduce the number of iterations in federated learning, even in high-dimensional applications. Yet identifying the relevant convexity parameter(s) precisely is challenging, especially since only local convexity is needed along the iterates. The recursive bound in Theorem \ref{th:acc}, valid for any   $\mu_t$,  whether positive or zero, provides a basis for a versatile all-in-one acceleration scheme that dynamically adjusts convexity parameters to improve convergence speed.

While the present paper focuses on linear systematic components, extending the range-regularization framework to structured nonlinear function classes is a natural direction for future work. Another theoretical direction is to study the behavior of the framework under heavy-tailed errors and heterogeneous noise scales across clients. Such developments would further expand the scope and practical value of range regularization in distributed learning.

\numberwithin{equation}{section}
\numberwithin{theorem}{section}
\numberwithin{definition}{section}
\numberwithin{table}{section}
\numberwithin{figure}{section}

\appendices
\renewcommand\thesubsection{\thesection.\arabic{subsection}}
\renewcommand\thesubsectiondis{\thesection.\Roman{subsection}}
\section{More Technical Details}
\label{sec:tech}
\subsection{Theorem \ref{th:minimaxclut}}
\label{subsec:minimax}
Recall the notations $\bsby, \tilde \bsbX, \bsbb$ in the federated learning context  as described  in Section \ref{sec:flsetting}.
In this part, we use $q(\bsba)$ to denote the number of clusters of $\bsba$. (This quantity differs slightly from the
$q$ in Section \ref{sec:nonasymstat}, which does not account for clusters formed by intermediate values.) We establish a minimax lower bound for recovering the clustering structure of each $\bsbb_j$. Let $I(\cdot)$ be an arbitrary nondecreasing function with $I(0)=0, I\not\equiv 0$; some particular examples are $I(t) = t$ and $I(t) = 1_{t\ge c}$.\begin{theorem} \label{th:minimaxclut}
        Assume $\bsby  =  \tilde \bsbX \bsbb^* + \bsbeps$, where   $\bsbeps=[\epsilon_i]$ with $\epsilon_i \overset{i.i.d.}\sim \mathcal N(0,\sigma^2)$ and $p, m\ge 2$.
Given $2\le q\le m, 1\le J \le p$, define a signal class by    $$\bsbb^*\in \mathcal S(q, J)= \{\bsbb=[\bsbb_1^T, \ldots, \bsbb_p^T]^T \in \mathbb R^{p  m}:   q(\bsbb_j)\leq q, \  |\{j: q(\bsbb_j)> 1\}| \le  J \}.$$  Let  \begin{align}
        P (q, J) = qJ + p  +   \frac{Jm\log q}{  q^2 } +  J\log(\frac{ep}{J}) . \label{specailminimaxrate-complete}
        \end{align}
 Assume that  $  \| \tilde \bsbX  \bsbb  \|_2^2 \le
       \overline \kappa \|    \bsbb\|_2^2$ for all $\bsbb  \in   \mathcal S(q, J)$.  Then  there exist positive constants $c, c'$, depending on $I(\cdot)$ only,  such that
        \begin{align}
        \inf_{\hat \bsbb} \sup_{\bsbb^* \in \mathcal S(q, J)} \EE\big[I\big(\|   \hat\bsbb-  \bsbb^* \|_2^2{\big/}\{c \sigma^2     P (q, J)/ \overline\kappa\}\big)\big]  \geq c' >0, \label{estgeneralminimaxrate-complete}
        \end{align}
\end{theorem}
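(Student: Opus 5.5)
We will use the standard Fano/Tsybakov reduction for a minimax lower bound. We construct separate finite packings of $\mathcal S(q,J)$, one for each of the four terms of $P(q,J)$. Since $\inf\sup$ over the class dominates $\inf\sup$ over any subset, it is enough to show that each term, taken alone, gives a lower bound of order $\sigma^2(\text{term})/\overline\kappa$. Because $P(q,J)\asymp\max$ of its four terms, this yields the full bound. The general tool is the following version of Fano's lemma (Tsybakov, Thm.~2.7).

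Suppose $\bsbb^{(1)},\ldots,\bsbb^{(N)}\in\mathcal S(q,J)$ satisfy $\|\bsbb^{(i)}-\bsbb^{(l)}\|_2^2\ge \delta^2$ for $i\ne l$. Suppose also that the Kullback--Leibler divergences satisfy
$$\mathrm{KL}=\|\tilde\bsbX(\bsbb^{(i)}-\bsbb^{(l)})\|_2^2/(2\sigma^2)\le c_1\log N.$$
Then $\inf_{\hat\bsbb}\max_i \EP(\|\hat\bsbb-\bsbb^{(i)}\|_2^2\ge\delta^2/4)\ge c''>0$. For the nondecreasing loss $I$, we then get $\EE[I(\cdot)]\ge I(\text{const})\,c''$, with the constant depending only on $I$ because $I\not\equiv0$.

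The KL bound requires $\overline\kappa\|\bsbb^{(i)}-\bsbb^{(l)}\|_2^2\lesssim\sigma^2\log N$. Differences of two members of $\mathcal S(q,J)$ need not lie in $\mathcal S(q,J)$, so we cannot apply the eigenvalue assumption to them directly. We handle this by taking all packing points with the same fixed cluster-assignment skeleton whenever possible, or else by using $\|\tilde\bsbX(\bsba-\bsbb)\|_2\le\|\tilde\bsbX\bsba\|_2+\|\tilde\bsbX\bsbb\|_2$ and choosing each packing point with norm $\lesssim\delta$. The packings are as follows.

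\emph{Term $p$.} Take all rows uniform, $\bsbb_j=\gamma\bsb1_m\,\omega_j$ with $\omega\in\{0,1\}^p$ chosen from a Varshamov--Gilbert subset of size $e^{p/8}$. Choosing $\gamma^2 m\asymp\sigma^2/\overline\kappa$ gives separation $\asymp p\sigma^2/\overline\kappa$.

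\emph{Term $qJ$.} Fix the first $J$ rows and a fixed partition of $[m]$ into $q$ nearly equal blocks. Put block values $\gamma\omega_{j,g}$ with $\omega\in\{0,1\}^{qJ}$ from a Varshamov--Gilbert subset. Each block has size $\asymp m/q$, so choosing $\gamma^2 m/q\asymp\sigma^2/\overline\kappa$ gives separation $\asymp qJ\sigma^2/\overline\kappa$.

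\emph{Term $J\log(ep/J)$.} Let the support vary among $J$-subsets of $[p]$ with pairwise overlap at most $J/2$; there are $\exp(cJ\log(ep/J))$ of them. Take $\bsbb_j=\gamma(\bsb1_{m/2},-\bsb1_{m/2})$ on the support. Choosing $\gamma^2 m\asymp\sigma^2\log(ep/J)/\overline\kappa$ gives the term.

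\emph{Term $Jm\log q/q^2$.} This term captures the cost of not knowing the cluster memberships, and it is the main obstacle. Take row values in a fixed $q$-level set $\gamma\{1,\ldots,q\}$. Each row is specified by a labeling $z_j\in[q]^m$ with balanced cluster sizes. We need a packing of labelings in Hamming distance: there exist $\exp(c\,m\log q)$ labelings per row with pairwise Hamming distance $\ge m/2$ (Gilbert--Varshamov over the alphabet $[q]$). Restricting to balanced labelings costs only a constant factor in the exponent. Taking products over $J$ rows and thinning again gives $\log N\asymp Jm\log q$. Two distinct labelings differ in at least $m/2$ coordinates, each by at least $\gamma$, so the separation is $\ge Jm\gamma^2/2$. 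However, the squared norm of each point is of order $Jm q^2\gamma^2$, since values reach $q\gamma$. The KL condition therefore forces $\overline\kappa Jmq^2\gamma^2\lesssim\sigma^2 Jm\log q$. This yields a separation $\asymp \sigma^2 Jm\log q/(q^2\overline\kappa)$, which is exactly the stated term.

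To remove the dependence on the absolute norm, we would bound the KL distance by the squared norm of the difference; the difference has entries up to $q\gamma$, so the same $q^2$ appears. This explains the $1/q^2$ factor. The delicate points are controlling the KL distance through $\overline\kappa$ on differences, which we do via the norm bound and the triangle inequality, and checking that thinning and balancing preserve the $\exp(cJm\log q)$ count.
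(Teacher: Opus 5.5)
Your proposal is correct and follows essentially the same route as the paper. Both use one packing per term of $P(q,J)$ together with Tsybakov's Theorem 2.7, and in both the $Jm\log q/q^2$ term comes from a $q$-ary Gilbert--Varshamov code on the first $J$ rows whose levels reach $q\gamma$. The paper avoids your worry about differences leaving $\mathcal S(q,J)$ by including $\bsb0$ in each packing and bounding only $\mathrm{KL}(P_{\tilde\bsbX\bsbb}\|P_{\bsb0})$; this uses the eigenvalue assumption only on members of the class.

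One constant needs adjusting. A Hamming separation of $m/2$ with exponentially many codewords is impossible when $q=2$ (Plotkin bound), so take separation $c\,m$ per row with a smaller constant $c$. Alternatively, as the paper does, apply the $q$-ary lemma directly to the whole $J\times m$ block to get separation $c\,Jm$. The balancing restriction is unnecessary.
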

We can also derive a  minimax rate for the prediction error $\|\tilde \bsbX\hat \bsbb -\tilde \bsbX \bsbb^*\|_2^2$, and  the Gaussian assumption can be relaxed to cover the exponential family  (cf.  Theorem A.1 in \cite{SheetalCRL22}).
We omit the details.

The rate of $\overline \kappa$ typically is of order $\mathcal O(n)$. At one extreme, a globally shared federated model corresponds to   $J = 0$. At the other extreme, in a fully personalized setting with       $J= p$ and a constant
   $q$,   an idealized clustered personalized federated scenario,     $P(q,J)$    is of order $\mathcal O(mp) $. Thus, even when each personalized row has only a small number of clusters, the overall complexity remains linear in both $m$ and $p$. In other words, somewhat surprisingly, within-row clustering across clients does not automatically translate into an order-level improvement in statistical accuracy.

In a partially personalized regime, a more parsimonious assumption is that only $J$ rows are personalized, with $q$ kept small ($q\asymp 1$). In that case,  $P(q,J)$ is at least
$$
   p  +  Jm +  J\log(\frac{ep}{J})
$$
up to a multiplicative constant, which matches the rate discussed in Remark \ref{rem:errrate}. Notably, the second term remains of order $Jm$, rather than $q J$ or a constant multiple of $J$ for constant  $q$, reflecting the statistical cost of identifying the  clustering structure.   To the best of our knowledge, no existing theory establishes a matching upper bound for pairwise-difference penalization. Hence, methods based on sparsifying pairwise differences do not currently enjoy a minimax-optimal statistical guarantee.

In addition,  \eqref{specailminimaxrate-complete} suggests that, if one could optimize over the true clustering structure, then balancing the first and third terms would lead to the choice  $q\asymp m^{\frac{1}{3}}$, yielding the rate  $J m^{\frac{1}{3}} \log m + p + J\log(ep/J)$. However, such a choice is not operational, since the true clustering structure is unknown (often assumed to be small) and cannot be controlled in practice.

\begin{proof}
        We first introduce a Gilbert-Varshamov bound for
$q$-ary codes, which can be derived from the
$q$-ary entropy function via a greedy algorithm \cite{SheetalCRL22}.

        \begin{lemma}[Lemma A.2, \cite{SheetalCRL22}]
                Let $\Omega = \{\bsba= [a_1, \ldots, a_m]: a_j \in \mathcal A  \}$, where   $|\mathcal A| = q$, $m\ge q > 1 $. Then there exists a subset $ \{\bsba^0, \ldots, \bsba^M\} \subset \Omega$ such that $\bsba^{0}\in \mathcal A^m$ is arbitrarily chosen,      and
                \begin{align}
                &\log M \ge  c_1 m\log q , \\&  \rho (\bsba^{j}, \bsba^{k}) \ge c_2 m , \ \forall 0\le j < k  \le M,
                \end{align}
                where $\rho(\bsba, \bsba') = \| \bsba - \bsba'\|_0$, and $c_1, c_2$ are universal positive constants.\label{qarysepa}
        \end{lemma}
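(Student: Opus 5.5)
The plan is the standard greedy (Gilbert--Varshamov) packing argument in Hamming space, followed by a volume bound on Hamming balls. Fix a target distance $d := \lceil c_2 m\rceil$, with a small universal constant $c_2\in(0,1/4]$ to be chosen. Start from the prescribed $\bsba^0\in\mathcal A^m$. Then repeatedly add any point of $\Omega$ whose Hamming distance $\rho(\cdot,\cdot)$ to every previously chosen point is at least $d$, and stop when no such point exists.

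At termination, the Hamming balls of radius $d-1$ around the chosen points $\bsba^0,\ldots,\bsba^M$ cover $\Omega$, so
\[
(M+1)\, V_q(m,d-1)\ \ge\ q^m, \qquad V_q(m,r) := \sum_{i=0}^{r}\binom{m}{i}(q-1)^i .
\]
Pairwise separation $\rho(\bsba^j,\bsba^k)\ge d\ge c_2 m$ holds by construction. What remains is the cardinality bound.

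To bound the ball volume I use $\sum_{i\le r}\binom{m}{i}\le (em/r)^r$ and $(q-1)^i\le q^r$. With $r=d-1\le c_2 m$ and $t\mapsto t\log(em/t)$ increasing on $(0,m]$, this gives
\[
\log V_q(m,d-1)\le c_2 m\log(e/c_2) + c_2 m\log q .
\]
Since $q\ge 2$, we have $\log q\ge \log 2$. Choose $c_2$ so small that $c_2\log(e/c_2)\le \tfrac14\log 2$ and $c_2\le \tfrac14$. Then $\log V_q(m,d-1)\le \tfrac12 m\log q$, and hence
\[
\log(M+1)\ \ge\ \tfrac12\, m\log q .
\]

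The step needing care is converting this into $\log M\ge c_1 m\log q$, since $M$ could a priori be $1$. I split into two cases.

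\emph{Case $m\log q\ge 4\log 2$.} Here $\log M\ge \log(M+1)-\log 2\ge \tfrac12 m\log q-\tfrac14 m\log q=\tfrac14 m\log q$.

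\emph{Case $m\log q<4\log 2$.} Together with $m\ge q\ge 2$, this leaves only the finitely many pairs $(m,q)\in\{(2,2),(3,2)\}$. In these, $c_2 m<1$, so $d=1$ and the greedy procedure takes all of $\Omega$. This gives $M+1=q^m\ge 4$ and $\log M\ge \log 3$, which dominates $c_1 m\log q$ for a suitable universal $c_1$.

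Taking $c_1$ as the minimum of $\tfrac14$ and the constant needed in the second case finishes the argument. The only delicate point is choosing $c_2$ small enough that the ball volume is at most $q^{m/2}$ uniformly in $q\ge 2$; this is exactly where $\log q\ge\log 2$ is used.
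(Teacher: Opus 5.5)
Your argument is correct and is essentially the route the paper points to: the paper does not reprove this lemma but cites it as a Gilbert--Varshamov bound obtained by a greedy covering argument, with the Hamming-ball volume controlled through the $q$-ary entropy function. The one difference is that you bound the ball volume by the cruder estimate $V_q(m,r)\le (em/r)^r q^r$. That suffices because only universal constants are needed, with $\log q\ge \log 2$ absorbing the $c_2\log(e/c_2)$ term, and your separate treatment of $(m,q)\in\{(2,2),(3,2)\}$ correctly handles the passage from $\log(M+1)$ to $\log M$.
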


        \textit{Case (i):} $     (Jm\log q)/ q^2 \gtrsim P(q, J) $.
        Define a set $\mathcal C$   $$\mathcal   C=\{ 0, 1, \ldots, q-1\}.$$  Construct   \begin{align*}
        {\mathcal B}^1(q, J)   =\{  \bsbb &  =[\bsbb_{1}, \ldots, \bsbb_p]^T: \bsbb_j  / ( \gamma R) \in \mathcal C^m, 1\le j \le J, \  \bsbb_j= \bsb0, J+1\le j \le p\},
        \end{align*}
        where  $\gamma>0$ is a small constant to be chosen later, and $$R=   \sigma ( \log q )^{1/2}/( q^2 \overline{ \kappa })^{1/2}.      $$
        Clearly,  ${\mathcal B}^{1}(q,J) \subset \mathcal {S}(q,J) $.

Define the Hamming distance function by  $$\rho(\bsbb, \bsbb') = \sum_{j=1}^p \sum_{k=1}^m 1_{\bsbb_j[k]\ne \bsbb'_j[k]}.$$
        Applying  Lemma \ref{qarysepa} to $[\bsbb_j]_{1\le j \le J}$ (which satisfies   $Jm\ge q \ge 2$),  we can find  a subset ${\mathcal B}^{10}(q, J)\subset {\mathcal B}^{1}(q,J)$ such that $\bsb0\in {\mathcal B}^{10}(q, J)$ and
\begin{align}
                &\log( | {\mathcal B}^{10}(q, J)| -1) \ge c_1 J m \log q,
               \label{case1card}  \\
                &\rho(\bsbb, \bsbb') \geq c_2 Jm, \forall \bsbb, \bsbb' \in \mathcal B^{10},  \bsbb\neq \bsbb'                \label{case1sep}
\end{align}
        for some universal constants $c_1, c_2>0$.
Hence
\begin{align}
\| \bsbb - \bsbb'\|_2^2 \ge   \rho(\bsbb, \bsbb') \cdot 1 \cdot \gamma^2 R^2\geq   c_2     \gamma^2 R^2 Jm, \ \forall \bsbb, \bsbb' \in \mathcal B^{10},   \bsbb\neq \bsbb'. \label{separationLBoundcase1}
\end{align}


Let $P_{\tilde\bsbX\bsbb}$ denote the distribution of $\mathcal N(\tilde\bsbX\bsbb, \sigma^2 \bsbI)$.       Then the Kullback-Leibler divergence of   $P_{\tilde\bsbX\bsbb'} $ from  $P_{\tilde \bsbX\bsbb} $ satisfies $\mbox{KL}(  P_{\tilde \bsbX\bsbb} \|    P_{\tilde \bsbX\bsbb'})   = \Breg_{2} (  \tilde \bsbX  \bsbb',  \tilde \bsbX  \bsbb)/\sigma^2$. It follows from the regularity condition that for any $\bsbb \in {\mathcal B}^{10}(q, J)$,
        \begin{align*}
&        \mbox{KL} (P_{\tilde \bsbX\bsbb}\| P_{\bsb{0}})= \frac{1}{\sigma^2} \Breg_2 ( \tilde\bsbX \bsbb, \bsb0) \le \frac{\overline{\kappa}}{\sigma^2} \Breg_2 ( \bsb0,\bsbb) \leq \frac{1}{2\sigma^2}\overline{\kappa}    (q-1)^{2}\gamma^2 R^2 \rho(\bsb{0}, \bsbb)  \leq \frac{q^{2}\gamma^2}{\sigma^2}   \overline{\kappa}R^2 Jm.
        \end{align*}
        Therefore,
        \begin{align}
        \frac{1}{|\mathcal B^{10}|-1}\sum_{\bsbb\in \mathcal B^{10}\setminus \{\bsb0\}}   \mbox{KL} (  P_{\tilde\bsbX\bsbb}\|P_{\bsb{0}}) \leq    \gamma^2  Jm \log q. \label{KLUBoundcase1}
        \end{align}
 which  aligns with \eqref{case1card}.
         Choosing a sufficiently small value for  $\gamma$, we can apply Theorem 2.7 of \cite{tsybakov2009introduction} to get the    desired lower bound from   \eqref{separationLBoundcase1}.

\textit{Case (ii):} $qJ    \gtrsim P(q, J)  $. Consider a signal  subclass
        \begin{align*}
        {\mathcal B}^2(q,J)=\{\bsbb:   &    \ \bsbb_{j}[k]\in\{0 , \gamma R \}  \mbox{ if }  1\le j \le J, 1\le k \le q, \mbox{ and } 0 \mbox{ otherwise}\},
        \end{align*}
where $R=  {\sigma}/{{\overline{\kappa}^{1/2}}  } $ and $\gamma>0$ is a sufficiently small constant.
         Hence      $|{\mathcal B}^2 (q,J)|= 2^{Jq}$. By  Lemma \ref{qarysepa} and
 $Jq\ge 2$\textgreater1,        there exists  a subset ${\mathcal B}^{20}(q,J)\subset {\mathcal B}^{2}(q,J)$ such that $\bsb0\in {\mathcal B}^{20},$
        \begin{align*}
               & \log (| {\mathcal B}^{20}(q,J)|-1) \geq c_1 Jq,\\
               &                \rho(\bsbb, \bsbb') \geq c_2 Jq, \forall \bsbb, \bsbb' \in \mathcal B^{20},  \bsbb\neq \bsbb'
        \end{align*}
        for some universal constants $c_1, c_2>0$.
        Then  for any $\bsbb, \bsbb' \in \mathcal B^{20}:  \bsbb\neq \bsbb'$ $$\| \bsbb  -  \bsbb'  \|_2^2 \ge   \gamma^2 R^2 \rho(\bsbb, \bsbb') \geq c_2   \sigma^2\gamma^2  Jq/\overline{\kappa}.$$ Furthermore,
        for any $\bsbb\in \mathcal B^{20}(q,J)$, we have
        \begin{align*}
        \mbox{KL} (P_{\tilde \bsbX\bsbb}\| P_{\bsb{0}})  \leq \frac{1}{2\sigma^2}\overline{\kappa}   \gamma^2 R^2 \rho(\bsb{0}, \bsbb) \leq  {\gamma^2}    Jq.
        \end{align*}
        The afterward treatment follows the 
        same lines as in (i) and the details are omitted.

\textit{Case (iii):} $p\gtrsim P(q, J) $. Construct
        \begin{align*}
        {\mathcal B}^3(q,J)=\{\bsbb:     \bsbb_{j}=\bsb0 \mbox{ or } \gamma R \cdot \bsb1, 1\le j \le p\},
        \end{align*}
where $R=  {\sigma}/{{\overline{\kappa}^{1/2}}  } $.     In this case we define $\rho(\bsbb, \bsbb') = \sum_{j=1}^p 1_{\bsbb_j[1]\ne \bsbb_j'[1]}$.
The argument is similar to case (ii) and thus omitted.

\textit{Case (iv):} $ J\log\frac{ep}{J} \gtrsim P(q, J)  $. Construct
        \begin{align*}
        {\mathcal B}^4(q,J)=\{\bsbb:\ &    \bsbb_{j}[1]= 0 \mbox{ or } \gamma R , 1\le j \le p, \  | \{j: \bsbb_{j}[1 ]\ne 0\} | \le J, \ \bsbb_{j}[k] = 0, 1\le j \le p, 2\le k \le m\},
        \end{align*}
where $R=  {\sigma}\{\log(ep/J)/{{\overline{\kappa}\}^{1/2}}  } $ and $\gamma$ is a small constant to be chosen later. Clearly, ${\mathcal B}^4(q,J)\subset \mathcal {S}(q,J) $. By Stirling's approximation, $\log | \mathcal B^4|\ge \log {p\choose J}\ge J \log (p/J)\ge c J \log (ep/J) $ for some universal constant $c>0$.     Here we use   $\rho(\bsbb, \bsbb') = \sum_{j=1}^p 1_{\bsbb_j[1]\ne \bsbb_j'[1]}$.

By Lemma A.3 in   \cite{Rigollet11}, there exists a subset $\mathcal B^{40} \subset \mathcal B^4$ such that $\bsb{0}\in  \mathcal B^{40}$ and
\begin{align*}
&\log |\mathcal B^{40}| \ge c_1 J\log(ep/J), \\
&\rho(\bsbb, \bsbb') \geq c_2 J, \forall \bsbb, \bsbb' \in \mathcal B^{40},  \bsbb\neq \bsbb'
\end{align*}
for some universal constants $c_1,c_2>0$.
Then \begin{align*}
\| \bsbb - \bsbb'\|_2^2 \ge   \rho(\bsbb, \bsbb') \gamma^2 R^2\geq   c_2      \gamma^2 R^2 J, \ \forall \bsbb, \bsbb' \in \mathcal B^{40},   \bsbb\neq \bsbb',
\end{align*}
and         \begin{align*}
        \mbox{KL} (P_{\tilde \bsbX\bsbb}\| P_{\bsb{0}})  \leq \frac{1}{2\sigma^2}\overline{\kappa}   \gamma^2 R^2 \rho(\bsb{0}, \bsbb) \leq  {\gamma^2}    J\log(ep/J).
        \end{align*}
        The subsequent argument follows the same reasoning as in case (i). The proof is complete.
\end{proof}
\subsection{Proof  of Theorem \ref{thm:semi}}

The seminorm claim is easy to verify noticing that $\max( \alpha_k + \beta_k) \le \max \alpha_k + \max \beta_k$ and
\begin{align}\label{range2max}
\range( \bsbb) = \max \beta_k + \max (-\beta_k).
\end{align} The kernel space of the range seminorm can be evaluated directly.

\subsection{Proof of  Theorem \ref{thm:subgrad}}
Using the max characterization \eqref{range2max}, the subgradient calculation   is straightforward and   omitted.

To verify the equality in \eqref{rangsubdiff}, we consider two cases: $m=1$, $m\ge 2$. The case $m=1$ is trivial. For $m\ge 2$,   given any  $\|\bsbs\|_1 = 2c \le 2$, we can construct $\bsbs_1 = [1-\epsilon, \epsilon, 0, \ldots, 0]^T$, $\bsbs_2 = [\epsilon, 1-\epsilon, 0, \ldots, 0]^T$ with $\epsilon = (1-c)/2$. The details are  omitted.

The second result can be easily obtained  using the connection between directional derivative  and subgradients: $
\delta \range  (\bsbg; \bsbb - \bsbg)  = \sup_{\bsbs\in \partial \| \bsbg\|_{\range}} \langle \bsbs , \bsbb - \bsbg\rangle$ (or the definition of one-sided directional derivatives).
The details are omitted.
\subsection{Proof of Theorem \ref{thm:prox}}

Let the maximum and minimum values of $\bsbb^o$    be represented   by $\overline \beta^o$ and $\underline\beta^o$ respectively. Abbreviate ${\overline{\mathcal M}(\bsbb^o)}$ to ${\overline{\mathcal M}}$, ${\underline{\mathcal M}(\bsbb^o)}$ to ${\underline{\mathcal M}}$, and  ${{\mathcal M}(\bsbb^o)}$ to ${{\mathcal M}}$. Let $\overline m = | \overline {\mathcal M}|$ and $\underline m = | \underline {\mathcal M}|$. Suppose  $\overline \beta^o> \underline \beta^o$. By Theorem \ref{thm:subgrad},
\begin{align}
&
\begin{cases}
\forall j \in  \overline{\mathcal M}: &\ \overline\beta^o  - y_j + s_j = 0, s_j\ge 0,    \sum_{  j \in  \overline{\mathcal M}} s_j =\lambda \\
  \forall j \in \mathcal M: & \
\beta^o_j - y_j   = 0,
\\
\forall j \in  \underline{\mathcal M}: &\ \underline\beta^o  - y_j - s_j = 0, s_j\ge 0,    \sum_{  j \in  \underline{\mathcal M}} s_j =\lambda,  \\
\end{cases} \label{kkt}
\\ \Longrightarrow &  \begin{cases}
 \overline m\cdot \overline\beta^o  =  \sum_{ j \in  \overline{\mathcal M}} y_j - \lambda, \overline\beta^o  \le  y_j, \ & \forall j \in  \overline{\mathcal M}  \\
  \beta^o_j = y_j   , \ & \forall j \in \mathcal M
\\
\underline m\cdot  \underline\beta^o  = \sum_{ j \in  \underline{\mathcal M}} y_j +\lambda , \underline\beta^o \ge y_j, & \forall j \in  \underline{\mathcal M}    \\
\end{cases}\label{kktsumform}
\end{align}
where $ \mathcal M$ can be empty. From \eqref{kkt} and \eqref{kktsumform}, the number of clusters in $\bsbb^o$ is nonincreasing in $\lambda$.

Recall the definitions of  $\bar{c}_{k}$ and $\ubar{c}_{k}$, which both increase with $k$, and thus $k_1$ and $k_2$ are well-defined increasing functions of $\lambda$. From  \eqref{kktsumform},
\begin{align}\lambda =     \sum_{  j \in  \overline{\mathcal M}} ( y_j - \overline\beta^o ) =      \sum_{  j: y_j \ge \overline \beta^o} ( y_j - \overline\beta^o ) . \label{lambdaexp1}\end{align}
Under  $\lambda \in  [\bar{c}_{   k_1}, \bar c_{k_1+1})$, we have $k_1 = \overline m$, $\mathcal I_1 = \overline {\mathcal M}$, and so
$$
k_1\cdot \overline\beta^o  =    \sum_{ j \in  \overline{\mathcal M}} y_j     - \lambda \in \Big(    \sum_{ j \in  \overline{\mathcal M}} y_j    -\bar c_{k_1+1},    \sum_{ j \in  \overline{\mathcal M}} y_j    -\bar{c}_{   k_1} \Big]
$$
But
\begin{align*}
  \sum_{ j \in  \overline{\mathcal M}} y_j     -\bar{c}_{   k_1} =  \sum_{ 1\le j \le  k_1} y_{(j)}     - \sum_{1\le j < k_1} y_{(j)} - y_{(k_1)} =y_{(k_1)}+ \sum_{1\le j < k_1} y_{(k_1)} = k_{1} y_{(k_1)}    \\
  \sum_{ j \in  \overline{\mathcal M}} y_j     -\bar{c}_{   k_1+1} =  \sum_{ 1\le j \le  k_1} y_{(j)}     - \sum_{1\le j < k_1+1} y_{(j)} - y_{(k_1+1)} = k_{1} y_{(k_1+1)}
\end{align*}
and so $  \overline\beta^o  \in(y_{(k_1+1)}, y_{(k_1)}] $.

Similarly, we have
\begin{align}\lambda =     \sum_{  j \in  \underline{\mathcal M}} (\underline\beta^o   - y_j ) =      \sum_{  j: y_j \le \overline \beta^o} (\underline\beta^o   - y_j ) , \label{lambdaexp2}\end{align}
and $k_{2}\cdot  \underline\beta^o  = \sum_{ j \in  \underline{\mathcal M}} y_j +\lambda $. Because
\begin{align*}
 \sum_{ j \in  \underline{\mathcal M}} y_j +\ubar{c}_{   k_2} =  \sum_{   m - k_2+1\le j \le m} y_{(j)}      +   \sum_{m-k_2 +1<  j \le m} y_{(m-k_{2}+1)} - y_{(j)} = k_2 y_{(m-k_{2}+1)} \\
 \sum_{ j \in  \underline{\mathcal M}} y_j +\ubar{c}_{   k_2+1} =  \sum_{   m - k_2+1\le j \le m} y_{(j)}      +   \sum_{m-k_2 <  j \le m} y_{(m-k_{2})} - y_{(j)} = k_2 y_{(m-k_{2})}
\end{align*}
we have $ \underline\beta^o \in [y_{(m-k_2+1)}, y_{(m-k_2)})$.

To get the cutoff value for uniform clustering, we study $  \underline\beta^o  <\overline\beta^o  $ or  $  \frac{1}{|\mathcal I_2|} \sum_{j\in \mathcal I_2} y_j +  \frac{1}{|\mathcal I_2|}\lambda< \frac{1}{|\mathcal I_1|} \sum_{j\in \mathcal I_1} y_j -\frac{1}{|\mathcal I_1|} \lambda  $, which, under $k_1 = |\mathcal I_1|$, $k_2 = |\mathcal I_2|$, $\mathcal I_1 \cap \mathcal I_2 = \emptyset$, indicates
\begin{align*}\lambda < &   \frac{k_1 k_2}{m}  \big(   \frac{1}{k_1} \sum_{j\in \mathcal I_1} y_j -\frac{1}{k_2} \sum_{j\in \mathcal I_2} y_j\big)
  =   \frac{1}{m}(k_2 \sum_{j\in \mathcal I_1} y_j -k_1 \sum_{j\in \mathcal I_2} y_j)
  =   \frac{1}{m}(k_2 \sum_{j\in \mathcal I_1} (y_j-\bar y) + k_1 \sum_{j\in \mathcal I_2} (\bar y -y_j)\\
    \le  &  \frac{1}{m}     \sum_{j \in [m]} k_{2}(y_j-\bar y)\vee 0  + k_1(\bar y -y_j)\vee 0
=   \frac{1}{m}     \sum  k_{2}\frac{|y_j-\bar y|+(y_j-\bar y)}{2}  + k_1 \frac{|y_j-\bar y|-(y_j-\bar y)}{2}  \\
= &  \frac{1}{m}\sum   (k_1+k_2)\frac{1}{2}  | y_j - \bar y| + (k_{2}  - k_{1}) \cdot 0
 \le  \sum   \frac{1}{2}  | y_j - \bar y|=\overline \lambda
\end{align*}
where the third equality uses $y_j\ge \bar y\ge y_{j'}, \forall j \in \mathcal I_1, j'\in \mathcal I_2$. Alternatively, we can use \eqref{lambdaexp1} and \eqref{lambdaexp2} to argue that    $\lambda \ge \overline \lambda$ is an \textbf{equivalent} condition for  $\bsbb^o$ to be uniform.  An alternative way is to notice that   $\bsbb^o\in \Proj_K\Leftrightarrow \langle \Proj_K^\perp \bsby, \bsbb\rangle \le \lambda \range(\bsbb)$ and so the dual seminorm expression of $\overline \lambda$ follows.

Suppose there exists a biclustering outcome and let $\lambda_0$ be the minimum value of $\lambda$ that produces  such a result. Abbreviate the associated $k_1(\lambda_0)$ and $k_2(\lambda_0)$  to $k_1, k_2$ which satisfy $k_1+k_2=m$.
We claim $k = k_1$ is the unique optimal solution to   \eqref{lambdabar1} and $\underline \lambda =\lambda_0$.

Assume $ \bar{c}_{   k_1} \ge \ubar{c}_{   k_2 } $ for now. For $k= k_1 + 1$, $ \bar{c}_{   k} \vee \ubar{c}_{   m - k } \ge \bar{c}_{   k} > \bar{c}_{   k_1} = \lambda_0 $, with the strict inequality due to $ [\bar{c}_{   k_1}, \bar c_{k_1+1})\ne \emptyset$. For $k= k_1 - 1$,   $\bar{c}_{   k} \vee \ubar{c}_{   m - k } \ge  \ubar{c}_{   m - k } = \ubar{c}_{   k_2+1 }>\bar{c}_{   k_1} = \lambda_0 $,  with the last inequality resulting from    $  [\bar{c}_{   k_1}, \bar c_{k_1+1})\cap  [\ubar{c}_{k_2 }, \ubar c_{k_2+1})\ne \emptyset $ or $\ubar c_{k_2+1} > \bar{c}_{   k_1}$.
The case of $ \bar{c}_{   k_1} \le \ubar{c}_{   k_2 } $  can be argued similarly.  Finally, using the definition in \eqref{lambdabar1}, where    $ \bar{c}_{   k}   $ is increasing and   $\ubar{c}_{   m - k  } $ is decreasing in $k$, we conclude that $\bar{c}_{   k} \vee \ubar{c}_{   m - k } > \lambda_0$ for any $k\ne k_1$.
\subsection{Proof of Theorem \ref{thm:dualnorm}}
\begin{proof}If $\langle 1, \bsba\rangle\ne 0$, we can set $\bsbb = c \langle \bsb1, \bsba\rangle\bsb1$ with $ c \rightarrow+\infty$ to get $\| \bsba \|_{\range^*} =+\infty$. Assume $\langle 1, \bsba\rangle= 0$. Recall $\Proj_{K} = \Proj_{\bsb1}$ and so $\bsba\in \Proj_K^\perp$. Then for any scalar   $c$ (which can be  possibly dependent on $\bsbb$),
\begin{align*}
\| \bsba\|_{\range^*} & = \sup_{\| \bsbb\|_{\range} \le 1} \langle \bsbb, \Proj_{K}^\perp \bsba\rangle = \sup_{\| \bsbb- c \bsb1\|_{\range} \le 1} \langle \Proj_{K}^\perp (\bsbb - c \bsb1), \Proj_{K}^\perp\bsba\rangle  \\ & =  \sup_{\| \bsbb- \frac{\max \beta_k + \min \beta_k}{2} \bsb1\|_{\range} \le 1} \langle\Proj_{K}^\perp (\bsbb - \frac{\max \beta_j + \min \beta_j}{2} \bsb1), \Proj_{K}^\perp\bsba\rangle \\
& =  \sup_{2\| \bsbg\|_{\infty} \le 1, \max \gamma_k = - \min \gamma_k} \langle\Proj_{K}^\perp\bsbg  , \Proj_{K}^\perp\bsba\rangle \\
 & =   \sup_{\| \bsbg\|_{\infty} \le 1/2, \max \gamma_k = - \min \gamma_k} \langle    \bsbg  ,  \bsba\rangle  \le  \sup_{\| \bsbg\|_{\infty} \le 1/2} \langle    \bsbg  ,  \bsba\rangle = \frac{1}{2} \|\bsba\|_1,
\end{align*}
where the equality is achievable because under  $\langle 1, \bsba\rangle= 0$, neither $\alpha_k> 0, \forall  k$  nor $\alpha_k< 0, \forall  k$ can occur  when applying H\"older's inequality.

 Similarly,  for  $
\range^* (   \bsba)  = \sup_{  \bsbb } \langle \bsbb, \bsba\rangle
 - \| \bsbb\|_{\range}$ with $\lambda=1$, when $\langle 1, \bsba\rangle\ne 0$, we get $+\infty$, and when   $\langle 1, \bsba\rangle= 0$, $\range^* (   \bsba)= \sup_{  \bsbb } \langle \bsbb - c(\bsbb) \bsb1, \bsba\rangle
 - \| \bsbb- c(\bsbb) \bsb1\|_{\range} =\sup_{  \bsbg: \max \gamma_k = - \min \gamma_k  } \langle \bsbg, \bsba\rangle
 - 2\| \bsbg\|_{ \infty}\le \sup_{  \bsbg } \langle \bsbg, \bsba\rangle  - 2\| \bsbg\|_{ \infty}= \iota_{\| \bsba\|_1\le2}$,   where, again, the equality is achievable due to  $\langle 1, \bsba\rangle= 0$.  The result can be trivially extended to a general $\lambda>0$. The case $\lambda=0$ is trivial.
\end{proof}
\subsection{Proof of Theorem  \ref{thm:tau}}


\begin{proof} First, notice that because $\range(\cdot)$ is convex, $0\le \breg_{\range}(\bsbzeta, \bsbzeta^0) = \range(  \bsbzeta)  - \range(\bsbzeta^0)    -  \delta\range (\bsbzeta^0;   \bsbzeta - \bsbzeta^0) $.

To prove the second inequality, the case of     $ \Proj_K^\perp\bsbzeta^0= \bsb0$ is trivial.  It suffices to prove for     $ \Proj_K^\perp\bsbzeta^0\ne \bsb0$. Define $\underline M ^0=| \underline {\mathcal M}(\bsbzeta^0)|$ and $\overline M ^0=| \overline {\mathcal M}(\bsbzeta^0)|$. From Theorem \ref{thm:subgrad}, \begin{align*}
-  \delta\|\cdot \|_{\range}(\bsbzeta^0;   \bsbzeta - \bsbzeta^0)& = -\max_{  \overline {\mathcal M}(\bsbzeta^0)}   \{   \zeta_k     -   \zeta_k^0\} +\min_{  \underline {\mathcal M}(\bsbzeta^0)}   \{  \zeta_k     -   \zeta_k^0\}  = \min_{  \overline {\mathcal M}(\bsbzeta^0)}   \{ \zeta_k^0-   \zeta_k      \} +\min_{  \underline {\mathcal M}(\bsbzeta^0)}   \{   \zeta_k     -   \zeta_k^0\}.
\end{align*}
Assume  $\zeta_k^0-  \zeta_k   = \alpha_k\ge 0,\forall k\in \overline {\mathcal M}(\bsbzeta^0)$ with $\alpha_1  \le \cdots\le  \alpha_{\overline {M}^0}$, and  $   \zeta_k     -   \zeta_k^0 = \gamma_k\ge 0, \forall k\in \underline {\mathcal M}(\bsbzeta^0)$   with $\gamma_1  \le \cdots\le  \gamma_{\underline {M}^0}$.
To show the desired inequality,  we need to minimize  the variance of  $   \bsbzeta - \bsbzeta^0$.

Given $\alpha_1, \gamma_1$, the minimum variance case occurs when   $\alpha_{\overline {M}^0}=\ldots = \alpha_1, \gamma_1 =\cdots =   \gamma_{\underline {M}^0} $ and the remaining  $\zeta_k - \zeta_k^0$ all   equal the mean, which is assumed to be 0 without loss of generality. Thus $\alpha_1 \overline M^0 = \gamma_1 \underline M^0$. Direct calculation shows
\begin{align*}
\frac{\alpha_1 + \gamma_1}{\|\bsbzeta\|_2} = \frac{1 +\overline M^0 /\underline M^0}{(\overline M^0 +(\overline M^0)^2 /\underline M^0 )^{1/2}}=\left( \frac{\overline M^0 +\underline M^0}{\overline M^0  \underline M^0}\right)^{1/2}.
\end{align*}
The conclusion follows.
\end{proof}

\subsection{Proofs of Theorem \ref{th:pattrecov} and  Corollary \ref{cor:pattrecovGaussian}}
\begin{proof} Due to the convexity of the problem,   $\hat \bsbb$ is a globally optimal solution if and only if \begin{align}
\tilde \bsbX  ^T\tilde \bsbX \hat \bsbb - \tilde \bsbX ^T\bsby + \lambda \bsbs = \bsb0, \label{kkteq}
\end{align}
where  $\bsbs = [\bsbs_1^T, \ldots, \bsbs_p^T]^T$,  $\bsbs_j \in \partial \range(\hat \bsbb_j)$. Note the dependence of $\bsbs$ on $\hat \bsbb$,  though  often omitted for brevity.
Define an event:
\begin{align}
{\overline{\mathcal M}} (\hat \bsbb_j) = \overline{\mathcal M}_j^*, {\mathcal M}(\hat \bsbb_j)=  {\mathcal M}_j^*, \underline{\mathcal M}(\hat \bsbb_j)=\underline{\mathcal M}_j^*, 1\le j\le p. \label{pattconsismeaning}
\end{align}
Then we can express $p_e$ as
\begin{align}
p_e = \EP[\,  \exists \hat \bsbb \text{ satisfying } \eqref{kkteq} \mbox{ and } \eqref{pattconsismeaning}\, ].
\end{align}

The following result is broadly applicable and can also be used for pattern recovery based on other norms (such as the $\ell_1$-type).
\begin{lemma}\label{lem:regkktsimp}
Suppose   $\bsbb^*,\hat \bsbb \in \Proj_\bsbU$ for some orthogonal $\bsbU$,  i.e., $\bsbb^* = \bsbU \bsbg^*$, and      $\hat \bsbb = \bsbU \hat \bsbg$, and  $\bsbZ  = \tilde \bsbX \bsbU$ has   full column rank. Let    $\bsbt = \bsbU^T \bsbs$. Then
\begin{align}
\hat \bsbb  &=\bsbb^* + \bsbU (\bsbZ^T \bsbZ)^{-1} \bsbZ^T \bsbeps - \lambda\bsbU (\bsbZ^T\bsbZ )^{-1}  \bsbt \label{lem:betahatsol}
\end{align}
and
\begin{align}
\begin{split}
\lambda \bsbs& =  \lambda\Proj_{\bsbU} \bsbs +  \lambda (\tilde \bsbX  \Proj_\bsbU^\perp )^T\bsbZ (\bsbZ^T \bsbZ)^{-1} \bsbt +  \tilde \bsbX  ^T \Proj_{\bsbZ}^\perp \bsbeps,
\end{split}\label{redkkt1}
\end{align}
where $ (\tilde \bsbX  \Proj_\bsbU^\perp )^T\bsbZ (\bsbZ^T \bsbZ)^{-1} \bsbt $ also equals $  (\tilde \bsbX  \Proj_\bsbU^\perp )^T\tilde  \bsbX\Proj_\bsbU\cdot \{(\tilde  \bsbX\Proj_\bsbU)^T \tilde  \bsbX\Proj_\bsbU\}^{+} \Proj_\bsbU  \bsbs$ or $  (\tilde \bsbX  \Proj_\bsbU^\perp )^T  \{(\tilde  \bsbX\Proj_\bsbU)^T \}^{+} \Proj_\bsbU  \bsbs$.
\end{lemma}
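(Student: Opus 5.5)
The plan is to plug the structural assumption into the KKT equation \eqref{kkteq} and then split that equation into its components along $\Proj_\bsbU$ and $\Proj_\bsbU^\perp$. In the federated regression model we have $\bsby = \tilde \bsbX \bsbb^* + \bsbeps$. With $\hat \bsbb - \bsbb^* = \bsbU(\hat\bsbg - \bsbg^*)$, \eqref{kkteq} becomes
\begin{align*}
\tilde\bsbX^T \bsbZ(\hat\bsbg - \bsbg^*) - \tilde\bsbX^T\bsbeps + \lambda \bsbs = \bsb0 .
\end{align*}
The first step is to left-multiply by $\bsbU^T$. Since $\bsbU^T\tilde\bsbX^T = \bsbZ^T$ and $\bsbt = \bsbU^T\bsbs$, this gives $\bsbZ^T\bsbZ(\hat\bsbg-\bsbg^*) = \bsbZ^T\bsbeps - \lambda\bsbt$. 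Full column rank of $\bsbZ$ makes $\bsbZ^T\bsbZ$ invertible, so $\hat\bsbg - \bsbg^* = (\bsbZ^T\bsbZ)^{-1}(\bsbZ^T\bsbeps - \lambda \bsbt)$. Multiplying by $\bsbU$ then yields \eqref{lem:betahatsol}.

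The second step substitutes this back into the full KKT equation and solves for $\lambda\bsbs$:
\begin{align*}
\lambda\bsbs = \tilde\bsbX^T\bsbeps - \tilde\bsbX^T\bsbZ(\bsbZ^T\bsbZ)^{-1}\bsbZ^T\bsbeps + \lambda\tilde\bsbX^T\bsbZ(\bsbZ^T\bsbZ)^{-1}\bsbt = \tilde\bsbX^T\Proj_\bsbZ^\perp\bsbeps + \lambda\tilde\bsbX^T\bsbZ(\bsbZ^T\bsbZ)^{-1}\bsbt .
\end{align*}
Next I would decompose $\tilde\bsbX^T = \Proj_\bsbU\tilde\bsbX^T + \Proj_\bsbU^\perp\tilde\bsbX^T$. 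Here $\Proj_\bsbU\tilde\bsbX^T\bsbZ(\bsbZ^T\bsbZ)^{-1}\bsbt = \bsbU\bsbZ^T\bsbZ(\bsbZ^T\bsbZ)^{-1}\bsbt = \bsbU\bsbt = \Proj_\bsbU\bsbs$. The remaining piece is $\Proj_\bsbU^\perp\tilde\bsbX^T = (\tilde\bsbX\Proj_\bsbU^\perp)^T$, which gives exactly \eqref{redkkt1}.

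For the alternative expressions, put $\bsbW = \tilde\bsbX\Proj_\bsbU = \bsbZ\bsbU^T$. Because $\bsbU$ has orthonormal columns and $\bsbZ$ has full column rank, the pseudoinverse identities
\begin{align*}
(\bsbW^T\bsbW)^+ = \bsbU(\bsbZ^T\bsbZ)^{-1}\bsbU^T, \qquad (\bsbW^T)^+ = \bsbZ(\bsbZ^T\bsbZ)^{-1}\bsbU^T
\end{align*}
should hold. I would verify them by checking the four Penrose conditions. Given these, $\bsbW(\bsbW^T\bsbW)^+\Proj_\bsbU\bsbs = \bsbZ(\bsbZ^T\bsbZ)^{-1}\bsbU^T\bsbU\bsbU^T\bsbs = \bsbZ(\bsbZ^T\bsbZ)^{-1}\bsbt$, and similarly for $(\bsbW^T)^+\Proj_\bsbU\bsbs$.

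The only step needing real care is this pseudoinverse verification, and specifically the point that $\bsbU^T\bsbU=\bsbI$ so that $\bsbU$ cancels correctly. The rest is linear algebra that follows directly from substituting into \eqref{kkteq}.
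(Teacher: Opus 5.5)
Your proposal is correct and follows the paper's proof essentially step for step: multiply the KKT equation by $\bsbU^T$, solve for $\hat\bsbg$, substitute back to obtain $\lambda\bsbs = \tilde\bsbX^T\Proj_{\bsbZ}^\perp\bsbeps + \lambda\tilde\bsbX^T\bsbZ(\bsbZ^T\bsbZ)^{-1}\bsbt$, and then split $\tilde\bsbX^T$ along $\Proj_\bsbU$ and $\Proj_\bsbU^\perp$. Your Penrose-condition check of $\{(\tilde\bsbX\Proj_\bsbU)^T\tilde\bsbX\Proj_\bsbU\}^+ = \bsbU(\bsbZ^T\bsbZ)^{-1}\bsbU^T$ and $\{(\tilde\bsbX\Proj_\bsbU)^T\}^+ = \bsbZ(\bsbZ^T\bsbZ)^{-1}\bsbU^T$ is a cleaner justification of the alternative expressions than the paper's chain of equalities. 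That chain has correct endpoints, but it passes through an intermediate form with $\Proj_\bsbU(\tilde\bsbX^T\tilde\bsbX)^+\Proj_\bsbU$, which is not equal to $\bsbU(\bsbZ^T\bsbZ)^{-1}\bsbU^T$ in general.
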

\begin{proof}
 By multiplying both sides of \eqref{kkteq} by   $\bsbU^T$,  we obtain
\begin{align}
\bsbZ^T\bsbZ \hat \bsbg - \bsbZ^T \bsby + \lambda \bsbt = \bsb0,
\end{align}
from which it follows that
\begin{align}
\hat \bsbg  = (\bsbZ^T\bsbZ )^{-1} ( \bsbZ^T \bsby  -  \lambda \bsbt)=\bsbg^* + (\bsbZ^T \bsbZ)^{-1} \bsbZ^T \bsbeps - \lambda(\bsbZ^T\bsbZ )^{-1}  \bsbt
\end{align}
and
$$
\tilde\bsbX\hat \bsbb =    \bsbZ\bsbg^* + \Proj_{\bsbZ} \bsbeps - \lambda \bsbZ(\bsbZ^T\bsbZ )^{-1}  \bsbt .
$$
Plugging the expression back into \eqref{kkteq} yields
\begin{align}
\lambda \bsbs & =  \tilde \bsbX  ^T  \bsby - \tilde \bsbX^T \tilde \bsbX\bsbg^*- \tilde \bsbX  ^T \Proj_{\bsbZ}  \bsbeps + \lambda \tilde \bsbX^T \bsbZ (\bsbZ^T \bsbZ)^{-1} \bsbt = \tilde \bsbX  ^T \Proj_{\bsbZ}^\perp \bsbeps + \lambda \tilde \bsbX^T \bsbZ (\bsbZ^T \bsbZ)^{-1} \bsbt.
\end{align}
Moreover,
given that     $\bsbU$ is orthogonal,
 \begin{align*}
\tilde \bsbX^T \bsbZ (\bsbZ^T \bsbZ)^{-1} \bsbt &= (\tilde \bsbX ( \bsbU \bsbU^T + \bsbI -  \bsbU \bsbU^T))^T \bsbZ (\bsbZ^T \bsbZ)^{-1} \bsbt = \bsbU  (\bsbU^T   \bsbs) +  (\tilde \bsbX  \Proj_\bsbU^\perp )^T \tilde \bsbX\bsbU (\bsbU ^T \tilde \bsbX^T \tilde \bsbX\bsbU   )^{+} \bsbU^T \bsbs\\
& = \Proj_{\bsbU} \bsbs +   (\tilde \bsbX  \Proj_\bsbU^\perp )^T\tilde  \bsbX\Proj_\bsbU \Proj_\bsbU(\tilde  \bsbX^T \tilde  \bsbX )^{+} \Proj_\bsbU\Proj_\bsbU  \bsbs = \Proj_{\bsbU} \bsbs +   (\tilde \bsbX  \Proj_\bsbU^\perp )^T\{\tilde
\bsbX\Proj_\bsbU \{(\tilde  \bsbX\Proj_\bsbU)^T \tilde  \bsbX\Proj_\bsbU\}^{+} \}\Proj_\bsbU  \bsbs\\
 & = \Proj_{\bsbU} \bsbs +   (\tilde \bsbX  \Proj_\bsbU^\perp )^T(\tilde  \bsbX\Proj_\bsbU)^{+T}  \Proj_\bsbU  \bsbs.
\end{align*}
The conclusion in the lemma follows.
\end{proof}

Under  \eqref{pattconsismeaning}, we choose    $\bsbU$  as $\bsbU(\bsbb^*)$. Then $\Proj_{\bsbZ} = \Proj^{*}$, $\Proj_{\bsbZ}^\perp  = \Proj^{*,\perp}$, and    using the definitions of $\bsbt^* $, $\overunderline\bsbs^*$, and particularly $\bsbh $ which can be expressed as
\begin{align}
\bsbh  = ( \tilde \bsbX^{*,\perp})^T \bsbZ (\bsbZ^T \bsbZ)^{-1} \bsbt^*, \end{align}    \eqref{redkkt1} simplifies to
\begin{align}
\lambda \bsbs = \lambda \overunderline \bsbs^* + \lambda \bsbh +  \tilde \bsbX  ^T \Proj^{*,\perp}\bsbeps,
\end{align}
 and for the $j$-th block,
\begin{align}
\lambda \bsbs_j   =    \lambda \overunderline \bsbs^*_j +  \lambda \bsbh_j +  \tilde \bsbX_j ^T \Proj^{*,\perp} \bsbeps.\label{sjkkt}
\end{align}
This identity can   determine when the pattern of  $\hat\bsbb_j$ matches  that of  $\bsbb_j^*$.

Recall       $\bsbU^* = \diag\{\bsbU_j^*\}$. Assuming the components of $\bsbb_j^*$ are ordered consecutively as  $\overline{\mathcal M}_j^*,  {\mathcal M}_j^*, \underline{\mathcal M}_j^* $ in each block,   $
\bsbU_j^* = \diag\{ \frac{1}{\sqrt{\overline M_j^*}} \bsb1, \bsbI_{M_j^*},  \frac{1}{\sqrt{\underline M_j^*}}\bsb1 \}
$,   reducing to
$\bsbU_j^* = \bsb1 /\sqrt m $       for $j\in \mathcal J^{*c}$.
Hence, $\Proj_{\bsbU_j^*} = \diag\{\Proj_{\bsb1_{\overline M_j^*}}, \bsbI, \Proj_{\bsb1_{\underline M_j^*}}\}$ and $\Proj_{\bsbU_j^*}^\perp = \diag\{\Proj_{\bsb1_{\overline M_j^*}}^\perp, \bsb0, \Proj_{\bsb1_{\underline M_j^*}}^\perp\}$  for $j\in \mathcal J^*$;    $\Proj_{\bsbU_j^*} =\Proj_{\bsb1_m}$ and $\Proj_{\bsbU_j^*}^\perp =\Proj_{\bsb1_m}^\perp$ for $j\in \mathcal J^{*c}$.  Intuitively,    $\Proj_{\bsbU_j^*}^\perp $    centers the columns of $\tilde \bsbX_j$   corresponding to    $\overline{\mathcal M}_j^* , \underline{\mathcal M}_j^* $   individually and zeros out the   columns associated with the intermediate set    $   {\mathcal M}_j^*$. Furthermore,   simple calculation shows
\begin{align}
  \overunderline \bsbs^*_j  =
\begin{cases}
\begin{bmatrix}\frac{1}{{\overline M_j^*}}\bsb1\\ \bsb0_{M_j^*} \\ -\frac{1}{{\underline M_j^*}}\bsb1\end{bmatrix}, &j \in \mathcal J^* \\
\bsb0, &j \in \mathcal J^{*c},
\end{cases}
 \quad \bsbt_j^* =
\begin{cases}
\begin{bmatrix}\frac{1}{\sqrt{\overline M_j^*}}  \\ \bsb0_{M_j^*} \\ -\frac{1}{\sqrt{\underline M_j^*}} \end{bmatrix}, & j \in \mathcal J^* \\
0, & j \in \mathcal J^{*c}.
\end{cases}
\end{align}
Because $  \Proj^{*,\perp}\tilde \bsbX_j =  \big[ \Proj^{*,\perp}\tilde \bsbX_j[:, \overline{\mathcal M}_j^*], \bsb0 ,  \Proj^{*,\perp}\tilde \bsbX_j[:, \underline{\mathcal M}_j^*]  \big]$ and $( \tilde \bsbX_j \Proj_{\bsbU_j^*}^\perp)[:,\mathcal M_j^*]=   \bsb0 $, \begin{align}
(\tilde \bsbX_j[:,\mathcal M_j^*]) ^T \Proj^{*,\perp} \bsbeps=\bsb0,\ \bsbh_j[\mathcal M_j^*]= \bsb0,
\end{align}
i.e., the ${\mathcal M}_j^*$-blocks of all terms in \eqref{sjkkt} are $\bsb0$. In addition,   due to the construction of $\bsbU$,  $\langle \bsb1, \bsbs_j[\overline{\mathcal M}_j^*]  \rangle=1$ and $\langle \bsb1, \bsbs_j[\underline{\mathcal M}_j^*]  \rangle=-1$.

Therefore, for $j\in \mathcal J^*$, the following conditions are sufficient to ensure faithful pattern recovery of $\bsbb_j^*$:
\begin{align}
\min\hat \bsbb_j[\overline{\mathcal M}_j^*] > \max\hat \bsbb_j[\mathcal M_j^*], \ \min\hat \bsbb_j[\mathcal M_j^*]> \max\hat \bsbb_j[\underline{\mathcal M}_j^*]  \label{betaorderred}\\
\bsbs_j[\overline{\mathcal M}_j^*] \succeq \bsb0, \bsbs_j[\underline{\mathcal M}_j^*] \preceq \bsb0.  \label{snnegred}
\end{align}
Based on \eqref{sjkkt}, \eqref{snnegred}  is implied by
\begin{align*}
\begin{cases}
\frac{1}{{\overline M_j^*}}\lambda   \ge -\min   (\tilde \bsbX_j[:,   \overline{\mathcal M}_j^*])^T \Proj^{*,\perp} \bsbeps  - \lambda\min \bsbh_{j}[     \overline{\mathcal M}_j^*] \\
\frac{1}{{\underline M_j^*}}\lambda   \ge \max  (\tilde \bsbX_j[:,  \underline{\mathcal M}_j^*])^T \Proj^{*,\perp}\bsbeps   + \lambda\max \bsbh_{j}[     \underline{\mathcal M}_j^*]
\end{cases}
\end{align*}
or
\begin{align}
\begin{cases}\frac{1}{{\overline M_j^*}}\lambda   \ge \| (\tilde \bsbX_j[:,   \overline{\mathcal M}_j^*])^T \Proj^{*,\perp} \bsbeps \|_\infty +  \lambda \| \bsbh_j[     \overline{\mathcal M}_j^*]\|_{\infty} \\
\frac{1}{{\underline M_j^*}}\lambda   \ge \| (\tilde \bsbX_j[:,  \underline{\mathcal M}_j^*])^T \Proj^{*,\perp}\bsbeps \|_\infty +  \lambda \| \bsbh_j[     \underline{\mathcal M}_j^*]\|_{\infty}.
\end{cases}\label{lamb-extrBnd1}
\end{align}
It follows from
\begin{align*}
&\max_{1\le j\le p} \,\{{\overline M_j^*}  \| (\tilde \bsbX_j[:,   \overline{\mathcal M}_j^*])^T \Proj^{*,\perp} \bsbeps \|_\infty \vee \underline M_j^* \| (\tilde \bsbX_j[:,  \underline{\mathcal M}_j^*])^T \Proj^{*,\perp}\bsbeps \|_\infty \} \\
 \le\, &  m\max_{1\le j\le p}    \|(\tilde\bsbX_j[:,\overunderline{\mathcal M}_j^*])  ^T\Proj^{*,\perp} \bsbeps \|_\infty = m\| \tilde\bsbX  ^T\Proj^{*,\perp} \bsbeps \|_{\infty}
= m\|  ( \tilde \bsbX^{*,\perp}  )^T   \Proj^{*,\perp} \bsbeps \|_{\infty}
\end{align*}
that $ m\| \tilde\bsbX  ^T\Proj^{*,\perp} \bsbeps \|_{\infty} \le  (1-\alpha) \lambda$ indicates \eqref{snnegred}.

According to \eqref{lem:betahatsol}, \eqref{betaorderred} is implied by
\begin{align}
\begin{split}
G^* &  >  2 \| \bsbU ^* (\bsbZ^T \bsbZ)^{-1} \bsbZ^T \bsbeps  \|_{\infty}+ 2\lambda \| \bsbU ^*(\bsbZ^T\bsbZ )^{-1}  \bsbt^{*} \|_{\infty}   =  2 \| \Proj_{\bsbU ^* } (\tilde \bsbX^T \tilde \bsbX)^{+}\Proj_{\bsbU ^* } \tilde \bsbX^T \bsbeps  \|_{\infty}+ 2\lambda \| \bsbU ^*(\bsbZ^T\bsbZ )^{-1}  \bsbU^{*T}{\overunderline \bsbs}^{*} \|_{\infty}  \\
& =  2 \|  \Proj_{\bsbU ^* }\tilde \bsbX ^{+} \Proj^*\bsbeps  \|_{\infty}+ 2\lambda \|\Proj_{\bsbU ^* }  (\tilde \bsbX^T\tilde \bsbX )^{+}  \,{\overunderline \bsbs}^{*} \|_{\infty} =  2 \|  (\tilde \bsbX \Proj_{\bsbU ^* })^{+} \bsbeps  \|_{\infty}+ 2\lambda \|  \{(\tilde \bsbX \Proj_{\bsbU ^* })^T (\tilde \bsbX \Proj_{\bsbU ^* })\}^{+} \,{\overunderline \bsbs}^{*} \|_{\infty}.  \\
\end{split}\label{lamb-intermed}
\end{align}
Moreover, due to the block diagonal structure of $\bsbU^*$ and $\bsbU_j^*$ and $ \overline M_j^* , \underline M_j^*\ge 1$, a sufficient condition for \eqref{lamb-intermed} is
\begin{align}
G^*\ge 2 \| (\bsbZ^T \bsbZ)^{-1} \bsbZ^T \bsbeps  \|_{\infty}+ 2\lambda \| (\bsbZ^T\bsbZ )^{-1}  \bsbt^{*} \|_{\infty},
\end{align}
where
the infinity norms   involve significantly fewer components.

As  $j\in \mathcal J^{*c}$, it is easy to verify that the left-hand side of \eqref{sjkkt} already has mean 0, and so faithful pattern recovery is guaranteed by the condition $
\| \bsbs_j \|_1 \le 2
$ (cf.   Theorem \ref{thm:subgrad}). But under
 \eqref{lamb-extrBnd1}, \begin{align*}
\lambda\|\bsbs_j\|_1 &= \| \tilde \bsbX_j ^T  \Proj^{*,\perp} \bsbeps + \lambda \overunderline \bsbs_j^* +  \lambda \bsbh_j\|_1=    \| \tilde \bsbX_j ^T  \Proj^{*,\perp} \bsbeps + \bsb0  +  \lambda \bsbh_j\|_1 =    \| (\tilde \bsbX_j[:,\overunderline {\mathcal M}_j^*]) ^T  \Proj^{*,\perp} \bsbeps  +  \lambda \bsbh_j[ \overunderline {\mathcal M}_j^*]\|_1  \\&\le \overline{M}_j^* \| (\tilde \bsbX_j[:,   \overline{\mathcal M}_j^*])^T \Proj^{*,\perp} \bsbeps \|_\infty +  \lambda\overline{M}_j^* \|\bsbh_{j}[     \overline{\mathcal M}_j^*]\|_\infty + \underline{M}_j^* \| (\tilde \bsbX_j[:,   \underline{\mathcal M}_j^*])^T \Proj^{*,\perp} \bsbeps \|_\infty +  \lambda\underline{M}_j^* \|\bsbh_{j}[     \underline{\mathcal M}_j^*]\|_\infty \le 2\lambda.
\end{align*}
The conclusion in Theorem \ref{th:pattrecov} follows.
\\

Next, we prove Corollary \ref{cor:pattrecovGaussian}. It is easy to see that \begin{align*}\| (\bsbZ ^T \bsbZ)^{-1}\|_2 \le 1/\mu_0  , \\   \|\bsbt_j^*\|_2 = \tau_j^* , \forall j\in \mathcal J^* ,  \\ \|\overunderline{\bsbs}^*\|_2= \|\bsbt^*\|_2 =  \tau_{\mathcal J^*},
\end{align*}
where $\tau_{\mathcal J^*}$ is short for  $ \sqrt{\sum_{j \in \mathcal J^*}  \tau_j ^{*2}} $.
 Rewrite $ \tilde \bsbX^{*,\perp}$ as  $ [\tilde \bsbX_1^{*,\perp}, \ldots, \tilde \bsbX_p^{*,\perp}]$ with $\tilde \bsbX_j^{*,\perp} = \tilde \bsbX_j \Proj_{\bsbU_j^*}^\perp$.
  Applying the Cauchy-Schwarz inequality yields
  $$
\| \bsbh_j \|_{\infty} \le \| (\tilde \bsbX_j^{*,\perp} )^T\bsbZ \|_{2, \infty } \|  (\bsbZ^T\bsbZ)^{-1}\|_2\|\bsbt^*\|_2 \le \frac{\omega_0}{\mu_0} \sqrt {df^*}\tau_{\mathcal J^*}
$$
and
$$
 \|  ( \tilde \bsbX^{*T}    \tilde \bsbX^{*}    )^{+} \,{\overunderline \bsbs}^{*} \|_{\infty}\le \|     ( \bsbZ^T     \bsbZ    )^{-1}\,  {\bsbt}^{*} \|_\infty\le \|
(\bsbZ^T\bsbZ)^{-1}\|_2\| {\bsbt}^*\|_2 \le \frac{\tau_{\mathcal J^*}}{\mu_0}.
$$
For example, letting $  ( \bsbZ^T     \bsbZ    )^{-1} = \bsbU_0 \bsbD_0^{-1} \bsbU_0^T$ with $\bsbU_0$ orthogonal and $\bsbD_0$ nonsingular, $\|    ( \bsbZ^T     \bsbZ    )^{-1}\,  {\bsbt}^{*} \|_{\infty}\le \| \bsbU_0  \|_{2,\infty} \allowbreak \| \bsbD_0^{-1} \bsbU_0^T{\bsbt}^{*}\|_2 \le 1 \cdot (1/\mu_0) \cdot \tau_{\mathcal J^*}$.

Using the independence between $\Proj^{*} \bsbeps$ and $\Proj^{*,\perp} \bsbeps$, we obtain
\begin{align*}
p_e \ge \, & \EP[ \| ( \tilde \bsbX^{*,\perp}  )^T\Proj^{*,\perp} \bsbeps \|_{\infty}   \le    \lambda (\frac{1}{m}-\frac{\omega_0}{\mu_0}\tau_{\mathcal J^*}  \sqrt{df^*})]    \times \EP[   \|  (\tilde \bsbX^{ *})^{+} \Proj^{* } \bsbeps  \|_{\infty}  \le    \frac{G^*}{2}- \lambda \frac{\tau_{\mathcal J^*}}{\mu_0}] \\
\ge \, &  \EP[\max_{1\le j \le p} \| (
{\tilde \bsbX}_j[:,  \overunderline {\mathcal M}_j^*] )^T\Proj^{*,\perp} \bsbeps \|_{\infty}   \le    \lambda (\frac{1}{m}-\frac{\omega_0}{\mu_0}\tau_{\mathcal J^*}  \sqrt{df^*})]    \times \EP[ \|  \bsbZ^+ \bsbeps  \|_{\infty}  \le    \frac{G^*}{2}- \lambda \frac{\tau_{\mathcal J^*}}{\mu_0}].
\end{align*}
\begin{lemma}\label{lem:gaussconcen}
Let   $\bsbA\in \mathbb R^{q\times N}$ with $\| \bsbA\|_2 \le 1$ and $q$ possibly larger than $ N$.
Assume $\bsbeps=[\epsilon_i]\in\mathbb R^N$ and $\bsbz=[z_k]\in\mathbb R^q$ consist of i.i.d.    $\mathcal N(0, \sigma^2)$ entries. Then $\EP[
\bsbA \bsbeps\in R ]\ge \EP[  \bsbz\in R]$ for any convex, closed and symmetric  $R$.
\end{lemma}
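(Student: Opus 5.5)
The plan is to reduce the statement to Anderson's inequality by a Gaussian comparison. Set $\bsbSig = \bsbA\bsbA^T$, so that $\bsbA\bsbeps \sim \mathcal N(\bsb0, \sigma^2\bsbSig)$ while $\bsbz \sim \mathcal N(\bsb0, \sigma^2 \bsbI_q)$. Since $\|\bsbA\|_2\le 1$, we have $\bsbSig \preceq \bsbI_q$ in the Loewner order. Hence $\bsbI_q - \bsbSig$ is positive semidefinite, and we can write
\[
\bsbz \overset{d}{=} \bsbA\bsbeps + \bsbw, \qquad \bsbw \sim \mathcal N(\bsb0, \sigma^2(\bsbI_q - \bsbSig)),
\]
with $\bsbw$ independent of $\bsbeps$. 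This decomposition does not require $q\le N$: both $\bsbSig$ and $\bsbI_q-\bsbSig$ may be singular, and we simply use the degenerate Gaussian laws.

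Next, condition on $\bsbw$:
\[
\EP[\bsbz \in R] = \EE\big[\EP[\bsbA\bsbeps + \bsbw \in R \mid \bsbw]\big] = \EE\big[\EP[\bsbA\bsbeps \in R - \bsbw]\big].
\]
Anderson's shifted-ball inequality states that for a centered Gaussian vector $\bsbg$ (possibly degenerate, whose law is a symmetric log-concave measure, or a limit of such measures) and a convex symmetric set $R$, the map $t\mapsto \EP[\bsbg \in R + t\bsbw_0]$ is nonincreasing on $[0,1]$ for every fixed $\bsbw_0$. In particular $\EP[\bsbg\in R - \bsbw_0]\le \EP[\bsbg\in R]$. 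Applying this with $\bsbg = \bsbA\bsbeps$ and $\bsbw_0=\bsbw$ pointwise in $\bsbw$, and then taking expectations, gives $\EP[\bsbz\in R]\le \EP[\bsbA\bsbeps\in R]$, which is the claim.

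The only delicate step is that Anderson's theorem is usually stated for a density that is symmetric and unimodal, whereas $\bsbA\bsbeps$ may be supported on a proper subspace when $\rank(\bsbA)<q$. I would handle this in one of two ways. The first is to add $\sqrt{\delta}\,\mathcal N(\bsb0,\sigma^2\bsbI)$ noise, apply the inequality to $\bsbSig_\delta = \bsbSig + \delta\bsbI$ (using the decomposition with $(1+\delta)\bsbI - \bsbSig_\delta$ and rescaling), and then let $\delta\to0$. Closedness of $R$ ensures $\limsup \EP[\cdot\in R]\le \EP[\cdot\in R]$ by the portmanteau theorem, which is the direction needed on the side of $\bsbA\bsbeps$. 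The second, equivalent way is to invoke Anderson's inequality on the subspace $\range(\bsbSig)$, intersecting the shifted set with that subspace; the resulting slice remains convex and symmetric about the projected point.
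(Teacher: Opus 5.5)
Your proof is correct and follows the paper's route: the paper notes that $\bsbI-\bsbA\bsbA^T$ is positive semidefinite and invokes Anderson's inequality (via the corollary it cites from Alberts--Khoshnevisan), which is exactly your decomposition $\bsbz\overset{d}{=}\bsbA\bsbeps+\bsb{w}$ with independent $\bsb{w}\sim\mathcal N(\bsb{0},\sigma^2(\bsbI-\bsbA\bsbA^T))$ followed by conditioning on $\bsb{w}$, and your $\delta$-regularization is a valid way to cover a singular $\bsbA\bsbA^T$.

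Drop the ``second, equivalent way,'' however, because its key claim is false. The slice of $R-\bsb{w}$ by the column space $V$ of $\bsbSig$ need not be symmetric about the projection of $-\bsb{w}$. For example, take $\bsbA=(1,0)^T$, $R=\{(x,y):|y|\le 1,\ |x-y|\le 1\}$ and $\bsb{w}=(0,-1/2)^T$: the slice is $[-3/2,1/2]\times\{0\}$, while the projected point is the origin. In general, off-center slices are not translates of $R\cap V$ either, so Anderson's inequality on $V$ does not apply directly there. The clean statement for the degenerate case is that Anderson's inequality holds verbatim for degenerate centered Gaussians, because $\bsb{u}\mapsto\EP[\bsbA\bsbeps\in R+\bsb{u}]$ is even and log-concave.
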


This result follows from a standard application of Anderson's inequality, observing that    $\bsbI - \bsbA \bsbA^T$ is positive semidefinite; see, for example,
Corollary 4.8 in \cite{AlbertsKhoshnevisan2018}.

Now, by setting   $R$ as the $\ell_\infty$-polytope and noting  that $\| ( \tilde \bsbX^{*,\perp}  )^T\Proj^{*,\perp}\|_2 \le \|  \tilde \bsbX\|_2$,  $\|   \bsbZ ^{+}  \|_2\le 1/\sqrt{\mu_0}$, $\sum_{1\le j \le p}\overunderline {M}_j^* = pm - M^*$ and $\sum_{j\in \mathcal J^*} (2+M_j^*)+p -  J^{*}
  = df^*$,
 the conclusion of the corollary can be derived from the standard Gaussian tail bounds.
\end{proof}

\subsection{Proof of Theorem \ref{th:acc}}
\begin{proof}
Let $\phi(\cdot)$ denote $\|\cdot\|_2^2/2$, which is differentiable.
 The desired result can be obtained from the proof of Part (ii) of Theorem 6 in \cite{Shebregman2021},  which   requires only  directional differentiability of $L$ and $\phi$ and incorporates a more flexible linearization term.
For the sake of completeness, we provide a proof below, which holds for any differentiable function $\phi$.

For convenience, let $h_{t}(\bsb{\beta}) \allowbreak= f(\bsb{\beta}) - \breg_{\bar\psi_t}(\bsb{\beta},\bsb{\gamma}^{(t)})$. Applying Lemma A.2 in  \cite{Shebregman2021} to \eqref{acc2-alg2} yields
   $(\bsb{\Delta}_f - \breg_{\breg_{\bar\psi_t}(\cdot,\bsb\gamma^{(t)})} + \theta_t\rho_t\breg_{\breg_{\phi}(\cdot,\bsb\alpha^{(t)})})(\bsb{\beta}, \bsb{\alpha}^{(t+1)})
\le h_{t}(\bsb{\beta}) + \theta_t\rho_t\breg_{\phi}(\bsb{\beta},\bsb{\alpha}^{(t)}) - h_{t}(\bsb{\alpha}^{(t+1)}) - \theta_t\rho_t\breg_{\phi}(\bsb{\alpha}^{(t+1)},\bsb{\alpha}^{(t)})$, or
\begin{equation} \label{tri_acc2}
\begin{split}
&h_{t}(\bsb{\alpha}^{(t+1)}) -h_{t}(\bsb{\beta}) + \theta_t\rho_t\breg_{\phi}(\bsb{\alpha}^{(t+1)},\bsb{\alpha}^{(t)})
\le\theta_t\rho_t\breg_{\phi}(\bsb{\beta},\bsb{\alpha}^{(t)}) -  (\theta_t\rho_t\breg_{\breg_{\phi}(\cdot,\bsb\alpha^{(t)})} + \bsb{\Delta}_{f(\cdot) - \breg_{\bar\psi_t}(\cdot,\bsb\gamma^{(t)})})(\bsb{\beta}, \bsb{\alpha}^{(t+1)})
\end{split}
\end{equation}
  for any  $ \bsb\beta$.
Multiplying \eqref{tri_acc2}   by $\theta_t$    and adding it to
$\mathbf C_{h_t}(\bsb{\alpha}^{(t+1)},\bsb{\beta}^{(t)},\theta_t)= \theta_t h_{t}(\bsb{\alpha}^{(t+1)}) + (1-\theta_t)h_{t}(\bsb{\beta}^{(t)}) - h_{t}(\bsb{\beta}^{(t+1)})
$, we obtain \begin{equation}\nonumber
\begin{split}
&h_{t}(\bsb{\beta}^{(t+1)}) - (1-\theta_t)h_{t}(\bsb{\beta}^{(t)}) - \theta_t h_{t}(\bsb\beta) \\& + \theta_t^2\rho_t\breg_{\phi}(\bsb{\alpha}^{(t+1)},\bsb{\alpha}^{(t)})  + \mathbf C_{h_t}(\bsb{\alpha}^{(t+1)},\bsb{\beta}^{(t)},\theta_t)  + \theta_t^2\rho_t\breg_{\breg_{\phi}(\cdot,\bsb\alpha^{(t)})-\phi(\cdot)}(\bsb\beta,\bsb\alpha^{(t+1)})\\
\le\,&\theta_t^2\rho_t\breg_{\phi}(\bsb{\beta},\bsb{\alpha}^{(t)}) - (\theta_t^2\rho_t\breg_{\phi} + \theta_t\bsb{\Delta}_{f(\cdot)-\breg_{\bar\psi_t}(\cdot,\bsb\gamma^{(t)})})(\bsb{\beta},\bsb{\alpha}^{(t+1)}),
\end{split}
\end{equation}
and so
\begin{equation}\label{befrec1_acc2}
\begin{split}
&f(\bsb{\beta}^{(t+1)})-f(\bsb{\beta}) - (1-\theta_t)[f(\bsb{\beta}^{(t)})-f(\bsb{\beta})]  +\theta_t\breg_{\bar\psi_t}(\bsb{\beta},\bsb{\gamma}^{(t)})  \\
&  + \theta_t \{ (\breg_{f(\cdot)-\breg_{\bar\psi_t}(\cdot,\bsb\gamma^{(t)})}+\theta_t\rho_t\breg_{\breg_{\phi}(\cdot,\bsb\alpha^{(t)})-\phi(\cdot)})(\bsb\beta,\bsb\alpha^{(t+1)})\}+ R_t
\\
\le\,&\theta_t^2\rho_t(\breg_{\phi}(\bsb{\beta},\bsb{\alpha}^{(t)}) - \breg_{\phi}(\bsb{\beta},\bsb{\alpha}^{(t+1)})), \forall t\ge 0
\end{split}
\end{equation}
where  $R_t$ is given by
\begin{align*}
& \theta_t^2\rho_t\breg_{\phi}(\bsb{\alpha}^{(t+1)},\bsb{\alpha}^{(t)}) - \breg_{\bar\psi_t}(\bsb{\beta}^{(t+1)},\bsb{\gamma}^{(t)}) + (1-\theta_t)\breg_{\bar\psi_t}(\bsb{\beta}^{(t)},\bsb{\gamma}^{(t)}) + \mathbf C_{f(\cdot)-\breg_{\bar\psi_t}(\cdot,\bsb\gamma^{(t)})}(\bsb\alpha^{(t+1)},\bsb\beta^{(t)},\theta_t) \\
= \,& \theta_t^2\rho_t\breg_{\phi}(\bsb{\alpha}^{(t+1)},\bsb{\alpha}^{(t)})- \breg_{\bar\psi_t}(\bsb{\beta}^{(t+1)},\bsb{\gamma}^{(t)}) + (1-\theta_t)\breg_{\bar\psi_t}(\bsb{\beta}^{(t)},\bsb{\gamma}^{(t)})   + (\lambda\mathbf C_{ \range^{(b)} }+  \mu_t \mathbf C_{\phi})(\bsb\alpha^{(t+1)},\bsb\beta^{(t)},\theta_t),
\end{align*}
using the idempotence and linear properties of $\breg, \mathbf C$ as derived in \cite{Shebregman2021} and the differentiability of $L$ and $\phi$.

 \eqref{befrec1_acc2} can be rewritten as
\begin{equation}\label{befrec1_acc2-2}
\begin{split}
&f(\bsb{\beta}^{(t+1)})-f(\bsb{\beta}) - (1-\theta_t)[f(\bsb{\beta}^{(t)})-f(\bsb{\beta})]+ R_t
   \\
& +\theta_t\breg_{\bar\psi_t}(\bsb{\beta},\bsb{\gamma}^{(t)})+ \theta_t\breg_{f(\cdot)-\breg_{L}(\cdot,\bsb\gamma^{(t)})}(\bsb\beta,\bsb\alpha^{(t+1)}) \\ & + \theta_t  (\mu_t\breg_{\breg_{\phi}(\cdot,\bsb\gamma^{(t)})-\phi(\cdot)}+\theta_t \rho_t\breg_{\breg_{\phi}(\cdot,\bsb\alpha^{(t)})-\phi(\cdot)})(\bsb\beta,\bsb\alpha^{(t+1)})
\\
\le\,&\theta_t^2\rho_t\breg_{\phi}(\bsb{\beta},\bsb{\alpha}^{(t)}) -\theta_t^2(\rho_t +\frac{\mu_t}{\theta_t}) \breg_{\phi}(\bsb{\beta},\bsb{\alpha}^{(t+1)}).
\end{split}
\end{equation}
Again, since $L$ and $\phi$ are differentiable,
\begin{align*}
&  \breg_{\bar\psi_t}(\bsb{\beta},\bsb{\gamma}^{(t)})+  \breg_{f(\cdot)-\breg_{L}(\cdot,\bsb\gamma^{(t)})}(\bsb\beta,\bsb\alpha^{(t+1)}) \\ &+  (\mu_t\breg_{\breg_{\phi}(\cdot,\bsb\gamma^{(t)})-\phi(\cdot)}+\theta_t \rho_t\breg_{\breg_{\phi}(\cdot,\bsb\alpha^{(t)})-\phi(\cdot)})(\bsb\beta,\bsb\alpha^{(t+1)})\\
=\, &\breg_{\bar\psi_t}(\bsb{\beta},\bsb{\gamma}^{(t)})+ \lambda\breg_{\range^{(b)}}(\bsb\beta,\bsb\alpha^{(t+1)}) + 0 =\mathcal E_t(\bsb\beta).
\end{align*}
Therefore, we have
\begin{equation}\nonumber
\begin{split}
& f(\bsb{\beta}^{(t+1)})-f(\bsb{\beta})   +\theta_t^2(\rho_t +\frac{\mu_t}{\theta_t}) \breg_{\phi}(\bsb{\beta},\bsb{\alpha}^{(t+1)})+ \theta_t \mathcal E_t(\bsb\beta)  +  R_t
\le (1-\theta_t)[f(\bsb{\beta}^{(t)})-f(\bsb{\beta})]+\theta_t^2\rho_t \breg_{\phi}(\bsb{\beta},\bsb{\alpha}^{(t)}), \forall t\ge 0
\end{split}
\end{equation}
and from  \eqref{acc2_search2},
\begin{equation}\nonumber
\begin{split}
& f(\bsb{\beta}^{(t+1)})-f(\bsb{\beta})   +\theta_t^2(\rho_t +\frac{\mu_ t}{\theta_t}) \breg_{\phi}(\bsb{\beta},\bsb{\alpha}^{(t+1)})+ \theta_t \mathcal E_t(\bsb\beta)  +  R_t \\
\le\,& (1-\theta_t)\big[f(\bsb{\beta}^{(t)})-f(\bsb{\beta})+\theta_{t-1}^2(\rho_{t-1} +\frac{\mu_{t-1}}{\theta_{t-1}})\breg_\phi(\bsb{\beta},\bsb{\alpha}^{(t)})\big], \forall t\ge 1.
\end{split}
\end{equation}
Given that   $1-\theta_t> 0, \forall t\ge 1$, which follows from  $\rho_t> 0, \mu_t\ge 0$,  the conclusion in the theorem  can be obtained  by a recursive argument and noting that   $ R_T +\theta_T \mathcal E_T(\bsb\beta )  +(1-\theta_T)(R_{T-1} + \theta_{T-1} \mathcal E_{T-1}(\bsb\beta)  )+ \cdots (1-\theta_T)\cdots (1-\theta_1)(R_{0}+ \theta_{0} \mathcal E_{0}(\bsb\beta )) = \sum_{t=0}^T (\prod_{s=t+1}^T (1-\theta_s))( R_t + \theta_t \mathcal E_t(\bsb{\beta} )$). \end{proof}

\subsection{Dual Seminorm Analysis}
\label{subsec:dualnormanal}

   Given a block vector   $ \bsba =[\bsba_1^T, \ldots, \bsba_p^T]^T
$ (which aligns with the structure of  $\bsbb^*$), we     introduce a short notation
\begin{align}\bsba_{ \overunderline {\mathcal M}^*_{\mathcal J^*} }& = \Big[ \bsba_{j }[\overunderline {\mathcal M}^*_j ] \Big] _{j \in \mathcal J^*}\in \mathbb R^{\sum_{j\in \mathcal J^*} \overunderline M_j^*},
\end{align}
by removing the components indexed by $\mathcal M_j^*$ ($j\in \mathcal J^*$), where the block vector $\bsba_{ \overunderline {\mathcal M}^*_{\mathcal J^*} } $  consists of $J^*$ blocks, each   of size $\overunderline  M_j^*$.
Similarly, we use $\bsba_{ \overunderline {\mathcal M}^* }  $ to denote the block vector $ \big[ \bsba_{j }[\overunderline {\mathcal M}^*_j ] \big] _{1\le j \le p }$.
Let $\uprho_0 $  denote the maximum column $\ell_2$-norm of  $\tilde \bsbX$:
\begin{align}
\uprho_0 :=    \| \tilde \bsbX^T\|_{2,\infty}=  \max_{1\le k \le m} \|\bsbX_k^{\circ  T}\|_{2,\infty}    , \label{uprho0choice}
\end{align}
which  is slightly different from the $\varrho_0$ as given  in \eqref{rho0choice} (and satisfies $\uprho_0 \le \varrho_0$).  Furthermore, due to the special structure (column orthogonality) of $\tilde \bsbX_j$, it can also be defined through the operator norm:
$$
\uprho_{0} = \max_{1\le j \le p} \| \tilde \bsbX_j\|_2.
$$
If
 we set $\rho_k =   \| \bsbX_k^T\|_{2,\infty}$ in  implementation, $\uprho_0 =1$.

For comparison with Theorem \ref{thm:seesawerr-scvx}, we present a theorem where  $l_0$ or $\tilde l_0$ is strongly convex in the systematic component, without assuming \eqref{gradLip}.  Yet a more general   result akin to   Theorem \ref{thm:seesawerr} is attainable according to the proof.
\begin{theorem}\label{th:dualmethoderrrate}
Assume the loss $\tilde l_0$ is differentiable  and   $\nu$-strongly convex in  $\bsbe$ (as is the case in regression,  where $\nu=1$).

Assume for some $\vartheta, \kappa_0>0 $ the following regularity condition holds:
\begin{align}
  &\frac{\kappa_0^2}{J^*}   \{ \range^{(b)}(   \bsba_{ \overunderline {\mathcal M}^*}) \}^2   \le     \| \tilde \bsbX \bsba\|_2^2
\label{regcond-bound}
\end{align}
   for any  $\bsba$ restricted by
 \begin{align}   \vartheta       \sum_{j=1}^p \|   \bsba_j[ \overunderline {\mathcal M}_j^* ]  \|_{\range}    \ge       \sum_{j=1}^p  \min_{  \overline {\mathcal M} (\bsbb_j^* + \bsba_j)}     \bsba_j[k] -  \max_{  \underline {\mathcal M} (\bsbb_j^* + \bsba_j)} \bsba_j[k].
\label{regcond-region0}
\end{align}
Let  $\lambda = (A  /\vartheta) \lambda_0$ with the constant $A\ge 2$\   and   $\lambda_0$ satisfying
\begin{align}
&2 \lambda_0 \ge    \max_{1\le j\le p} \|\tilde\bsbX_j  ^T\Proj^{*,\perp} \bsbeps\|_{1}. \label{lambda0choice}
\end{align}
Then
\begin{align}
   \|  \tilde \bsbX (\hat \bsbb -     \bsbb^*)\|_2^2    \lesssim    \frac{1}{\nu(\nu \wedge 1)}   \|  \Proj^{*} \bsbeps \|_2^2  +   \frac{ \lambda_0^2 J^*}{\nu \kappa_0^2}.\label{th:errdualmethod}
\end{align}
\end{theorem}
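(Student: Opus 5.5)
Starting from the basic inequality \eqref{seesawbasiceq1} (with $\lambda$ now $(A/\vartheta)\lambda_0$) and using $\nu$-strong convexity, $2\bregs_{\tilde l_0}(\hat\bsbe,\bsbe^*)\ge \nu\|\tilde\bsbX\bsba\|_2^2$ with $\bsba=\hat\bsbb-\bsbb^*$, I will treat the stochastic term with the dual seminorm of Theorem \ref{thm:dualnorm} rather than the seesaw device.

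\textbf{Step 1: split the noise.} Write $\langle\bsbeps,\tilde\bsbX\bsba\rangle=\langle\Proj^*\bsbeps,\tilde\bsbX\bsba\rangle+\langle\tilde\bsbX^T\Proj^{*,\perp}\bsbeps,\bsba\rangle$. The first piece is bounded by $\frac{1}{\nu}\|\Proj^*\bsbeps\|_2^2+\frac{\nu}{4}\|\tilde\bsbX\bsba\|_2^2$.

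\textbf{Step 2: bound the second piece.} As in the pattern recovery proof, $\Proj^{*,\perp}\tilde\bsbX_j$ has zero columns on $\mathcal M_j^*$. It is also orthogonal to $\tilde\bsbX_j[:,\overline{\mathcal M}_j^*]\bsb1$ and $\tilde\bsbX_j[:,\underline{\mathcal M}_j^*]\bsb1$. Hence $\bsbz_j:=\tilde\bsbX_j^T\Proj^{*,\perp}\bsbeps$ is supported on $\overunderline{\mathcal M}_j^*$ and sums to zero on each of $\overline{\mathcal M}_j^*$ and $\underline{\mathcal M}_j^*$. In particular $\langle\bsb1,\bsbz_j[\overunderline{\mathcal M}_j^*]\rangle=0$. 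Theorem \ref{thm:dualnorm} on the subvector then gives
\[
\langle\bsbz_j,\bsba_j\rangle\le \tfrac12\|\bsbz_j\|_1\,\range(\bsba_j[\overunderline{\mathcal M}_j^*])\le\lambda_0\,\range(\bsba_j[\overunderline{\mathcal M}_j^*])
\]
by \eqref{lambda0choice}. Summing over $j$, the second piece is at most $\lambda_0\range^{(b)}(\bsba_{\overunderline{\mathcal M}^*})$.

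\textbf{Step 3: the penalty difference.} Using \eqref{dddeltarel} and Theorem \ref{thm:subgrad}, I rewrite
\[
\lambda\range^{(b)}(\bsbb^*)-\lambda\range^{(b)}(\hat\bsbb)-\lambda\sum_j\breg_\range(\bsbb_j^*,\hat\bsbb_j)=\lambda\sum_j\delta\range(\hat\bsbb_j;\bsbb_j^*-\hat\bsbb_j).
\]
This equals $-\lambda\sum_j\bigl(\min_{\overline{\mathcal M}(\bsbb_j^*+\bsba_j)}\bsba_j[k]-\max_{\underline{\mathcal M}(\bsbb_j^*+\bsba_j)}\bsba_j[k]\bigr)$, which is exactly $-\lambda$ times the right-hand side $D$ of \eqref{regcond-region0}. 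Altogether,
\[
\tfrac{3\nu}{4}\|\tilde\bsbX\bsba\|_2^2\le \tfrac1\nu\|\Proj^*\bsbeps\|_2^2+\lambda_0 S-\lambda D,\qquad S:=\range^{(b)}(\bsba_{\overunderline{\mathcal M}^*}).
\]

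\textbf{Step 4: case analysis, the main obstacle.} If $\vartheta S<D$, then $\lambda_0S-\lambda D\le\lambda_0S(1-A)<0$ since $A\ge2$, so the term can be dropped and the bound follows from the $\|\Proj^*\bsbeps\|_2^2$ part alone. Otherwise $\bsba$ lies in the region \eqref{regcond-region0}, so \eqref{regcond-bound} applies: $S\le\sqrt{J^*}\|\tilde\bsbX\bsba\|_2/\kappa_0$.

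The delicate point is that $D$ may be negative in this branch, so $-\lambda D$ cannot simply be discarded. I will first check whether $D\ge0$ always holds; I doubt it in general. My fallback is to bound $-D\le$ a multiple of $S$, plus terms on $\mathcal J^{*c}$ where the extremes coincide and which are controlled again by $S$. This gives $\lambda_0S-\lambda D\lesssim \lambda_0(1+A/\vartheta)S$. Hence I expect the constant to depend on $A$ and $\vartheta$, and I would absorb such constants into $\lesssim$.

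With $S\lesssim\sqrt{J^*}\|\tilde\bsbX\bsba\|_2/\kappa_0$, AM-GM gives $\lambda_0S\le \frac{\nu}{4}\|\tilde\bsbX\bsba\|_2^2+C\lambda_0^2J^*/(\nu\kappa_0^2)$. Rearranging yields \eqref{th:errdualmethod}. The extra factor $1/(\nu\wedge1)$ on the $\Proj^*$ term comes from a slightly lossy Young split when $\nu>1$.
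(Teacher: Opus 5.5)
Your argument follows the paper's route: the basic inequality \eqref{seesawbasiceq1}, the $\Proj^*/\Proj^{*,\perp}$ split, the dual-seminorm bound $\langle \tilde\bsbX^T\Proj^{*,\perp}\bsbeps,\bsba\rangle\le\lambda_0\range^{(b)}(\bsba_{\overunderline{\mathcal M}^*})$, and a case analysis on \eqref{regcond-region0}. The paper merely packages the last step through the generic condition \eqref{regcond0} and its sufficient form \eqref{regcond-overall}.

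The step you leave as a ``fallback'' is the only one still unproved. It is also the step the paper's ``through case analysis'' passes over. Your doubt is justified: for $j\in\mathcal J^*$, $D_j<0$ whenever $\hat\bsbb_j$ keeps the extreme pattern of $\bsbb_j^*$ but has a shrunken range. The step closes in one line. For $j\in\mathcal J^*$, \eqref{dddeltarel} and $\bregs_{\range}\ge 0$ give
\begin{align*}
-D_j&=\delta\range(\hat\bsbb_j;\bsbb_j^*-\hat\bsbb_j)=-2\bregs_{\range}(\hat\bsbb_j,\bsbb_j^*)-\delta\range(\bsbb_j^*;\bsba_j)\\
&\le \min_{k\in\underline{\mathcal M}_j^*}\bsba_j[k]-\max_{k\in\overline{\mathcal M}_j^*}\bsba_j[k]\le \range(\bsba_j[\overunderline{\mathcal M}_j^*]).
\end{align*}
For $j\in\mathcal J^{*c}$ one has $D_j=\range(\hat\bsbb_j)\ge 0$. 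Hence $-D\le S$, and on the region $\lambda_0S-\lambda D\le(1+A/\vartheta)\lambda_0 S$, exactly as you anticipated.

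With that step in place, your argument gives
$\|\tilde\bsbX\bsba\|_2^2\le 2\nu^{-2}\|\Proj^*\bsbeps\|_2^2+2(1+A/\vartheta)^2\lambda_0^2J^*/(\nu^2\kappa_0^2)$. No lossy Young split is needed for the first term, since $\nu^{-2}\le 1/(\nu(\nu\wedge 1))$. For $\nu\ge 1$ this implies \eqref{th:errdualmethod}.

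For $\nu<1$ your second term exceeds the displayed one by a factor $1/\nu$, but yours is the correct order. Take $p=1$, $m=2$, $\tilde\bsbX=\bsbI$, $\bsbb^*=(1,-1)^T$ and $\bsby=\bsbe^*$, so that $\bsbeps=\bsb0$, $J^*=1$, and $\kappa_0^2=1/2$ is admissible for every $\vartheta$. Use the $\nu$-strongly convex loss $\frac{\nu}{2}\|\bsbe-\bsby\|_2^2$. Then $\hat\bsbb$ is the range prox of $\bsby$ at level $\lambda/\nu$ (Theorem \ref{thm:prox}). For $\lambda<\nu$ its error is $2\lambda^2/\nu^2=2(A/\vartheta)^2\lambda_0^2/\nu^2$. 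This is not bounded by a fixed multiple of $\lambda_0^2J^*/(\nu\kappa_0^2)$ as $\nu\to 0$, and the constant must depend on $A/\vartheta$.

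The paper's $\nu$- and $\vartheta$-free form comes from choosing $F=\vartheta^2J^*/\kappa_0^2$ when deducing \eqref{regcond-overall} from \eqref{compcondboundproof}. That choice drops both the $\nu^{-1/2}$ factor there and the $-D$ term. So keep your constants: they are the honest ones.
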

\begin{corollary}\label{cor:dualmethod}
In the context  of  Theorem \ref{th:dualmethoderrrate}, assume the  $\psi_2$-norm  of the random vector $\bsbeps$  is bounded by $\sigma$. For   $\lambda_0= c_{0}\uprho_0\sigma     m \sqrt{ \log( m p)}$,  the error bound in \eqref{th:errdualmethod},   replacing  $\|  \Proj^{*} \bsbeps \|_2^2  $  with $\sigma^2 (M^* + p)$,
 holds with probability at least $1- C    p^{-c   } $, where $c_0, c, C$ are sufficiently large     constants.

Furthermore, in the special case where  $\bsbeps=[\epsilon_i]$ comprises  i.i.d. $\mathcal N(0,\sigma^2)$ entries,     setting  $\lambda_0$ to $ c_{0} \uprho_0\sigma    ( m+ \sqrt{   m  \log p})$ is sufficient to maintain the same     probability level.
 \end{corollary}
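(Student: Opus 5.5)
Since the statement is Corollary \ref{cor:dualmethod}, the plan is to check that the two random events needed by Theorem \ref{th:dualmethoderrrate} hold with the stated probabilities. The first is the choice condition \eqref{lambda0choice}, $2\lambda_0\ge \max_j\|\tilde\bsbX_j^T\Proj^{*,\perp}\bsbeps\|_1$. The second is a high-probability bound $\|\Proj^*\bsbeps\|_2^2\lesssim \sigma^2(M^*+p)$. Once both hold, we substitute into \eqref{th:errdualmethod} and are done.

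\textbf{Step 1: the degrees-of-freedom term.} For $\|\Proj^*\bsbeps\|_2^2$ we reuse the Hanson--Wright argument from the proof of Theorem \ref{thm:seesawerr}, namely \eqref{HWboundPstar}. Since $\rank(\Proj^*)\le M^*+2p$, taking $t\asymp M^*+p$ shows that $\|\Proj^*\bsbeps\|_2^2\le C\sigma^2(M^*+p)$ fails with probability at most $\exp(-cp)$. This is far below $p^{-c}$.

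\textbf{Step 2: the $\ell_1$ term, sub-Gaussian case.} We use the crude bound $\|\tilde\bsbX_j^T\Proj^{*,\perp}\bsbeps\|_1\le m\max_k|\langle \Proj^{*,\perp}\tilde\bsbX_j[:,k],\bsbeps\rangle|$. Each inner product is sub-Gaussian with $\psi_2$-norm at most $\sigma\|\tilde\bsbX_j[:,k]\|_2\le\sigma\uprho_0$, because projections are contractions and, by \eqref{uprho0choice}, $\uprho_0$ bounds the column norms. A union bound over the $pm$ pairs $(j,k)$ then gives
$$\EP\big[\max_{j,k}|\cdot|> c\,\uprho_0\sigma\sqrt{\log(mp)}\big]\le C(mp)^{-c}\le Cp^{-c}.$$
So $\lambda_0=c_0\uprho_0\sigma m\sqrt{\log(mp)}$ satisfies \eqref{lambda0choice}.

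\textbf{Step 3: the Gaussian refinement.} Under i.i.d.\ Gaussian noise we bound each block's $\ell_1$ norm directly and avoid the factor $m$ from the $\ell_\infty$ step. Write $\|\bsbz_j\|_1=\sup_{\bsbs\in\{\pm1\}^m}\langle\bsbs,\bsbz_j\rangle$ with $\bsbz_j=\tilde\bsbX_j^T\Proj^{*,\perp}\bsbeps$. The map $\bsbeps\mapsto\|\bsbz_j\|_1$ is Lipschitz with constant
$$\sup_{\bsbs}\|\Proj^{*,\perp}\tilde\bsbX_j\bsbs\|_2\le\|\tilde\bsbX_j\|_2\sqrt m=\uprho_0\sqrt m,$$
using the operator-norm form of $\uprho_0$ valid because the columns of $\tilde\bsbX_j$ are orthogonal. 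Its mean satisfies $\EE\|\bsbz_j\|_1\le\sqrt{2/\pi}\,\sigma\uprho_0 m$. Gaussian Lipschitz concentration then gives
$$\EP\big[\|\bsbz_j\|_1>\sigma\uprho_0(m+t\sqrt m)\big]\le e^{-t^2/2}.$$
Taking $t=c\sqrt{\log p}$ and a union bound over $j\le p$ yields \eqref{lambda0choice} with $\lambda_0=c_0\uprho_0\sigma(m+\sqrt{m\log p})$, with probability at least $1-Cp^{-c}$.

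The main obstacle is Step 3. The Lipschitz constant $\uprho_0\sqrt m$ must not pick up an extra factor, which relies on the block-diagonal (column-orthogonal) structure of $\tilde\bsbX_j$. If that fails, the fallback is Sudakov--Fernique comparison over the $2^m$ sign vectors, which gives a $\sqrt{m\cdot m}$ mean bound and the same $\sqrt{m\log p}$ fluctuation. Steps 1 and 2 are routine.
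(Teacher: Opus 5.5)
Your Steps 1 and 2 match the paper. It likewise reuses \eqref{HWboundPstar} to control $\|\Proj^*\bsbeps\|_2^2$. It then bounds $\max_j\|\tilde\bsbX_j^T\Proj^{*,\perp}\bsbeps\|_1$ by $m$ times a maximum over $pm$ sub-Gaussian inner products, each with $\psi_2$-norm of order $\uprho_0\sigma$, followed by a union bound.

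The Gaussian refinement is where you take a different route, and yours is correct.

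The paper uses Anderson's inequality (Lemma \ref{lem:gaussconcen}). Because $\|\tilde\bsbX_j^T\Proj^{*,\perp}\|_2\le\uprho_0$ and the $\ell_1$-ball is convex, closed and symmetric, this lemma compares $\tilde\bsbX_j^T\Proj^{*,\perp}\bsbeps/\uprho_0$ with a vector $\bsbz$ of $m$ i.i.d.\ $\mathcal N(0,\sigma^2)$ entries. The paper then controls $\|\bsbz\|_1-\sqrt{2/\pi}\,\sigma m$ as a sum of $m$ independent centered sub-Gaussian variables.

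You instead apply Gaussian Lipschitz concentration directly to $\bsbeps\mapsto\|\tilde\bsbX_j^T\Proj^{*,\perp}\bsbeps\|_1$. Your Lipschitz constant $\uprho_0\sqrt m$ and mean bound $\sqrt{2/\pi}\,\sigma\uprho_0 m$ are both right. They give the same deviation scale $\sigma\uprho_0\sqrt m\,t$ as the paper.

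Both routes rest on the same structural fact: $\|\tilde\bsbX_j\|_2\le\uprho_0$, which follows from the column orthogonality of $\tilde\bsbX_j$. What each buys:
\begin{itemize}
\item Your argument is a single self-contained step. It avoids the comparison lemma and would work unchanged for any other Lipschitz functional.
\item The paper's reduction to an i.i.d.\ vector works for every symmetric convex set. That is why the same lemma also yields the exact product-form bounds over $\ell_\infty$-polytopes in Corollary \ref{cor:pattrecovGaussian}.
\end{itemize}

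The paper records the stronger tail $C\exp(-c(m+\log p))$. You obtain the same by taking $t\asymp\sqrt m+\sqrt{\log p}$ instead of $t\asymp\sqrt{\log p}$, though $1-Cp^{-c}$ is all the corollary requires.

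Your Sudakov--Fernique fallback is unnecessary. It would also not deliver the $\sqrt{m\log p}$ fluctuation on its own: without column orthogonality, $\sup_{\bsb s}\|\Proj^{*,\perp}\tilde\bsbX_j\bsb s\|_2$ can be of order $m\uprho_0$, and Sudakov--Fernique only controls the mean. Since $\tilde\bsbX_j$ is block diagonal by construction, this case never arises.
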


In the following, we compare   the results obtained using the dual approach  with those in Section \ref{subsec:errrate}.

The corollary guiding  the selection       of  $\lambda$ or $\lambda_0$  typically indicates  a suboptimal rate
       $$ \sigma m \sqrt{ \log m+ \log p}.$$     Yet  in the special   case of i.i.d. Gaussians, it seems possible   to {significantly} lower the parameter choice  to  $$\sigma (m + \sqrt{m \log p}),$$
although the resulting error rate \eqref{subopterrrate2} remains unsatisfactory.

Technically, applying the \emph{dual seminorm} requires projecting the stochastic term onto the kernel component space. Thus, \eqref{lambda0choice} incorporates    $\Proj^{*,\perp} $, which can introduce dependencies that elevate the threshold level in various scenarios.     In contrast, the proof techniques employed in  Theorem \ref{thm:seesawerr-scvx}      successfully establish   the   optimal choice  of $\lambda_0$  for a  general {subGaussian} $\bsbeps$.

Observe that the requirement of  $\vartheta>0$ is necessary in Theorem \ref{th:dualmethoderrrate}. The left-hand term of \eqref{regcond-region0},   composed of the sum of ranges of subvectors corresponding to the extreme values of $\bsbb_j^*$,  (inevitably) arises from applying the dual seminorm.
 But it    results in a notably broader restriction region     than $\mathcal C_0$   in Theorem \ref{thm:seesawerr-scvx}, which does not involve the range of the difference vector $\bsba$ at all.

For a clearer understanding of the error rate,
 we apply Theorem \ref{thm:tau} and     the Cauchy-Schwarz inequality to get
$
\range^{(b)}(   \bsba_{ \overunderline {\mathcal M}^*}) \le (\sum_{j\in \mathcal J^*}\tau( \bsba_{ \overunderline {\mathcal M}_j^*}))^{1/2}   \|  \bsba_{ \overunderline {\mathcal M}^*}\|_2$, which, in the worst-case scenario where \begin{align}
  \tau( \bsba_{ \overunderline {\mathcal M}_j^*}) \le \sqrt 2, \label{normcvt1}
\end{align} leads to   $\range^{(b)}(   \bsba_{ \overunderline {\mathcal M}^*}) \lesssim (J^*)^{1/2}\|  \bsba_{ \overunderline {\mathcal M}^*}\|_2$.   Hence, with  $  \kappa_0\gtrsim 1$,     we can achieve an error rate of $M^* + p+ \lambda_0^2 J^*$,  ignoring trivial factors and constants. According to the corollary,     this simplifies to
\begin{align}
p + m^2 J^*\log m+m^2 J^*   \log p \label{subopterrrate1}
\end{align}
in general subGaussian scenarios, and to
\begin{align}
p+m^2 J^*  + m J^* \log p \label{subopterrrate2}
\end{align}
in the cases of i.i.d. Gaussians.  Both     rates are however suboptimal compared to   $$p +m J^*+  J^* \log p ,$$   derived using our seesaw device in Theorems  \ref{thm:seesawerr-scvx}, \ref{thm:seesawerr}.

Of course, the (semi)norm conversion    in the worst-case scenario \eqref{normcvt1}  tends to be conservative. To achieve the desired rate using Theorem \ref{th:dualmethoderrrate}, one would need
\begin{align}
 \tau( \bsba_{ \overunderline {\mathcal M}_j^*})\asymp \frac{1}{\sqrt m }, \ \forall j \in \mathcal J^*. \label{normcvt2}
\end{align}
Recalling that $\bsba$ in the regularity condition resembles $\hat \bsbb - \bsbb^*$, \eqref{normcvt2} is implied by   \emph{faithful} pattern recovery, as discussed in Section \ref{subsec:pattrec}.  However, this   not only requires
more stringent regularity conditions, but also leads to the less favorable rate for $\lambda$ at $\sigma m\sqrt{\log p + \log m}$.
In contrast, Theorems  \ref{thm:seesawerr-scvx}, \ref{thm:seesawerr}  offer a much more viable approach with fewer requirements and a sharper rate.

\begin{proof}Recall the basic inequality obtained in \eqref{seesawbasiceq1} and the decomposition  $\langle \bsbeps,  \tilde  \bsbX (\hat {\bsbb} - \bsbb^*)\rangle     =    \langle \bsbeps, \Proj^{*}   \tilde  \bsbX (\hat {\bsbb} - \bsbb^*)\rangle+   \langle \bsbeps, \Proj^{*,\perp}   \tilde  \bsbX  \hat {\bsbb}  \rangle
$.
For the first stochastic term,
we employ  H\"older's inequality to obtain \begin{align}
\langle \bsbeps, \Proj^{*}   \tilde  \bsbX (\hat {\bsbb} - \bsbb^*)\rangle \le \|  \Proj^{*} \bsbeps \|_2 \cdot \| \Proj^{*}   \tilde  \bsbX (\hat {\bsbb} - \bsbb^*)\|_2  ,\label{stochterm1}
\end{align}
where
$
 \| \Proj^{*}   \tilde  \bsbX (\hat {\bsbb} - \bsbb^*)\|_2^2= \|  \Proj^{*}     (  \hat {\bsbe} - \bsbe^*) \|_2^2 \   \le  \|   \hat {\bsbe} - \bsbe^* \|_2^2.$

 Let
$
 \bsba =[\bsba_1^T, \ldots, \bsba_p^T]^T= \hat \bsbb - \bsbb^*.
$  It follows from  $\tilde \bsbX \bsbb^*\in \Proj^*,\tilde \bsbX_j \Proj_K \bsba\in \Proj^*,   \tilde \bsbX_j[:,   {\mathcal M}^*_j] \bsba_j[  {\mathcal M}^*_j] \in \Proj^*$,           $\bsbb_j^*[\overline {\mathcal M}^*_j]\in \Proj_K $   and   $\bsbb_j^*[\underline {\mathcal M}^*_j]\in \Proj_K $  for all $j$ that
\begin{align*}
    \Proj^{*,\perp}   \tilde  \bsbX (\hat {\bsbb} - \bsbb^*) & =    \Proj^{*,\perp}\sum_{j\in \mathcal J^*} \tilde \bsbX_j  \hat  \bsbb_j +   \Proj^{*,\perp}\sum_{j\in \mathcal J^{*c}} \tilde \bsbX_j    \hat\bsbb_j  \\
&=    \Proj^{*,\perp}    \sum_{j\in \mathcal J^*}      \tilde \bsbX_j[:, \overunderline {\mathcal M}^*_j]  \hat  \bsbb_j[\overunderline {\mathcal M}^*_j]     +   \Proj^{*,\perp}\sum_{j\in \mathcal J^{*c}} \tilde \bsbX_j  \Proj_{K}^\perp  \hat\bsbb_j\\
&=    \Proj^{*,\perp}    \sum_{j\in \mathcal J^*}      \tilde \bsbX_j[:, \overunderline {\mathcal M}^*_j]   \Proj_K^\perp \hat  \bsbb_j[\overunderline {\mathcal M}^*_j]     +   \Proj^{*,\perp}\sum_{j\in \mathcal J^{*c}} \tilde \bsbX_j  \Proj_{K}^\perp  \hat\bsbb_j\\
&=        \Proj^{*,\perp}    \sum_{j\in \mathcal J^*}      \tilde \bsbX_j[:, \overunderline {\mathcal M}^*_j]   \Proj_K^\perp    \bsba_j[\overunderline {\mathcal M}^*_j]     +   \Proj^{*,\perp}\sum_{j\in \mathcal J^{*c}} \tilde \bsbX_j  \Proj_{K}^\perp    \bsba_j.
\end{align*}
Applying Theorem \ref{thm:dualnorm} gives
\begin{align*}
 \langle \bsbeps, \Proj^{*,\perp}   \tilde  \bsbX (\hat {\bsbb} - \bsbb^*)\rangle
   = &    \sum_{j\in \mathcal J^*}   \langle   (\tilde \bsbX_j[:, \overunderline {\mathcal M}^*_j])^T \Proj^{*,\perp} \bsbeps,  \Proj_K^\perp \bsba_j[\overunderline {\mathcal M}^*_j]  \rangle   + \sum_{j\in \mathcal J^{*c}} \langle \tilde \bsbX_j  ^T\Proj^{*,\perp} \bsbeps,  \Proj_{K}^\perp  \hat \bsbb_j\rangle \\
 \le & \sum_{j\in \mathcal J^*} \lambda_0  \| \bsba_{j  }[\overunderline {\mathcal M}^*_j]\|_{\range}    + { \range}^{*}_{\lambda_0}(\tilde \bsbX_j[:, \overunderline {\mathcal M}^*_j])^T \Proj^{*,\perp} \bsbeps) +  \sum_{j\in \mathcal J^{*c}}\lambda_0 \|   \hat\bsbb_j\|_{\range}+{ \range}^{*}_{\lambda_0}( \tilde \bsbX_j  ^T\Proj^{*,\perp} \bsbeps).
\end{align*}
Thanks to the   orthogonal decomposition, not only do            $ \Proj_K^{\perp} \bsba_j[\overunderline {\mathcal M}^*_j]\in\Proj_{K}^\perp , \Proj_{K}^\perp  \hat\bsbb_j\in \Proj_{K}^\perp $, but $  \bsbX_j[:, \overunderline {\mathcal M}^*_j])^T \Proj^{*,\perp} \bsbeps $ and $ \tilde \bsbX_j  ^T\Proj^{*,\perp} \bsbeps$   also belong to the associated kernel   complements---for example, $\Proj^{*,\perp}\tilde \bsbX_j[:, \overunderline {\mathcal M}^*_j] \bsb1 =\Proj^{*,\perp}\tilde \bsbX_j[:, \overline {\mathcal M}^*_j] \bsb1 +\Proj^{*,\perp} \tilde \bsbX_j[:, \underline {\mathcal M}^*_j] \bsb1 = \bsb0$), preventing the bound from  becoming   $+\infty$.
Based on the dual seminorm expression derived in Theorem \ref{thm:dualnorm}, choosing   $\lambda_0$ according to
\begin{align}
&2 \lambda_0 \ge\max_{j\in \mathcal J^*}\|   (\tilde \bsbX_{j }[:,\overunderline {\mathcal M}^*_j] ) ^T \Proj^{*,\perp} \bsbeps\|_1 , 2\lambda_0 \ge  \max_{j\in \mathcal J^{*c}} \|\tilde \bsbX_j  ^T\Proj^{*,\perp} \bsbeps\|_{1}, \label{lambda0raw}
\end{align}
ensures
\begin{align}
\langle \bsbeps, \Proj^{*,\perp}   \tilde  \bsbX (\hat {\bsbb} - \bsbb^*)\rangle  & \le    \lambda_0 \sum_{j\in \mathcal J^*}  \| \bsba_{j  }[\overunderline {\mathcal M}^*_j]\|_{\range}  + \lambda_0 \sum_{j\in \mathcal J^{*c}} \|   \hat\bsbb_j\|_{\range}=\lambda_0 \range^{(b)}(   \bsba_{ \overunderline {\mathcal M}^*}). \label{stochterm2}
\end{align}
Furthermore, because $$(\tilde \bsbX_j[:,k])^T\Proj^{*,\perp} \bsbeps =\langle\Proj^{*,\perp} \tilde \bsbX_j[:,k], \bsbeps\rangle =0,\forall k \in \mathcal M_j^*$$ \eqref{lambda0raw} is equivalent to
\begin{align}
&2 \lambda_0 \ge    \max_{1\le j\le p} \|\tilde\bsbX_j  ^T\Proj^{*,\perp} \bsbeps\|_{1}. \label{lambda0choice1}
\end{align}
The bound in \eqref{stochterm2} excludes $\bsba_j[\mathcal M_j^*]$ for each $j \in \mathcal J^*$ in the first term, and removes $\bsbb_j^*$ from $\bsba_j$ for each $j \in \mathcal J^{*c}$   in the second term.

Combining \eqref{stochterm1}, \eqref{stochterm2} and  \eqref{lambda0choice1} yields a bound on the stochastic term  \begin{align}
 &\langle \bsbeps, \tilde \bsbX (\hat {\bsbb} - \bsbb^*)\rangle  \le  \frac{1}{a} \Breg_2 (\Proj^{*} \hat \bsbe,\Proj^{*}  \bsbe^*) + \frac{a}{2} \|  \Proj^{*} \bsbeps \|_2^2+ \lambda_0 \range^{(b)}(   \bsba_{ \overunderline {\mathcal M}^*}) \label{stochoverallbound}
\end{align}
 for any  $a>0$.
Let  $\lambda =   A\lambda_0/\theta$ or $\lambda_0 = \theta \lambda/A$ with $\theta>0$ and constant $A\ge 1$.
Invoke   the following condition  \begin{align}
& \vartheta\lambda\range^{(b)}(   \bsba_{ \overunderline {\mathcal M}^*}) + \lambda \range^{(b)} (\bsbb^*) -  \lambda \range^{(b)} (\hat \bsbb ) \le   (2- \alpha_{1})\bregs_{\tilde l_0}   (\tilde \bsbX \hat \bsbb, \tilde \bsbX   \bsbb^*)  \notag \\ &- \alpha_{2} \Breg_2    (\Proj^{*} \tilde \bsbX \hat \bsbb, \Proj^{*} \tilde \bsbX   \bsbb^*)   +(1 -\alpha')   \lambda \sum_{j=1}^p \breg_{\range}(\bsbb_j^*,\hat \bsbb_j )  +C(F \lambda^2 + \|  \Proj^{*} \bsbeps \|_2^2)\label{regcond0}
\end{align}
for some $\vartheta> 0, \alpha>0$, $F>0$, $\alpha'\ge 0$,   and some constant  $C>0$. Then,  with   $a = 2/\alpha_{2}, \theta = \vartheta/2$, we have
\begin{align}
& \alpha_{1}\bregs_{\tilde l_0}   (\tilde \bsbX \hat \bsbb, \tilde \bsbX   \bsbb^*) \vee \alpha_{2} \|\Proj^{*}    \tilde \bsbX (\hat \bsbb -     \bsbb^*)\|_2^2\vee \frac{\alpha'}{\vartheta}  \lambda_0  \sum  \breg_{\range}(\hat \bsbb_j, \bsbb_j^*) \vee \lambda_0\range^{(b)} (  (\hat \bsbb - \bsbb^*)_{ \overunderline {\mathcal M}^*} )\notag \\
&\lesssim    \frac{C}{\vartheta^2}  F \lambda_0^2 +(\frac{1}{\alpha_{2}}+C)\|  \Proj^{*} \bsbeps \|_2^2. \label{genres1}
\end{align}

In the following, we set $\alpha'=0$ and assume $\tilde l_0$ is $\nu$-strongly convex,  indicating   $\bregs_{\tilde l_0} (\bsbe^* + \tilde \bsbX \bsba,    \bsbe^*)\ge \nu \| \tilde \bsbX\bsba\|_2^2/2\ge \nu \| \Proj^{*}\tilde \bsbX\bsba\|_2^2/2$.   Then a     sufficient condition for \eqref{regcond0} is  \begin{align}
 \vartheta\range^{(b)}(   \bsba_{ \overunderline {\mathcal M}^*})  \le\, &    2  \sqrt{C F  (2-\alpha_{1}-\frac{\alpha_2}{\nu})}  \bregs_{\tilde l_0}^{1/2}     (\tilde \bsbX   \bsbb^* +\tilde \bsbX \bsba, \tilde \bsbX   \bsbb^*)
     -\sum_{j=1}^p\delta \range(\hat{ \bsbb}_j; \bsbb_j^*  - \hat{\bsbb}_j) .\label{regcond-overall}
\end{align}
\eqref{regcond-bound} in the theorem implies
\begin{align}
  &\frac{\nu\kappa_0^2}{J^*} \{ \range^{(b)}(   \bsba_{ \overunderline {\mathcal M}^*}) \}^2   \le2        \bregs_{\tilde l_0} (\bsbe^* + \tilde \bsbX \bsba,    \bsbe^*)\label{compcondboundproof}
\end{align}
   and
\begin{align*}    -\delta \range(\hat{ \bsbb}_j; \bsbb_j^*  - \hat{\bsbb}_j) & =-(  \sum_{j=1}^p  \max_{  \overline {\mathcal M} (\bsbb_j^* + \bsba_j)}     (-\bsba_j[k]) -  \min_{  \underline {\mathcal M} (\bsbb_j^* + \bsba_j)} (-\bsba_j[k])) = \sum_{j=1}^p  \min_{  \overline {\mathcal M} (\bsbb_j^* + \bsba_j)}     \bsba_j[k] -  \max_{  \underline {\mathcal M} (\bsbb_j^* + \bsba_j)} \bsba_j[k].
\end{align*}
Take  $\alpha_1 = 1/2$, $\alpha_2 = \nu/2$, $F = \vartheta^2J^*/\kappa_0^2 $, $C =1$. Through case analysis,   \eqref{regcond-overall} is satisfied for any $\bsba$ in the region defined by \eqref{regcond-region0} or its complement.  The conclusion in the theorem follows.

 Under the
   $\psi_2$-norm   assumption on   $\bsbeps$,  $\| \langle \bsba, \bsbeps\rangle\|_{\psi_2}\le \sigma \|\bsba\|_2$, $\forall \bsba$. Given that
\begin{align}
\max_{1\le j\le p}\allowbreak\|\tilde\bsbX_j  ^T\Proj^{*,\perp} \bsbeps\|_{1}\le m\cdot\max_{1\le j\le p} \max_{1\le k \le m} |(\tilde\bsbX_j[:,k])  ^T\Proj^{*,\perp} \bsbeps | \notag 
\end{align} and $\|\Proj^{*,\perp} \tilde\bsbX_j[:,k]    \|_2\le 1\cdot \uprho_0=\uprho_0$, we obtain $$\EP[\max_{1\le j\le p} \|\tilde\bsbX_j  ^T\Proj^{*,\perp} \bsbeps\|_{1}  \ge c_{0}\uprho_0\sigma     m \sqrt{ \log( m p)}]\le C(pm)^{-c},$$      by applying a union bound and choosing a sufficiently large constant   $c_0$. The probability control of the event $\|  \Proj^{*} \bsbeps \|^2_2\gtrsim \sigma^2(M^*+p)$ is presented in the proof of Theorem \ref{thm:seesawerr}. Combining the two bounds gives the first conclusion in the corollary.

Assuming    $\epsilon_i$  are i.i.d. Gaussian, we can  obtain a better rate.
Indeed,  since the $\ell_1$-polytope is convex, closed, and symmetric about 0,   Lemma \ref{lem:gaussconcen} yields $$\EP[\| \bsbA \bsbeps\|_1\ge t]\le \EP[\| \bsbz\|_1\ge t]$$ for  any $\bsbA\in \mathbb R^{m\times N}$ and  $t>0$, where
 $\bsbz$ consists of $m$  i.i.d. $\mathcal N(0,\sigma^2)$ entries.
Noticing that $\|\tilde\bsbX_j  ^T\Proj^{*,\perp}\|_2 \le \uprho_0 \| \Proj^{*,\perp}\|_2 = \uprho_0$, we obtain
\begin{align*}
&\EP[   \max_{1\le j\le p} \|\tilde\bsbX_j  ^T\Proj^{*,\perp} \bsbeps\|_{1}\ge c \uprho_0\sigma    ( m+ \sqrt{   m  \log p}) ] \\
\le\, &  p  \EP[\| \bsbz\|_1  -   \sqrt{\frac{2}{\pi}} \sigma m\ge c(    \sqrt m+ \sqrt{     \log p})\sigma\sqrt m ]\\
\le \, & p C\exp ( - c (m + \log p)) \le C\exp ( - c (m + \log p))
\end{align*}
where   $C,c$ are sufficiently large positive constants  that may vary across different instances.  Here, we used the     independence among $z_i$
($1\le i \le m$) and the subGaussian tails of  the centered  $|z_i|$  (each bounded with a $\psi_2$-norm  of $c\sigma$).
The proof is complete. \end{proof}

\section{Experiments}
\label{sec:exps}
\subsection{Simulations}
\label{subsec:simus}
\subsubsection{Comparison of Regularization Performance}
\label{subsubsec:compregul}
This subsection conducts experiments in various regression and classification scenarios to compare the proposed method with other approaches.   For each client $k$, the predictor matrix $\boldsymbol{X}_{k} \in \mathbb{R}^{n_k \times p}$ is constructed with rows independently drawn  from a multivariate normal distribution, each having  a Toeplitz covariance matrix $\bsbSig= [r^{|i-j|}]$. For simplicity, we set   $N = nm$ and   $n_1 = n_2 = \ldots = n_m = n$. In  regression, the  response  vector $\bsby_k$ is generated from $ \mathcal N(\bsbX_k \bsb{b}_k^*, \sigma^2 \bsbI)$. For classification, $\bsby_k$ is a Bernoulli random vector with mean   $ \bsb{\pi}_{k}^*  = 1/(1+\exp(-\bsbX_{k} \bsb{b}_k^*)$ (with the division computed componentwise).
The true coefficients $\boldsymbol{B}^*$ are generated according to the following four setups. We use $(\{a_1\}^{s_1},\ldots, \{a_k\}^{s_k})$ to denote the column vector made by $s_1$ $a_1$'s, $\ldots$ , $s_k$ $a_k$'s consecutively.
\\

\noindent \textbf{Example 1}  {(Classification with moderate $p$.)}
    $n=300$, $p=20$, $m=10$, $r=0.5$. $\boldsymbol{\beta}_j^*=[\{-0.5\}^{3},-0.3,-0.1,0.1,0.3,\allowbreak \{0.5\}^{3}]^{T}, 1\le  j \le 10$, $\boldsymbol{\beta}_j^*=[-0.3,-0.1, \{-0.5\}^{3}, \{0.5\}^{3},0.1,0.3]^{T},10< j\leq 20$.

\noindent \textbf{Example 2}  {(Classification with negative $r$.)}
     $n=600$, $p=4$, $m=20$, $r=-0.5$. $\boldsymbol{\beta}_j^*=[ \{1.0\}^{20}]^{T},  j=1, 2$, $\boldsymbol{\beta}_3^*=[\{-2.0\}^{5},\allowbreak -0.9,-0.7,\ldots,0.9, \{2.0\}^{5}]^{T}$, $\boldsymbol{\beta}_4^*=[\{-3.0\}^{4}, \allowbreak -1.1,-0.9, \ldots,  1.1, \{3.0\}^{4}]^{T}$.

\noindent \textbf{Example 3}  {(Regression with large $n$.)}
    $n=120$, $p=5$, $m=20$, $\sigma=5$, $r = 0.5$. $\boldsymbol{\beta}_1^*=[-285,-255, \ldots,  285]^{T}$, $\boldsymbol{\beta}_2^*=[\{-15\}^{5}, -9,-7, \ldots,9, \{15\}^{5}]^{T}$,
    $\boldsymbol{\beta}_3^*=[\{-10\}^{7},  -6,-4, \ldots,4, \{10\}^{7}]^{T}$, $\boldsymbol{\beta}_4^*=[\{-15\}^{8}, -11,-9, \ldots,7, \{15\}^{2}]^{T}$, $\boldsymbol{\beta}_5^*=[\{10\}^{20}]^{T}$.

\noindent \textbf{Example 4}  {(Regression with small $n$.)}
    $n=25$, $p=30$, $m=20$, $\sigma=1$, $r = 0.5$. $\boldsymbol{\beta}_j^*=[\{-15\}^{5},-9, -7, \ldots,  9, \{15\}^{5}]^{T}, 1\leq j\leq 5$, $\boldsymbol{\beta}_j^*=[-9, -7 ,\ldots, -1,  \{-15\}^{5},  \{15\}^{5}, 1,3,\ldots,9]^{T}, 5< j\leq10$, $\boldsymbol{\beta}_j^*= [\{10\}^{20}]^{T},10< j \leq30$.
\\

The comparison methods include   a clustering  approach based on weighted fused lasso \cite{tang2016fused}, a centered group lasso,  our proposed range-penalized federated learning method, denoted as RFL, Personalized FedAvg (Per-FedAvg) \citep{fallah2020personalized}, and the   Adaptive Personalized Federated Learning (APFL) \citep{deng2020adaptive}. Since the latter two methods, Per-FedAvg and APFL, which represent a meta-learning-based approach and a baseline that adaptively learns a mixture of global and local models, perform less competitively, we focus the discussion on the first three methods.  The first uses the least squares estimator for feature ordering and penalty weighting  in the regression setting (see   \cite{tang2016fused} for more detail). When the sample size is smaller than the problem dimension, the Moore-Penrose inverse is used to calculate the least squares estimator. For practical implementation, we utilize the designated R package for computation.
The second method employs a group $\ell_1$ regularizer
\begin{align}
\lambda \sum_{j=1}^p \| \bsbb_j - c_j \bsb 1\|_2 \label{cglpen}
\end{align} where both $\bsbb_j, c_j$ are unknown. We refer to it as the \textit{centered group lasso} (CGL), designed  to identify homogeneous rows in matrix $\bsb B$.
While the ordinary group lasso is widely used for feature selection, this centered version, with
 $c_j$ optimized jointly, has not   been proposed before to the best of our knowledge. For detecting shared parameters, the underlying  $\ell_1$-type design logic is that each row is either homogeneous or nearly so, and hence all its entries are pulled toward a common center through block soft-thresholding.  By contrast, our range-based regularizer acts through the boundary values, with the extremes serving as the effective shrinkage targets, and often results in a more interpretable model.


  For bias calibration, clustering-based methods use the clusters identified  by    an estimate  $\hat \bsbb_j(\lambda)$ ($1\le j \le p$)  to  impose equality constraints on the calibrated estimation of  $\bsbb_j$. The same is   applicable to the centered group lasso, where    $\bsbb_j$ consist of identical components or remain unconstrained, depending on     $\hat \bsbb_j(\lambda)$. In comparison,   bias correction for RFL includes inequalities. Recall that  an estimate
$\hat \bsbb_j$   categorizes    three index sets---$\hat {\overline{\mathcal M}_j}, \hat {\mathcal M_j}, \hat {\underline{\mathcal M}_j}$---reflecting its structural characteristics.    To adhere to the max/min structure, it is essential to include \textit{order} constraints, along with  the consolidation of predictors. Practically, this can be achieved by formulating  (in)equality constraints:
$$  \gamma_j \bsb1 \preceq \bsbb_j[{\hat {\mathcal M}_j}]\preceq \alpha_{j} \bsb1, \bsbb[{\hat {\overline{\mathcal M}_j}}]= \alpha_{j} \bsb1,  \bsbb[{\hat {\underline{\mathcal M}_j}}]= \gamma_{j} \bsb1.$$ These {linear} constraints       involve   $\bsbb_j, \alpha_j, \gamma_j$ in the optimization process to achieve a calibrated estimate.
Bias calibration can significantly enhance the accuracy of a raw penalized estimate. Notably, during this process, RFL still maintains reduced  range control, a feature that the first two methods do not guarantee.

Given each simulation setting, we  repeat the experiment for 50 times. To ensure  a fair comparison and mitigate the impact of varying   tuning strategies, a large, independent validation dataset comprising 10,000 samples is employed for parameter tuning in each simulation. To assess the true potential of each method,     the best model is  selected based on the prediction error of each calibrated estimate on the large validation data. The primary evaluation metrics are    estimation error ($\text{Err}^{(e)}$), prediction error ($\text{Err}^{(p)}$) and pattern recovery accuracy ($\text{PattAcc}$). The estimation error is quantified  by $\| \hat{\boldsymbol{B}} - \boldsymbol{B}^*\|_F^2$ and the prediction error is calculated by $ \sum_{k=1}^m (\hat{\boldsymbol{b}}_k - \boldsymbol{b}_k^*)^T \bsbSig  (\hat{\boldsymbol{b}}_k - \boldsymbol{b}_k^*)$.  For pattern recovery, we use the normalized mutual information (NMI), expressed as a percentage.   In classification scenarios, the error evaluation also   includes the symmetric KL divergence  which compares $\bsb{\pi}^*$ and $\hat {\bsb{\pi}}$, offering greater sensitivity than plain misclassification error.
Tables \ref{tab:perfclassfication} and \ref{tab:perfregression} report the mean values of the evaluation metrics over 50 experiments. Since Per-FedAvg and APFL exhibit substantially larger errors (see, e.g., Table \ref{tab:perfclassfication}), we do not include them in subsequent comparisons, allowing the discussion to focus on the more competitive methods.

\begin{table}[H]
\centering
\small{
\caption{Classification: performance comparison in terms of estimation error ($\text{Err}^{(e)}$), prediction error ($\text{Err}^{(p)}$), symmetric KL divergence (KL) and pattern recovery accuracy (PattAcc). The numbers in parentheses indicate standard errors. \label{tab:perfclassfication} }
\vspace{.03in}
\setlength{\tabcolsep}{2.2mm}
\begin{tabular}{l cccc}
            \hline
          \multirow{2}{*}{} & \multicolumn{4}{c}{\textbf{Ex 1}} \\  \cmidrule(r){2-5}
         & $\text{Err}^{(e)}$ & $\text{Err}^{(p)}$ &  KL& PattAcc \\ \hline
{Per-FedAvg} & 28.4 (0.1) & 77.2 (0.1) & 1135.6 (3.1) & 41.4  \\
{APFL} & 18.2 (1.2) & 37.5 (3.1) & 601.2 (52) & 42.3  \\

{Clustering} & 13.5 (0.4) & 9.4 (0.3) & 135.3 (4.1) & 84.4 \\
{Centered group lasso} & 16.8 (0.7) & 11.4 (0.4) & 149.4 (5.8) & 88.4 \\
{RFL} & 7.1 (0.2) & 5.1 (0.2) & 75.9 (2.3) & 91.3 \\

        \hline

         \hline
          \multirow{2}{*}{} & \multicolumn{4}{c}{\textbf{Ex 2}} \\  \cmidrule(r){2-5}
         & $\text{Err}^{(e)}$ & $\text{Err}^{(p)}$ & KL & PattAcc \\ \hline
{Per-FedAvg} & 40.43 (0.24) & 21.9 (0.27) & 748.0 (12.7) &38.7  \\
{APFL} & 2.97 (0.22) & 1.73 (0.36) & 60.8 (12.1) &38.6  \\
{Clustering} & 0.63 (0.04) & 0.44 (0.03) &  21.3 (1.3)& 92.3 \\
{Centered group lasso} & 0.89 (0.06) & 0.61 (0.04) & 21.3 (1.0) &89.1  \\
{RFL} & 0.42 (0.02) & 0.30 (0.02) & 14.2 (0.8) &98.2  \\

\hline

\end{tabular}
}
\end{table}

\begin{table}[H]
\centering
\small{
\caption{Regression: performance comparison in terms of estimation error ($\text{Err}^{(e)}$), prediction error ($\text{Err}^{(p)}$) and pattern recovery accuracy (PattAcc). The numbers in parentheses indicate standard errors. \label{tab:perfregression}}
\vspace{.02in}
\setlength{\tabcolsep}{5.3mm}
 \begin{tabular}{l  ccc}
         \hline
          \multirow{2}{*}{} & \multicolumn{3}{c}{\textbf{Ex 3}} \\
          \cmidrule(r){2-4}
         & $\text{Err}^{(e)}$ & $\text{Err}^{(p)}$ & PattAcc \\ \hline

{Clustering} & 21.0 (1.0)  & 15.9 (0.6) & 95.2  \\
{Centered group lasso} & 25.1 (1.0) &  18.7 (0.6) & 80.4  \\
{RFL} & 17.1 (0.7)  & 13.2 (0.5) & 98.4  \\

\hline

\hline
          \multirow{2}{*}{} & \multicolumn{3}{c}{\textbf{Ex 4}} \\
          \cmidrule(r){2-4}
         & $\text{Err}^{(e)}$ & $\text{Err}^{(p)}$ & PattAcc  \\ \hline

{Clustering} & 27.8 (0.9) &  18.9 (0.7) & 93.2  \\
{Centered group lasso} & 25.7 (0.7)  & 16.0 (0.5) & 91.2  \\
{RFL} & 12.2 (0.4) & 7.9 (0.3)  & 99.3  \\

\hline

\end{tabular}
}
\end{table}

In the  experiments, RFL consistently outperforms other methods by achieving the lowest estimation and prediction errors, along with the highest pattern recovery accuracy, especially in the small
$n$ setup of Example 4. 
Moreover, RFL consistently demonstrates the smallest standard errors in all setups, underscoring its stability relative to competing methods.

\subsubsection{Power of Acceleration}
\label{subsubsec:accpower}
The experiments are performed in the setting of  Section \ref{sec:comp}   under       $\breg_L\le \mathcal L\Breg_2$. Here,     $\rho_t$ is set to $\mathcal{L}$, and   a very lenient   bound $\mu_{\min}=  0$    is provided for $\mu_t$. The algorithm is initiated with $\bsba^{(0)} = \bsbb^{(0)} = \bsb0$ and $\theta_0 = 1$.

At iteration $T$, the first step is to update $\theta_T$ and $\bsbg^{(T)}$. We then determine an appropriate value for $\mu_T$ to calculate $\boldsymbol{\alpha}^{(T+1)}$ and $\boldsymbol{\beta}^{(T+1)}$. Implementing a line search strategy for $\mu_T$ is crucial.  As per \eqref{accerrbnd}  in Theorem \ref{th:acc},  we minimize $W_T(\mu)$ defined as   {\small
\begin{align}
& (1-\theta_{T+1}(\mu)) \times \Big\{\Big ({\mbox{\small$\prod$}}_{t=1}^T(1-\theta_t)\Big)\big[(1-\theta_0)(f(\bsb{\beta}^{(0)})-f(\bsbb_{o}^{(T)}))+\theta_0^2\rho_0 \Breg_2(\bsbb_{o}^{(T)},\bsb{\beta}^{(0)})\big]\notag\\
 &\qquad\qquad\qquad\qquad-   {\mbox{\large$\Sigma$}}_{t=0}^T \big(\Pi_{s=t+1}^T (1-\theta_s)\big) (R_t' (\mu)+ \theta_t \mathcal E_t(\bsbb_{o}^{(T)}; \mu))\Big\}. \end{align}}Here, $\bsbb_o^{(T)} = \arg\min_{0\le t\le T} f(\bsbg^{(t)})$, the iterate with the minimum $f$-value from $0$ to $T$.
The search for $\mu_T$ can start with  increasing  $\mu_{T-1}$ by a factor of $\overline r$ or decreasing it by $\underline r $. The number of searches per iteration is limited to $M $.  In the experiments, we use $\overline r = 1.5, \underline r = 1.3, M=6$.

The line search for $\mu_t$ is not required at every iteration.   Initially, we use the default $\mu_{\min}$ for several iterations until  the number of iterations reaches a threshold (e.g., 15) or the relative change is small     (e.g., 5e-3). Moreover, once the convexity parameter stabilizes---for instance, when the average of the past three $\mu_t$-values closely matches the current one or the iterates show small changes---we can maintain $\mu_t$    at the current optimal value.

We test the accelerated algorithm and compare it with standard proximal gradient descent, using the settings from previous examples but with a smaller sample size of $n=15$.  The algorithm stops when the difference between the objective function value and the optimal value is less than 0.1.   Figure  \ref{fig:accPGDperf}   illustrates the effectiveness of applying the acceleration through a data example from each setting.
Table \ref{tab:accperf} presents more comprehensive results, reporting the median number of iterations from 50 experiments. Additionally, in terms of overall computational time, our accelerated algorithm can save at least 40\%.
Finally, we emphasize that our acceleration scheme employs a trivial   $\mu_{\min}=0$, \emph{without} relying on any precise convexity parameter information, as deriving a tight value can often be challenging.   Nonetheless, our algorithm adaptively searches for local restricted strong convexity and substantially reduces the number of iterations.
\begin{figure*}[!t]
\centering
\subfloat{\includegraphics[width=0.48\textwidth]{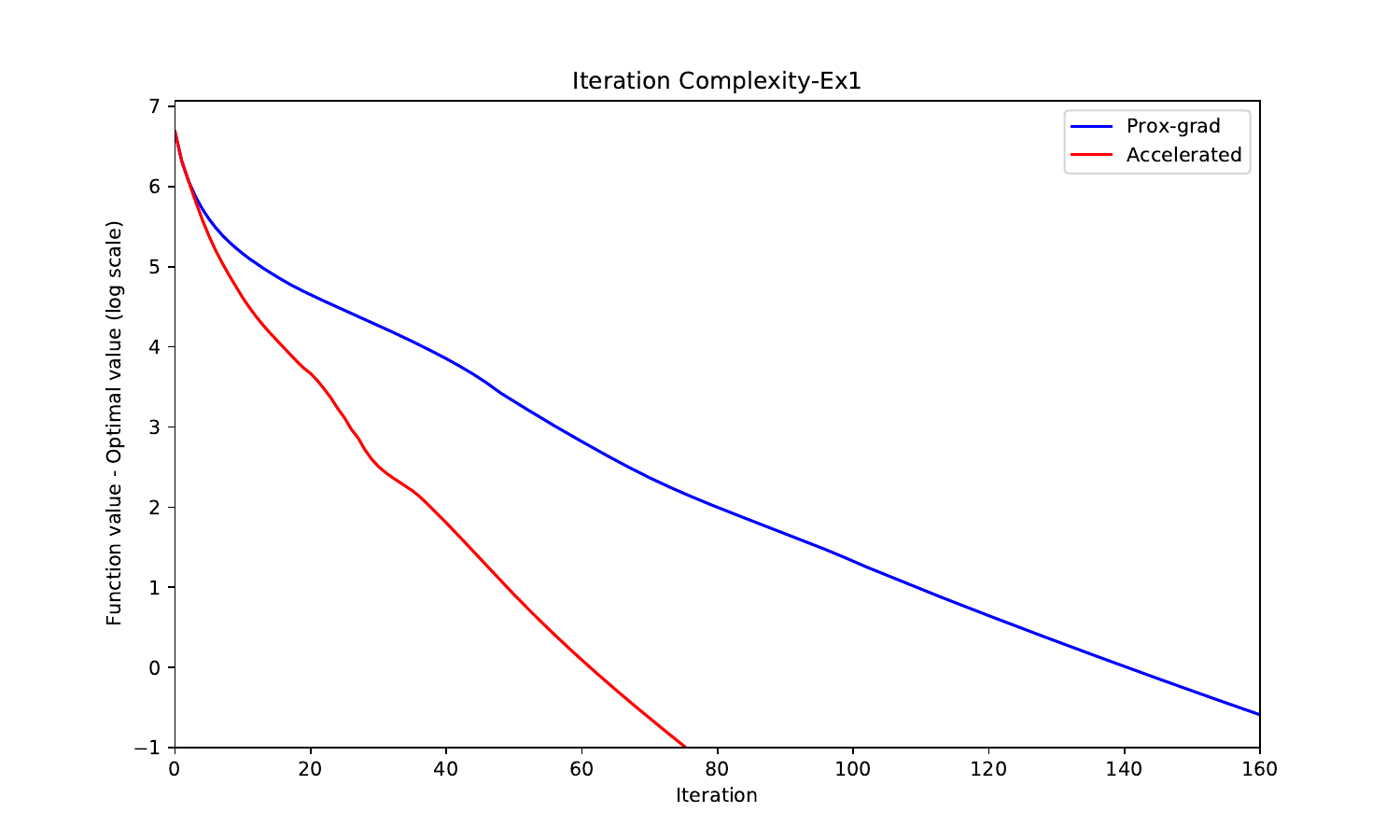}}
\hfil
\subfloat{\includegraphics[width=0.48\textwidth]{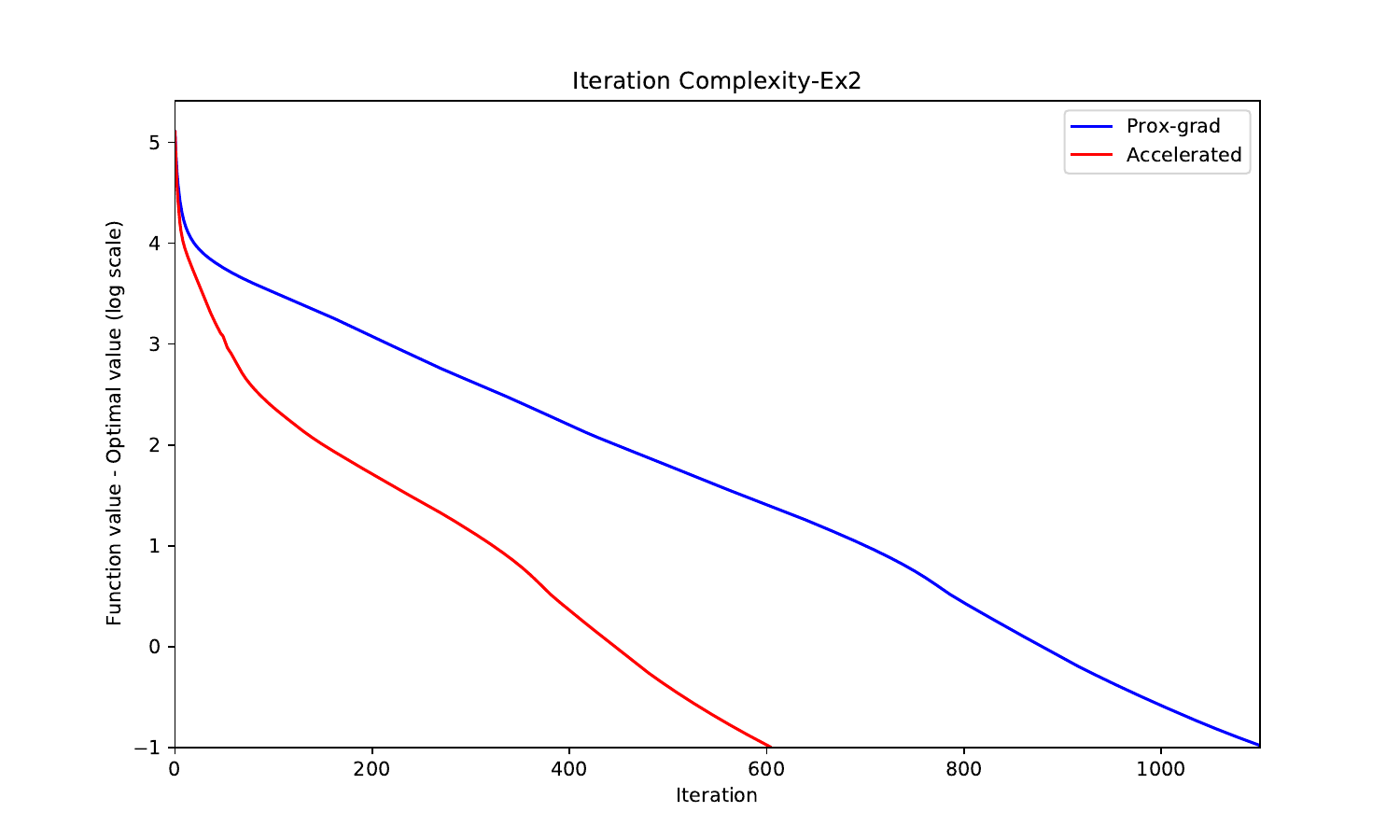}}
\vspace{0.8cm}
\subfloat{\includegraphics[width=0.48\textwidth]{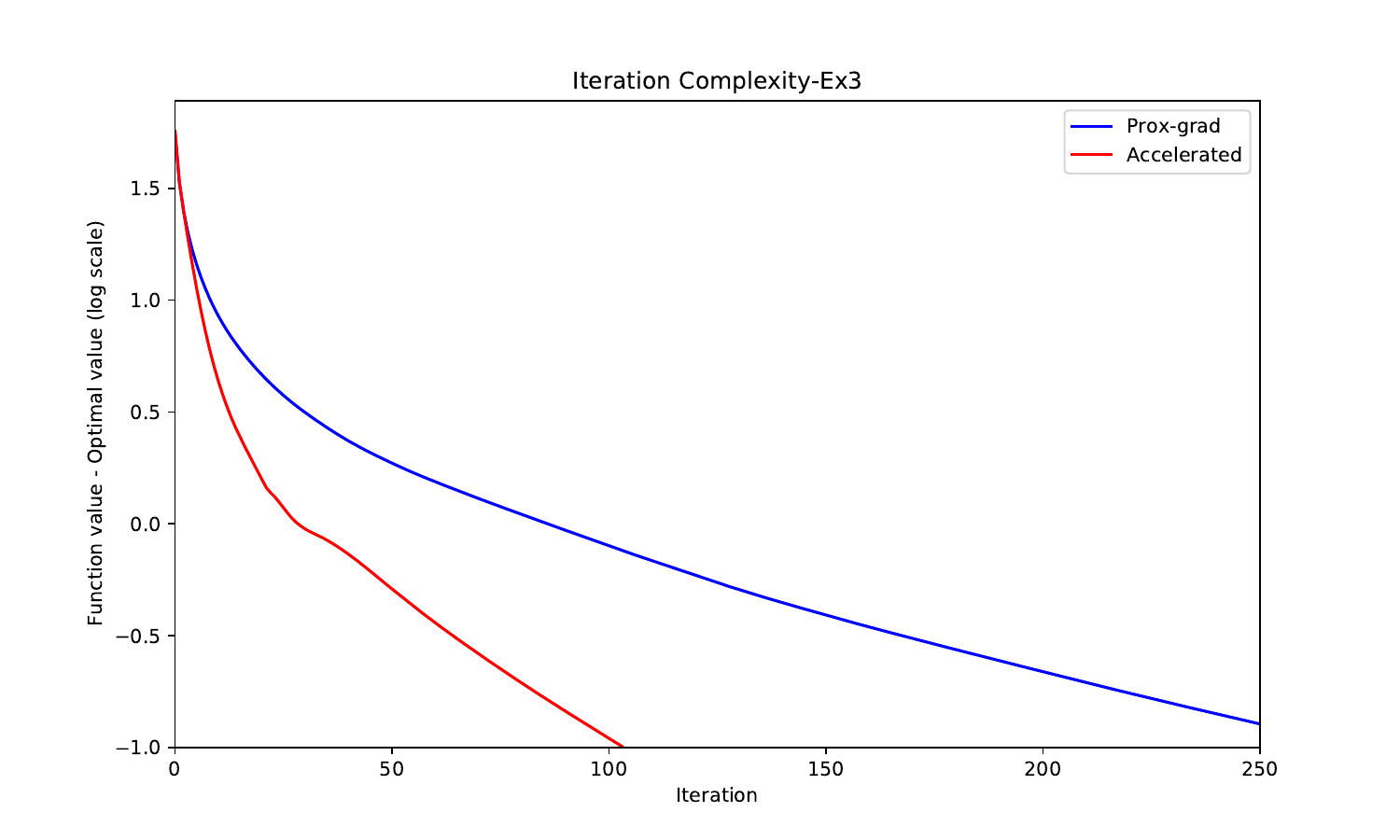}}
\hfil
\subfloat{\includegraphics[width=0.48\textwidth]{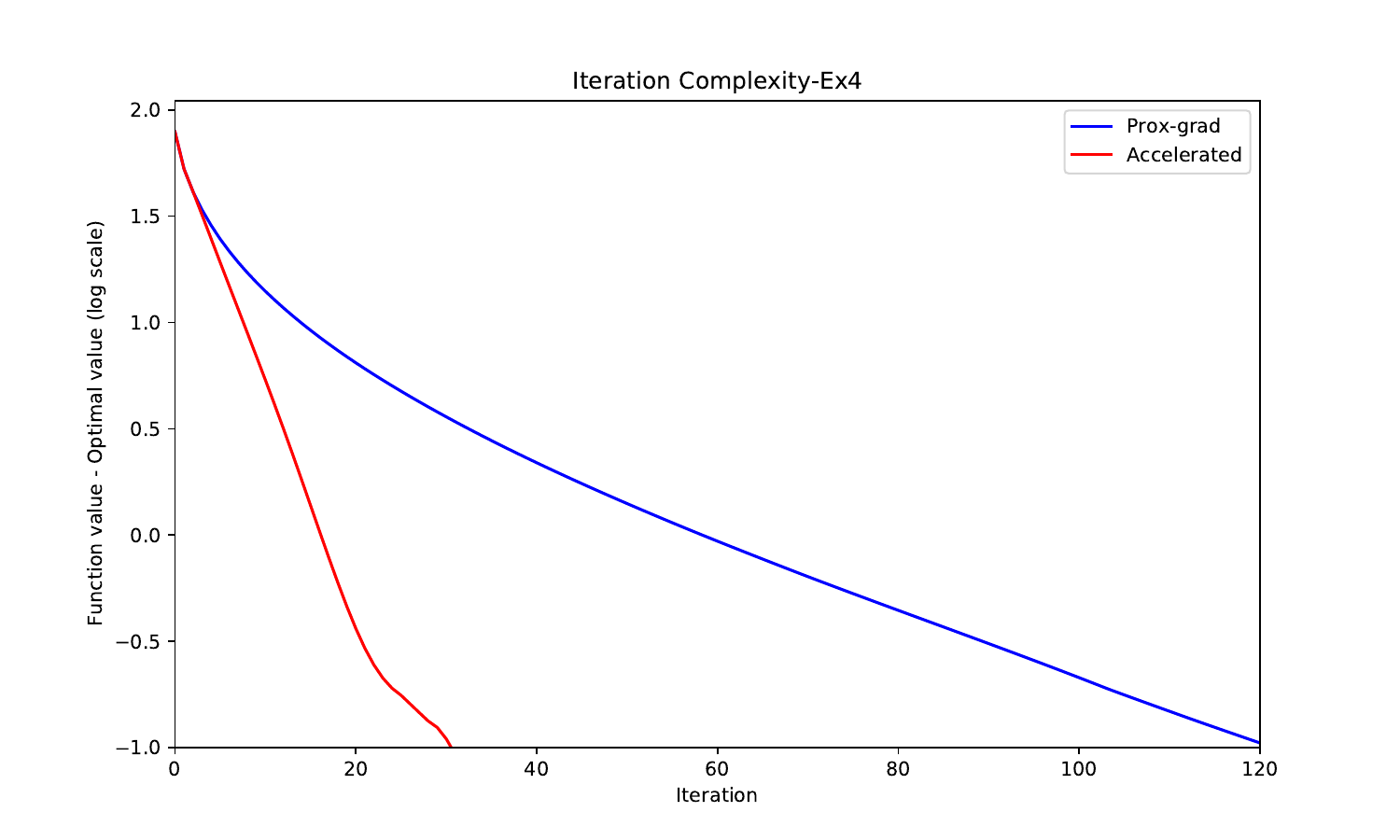}}
\caption{Log-scaled difference between the objective function value and the optimal value vs. number of iterations for both proximal gradient and the proposed acceleration.}
\label{fig:accPGDperf}
\end{figure*}

\begin{table}[H]
\centering
{
\caption{Median number of iterations across 50 experiments (with $n=15$). \label{tab:accperf}}
\vspace{.03in}
\setlength{\tabcolsep}{6.5mm}
 \begin{tabular}{c cccc  }
         \hline

         &  {Ex1} &  {Ex2} &  {Ex3} &  {Ex4}  \\    \cmidrule(r){1-3} \cmidrule(l){4-5}

Prox-grad   & 118  & 405   & 282& 90  \\
    Accelerated & 60  & 181  & 136 & 28\\

\hline
\end{tabular}
}
\end{table}

\subsubsection{Empirical Evidence on Range Reduction}
\label{subsubsec:range}

We provide additional empirical evidence that the proposed RFL method yields favorable range reduction in transmitted signals in practice. Since tighter numerical range is advantageous for quantization and coding, this offers further support for the communication-related utility of range regularization.

We compare RFL with FedPer \citep{arivazhagan2019federated}, a representative partial-personalization method. We consider the second classification setting in the simulation study and use a validation set of size 10,000 to tune
$\lambda$. To form a strong benchmark, FedPer is implemented using the ground-truth shared-feature indices from the true parameter matrix. (This gives FedPer ideal structural information unavailable in practice, making the comparison particularly stringent.)  Both algorithms are terminated when the absolute difference between two consecutive objective values is below 1e-3.
For each client and each method, we compute a communication-range measure by summing the parameter ranges appearing in both the uplink and downlink channels over the entire optimization path. We then estimate, for each client, the ratio of FedPer's total communication range to that of RFL, together with its standard error under a two-sample setup. Ratios consistently above one indicate that RFL produces a narrower numerical profile throughout the communication process.

\begin{table}[htbp]
\centering
\caption{Estimated range ratios (FedPer over RFL) for the 20 clients under the setting of Example 2. Values greater than one indicate that RFL yields a smaller overall numerical range in communication. Standard errors are shown in subscripts.}
\label{tab:range_ratio_compare}

\setlength{\tabcolsep}{4pt}
\resizebox{\linewidth}{!}{
\begin{tabular}{ccccccccccc}
\toprule
{Client index} & {1} & {2} & {3} & {4} & {5} & {6} & {7} & {8} & {9} & {10} \\ \midrule
{Ratio} & $1.54_{\scriptscriptstyle 0.05}$ & $1.54_{\scriptscriptstyle 0.05}$ & $1.67_{\scriptscriptstyle 0.05}$ & $1.68_{\scriptscriptstyle 0.05}$ & $1.67_{\scriptscriptstyle 0.05}$ & $1.13_{\scriptscriptstyle 0.03}$ & $1.14_{\scriptscriptstyle 0.03}$ & $1.12_{\scriptscriptstyle 0.03}$ & $1.13_{\scriptscriptstyle 0.03}$ & $1.20_{\scriptscriptstyle 0.04}$ \\
\midrule \midrule
{Client index} & {11} & {12} & {13} & {14} & {15} & {16} & {17} & {18} & {19} & {20} \\ \midrule
{Ratio} & $1.14_{\scriptscriptstyle 0.02}$ & $1.20_{\scriptscriptstyle 0.03}$ & $1.22_{\scriptscriptstyle 0.02}$ & $1.32_{\scriptscriptstyle 0.02}$ & $1.46_{\scriptscriptstyle 0.02}$ & $1.55_{\scriptscriptstyle 0.03}$ & $3.03_{\scriptscriptstyle 0.06}$ & $2.96_{\scriptscriptstyle 0.06}$ & $2.97_{\scriptscriptstyle 0.06}$ & $3.63_{\scriptscriptstyle 0.06}$ \\
\bottomrule
\end{tabular}
}
\end{table}

According to Table \ref{tab:range_ratio_compare}, all ratios are clearly greater than one, showing that the total transmitted range under FedPer is uniformly larger than that under RFL. This is noteworthy because FedPer is given oracle knowledge of the true shared structure, yet RFL still yields substantially tighter communication signals in practice. This provides direct numerical evidence that RFL produces substantially tighter communication signals in practice.

We also examine the uplink and downlink channels within RFL separately. Figure \ref{fig:rfl_range_comparison} compares the parameter ranges across iterations for the uplink and downlink channels under RFL. As seen in the figure, the uplink and downlink ranges are of similar magnitude across clients, indicating that the practical range benefit is not confined to the downlink alone.
This is consistent with the iterative structure of the algorithm, where the communicated quantities are repeatedly updated in conjunction with the range-regularized estimates. As a result, the favorable numerical profile is reflected in both communication channels throughout the optimization process.

\begin{figure}[!t]
    \centering
    \includegraphics[width=0.9\linewidth]{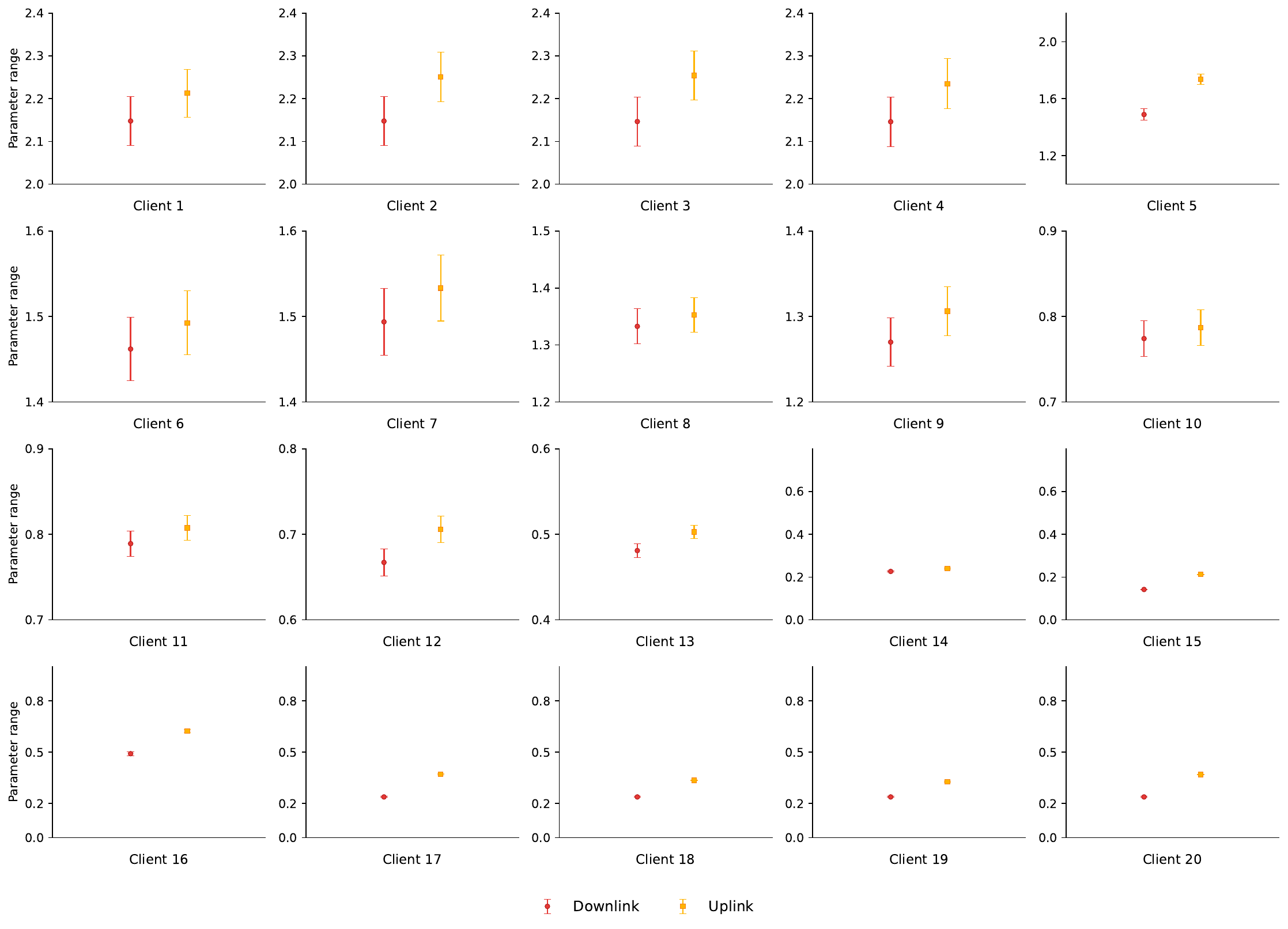}
    \caption{Parameter ranges across iterations for the downlink and uplink channels under RFL for the 20 clients in Example 2. 
}
    \label{fig:rfl_range_comparison}
\end{figure}

\subsubsection{Robustness to a Corrupted Client: Group $\ell_1$-Shrinkage versus Range}
\label{subsubsec:rob}
As suggested by a reviewer, it is informative to compare  $\ell_1$-type shrinkage and range regularization in the presence of a single corrupted or adversarial client. We consider the second classification setting in the previous subsection and rescale the feature matrix of the first client by a factor $c$, with $c \in \{1,1.25,1.5,1.75,2\}$. As $c$ increases, the corresponding client becomes progressively more outlying, so that its coefficient vector can dominate the overall range and potentially distort estimation.
We use estimation error and symmetric KL divergence as the primary evaluation metrics, the latter being more sensitive than plain misclassification error in this robustness study.

\begin{figure*}[!t]
\centering
\subfloat{\includegraphics[width=0.48\textwidth]{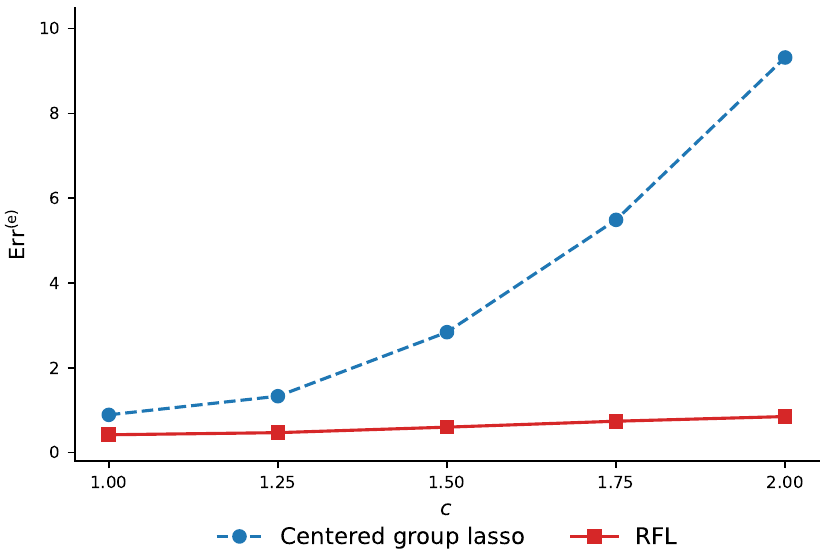}}
\hfil
\subfloat{\includegraphics[width=0.48\textwidth]{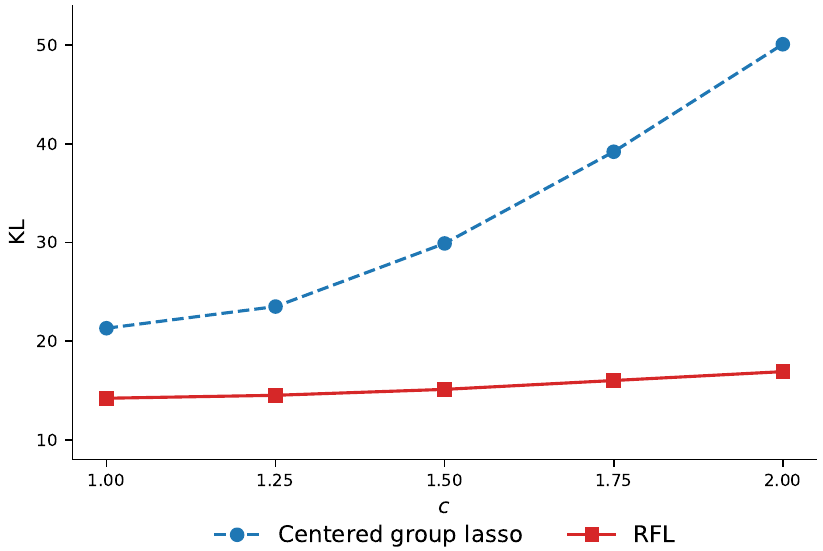}}
\caption{Comparison of centered group lasso and RFL in terms of estimation error and symmetric KL divergence as the outlying level of the first client, indexed by the rescaling factor $c$, increases.}
\label{fig:robustness}
\end{figure*}

Figure~\ref{fig:robustness} reports the mean results over 50 experiments. As
$c$ increases, the first client becomes increasingly outlying. The centered group lasso is pretty sensitive to this contamination: both its estimation error and symmetric KL divergence rise sharply with
$c$. In contrast, RFL remains much more stable across the entire range of perturbation levels. Overall,  range regularization is much less affected by a single corrupted client than group
$\ell_1$- shrinkage.

\subsection{Communities and crime data}
\label{subsec:exptcrime}
In this part, we analyze the communities and crime dataset \citep{miscCommunitiesAndCrime2009}. The response is the total number of violent crimes in 1995, with other variables derived from socioeconomic and law enforcement data. We selected 12 relevant predictors,  all standardized, which are commonly used in related studies \citep{sinha2015multiobjective,yang2019high,samara2023using}. Details on these variables are provided in Table \ref{tab:crimes_variables}. Given the political sensitivity associated with the dissemination of such data across different counties, the dataset is well suited for federated learning. 
After excluding observations with missing values and counties with fewer than ten observations, our final dataset comprises 571 observations distributed across 31 counties (clients), with individual county sample sizes ranging from 11 to 76.

 \begin{table}[H]
\footnotesize
\centering
\caption{List of variables and their descriptions\label{tab:crimes_variables}}
\begin{tabular}{l|l}
\hline
Variables & Description\\
\hline
{\tt ViolentCrimesPerPop} & Total number of violent crimes per 100K population\\
{\tt racepctblack} & Percentage of population that is African American\\
{\tt agePct12t21} & Percentage of population that is 12-21 in age\\
{\tt NumUnderPov} & Number of people under the poverty level\\
{\tt PctUnemployed} & Percentage of people 16 and over, in the labor force, and unemployed\\
{\tt PctKids2Par} & Percentage of kids in family housing with two parents\\
{\tt PctWorkMom} & Percentage of moms of kids under 18 in labor force\\
{\tt PctKidsBornNeverMar} & Percentage of kids born to never married\\
{\tt PctRecImmig5} & Percent of population who have immigrated within the last 5 years\\
{\tt PctSpeakEnglOnly} & Percent of people who speak only English\\
{\tt PctPersDenseHous} & Percent of persons in dense housing (more than 1 person per room)\\
{\tt HousVacant} & Number of vacant households\\
{\tt PctVacantBoarded} & Percent of vacant housing that is boarded up\\
\hline
\end{tabular}
\end{table}

This study evaluates the same three  methods based on   clustering, centered group lasso, and the proposed  RFL, detailed in Appendix \ref{subsec:simus}. For the clustering method, we use the  R package \citep{tang2016fused} for both computation and parameter tuning.
 Both the centered group lasso and RFL use 5-fold structural cross-validation \citep{she2019cross} on the training data for tuning.
For all compared methods, we scale each $ \bsbX_k$ by $1/\sqrt{n_k}$ during training and transform the fitted coefficients back when evaluating test prediction errors. For both RFL and centered group lasso, we use least-squares-based span weights, analogous to adaptive-lasso weights, to align coefficient spans across predictors.
Data are randomly split into training (80\%) and testing (20\%) sets. This process is repeated 20 times to validate the predictive performance of the methods.
To compare the performance of the  methods, we consider prediction error, the number of free parameters, and the range of the parameters as evaluation metrics. Concretely, due to the highly non-Gaussian nature of empirical prediction errors, we report 40\% trimmed means   of the prediction errors (multiplied by 100) across all experiments. The number of free parameters is calculated by the count of distinct components in $\hat{\bsb B}$, while the parameter range is defined by the median of the spans of the rows in $\hat{\bsb B}$ (excluding any uniform rows). We report their median numbers across various experiments. The detailed results are presented in Table \ref{tab:crimes}.

\begin{table}[H]
\footnotesize
\centering
\caption{\label{tab:crimes}Communities and crime data.}
\vspace{.03in}

\begin{tabular}{lccc}
    \hline
     & prediction error & \# of free params & param range \\
    \hline
    Clustering                    & 44   & 119  & 0.6   \\
    Centered group lasso          & 35   & 192  & 3.9   \\
    RFL                           & 25   & 32  & 0.2   \\
    \hline
\end{tabular}
\end{table}

According to the table, the centered group lasso outperforms  the clustering approach via weighted fused lasso in prediction accuracy. However, RFL achieves the lowest prediction error, cutting the centered group lasso's error by approximately        30\%. Furthermore, RFL results in only 32 free parameters---80\% fewer than the other methods. Additionally, it yields a minimal range measure of 0.2. Overall, RFL demonstrates superior test performance with a significantly smaller model size.

In our experiment, the clustering method produced many small clusters but failed to identify any row as homogeneous. In contrast,  the centered group lasso and RFL identified 6 and 8 uniform rows, respectively, suggesting the presence of shared parameters consistent across various clients' datasets.  For instance, the variable {\tt PctWorkMom}---the percentage of working mothers with children under 18---shows an interesting uniformly negative coefficient impacting crime rates across various counties. This finding is supported by other research \citep{yang2019high} and can be attributed to the universal effects of employment on reducing crime by boosting economic stability and improving access to community resources, regardless of regional variations.

RFL also identified several rows with polar clusters, such as {\tt NumUnderPov}, the number of people under poverty. The cluster with the highest values includes 13 counties, such as St. Louis, MO, and Orange, CA, exhibiting a positive correlation with crime rates. This correlation may stem from significant economic inequality and restricted access to social services during the data collection period; for further discussion, see \cite{matsueda2014social}. 

Conversely, a distinct cluster of 15 counties, including Dallas, TX, and Seattle, WA, displays the smallest negative coefficient, indicating that higher poverty levels correlate with lower crime rates. This phenomenon could be attributed to effective government responses and extensive social safety nets in these areas during the 1990s  \citep{watrus2007seattle}. 

\subsection{Air quality data}
\label{subsec:exptaiquality}

This subsection presents a classification experiment using the Beijing multi-site air quality dataset \citep{miscBeijingMultiSiteAirQuality2019}, focusing on daily averaged data from April to June 2015. PM2.5 concentrations are categorized into high pollution (coded as 1) for 24-hour averages of 35 $\mu \text{g/m}^3$ or above, and low pollution (coded as 0) for lower values, adhering to international standards \citep{chae2021pm10}.   The predictors include concentrations of $\text{SO}_2$, $\text{NO}_2$, $\text{CO}$, and $\text{O}_3$, as well as temperature, pressure, dew point temperature, and wind speed. Data were collected from 12 nationally-controlled air-quality monitoring sites in Beijing, which are treated as clients. The geographical locations of these sites are depicted in Figure \ref{fig:map_beijing}.
We use the same methods, evaluation metrics, and initial-estimator-based span weights in Appendix \ref{subsec:exptcrime}, replacing least squares by maximum likelihood for this classification problem. In addition to the clustering method, centered group lasso, and RFL, we include several further baselines following a reviewer's suggestion: a global model in which all parameters are shared across clients (denoted by `global'), a fully local model in which every feature coefficient is client-specific, with no parameter sharing  (denoted by `local'), and a random-sharing model in which each feature is independently assigned, with probability $1/2$, to be either shared across all clients or client-specific (denoted by `random').  We also include a fine-tuning baseline based on FedAvg, which is a representative personalization baseline in the federated learning literature \citep{jiang2019improving,WLC22}. Specifically, one first trains a fully shared model and then perform local fine-tuning without further parameter sharing; following \cite{jiang2019improving}, each client runs 10 local fine-tuning steps.
The experimental results are reported in Table \ref{tab:airquanlityres}.

\begin{figure}[!t]
    \centering
    \includegraphics[width=0.7\linewidth, height=3in]{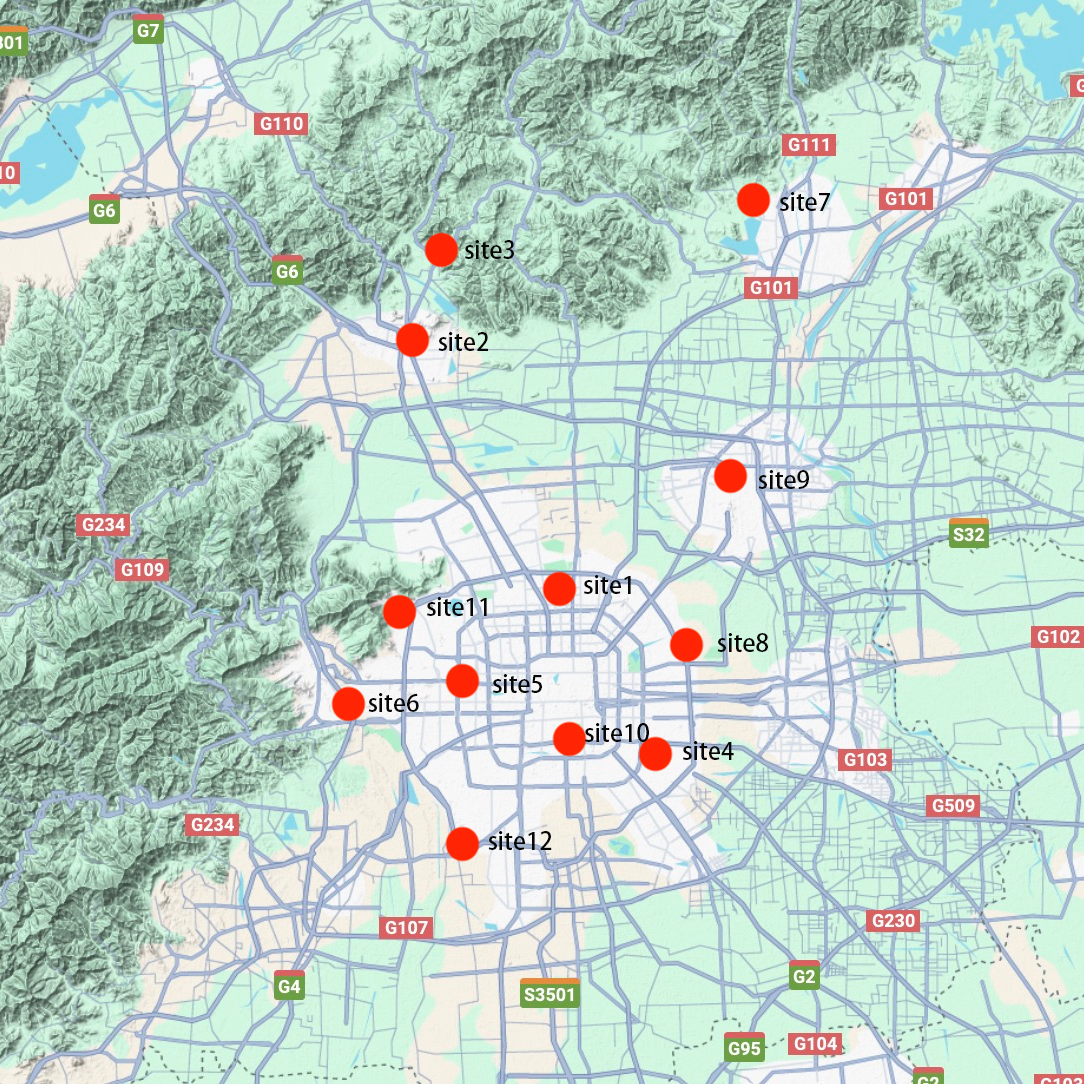}
    \caption{Geographical location of 12 monitoring sites.}
    \label{fig:map_beijing}
\end{figure}

 According to Table \ref{tab:airquanlityres}, the first three baselines support the value of \emph{selective} parameter sharing. The global, random, and local baselines yield misclassification error rates of 13.7\%, 14.4\%, and 16.1\%, respectively, showing that neither sharing everything, sharing nothing, nor assigning sharing patterns at random is optimal; rather, identifying which coefficients should be shared is essential.
The fine-tuning baseline improves upon the random and local baselines, indicating that initializing from a shared model is helpful, but still remains noticeably inferior to the more structured approaches. Among the latter, clustering  achieves a small model size but does not offer the best predictive performance, while centered group lasso provides a stronger trade-off. RFL attains the lowest misclassification error rate and, compared with centered group lasso, uses substantially fewer free parameters and a smaller parameter range, indicating a more effective and economical personalization pattern.

\begin{table}[H]
\footnotesize
\centering
\caption{\label{tab:airquanlityres}Air quality data.}
\vspace{.02in}
\begin{tabular}{cccc}
    \hline
     & misclassification error rate\,(\%) & \# of free params & param range \\
    \hline
    Global & 13.7   & 8  & --   \\
    Random & 14.4   & 30  & 1.6   \\
    Local & 16.1   & 96  & 8.9   \\
    Fine-tuning & 14.2   & 96  & 0.7   \\

    Clustering                    & 13.2   & 9  & 0.2   \\
    Centered group lasso          & 13.0   & 41  & 0.8   \\
    RFL                           & 11.8   & 14  & 0.5   \\
    \hline
\end{tabular}
\end{table}

Beyond  accuracy, we next examine model interpretability.
A more careful examination reveals that the centered group lasso and RFL identified precisely the same set of shared parameters. For instance,  pressure and temperature exhibit uniform  coefficients across various datasets. Given that all sites are located within the municipal area of Beijing, this uniformity is perhaps not surprising.

Moreover, RFL identified two polar clusters with opposing coefficient signs for wind speed. Sites 2, 3, 7, and 9, located in the northern mountainous regions, showed positive maximum coefficients. These areas are less affected by local pollution sources and the mountainous terrain promotes the dispersion of pollutants \citep{Han2015meteorological}. Consequently, higher wind speeds in these regions are associated with reduced PM2.5 levels due to the effective dispersion of pollutants \citep{song2023impacts}.
In contrast, sites 1, 4, 5, 6, 8, 10, 11, and 12 are situated in the plains of Beijing and displayed  negative minimum coefficients for wind speed. The flat terrain in these areas hinders the natural dispersion of pollutants and increased wind speeds may   trap pollutants in the area \citep{wang2019research}.  These findings suggest a potential influence of geographic and environmental factors on local air quality dynamics.

{
\bibliographystyle{IEEEtranN}
\bibliography{range}
}

\end{document}